\documentclass{article}

\usepackage[left=2.5cm,right=2.5cm,top=2.5cm,bottom=2.5cm]{geometry}

\usepackage{palatino}

\usepackage{graphicx} 
\usepackage{subfig}
\usepackage{algorithm}
\usepackage{algorithmic}
\usepackage{amssymb}
\usepackage{url}            
\usepackage{booktabs}       
\usepackage{amsfonts}       
\usepackage{nicefrac}       
\usepackage{microtype}      
\usepackage{cleveref}
\usepackage{xfrac}
\usepackage[table]{xcolor}

\usepackage{enumitem}

\usepackage{amsmath}

\usepackage{subfig}

\usepackage{geometry}

\usepackage{wrapfig}

\usepackage{amsmath, amssymb, amsthm, amsfonts}
\usepackage{graphicx}
\usepackage{caption}
\usepackage{color}
\usepackage{wrapfig, framed}
\usepackage{mathrsfs}


\newtheorem{lemma}{Lemma}
\newtheorem{theorem}{Theorem}
\newtheorem{corollary}{Corollary}

\newcommand{\Exp}{\mathbb{E}}

\newcommand{\Var}{\operatorname{Var}}
\newcommand{\Norm}{\mathcal{N}}
\newcommand{\Real}{\mathbb{R}}

\newcommand{\Cov}{\operatorname{Cov}}


\newcommand{\htheta}{\widehat{\theta}}
\newcommand{\btheta}{\bar{\theta}}

\newcommand{\sign}{\operatorname{sign}}

\newcommand{\argmin}{\mathop{\mathrm{argmin}}}
\newcommand{\argmax}{\mathop{\mathrm{argmax}}}


\usepackage{url}
\usepackage{xspace}
\usepackage{comment}
\usepackage{color}
\usepackage{afterpage}

\usepackage{bm}

\usepackage{cleveref}




\newcommand{\bDelta}{{\bar{\Delta}}}

\global\long\def\Pr{\mathbb{P}}

 \begin{document}
%
\title{Statistical inference using SGD}

\author{
Tianyang Li\textsuperscript{1} \\
\texttt{lty@cs.utexas.edu}
    \and 
    Liu Liu\textsuperscript{1} \\
    \texttt{liuliu@utexas.edu}
		\and
    Anastasios Kyrillidis\textsuperscript{2} \\
    \texttt{anastasios.kyrillidis@ibm.com}
    \and
    Constantine Caramanis\textsuperscript{1} \\
    \texttt{constantine@utexas.edu}   
    \and
    {}\textsuperscript{1} The University of Texas at Austin \\
    {}\textsuperscript{2} IBM T.J. Watson Research Center, Yorktown Heights
}

\date{}

\maketitle

\begin{abstract}
We present a novel method for frequentist statistical inference in $M$-estimation problems, based on stochastic gradient descent (SGD) \emph{with a fixed step size}: we demonstrate that the average of such SGD sequences can be used for statistical inference, after proper scaling.
An intuitive analysis using the Ornstein-Uhlenbeck process suggests that such averages are asymptotically normal.
To show the merits of our scheme, we apply it to both synthetic and real data sets, and demonstrate that its accuracy is comparable to classical statistical methods, while requiring potentially far less computation.
\end{abstract}


\section{Introduction}

In $M$-estimation, the minimization of empirical risk functions (RFs) provides point estimates of the model parameters. 
Statistical inference then seeks to assess the quality of these estimates; \emph{e.g.}, by obtaining confidence intervals or solving hypothesis testing problems.  
Within this context, a classical result in statistics states that the asymptotic distribution of the empirical RF's minimizer is normal, centered around the population RF's minimizer \cite{van2000asymptotic}. 
Thus, given the mean and covariance of this normal distribution, we can infer a range of values, along with probabilities, that allows us to quantify the probability that this interval includes the true minimizer. 

The Bootstrap \cite{efron1982jackknife,efron1994introduction} is a classical tool for obtaining estimates of the mean and covariance of this distribution. 
The Bootstrap operates by generating samples from this distribution (usually, by re-sampling with or without replacement from the entire data set) and 
repeating the estimation procedure over these different re-samplings. 
As parameter dimensionality and data size grow, the Bootstrap becomes increasingly --even prohibitively-- expensive. 

In this context, we follow a different path:  we show that inference can also be accomplished by directly using stochastic gradient descent (SGD), both for point estimates and inference, {\em with a fixed step size over the data set}. 
It is well-established that fixed step-size SGD is by and large the dominant method used for large scale data analysis. 
We prove, and also demonstrate empirically, that \emph{the average of SGD sequences, obtained by minimizing RFs, can also be used for statistical inference.} 
Unlike the Bootstrap, our approach does not require creating many large-size subsamples from the data, neither re-running SGD from scratch for each of these subsamples.
Our method only uses first order information from gradient computations, and does not require any second order information. 
Both of these are important for large scale problems, where re-sampling many times, or computing Hessians, may be computationally prohibitive. 

\paragraph{Outline and main contributions:}
This paper studies and analyzes a simple, \emph{fixed step size}\footnote{{\em Fixed step size} means we use the same step size every iteration, but the step size is smaller with more total number of iterations. In contrast, {\em constant step size} means the step size is constant no matter how many iterations taken.}, SGD-based algorithm for inference in $M$-estimation problems. 
Our algorithm produces samples, whose covariance converges to the covariance of the $M$-estimate, without relying on bootstrap-based schemes, and also avoiding direct and costly computation of second order information. 
Much work has been done on the asymptotic normality of SGD, as well as on the Stochastic Gradient Langevin Dynamics (and variants) in the Bayesian setting. 
As we discuss in detail in \Cref{sec:related}, this is the first work to provide finite sample inference results, using fixed step size, and without imposing overly restrictive assumptions on the convergence of fixed step size SGD.

The remainder of the paper is organized as follows. 
In the next section, we define the inference problem for $M$-estimation, and recall basic results of asymptotic normality and how these are used. 
\Cref{sec:sgd} is the main body of the paper: we provide the algorithm for creating bootstrap-like samples, and also provide the main theorem of this work.  
As the details are involved, we provide an intuitive analysis of our algorithm and explanation of our main results, using an asymptotic Ornstein-Uhlenbeck process approximation for the SGD process \cite{kushner1981asymptotic, pflug1986stochastic, Benveniste:1990:AAS:95267, kushner2003stochastic, mandt2016variational},
and we postpone the full proof until the appendix. 
We specialize our main theorem to the case of linear regression (see supplementary material), and also that of logistic regression. 
For logistic regression in particular, we require a somewhat different approach, as the logistic regression objective is not strongly convex. 
In \Cref{sec:related}, we present related work and elaborate how this work differs from existing research in the literature. 
Finally, in  the experimental section, we provide parts of our numerical experiments that illustrate the behavior of our algorithm, and corroborate our theoretical findings. 
We do this using synthetic data for linear and logistic regression, and also by considering the Higgs detection \cite{baldi2014searching} and the LIBSVM Splice data sets. 
A considerably expanded set of empirical results is deferred to the appendix. 

Supporting our theoretical results, our empirical findings suggest that the SGD inference procedure produces results similar to bootstrap while using far fewer operations. 
Thereby, we  produce a more efficient inference procedure applicable in large scale settings, where other approaches fail.


\section{Statistical inference for $M$-estimators}
\label{sec:inference}

Consider the problem of estimating a set of parameters $\theta^{\star} \in \mathbb{R}^p$ using $n$ samples $\left\{X_i \right\}_{i = 1}^n$, drawn from some distribution $P$ on the sample space $\mathcal X$.
In frequentist inference, we are interested in estimating the minimizer $\theta^{\star}$ of the population risk:
\begin{small}
\begin{align}
\theta^{\star} = \argmin_{\theta \in \mathbb{R}^p} \Exp_P[f(\theta; X)] =  \argmin_{\theta \in \mathbb{R}^p} \int_x{f(\theta; x)} \,\mathrm{d}P(x),
\end{align}
\end{small}
where we assume that $f(\cdot ; x): \mathbb{R}^p \rightarrow \mathbb{R}$ is real-valued and convex; further, we will use $\Exp \equiv \Exp_P$, unless otherwise stated.
In practice, the distribution $P$ is unknown.
We thus estimate $\theta^{\star}$ by solving an empirical risk minimization (ERM) problem, where we use the estimate $\htheta$:
\begin{align}
\htheta = \argmin_{\theta \in \mathbb{R}^p} \frac{1}{n} \sum_{i=1}^n f(\theta; X_i). \label{eq:sgd_2}
\end{align}

Statistical inference consists of techniques for obtaining information beyond point estimates $\htheta$, such as confidence intervals. 
These can be performed if there is an asymptotic limiting distribution associated with $\htheta$ \cite{wasserman2013all}. 
Indeed, under standard and well-understood regularity conditions, the solution to $M$-estimation problems satisfies asymptotic normality. 
That is, the distribution $\sqrt{n} (\htheta - \theta^{\star})$ converges weakly to a normal distribution:
\begin{align}
\sqrt{n} (\htheta - \theta^{\star}) \longrightarrow \Norm(0, {H^{\star}}^{-1} G^{\star} {H^{\star}}^{-1} ), \label{eq:m-est:normal}
\end{align}
where  $$H^{\star} = \Exp[\nabla^2 f({\theta^{\star}}; X) ],$$ and $$G^{\star}= \Exp[\nabla f({\theta^{\star}}; X) \cdot \nabla f({\theta^{\star}}; X)^\top ];$$ see also Theorem 5.21 in \cite{van2000asymptotic}.
We can therefore use this result, as long as we have a good estimate of the covariance matrix: ${H^{\star}}^{-1} G^{\star} {H^{\star}}^{-1}$.
The central goal of this paper is obtaining accurate estimates for ${H^{\star}}^{-1} G^{\star} {H^{\star}}^{-1}$.

A naive way to estimate ${H^{\star}}^{-1} G^{\star} {H^{\star}}^{-1}$ is through the empirical estimator $\widehat{H}^{-1} \widehat{G} \widehat{H}^{-1}$ where:
\begin{align}
\widehat{H} &= \frac{1}{n} \sum_{i=1}^n \nabla^2 f({\htheta}; X_i) \quad \quad \text{and} \quad \quad \nonumber\\
\widehat{G} &= \frac{1}{n} \sum_{i=1}^n \nabla f({\htheta}; X_i) \nabla f({\htheta}; X_i)^\top.
\end{align}
Beyond calculating\footnote{In the case of maximum likelihood estimation, we have $H^{\star} = G^{\star}$---which is called Fisher information. Thus, the covariance of interest is ${H^{\star}}^{-1} = {G^{\star}}^{-1}$. This can be estimated either using $\widehat{H}$ or $\widehat{G}$. } $\widehat{H}$ and $\widehat{G}$, this computation requires an inversion of $\widehat{H}$ and matrix-matrix multiplications in order to compute $\widehat{H}^{-1} \widehat{G} \widehat{H}^{-1}$---a key computational bottleneck in high dimensions. 
Instead, our method uses SGD to directly estimate $\widehat{H}^{-1} \widehat{G} \widehat{H}^{-1}$. 


\section{Statistical inference using SGD}\label{sec:sgd}

Consider the optimization problem in \eqref{eq:sgd_2}.
For instance, in maximum likelihood estimation (MLE), $f(\theta; X_i)$ is a negative log-likelihood function.
For simplicity of notation, we use $f_i(\theta)$ and $f(\theta)$ for $f(\theta; X_i)$ and $\frac{1}{n} \sum_{i = 1}^n f(\theta; X_i)$, respectively, for the rest of the paper.

The SGD algorithm with a fixed step size $\eta$, is given by the iteration
\begin{align}
\theta_{t+1} = \theta_t - \eta  g_s (\theta_t) \label{eq:sgd_def},
\end{align}
where $g_s(\cdot)$ is an unbiased estimator of the gradient, \emph{i.e.},  $\Exp[g_s (\theta) \mid \theta ] = \nabla f(\theta)$, where the expectation is w.r.t. the stochasticity in the $g_s(\cdot)$ calculation.
A classical example of an unbiased estimator of the gradient is $g_s(\cdot) \equiv \nabla f_i(\cdot)$, where $i$ is a uniformly random index over the samples $X_i$.

\begin{figure}
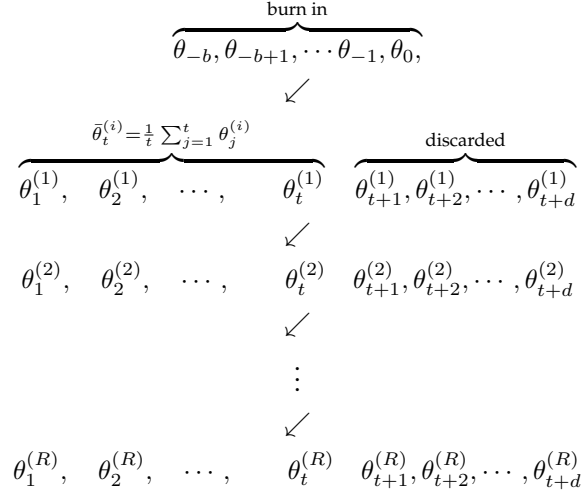

\centering
\scalebox{1}{
\parbox{\textwidth}{
\begin{gather*}
 \overbrace{\theta_{-b}, \theta_{-b+1},\cdots \theta_{-1}, \theta_{0},}^{\text{burn in}}  \nonumber \\
 \swarrow \\
 \overbrace{
\theta_1^{(1)}, \quad \theta_2^{(1)}, \quad  \cdots, \quad \quad \theta_t^{(1)} }^{ \btheta_t^{(i)} =  \frac{1}{t} \sum_{j=1}^t \theta_j^{(i)} }   \quad \overbrace{
\theta_{t+1}^{(1)},  \theta_{t+2}^{(1)}, \cdots , \theta_{t+d}^{(1)}
}^{\text{discarded}}
 \nonumber \\
 \swarrow \\
 \theta_1^{(2)}, \quad \theta_2^{(2)}, \quad \cdots,\quad \quad \theta_t^{(2)}   \quad \theta_{t+1}^{(2)},  \theta_{t+2}^{(2)}, \cdots , \theta_{t+d}^{(2)}
   \nonumber \\
  \swarrow  \\
 \vdots  \nonumber \\
 \swarrow \\
 \theta_1^{(R)}, \quad \theta_2^{(R)}, \quad \cdots,\quad \quad  \theta_t^{(R)}  \quad \theta_{t+1}^{(R)},  \theta_{t+2}^{(R)}, \cdots , \theta_{t+d}^{(R)}
 \nonumber
\end{gather*}
\caption{Our SGD inference procedure}
\label{fig:sgd-inf}
}
}
\end{figure}

\emph{Our inference procedure uses the average of $t$ consecutive SGD iterations}.
In particular, the algorithm proceeds as follows:
Given a sequence of SGD iterates, we use the first SGD iterates $\theta_{-b}, \theta_{-b+1}, \dots, \theta_0$ as a burn in period; we discard these iterates.
Next, for each ``segment'' of $t+d$ iterates,
we use the first $t$ iterates to compute $\btheta_t^{(i)} =  \frac{1}{t} \sum_{j=1}^t \theta_j^{(i)}$
and discard the last $d$ iterates, where $i$ indicates the $i$-th segment.
This procedure is illustrated in Figure \ref{fig:sgd-inf}.
As the final empirical minimum $\widehat{\theta}$, we use in practice $\htheta \approx \frac{1}{R} \sum_{i=1}^R \btheta_t^{(i)}$ \cite{Bubeck:2015:COA:2858997.2858998}.

Some practical aspects of our scheme are discussed below.

\emph{Step size $\eta$ selection and length $t$:}
\Cref{thm:sgd-strongly-convex-lipschitz-stat-inf} below is consistent only for SGD with fixed step size that depends on the number of samples taken. 
Our experiments, however, demonstrate that choosing a constant (large) $\eta$ gives equally accurate results with significantly reduced running time. 
We conjecture that a better understanding of $t$'s and $\eta$'s influence requires stronger bounds for SGD with constant step size.
Heuristically, calibration methods for parameter tuning in subsampling methods (\cite{politis2012subsampling}, Ch. 9) could be used for hyper-parameter tuning in our SGD procedure. We leave the problem of finding maximal (provable) learning rates for future work.

\emph{Discarded length $d$:}
Based on the analysis of mean estimation in the appendix, if we discard $d$ SGD iterates in every segment, the correlation between consecutive
$\theta^{(i)}$ and $\theta^{(i+1)}$ is of the order of $C_1 e^{-C_2 \eta d}$,
where $C_1$ and $C_2$ are data dependent constants.
This can be used as a rule of thumb to reduce correlation between samples from our SGD inference procedure.

\emph{Burn-in period $b$:}
The purpose of the burn-in period $b$, is to ensure that samples are generated when SGD iterates are sufficiently close to the optimum.
This can be determined using heuristics for SGD convergence diagnostics.
Another approach is to use other methods (\emph{e.g.}, SVRG \cite{johnson2013accelerating}) to find the optimum,
and use a relatively small $b$ for SGD to reach stationarity, similar to Markov Chain Monte Carlo burn-in.

\emph{Statistical inference using $\btheta_t^{(i)}$ and $\htheta$:}
Similar to ensemble learning \cite{opitz1999popular}, we use $i=1, 2, \dots,R$ estimators for statistical inference:
\begin{align}
\label{eq:stat-inf-formula}
\theta^{(i)}=\htheta + \sqrt{\frac{K_s \cdot t}{n}} \left(\btheta_t^{(i)} - \htheta\right).
\end{align}
Here, $K_s$ is a scaling factor that depends on how the stochastic gradient $g_s$ is computed.
We show examples of $K_s$ for mini batch SGD in linear regression and logistic regression in the corresponding sections.
Similar to other resampling methods such as bootstrap and subsampling, 
we use quantiles or variance of 
$\theta^{(1)}, \theta^{(2)}, \dots , \theta^{(R)}$ for statistical inference. 

\subsection{Theoretical guarantees}

Next, we provide the main theorem of our paper. 
Essentially, this provides conditions under which our algorithm is guaranteed to succeed, and hence has inference capabilities.


\begin{theorem}
\label{thm:sgd-strongly-convex-lipschitz-stat-inf}
For a differentiable convex function $f(\theta) = \frac{1}{n} \sum_{i=1}^n f_i(\theta)$, with gradient $\nabla f(\theta)$, let $\htheta \in \mathbb{R}^p$ be its minimizer, according to \eqref{eq:sgd_2}, and denote its Hessian at $\htheta$ by $H := \nabla^2 f(\htheta) = \frac{1}{n} \cdot \sum_{i = 1}^n \nabla^2 f_i(\htheta)$.
Assume that $\forall \theta \in \mathbb{R}^p$, $f$ satisfies:
\begin{itemize}[leftmargin=1cm]
\item [($F_1$)] ~Weak strong convexity: $ (\theta - \htheta)^\top \nabla f(\theta) \geq \alpha \| \theta - \htheta \|_2^2 $, for constant $\alpha > 0$,
\item [($F_2$)] ~Lipschitz gradient continuity: $\| \nabla f(\theta) \|_2 \leq L \| \theta - \htheta \|_2 $, for constant $L > 0$,
\item [($F_3$)] ~Bounded Taylor remainder: $\|\nabla f(\theta) - H(\theta - \htheta)\|_2 \leq E \|\theta - \htheta\|_2^2$, for constant $E > 0$,
\item [($F_4$)] ~Bounded Hessian spectrum at $\htheta$: $0 < \lambda_L \leq \lambda_i(H) \leq \lambda_U < \infty$, $\forall i$.
\end{itemize}
Furthermore, let $g_s(\theta)$ be a stochastic gradient of $f$, satisfying: \vspace{-0.1cm}
\begin{itemize}[leftmargin=1cm]
\item [($G_1$)] ~$\Exp\left[ g_s(\theta) \mid \theta \right] = \nabla f(\theta)$,
\item [($G_2$)] ~$\Exp\left[ \|g_s(\theta)\|_2^2 \mid \theta \right] \leq A \|\theta - \htheta\|_2^2 + B$,
\item [($G_3$)] ~$\Exp\left[ \|g_s(\theta)\|_2^4 \mid \theta \right] \leq C \|\theta - \htheta\|_2^4 + D $,
\item [($G_4$)] ~$\left\| \Exp\left[ g_s(\theta) g_s(\theta)^\top \mid \theta \right] - G \right\|_2
	\leq A_1 \|\theta - \htheta\|_2  + A_2 \|\theta - \htheta\|_2^2 + A_3 \|\theta - \htheta\|_2^3
		+ A_4 \|\theta - \htheta\|_2^4 $,
\end{itemize}
where $G = \Exp[g_s(\htheta) g_s(\htheta)^\top \mid \htheta]$ and, for positive, data dependent constants $A, B, C, D, A_i$, for $i = 1, \dots, 4$.

Assume that $\|\theta_1 - \htheta\|_2^2 = O(\eta)$; then for sufficiently small step size $\eta>0$, the average SGD sequence, $\btheta_t$, satisfies:
\begin{align}
\left\| t \Exp[ (\btheta_t - \htheta)( \btheta_t - \htheta)^\top] - H^{-1} G H^{-1} \right\|_2  \nonumber
\\
\lesssim
	\sqrt{\eta} + \sqrt{\tfrac{1}{t \eta} + t \eta^2}.
\end{align}
\end{theorem}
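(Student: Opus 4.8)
The plan is to track the error $\Delta_j := \theta_j - \htheta$ through the recursion and show that, after averaging, its leading behaviour is governed by a linear (Ornstein--Uhlenbeck-type) process whose stationary covariance is exactly $H^{-1} G H^{-1}$. The first, and in my view hardest, step is a pair of uniform-in-$t$ moment bounds: I would show $\Exp[\|\Delta_t\|_2^2] = O(\eta)$ and $\Exp[\|\Delta_t\|_2^4] = O(\eta^2)$. For the second moment, expand $\|\Delta_{t+1}\|_2^2 = \|\Delta_t\|_2^2 - 2\eta \Delta_t^\top g_s(\theta_t) + \eta^2\|g_s(\theta_t)\|_2^2$, take conditional expectations, and use $(G_1)$ to replace $g_s$ by $\nabla f$, then $(F_1)$ to get $\Delta_t^\top \nabla f(\theta_t) \geq \alpha\|\Delta_t\|_2^2$, and $(G_2)$ to control the last term. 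This yields $\Exp[\|\Delta_{t+1}\|_2^2] \leq (1 - 2\alpha\eta + A\eta^2)\,\Exp[\|\Delta_t\|_2^2] + B\eta^2$, a contraction for small $\eta$ whose fixed point is $O(\eta)$, so the $O(\eta)$ bound propagates from the hypothesis $\|\theta_1 - \htheta\|_2^2 = O(\eta)$. An analogous but more delicate expansion of $\|\Delta_{t+1}\|_2^4$, using $(G_3)$ and the second-moment bound just obtained, gives the $O(\eta^2)$ fourth-moment bound.

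The core of the argument is a telescoping decomposition. Summing the SGD recursion $\Delta_{j+1} = \Delta_j - \eta\, g_s(\theta_j)$ gives $\sum_{j=1}^t g_s(\theta_j) = (\Delta_1 - \Delta_{t+1})/\eta$. Writing $g_s(\theta_j) = H\Delta_j + r_j + \xi_j$, where $r_j := \nabla f(\theta_j) - H\Delta_j$ is the Taylor remainder controlled by $(F_3)$ and $\xi_j := g_s(\theta_j) - \nabla f(\theta_j)$ is a martingale-difference noise by $(G_1)$, and solving for $\sum_{j=1}^t \Delta_j = t(\btheta_t - \htheta)$, I obtain
\[
t(\btheta_t - \htheta) = \underbrace{H^{-1}\tfrac{\Delta_1 - \Delta_{t+1}}{\eta}}_{a} \;\underbrace{-\,H^{-1}\textstyle\sum_{j=1}^t r_j}_{b}\;\underbrace{-\,H^{-1}\textstyle\sum_{j=1}^t \xi_j}_{c}.
\]
Since $t\,\Exp[(\btheta_t - \htheta)(\btheta_t - \htheta)^\top] = \tfrac{1}{t}\Exp[(a+b+c)(a+b+c)^\top]$, the proof reduces to isolating the dominant contribution of $c$ and bounding everything else.

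For the dominant term, martingale orthogonality ($\Exp[\xi_i\xi_j^\top] = 0$ for $i\neq j$, since the earlier noise is measurable while the later one is conditionally mean zero) gives $\Exp\big[(\sum_j\xi_j)(\sum_j\xi_j)^\top\big] = \sum_{j=1}^t \Exp[\xi_j\xi_j^\top]$. Using $(G_4)$ together with the Lipschitz bound $(F_2)$ on $\nabla f(\theta_j)\nabla f(\theta_j)^\top$, each summand satisfies $\Exp[\xi_j\xi_j^\top] = G + O(\Exp[\|\Delta_j\|_2]) = G + O(\sqrt\eta)$ via the second-moment bound. Hence $\tfrac{1}{t}\Exp[cc^\top] = H^{-1} G H^{-1} + O(\sqrt\eta)$, which produces the $\sqrt\eta$ term. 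It remains to bound the remaining self- and cross-terms. From the moment bounds, $\Exp[\|a\|_2^2] = O(1/\eta)$, $\Exp[\|c\|_2^2] = O(t)$, and (via Cauchy--Schwarz on $\sum_j\|\Delta_j\|_2^2$ and the fourth-moment bound) $\Exp[\|b\|_2^2] = O(t^2\eta^2)$. Applying Cauchy--Schwarz to the operator norms then gives $\tfrac{1}{t}\|\Exp[aa^\top]\|_2 = O(\tfrac{1}{t\eta})$, $\tfrac{1}{t}\|\Exp[bb^\top]\|_2 = O(t\eta^2)$, $\tfrac{1}{t}\|\Exp[ac^\top]\|_2 = O(\tfrac{1}{\sqrt{t\eta}})$, $\tfrac{1}{t}\|\Exp[ab^\top]\|_2 = O(\sqrt\eta)$, and $\tfrac{1}{t}\|\Exp[bc^\top]\|_2 = O(\sqrt{t\eta^2})$; collecting these and using $\sqrt{x+y}\asymp\sqrt{x} + \sqrt{y}$ yields the stated rate $\sqrt\eta + \sqrt{1/(t\eta) + t\eta^2}$. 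The main obstacle, as flagged, is the uniform fourth-moment control in the first step, since the nonlinear recursion for $\Exp[\|\Delta_t\|_2^4]$ must be shown to contract with the correct power of $\eta$; everything downstream is essentially bookkeeping of these moments against the three-term decomposition.
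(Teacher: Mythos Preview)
Your proposal is correct and matches the paper's proof in all essential ingredients: the same second- and fourth-moment lemmas (your contraction recursions are exactly the paper's Lemmas~1 and~2), the same three-way split into a boundary term, a Taylor-remainder term, and a martingale-noise term, the same martingale orthogonality plus $(G_4)$ to extract $H^{-1}GH^{-1}$ from the noise, and the same Cauchy--Schwarz bookkeeping on the self- and cross-terms.

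The one noteworthy difference is how you reach the decomposition. The paper first unrolls the linear recursion to $\Delta_t = (I-\eta H)^{t-1}\Delta_1 - \eta\sum_{i}(I-\eta H)^{t-1-i}(e_i + \nabla R(\Delta_i))$, sums over $t$, and then applies an Abel summation and the geometric-series identity to collapse the powers of $(I-\eta H)$ back down; this yields four pieces $\varphi_1,\dots,\varphi_4$. You instead sum the one-step recursion $\Delta_{j+1}-\Delta_j = -\eta g_s(\theta_j)$ directly and invert $H$ once, arriving in one line at the three terms $a,b,c$. Your $(a,b,c)$ coincide with the paper's $(\varphi_1+\varphi_4,\varphi_3,\varphi_2)$ up to a shift of one index, so the downstream bounds are identical. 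Your route is the classical Polyak--Juditsky telescoping and is strictly more elementary; the paper's detour through the unrolled recursion buys nothing extra here.
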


We provide the full proof in the appendix, and also we give precise (data-dependent) formulas for the above constants. For ease of exposition, we leave them as constants in the expressions above.
Further, in the next section, we relate a continuous approximation of SGD to Ornstein-Uhlenbeck process \cite{robbins1951stochastic} to give an intuitive explanation of our results.

{\em Discussion}. For linear regression, assumptions ($F_1$), ($F_2$), ($F_3$), and ($F_4$) are satisfied when the empirical risk function is not degenerate. In mini batch SGD using sampling with replacement, assumptions ($G_1$), ($G_2$), ($G_3$), and ($G_4$) are satisfied.
Linear regression's result is presented in Corollary \ref{cor:sgd-stat-inf-linear-regression} in the appendix.

For logistic regression, assumption ($F_1$) is not satisfied because the empirical risk function in this case is strictly but not strongly convex.
Thus, we cannot apply Theorem \ref{thm:sgd-strongly-convex-lipschitz-stat-inf} directly.
Instead, we consider the use of SGD on the \emph{square of the empirical risk function plus a constant; see eq. \eqref{eq:alt-logistic} below}.
When the empirical risk function is not degenerate,
\eqref{eq:alt-logistic} satisfies assumptions ($F_1$), ($F_2$), ($F_3$), and ($F_4$).
We cannot directly use vanilla SGD to minimize \eqref{eq:alt-logistic},
instead we describe a modified SGD procedure  for minimizing \eqref{eq:alt-logistic} in Section
\ref{subsec:logistic-stat-inf},
which satisfies assumptions ($G_1$), ($G_2$), ($G_3$), and ($G_4$).
We believe that this result is of interest by its own.
We present the result specialized for logistic regression in Corollary \ref{cor:sgd-stat-inf-logistic-regression}.

Note that \Cref{thm:sgd-strongly-convex-lipschitz-stat-inf} proves consistency for SGD with fixed step size,
requiring $\eta \to 0$ when $t \to \infty$.
However, we empirically observe in our experiments that a sufficiently large \emph{constant} $\eta$ gives better results.
We conjecture that the average of consecutive iterates in SGD with \emph{larger constant step size} converges to the optimum and we consider it for future work.


\subsection{Intuitive interpretation via the Ornstein-Uhlenbeck process approximation}
\label{subsec:thm1-informal}

Here, we describe a continuous approximation of the discrete SGD process and relate it to the Ornstein-Uhlenbeck process \cite{robbins1951stochastic}, to give an intuitive explanation of our results. 
In particular, under regularity conditions, the stochastic process $\Delta_t = \theta_t -\htheta$
asymptotically converges to an Ornstein-Uhlenbeck process $\Delta(t)$,
\cite{kushner1981asymptotic, pflug1986stochastic, Benveniste:1990:AAS:95267, kushner2003stochastic, mandt2016variational} that satisfies:
\begin{align}
\label{eq:sde-approx}
\,\mathrm{d} \Delta(T) = -  H \Delta(T)\, \mathrm{d}T + \sqrt{\eta} G^{\frac{1}{2}} \, \mathrm{d} B(T),
\end{align}
where $B(T)$ is a standard Brownian motion.
Given \eqref{eq:sde-approx}, $ \sqrt{t}(\btheta_t -\htheta)$ can be approximated as

\begin{equation}
\begin{aligned}
\label{eq:sgd-sde-int}
 \sqrt{t} (\btheta_t -\htheta) &= \tfrac{1}{\sqrt{t}} \sum_{i=1}^t (\theta_i - \htheta) \\
 &= \tfrac{1}{\eta \sqrt{t}} \sum_{i=1}^t (\theta_i - \htheta) \eta  \approx  \tfrac{1}{\eta \sqrt{t}} \int_0^{t \eta} \Delta(T)  \, \mathrm{d} T,
\end{aligned}
\end{equation}

where we use the approximation that $\eta \approx \, \mathrm{d} T$. 
By rearranging terms in \eqref{eq:sde-approx} and multiplying both sides by $H^{-1}$, we can rewrite the stochastic differential equation \eqref{eq:sde-approx} as
$\Delta(T) \, \mathrm{d}T  = - H^{-1} \,\mathrm{d} \Delta(T)  + \sqrt{\eta} H^{-1} G^{\frac{1}{2}} \,\mathrm{d} B(T)$.
Thus, we have
\begin{align}
\label{eq:delta-int}
& \int_0^{t \eta} \Delta(T)  \, \mathrm{d} T = \nonumber \\
& - H^{-1} (\Delta(t \eta)-\Delta(0)) + \sqrt{\eta} H^{-1} G^{\frac{1}{2}} B(t \eta).
\end{align}
After plugging \eqref{eq:delta-int} into \eqref{eq:sgd-sde-int} we have
\begin{align*}
&\sqrt{t} \left(\btheta_t -\htheta \right) \approx  \\
&- \tfrac{1}{\eta \sqrt{t}} H^{-1} \left(\Delta(t \eta)-\Delta(0) \right) + \tfrac{1}{\sqrt{t \eta}}  H^{-1} G^{\frac{1}{2}} B(t \eta).
\end{align*}
When $\Delta(0)=0$, the variance $\Var\big[- \sfrac{1}{\eta \sqrt{t}} \cdot H^{-1} (\Delta(t \eta)-\Delta(0))\big] = O\left(\sfrac{1}{t \eta}\right) $.
Since $\sfrac{1}{\sqrt{t \eta}} \cdot H^{-1} G^{\frac{1}{2}} B(t \eta) \sim \Norm(0,  H^{-1} G H^{-1} )$,
when $\eta \to 0$   and  $\eta t \to \infty$, we conclude that
\begin{align*}
\sqrt{t} (\btheta_t -\htheta) \sim \Norm(0,  H^{-1} G H^{-1} ).
\end{align*} 

\subsection{Logistic regression} \label{subsec:logistic-stat-inf}

We next apply our method to logistic regression.
We have $n$ samples $(X_1, y_1), (X_2, y_2), \dots (X_n, y_n)$ where $X_i \in \Real^p$ consists of features and $y_i \in \{+1, -1\}$ is the label.
We estimate $\theta$ of a linear classifier $\sign(\theta^T X) $ by: 
\begin{align*}
\htheta = \argmin_{\theta \in \mathbb{R}^p} \frac{1}{n} \sum_{i=1}^n \log\left(1+\exp(-y_i \theta^\top  X_i)\right).
\end{align*}

We cannot apply Theorem \ref{thm:sgd-strongly-convex-lipschitz-stat-inf}
directly because the empirical logistic risk is not strongly convex; it does not satisfy assumption ($F_1$).
Instead, we consider the convex function
\begin{small}
\begin{align}
\label{eq:alt-logistic}
f(\theta) = \frac{1}{2} \left(c+  \frac{1}{n} \sum_{i=1}^n \log\left(1+\exp(-y_i \theta^\top  X_i)\right)\right)^2, \nonumber
\\ \text{where $c > 0$~ (e.g., $c=1$).}
\end{align}
\end{small}
The gradient of $f(\theta)$ is a product of two terms
\begin{small}
\begin{align*}
\nabla f(\theta) = \underbrace{\left(c+  \frac{1}{n} \sum_{i=1}^n \log\left(1+\exp(-y_i \theta^\top X_i)\right)\right)}_{\Psi}
	\times \\ 
\underbrace{\nabla \left( \frac{1}{n} \sum_{i=1}^n \log\left(1+\exp(-y_i \theta^\top X_i)\right)\right)}_{\Upsilon}.
\end{align*}
\end{small}
Therefore, we can compute 
$g_s = \Psi_s \Upsilon_s$, \emph{using two independent random variables satisfying}  $\Exp[ \Psi_s \mid \theta ] =\Psi $ and $\Exp[ \Upsilon_s \mid \theta] = \Upsilon$.
For $\Upsilon_s$, we have
$
\Upsilon_s = \frac{1}{S_\Upsilon} \sum_{i \in I_t^\Upsilon} \nabla \log(1+\exp(-y_i \theta^\top  X_i))
$,
where $I_t^\Upsilon$ are $S_\Upsilon$ indices sampled from $[n]$ uniformly at random with replacement.
For $\Psi_s$, we have
$
\Psi_s = c + \frac{1}{S_\Psi} \sum_{i \in I_t^\Psi} \log(1+\exp(-y_i \theta^\top  X_i))
$,
where $I_t^\Psi$ are $S_\Psi$ indices uniformly sampled from $[n]$ with or without replacement.
Given the above, we have $\nabla f(\theta)^\top  (\theta - \htheta) \geq \alpha \| \theta - \htheta \|_2^2$ for some constant $\alpha$ by the generalized self-concordance of logistic regression \cite{bach2010self, bach2014adaptivity}, and therefore the assumptions are now satisfied.

For convenience, we write $k(\theta) = \frac{1}{n} \sum_{i=1}^n k_i(\theta)$ where $ k_i(\theta) = \log(1+\exp(-y_i \theta^\top  X_i))$.
Thus $f(\theta) = (k(\theta)+c)^2$, $\Exp[\Psi_s \mid \theta]  = k(\theta) + c$, and $\Exp[\Upsilon_s \mid \theta] = \nabla k(\theta)$.

\begin{corollary}
\label{cor:sgd-stat-inf-logistic-regression}
Assume $\|\theta_1 - \htheta\|_2^2 = O(\eta)$; also $S_\Psi = O(1)$, $S_\Upsilon=O(1)$ are bounded. Then, we have
\begin{small}
\begin{align*}
\Big\| t \Exp \left[ (\btheta_t - \htheta)( \btheta_t - \htheta)^\top \right] &- H^{-1} G H^{-1} \Big\|_2 \lesssim \sqrt{\eta} + \sqrt{\tfrac{1}{t \eta} + t \eta^2},
\end{align*}
\end{small}
where $H = \nabla^2 f(\htheta) = (c + k(\htheta)) \nabla^2 k(\htheta)$.
Here, $G= \frac{1}{S_\Upsilon} K_G(\htheta) \frac{1}{n} \sum_{i=1}^n \nabla k_i(\htheta) k_i(\htheta)^\top $
with $K_G(\theta) = \Exp[\Psi(\theta)^2]$ depending on how indexes are sampled to compute $\Psi_s$: 
\begin{itemize}
\item with replacement:  \small{$K_G(\theta) = \frac{1}{S_\Psi} ( \frac{1}{n} \sum_{i=1}^n (c+ k_i(\theta))^2 )
	+ \frac{S_\Psi-1}{S_\Psi} (c+k(\theta) )^2$} \vspace{-0.2cm},
\item no replacement: \small{$K_G(\theta) = \frac{1-\frac{S_\Psi-1}{n-1}}{S_\Psi}  ( \frac{1}{n} \sum_{i=1}^n (c+ k_i(\theta))^2 )  + \frac{S_\Psi-1}{S_\Psi} \frac{n}{n-1}  (c+k(\theta) )^2 $.}
\end{itemize}
Quantities other than $t$ and $\eta$ are data dependent constants.
\end{corollary}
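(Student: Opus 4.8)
The plan is to deduce the corollary from \Cref{thm:sgd-strongly-convex-lipschitz-stat-inf}: I would first verify that the surrogate objective $f(\theta) = (k(\theta)+c)^2$ satisfies $(F_1)$--$(F_4)$ and that the factored stochastic gradient $g_s = \Psi_s \Upsilon_s$ satisfies $(G_1)$--$(G_4)$, so that the covariance bound follows verbatim from the theorem. Separately, I would compute the explicit forms of $H = \nabla^2 f(\htheta)$ and $G = \Exp[g_s(\htheta) g_s(\htheta)^\top]$ claimed in the statement.

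For the deterministic hypotheses, differentiating gives $\nabla f(\theta) = (c+k(\theta)) \nabla k(\theta)$ and $\nabla^2 f(\theta) = \nabla k(\theta) \nabla k(\theta)^\top + (c+k(\theta)) \nabla^2 k(\theta)$. Since $\htheta$ minimizes the empirical logistic loss $k$, we have $\nabla k(\htheta) = 0$, so the rank-one term drops and $H = (c+k(\htheta)) \nabla^2 k(\htheta)$, matching the statement; $(F_4)$ then holds whenever the empirical logistic Hessian $\nabla^2 k(\htheta)$ is nondegenerate. For $(F_2)$ and $(F_3)$ I would use that each $\nabla k_i(\theta) = -y_i X_i \, \sigma(-y_i \theta^\top X_i)$ is uniformly bounded by $\|X_i\|_2$ while $k(\theta)$ grows at most linearly in $\|\theta - \htheta\|_2$, so the product $(c+k)\nabla k$ and its first-order Taylor remainder obey the required linear and quadratic bounds after treating small and large $\|\theta - \htheta\|_2$ separately. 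The crux is $(F_1)$: writing $(\theta - \htheta)^\top \nabla f(\theta) = (c+k(\theta)) \, (\theta - \htheta)^\top \nabla k(\theta)$, local strong convexity of $k$ near $\htheta$ handles small $\|\theta - \htheta\|_2$ (with the factor $c+k \geq c$), whereas for large $\|\theta - \htheta\|_2$ the decay of $(\theta - \htheta)^\top \nabla k(\theta)$ is compensated by the linear growth of $c+k(\theta)$; making this quantitative is exactly where the generalized self-concordance of logistic regression \cite{bach2010self, bach2014adaptivity} enters, and I expect this to be the main obstacle.

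For the stochastic hypotheses, because $\Psi_s$ and $\Upsilon_s$ are drawn independently, $\Exp[g_s \mid \theta] = \Exp[\Psi_s \mid \theta] \, \Exp[\Upsilon_s \mid \theta] = (c+k(\theta)) \nabla k(\theta) = \nabla f(\theta)$, giving $(G_1)$, and the moment conditions factor as $\Exp[\|g_s\|_2^2 \mid \theta] = \Exp[\Psi_s^2 \mid \theta] \, \Exp[\|\Upsilon_s\|_2^2 \mid \theta]$, similarly for the fourth moment. Since $\Upsilon_s$ averages bounded gradients, $\Exp[\|\Upsilon_s\|_2^2]$ and $\Exp[\|\Upsilon_s\|_2^4]$ are bounded constants (using $S_\Upsilon = O(1)$), whereas $\Psi_s$ grows at most linearly in $\|\theta - \htheta\|_2$, so $\Exp[\Psi_s^2]$ and $\Exp[\Psi_s^4]$ grow at most quadratically and quartically, yielding $(G_2)$ and $(G_3)$. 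For $(G_4)$, independence gives $\Exp[g_s g_s^\top \mid \theta] = K_G(\theta) \, \Exp[\Upsilon_s \Upsilon_s^\top \mid \theta]$ with scalar $K_G(\theta) = \Exp[\Psi_s^2 \mid \theta]$; I would expand both factors around $\htheta$ and bound the deviation from $G$ by a degree-four polynomial in $\|\theta - \htheta\|_2$, the terms arising from the quadratic growth of $K_G$ and from the cross term $\nabla k(\theta) \nabla k(\theta)^\top$ in the second moment of $\Upsilon_s$, which is $O(\|\theta - \htheta\|_2^2)$. Evaluating at $\htheta$ and again using $\nabla k(\htheta) = 0$ kills that cross term, so $\Exp[\Upsilon_s \Upsilon_s^\top \mid \htheta] = \frac{1}{S_\Upsilon} \frac{1}{n} \sum_i \nabla k_i(\htheta) \nabla k_i(\htheta)^\top$, and $G = K_G(\htheta) \cdot \frac{1}{S_\Upsilon} \frac{1}{n} \sum_i \nabla k_i(\htheta) \nabla k_i(\htheta)^\top$ as claimed.

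It then remains to identify $K_G(\htheta) = \Exp[\Psi_s^2 \mid \htheta] = \Var[\Psi_s] + (c+k(\htheta))^2$. With $S_\Psi$ indices sampled with replacement, $\Var[\Psi_s] = \frac{1}{S_\Psi} \big( \frac{1}{n} \sum_i (c+k_i)^2 - (c+k)^2 \big)$, which rearranges to the stated with-replacement expression; the without-replacement case is identical after inserting the finite-population correction $\frac{n - S_\Psi}{n-1}$ into the sampling variance, which reproduces the factors $\frac{1 - (S_\Psi - 1)/(n-1)}{S_\Psi}$ and $\frac{S_\Psi - 1}{S_\Psi} \frac{n}{n-1}$. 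Collecting $(F_1)$--$(F_4)$ and $(G_1)$--$(G_4)$ and invoking \Cref{thm:sgd-strongly-convex-lipschitz-stat-inf} yields the covariance bound, completing the proof.
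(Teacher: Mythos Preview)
Your proposal is correct and follows essentially the same route as the paper's proof: verify $(F_1)$--$(F_4)$ for the surrogate $f=\tfrac12(c+k)^2$ and $(G_1)$--$(G_4)$ for the factored stochastic gradient $g_s=\Psi_s\Upsilon_s$, then invoke \Cref{thm:sgd-strongly-convex-lipschitz-stat-inf}, with the explicit identifications of $H$, $G$, and $K_G$ matching the paper's computation. The only minor deviation is in the $(F_1)$ step: the paper's appendix uses an elementary convexity argument along rays (monotonicity of $t\mapsto \nabla k(\htheta+tu)^\top u$) rather than generalized self-concordance, and your phrase about the ``decay'' of $(\theta-\htheta)^\top\nabla k(\theta)$ is slightly off---that quantity does not decay but grows only linearly (since $\nabla k$ is bounded), and the linear growth of $c+k(\theta)$ supplies the missing factor to get the quadratic lower bound.
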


As with the results above, in the appendix we give data-dependent expressions for the constants.
Simulations suggest that the term $t \eta^2$ in our bound is an artifact of our analysis.
Because in logistic regression  the estimate's covariance is
$ \frac{\left(\nabla^2 k(\htheta)\right)^{-1}}{n}  \left(\frac{\sum_{i=1}^n \nabla k_i(\htheta) \nabla k_i(\htheta)^\top}{n}  \right) \left(\nabla^2 k(\htheta)\right)^{-1}$,
we set the scaling factor $K_s=\frac{(c+k(\htheta))^2}{K_G(\htheta)}$ in \eqref{eq:stat-inf-formula} for statistical inference.
Note that $K_s \approx 1$ for sufficiently large $S_\Psi$.  


\section{Related work}{\label{sec:related}}

\emph{Bayesian inference:}
First and second order iterative optimization algorithms --including SGD, gradient descent, and variants-- naturally define a Markov chain. 
Based on this principle, 
most related to this work is the case of stochastic gradient Langevin dynamics (SGLD) for Bayesian inference -- namely, for sampling from the posterior distributions -- using a variant of SGD \cite{welling2011bayesian, bubeck2015finite, mandt2016variational, mandt2017stochastic}. 
We note that, here as well, the vast majority of the results rely on using a decreasing step size. Very recently, \cite{mandt2017stochastic} uses a heuristic approximation for Bayesian inference, and provides results for fixed step size.

Our problem is different in important ways from the Bayesian inference problem. 
In such 
parameter estimation problems, the covariance of the estimator only depends on the gradient of the likelihood function. 
This is not the case, however, in general frequentist $M$-estimation problems (\emph{e.g.}, linear regression).  
In these cases, the covariance of the estimator depends both on the gradient and Hessian of the empirical risk function. 
For this reason, without second order information, SGLD methods are poorly suited for general $M$-estimation problems in frequentist inference. 
In contrast, our method exploits properties of averaged SGD, and computes the estimator's covariance without second order information. 

\emph{Connection with Bootstrap methods:}
The classical approach for statistical inference is to use the bootstrap \cite{efron1994introduction, shao2012jackknife}. 
Bootstrap samples are generated by 
replicating the entire data set by resampling, and then solving the optimization problem 
on each generated set of the data.  
We identify our algorithm and its analysis as an alternative to bootstrap methods. Our analysis is also specific to SGD, and thus sheds light on the statistical properties of this very widely used algorithm.

\emph{Connection with stochastic approximation methods:}
It has been long observed in stochastic approximation that under certain conditions, SGD displays asymptotic normality for both the setting of {\em decreasing step size}, e.g., \cite{ljung2012stochastic,polyak1992acceleration}, and more recently, \cite{toulis2014asymptotic,chen2016statistical}; and also for {\em fixed step size}, e.g., \cite{Benveniste:1990:AAS:95267}, Chapter 4. All of these results, however, provide their guarantees with the requirement that the stochastic approximation iterate converges to the optimum. For decreasing step size, this is not an overly burdensome assumption, since with mild assumptions it can be shown directly. As far as we know, however, it is not clear if this holds in the fixed step size regime. To side-step this issue, \cite{Benveniste:1990:AAS:95267} provides results only when the (constant) step-size approaches 0 (see Section 4.4 and 4.6, and in particular Theorem 7 in \cite{Benveniste:1990:AAS:95267}).  Similarly, while \cite{kushner2003stochastic} has asymptotic results on the average of consecutive stochastic approximation iterates with constant step size, it assumes convergence of iterates (assumption A1.7 in Ch. 10) -- an assumption we are unable to justify in even simple settings.

Beyond the critical difference in the assumptions, the majority of the ``classical'' subject matter seeks to prove asymptotic results about different flavors of SGD, but does not properly consider its use for inference. Key exceptions are the recent  work in \cite{toulis2014asymptotic} and \cite{chen2016statistical}, which follow up on \cite{polyak1992acceleration}. Both of these rely on decreasing step size, for reasons mentioned above. The work in \cite{chen2016statistical} uses SGD with decreasing step size for estimating an $M$-estimate's covariance. Work in \cite{toulis2014asymptotic} studies implicit SGD with decreasing step size and proves results similar to \cite{polyak1992acceleration}, however it does not use SGD to compute confidence intervals.

Overall, to the best of our knowledge, there are no prior results establishing asymptotic normality for SGD with fixed step size for general M-estimation problems (that do not rely on overly restrictive assumptions, as discussed). 


\begin{figure*}[!t]
\centering
\subfloat[][Normal.]{
\includegraphics[width=.32\textwidth]{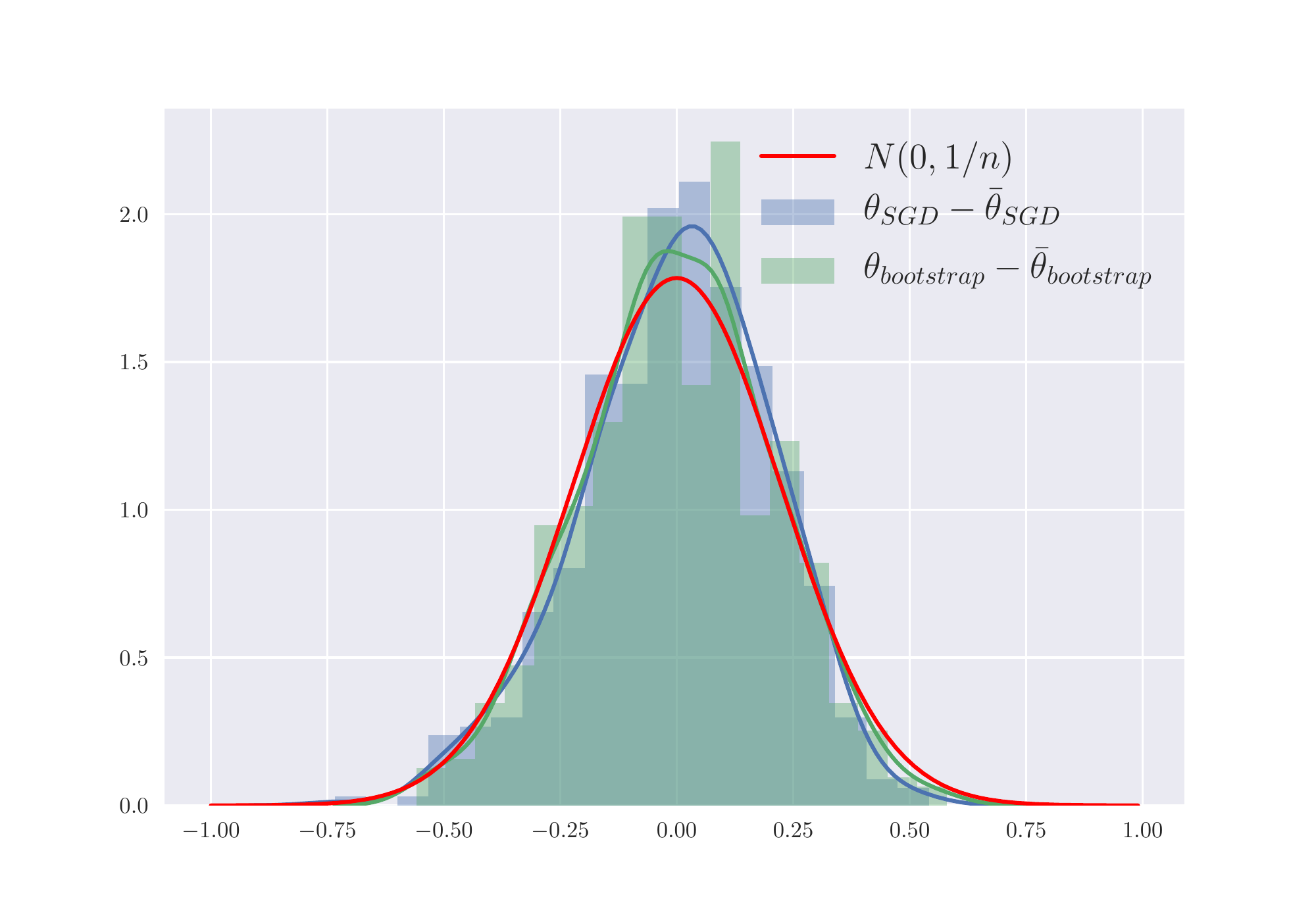}
\label{fig:exp:sim:1d-normal-nips}
}
\subfloat[][Exponential.]{
\includegraphics[width=.32\textwidth]{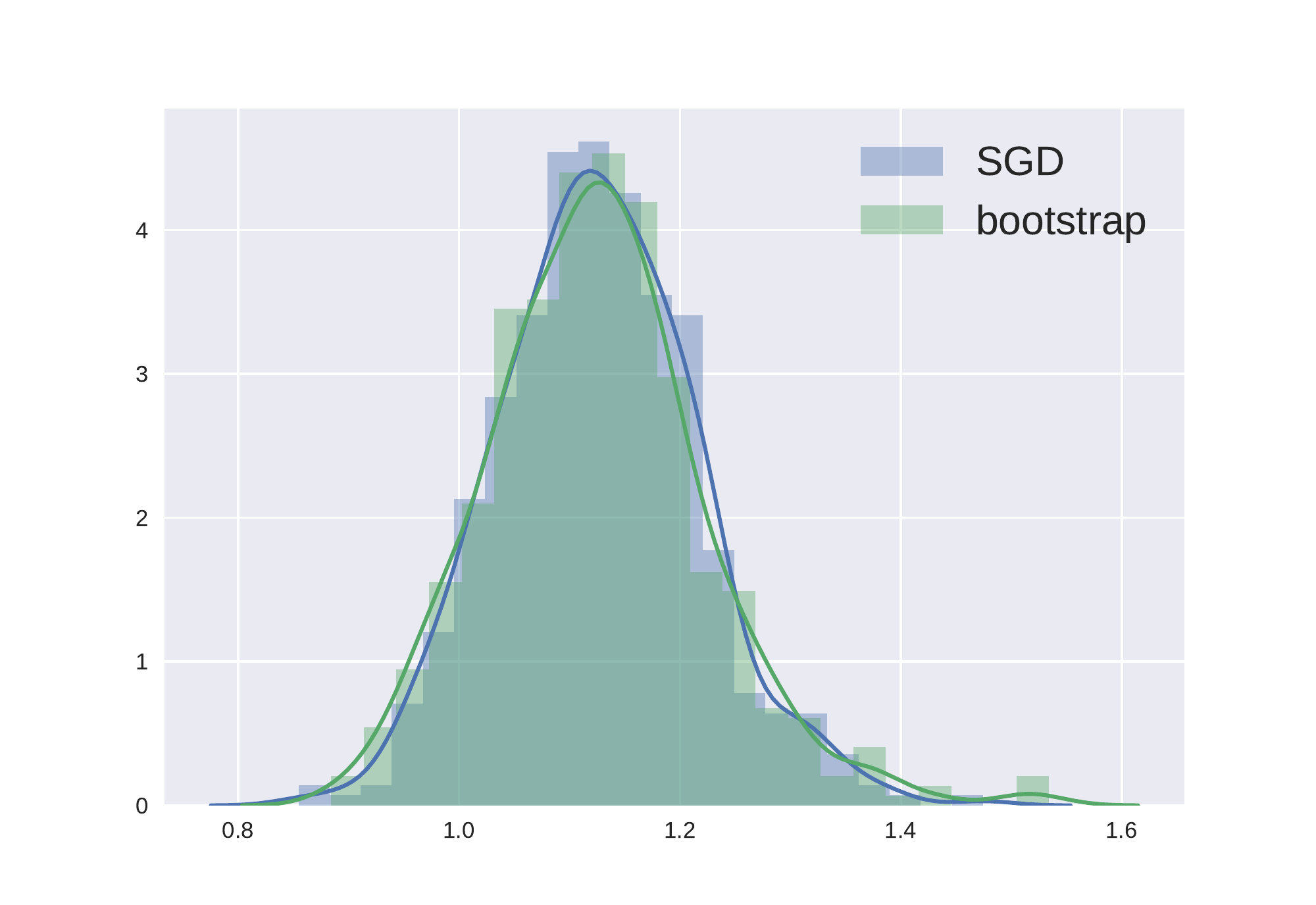}
\label{fig:exp:sim:1d-exponential-nips}
}
\subfloat[][Poisson.]{
\includegraphics[width=.32\textwidth]{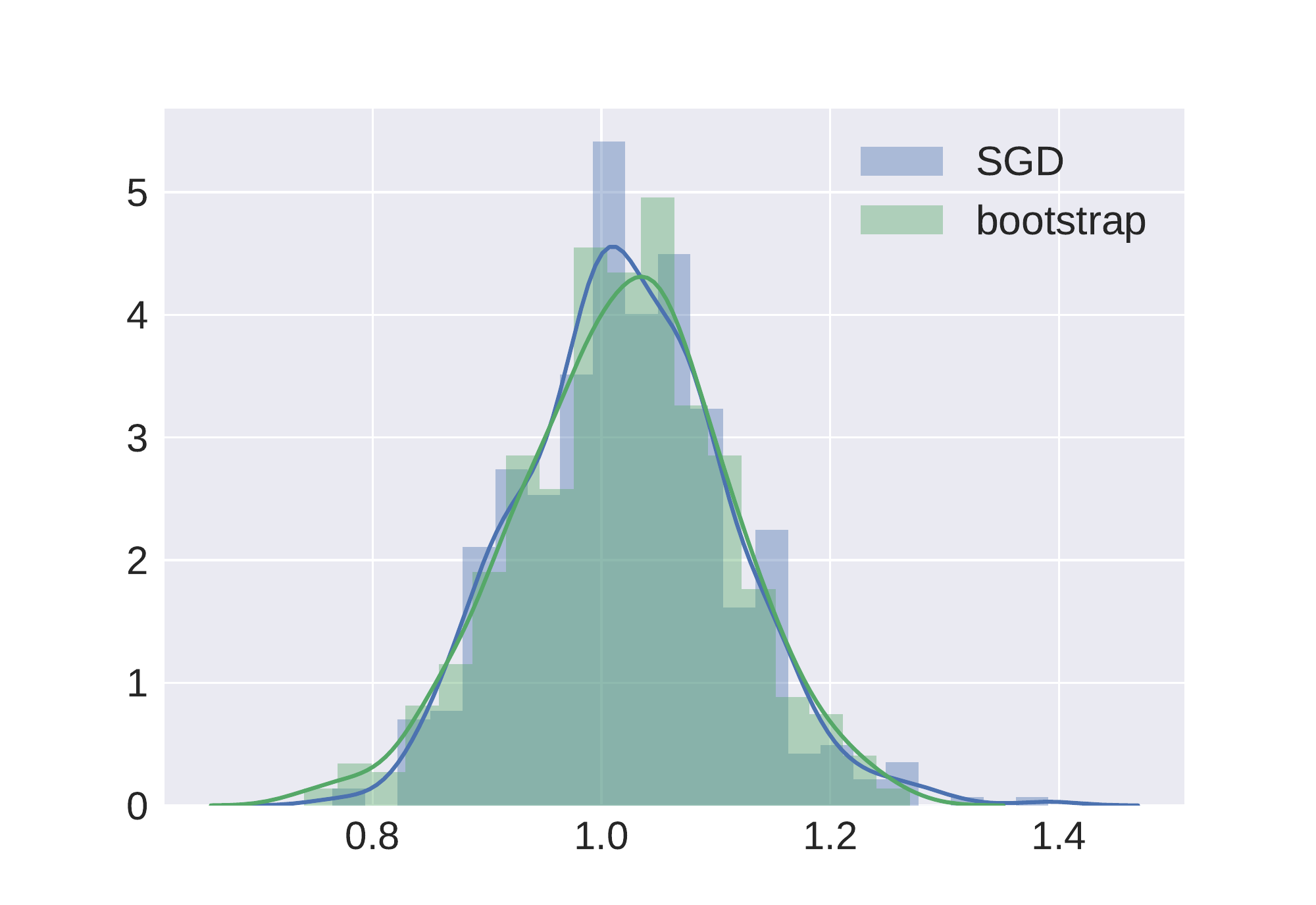}
\label{fig:exp:sim:1d-poisson-nips}
}
\caption{Estimation in univariate models.}
\label{fig:exp:sim:univariate} 
\end{figure*}

\begin{table*}[!h]
\centering
\rowcolors{2}{white}{black!05!white}
\resizebox{\textwidth}{!}{%
\subfloat[][Bootstrap (0.941, 4.14), normal approximation (0.928, 3.87)]{
 \begin{tabular}{c c c c c  c c}
 \toprule
 $\eta$ & & $t=100$ & & $t=500$ & & $t=2500$ \\
 \cmidrule{1-1} \cmidrule{3-3} \cmidrule{5-5} \cmidrule{7-7}
 $0.1$ & & (0.957, 4.41) & & (0.955, 4.51) & & (0.960, 4.53) \\
 $0.02$ & & (0.869, 3.30) & & (0.923, 3.77) & & (0.918, 3.87) \\
 $0.004$ & & (0.634, 2.01) & & (0.862, 3.20) & & (0.916, 3.70) \\
 \bottomrule
 \end{tabular}
 \label{tab:exp:coverage:linear:1-nips}
}

\subfloat[][Bootstrap (0.938, 4.47), normal approximation (0.925, 4.18)]{
 \begin{tabular}{c c c c  c  c c}
 \toprule
 $\eta$ & & $t=100$ & & $t=500$ & & $t=2500$ \\
 \cmidrule{1-1} \cmidrule{3-3} \cmidrule{5-5} \cmidrule{7-7}
 $0.1$ & & (0.949, 4.74) & & (0.962, 4.91) & & (0.963, 4.94) \\
 $0.02$ & & (0.845, 3.37) & & (0.916, 4.01) & & (0.927, 4.17) \\
 $0.004$ & & (0.616, 2.00) & &  (0.832, 3.30) & & (0.897, 3.93) \\
 \bottomrule
 \end{tabular}
\label{tab:exp:coverage:linear:2-nips}
}
}
\caption{Linear regression. \textit{Left}: Experiment 1, \textit{Right}: Experiment 2.}
\label{tab:exp:coverage:linear} 
\end{table*}

\begin{table*}[!h]
\centering
\rowcolors{2}{white}{black!05!white}
\resizebox{\textwidth}{!}{%
\subfloat[][Bootstrap (0.932, 0.253), normal approximation (0.957, 0.264)]{
 \begin{tabular}{c c c c c c c}
 \toprule
 $\eta$ & & $t=100$ & & $t=500$ & & $t=2500$ \\
  \cmidrule{1-1} \cmidrule{3-3} \cmidrule{5-5} \cmidrule{7-7}
 $0.1$ & & (0.872, 0.204) & & (0.937, 0.249) & & (0.939, 0.258) \\
 $0.02$ & &  (0.610, 0.112) & & (0.871, 0.196) & & (0.926, 0.237) \\
 $0.004$ & & (0.312, 0.051) & & (0.596, 0.111) &  & (0.86, 0.194) \\
 \bottomrule
 \end{tabular}
 \label{tab:exp:coverage:logistic:1-nips}
}
~
\subfloat[][Bootstrap (0.932, 0.245), normal approximation (0.954, 0.256)]{
 \begin{tabular}{c c c c c c c}
 \toprule
 $\eta$ & & $t=100$ & & $t=500$ & & $t=2500$ \\
 \cmidrule{1-1} \cmidrule{3-3} \cmidrule{5-5} \cmidrule{7-7}
 $0.1$ & & (0.859, 0.206) & & (0.931, 0.255) & & (0.947, 0.266) \\
 $0.02$ & & (0.600, 0.112) & & (0.847, 0.197) & & (0.931, 0.244) \\
 $0.004$ & & (0.302, 0.051) & & (0.583, 0.111) & & (0.851, 0.195) \\
 \bottomrule
 \end{tabular}
\label{tab:exp:coverage:logistic:2-nips}
}
}
\caption{Logistic regression. \textit{Left}: Experiment 1, \textit{Right}: Experiment 2.}
\label{tab:exp:coverage:logistic} 
\end{table*}

\section{Experiments}
\label{sec:experiments}

\subsection{Synthetic data}

The coverage probability is defined as
$
\small{\frac{1}{p} \sum_{i=1}^p \Pr[\theta^\star_i \in \hat{C}_i]}
$
where $\small{\theta^\star = \argmin_\theta \Exp[f(\theta, X)] \in \Real^p}$, and
$\hat{C}_i$ is the estimated confidence interval for the $i^{\text{th}}$ coordinate.
The average confidence interval width is defined as
$
\frac{1}{p} \sum_{i=1}^p (\hat{C}_i^u - \hat{C}_i^l)
$
where $[\hat{C}_i^l , \hat{C}_i^u]$ is the estimated confidence interval for the $i^{\text{th}}$ coordinate.
In our experiments, coverage probability and average confidence interval width are estimated through simulation.
We use the  empirical quantile  of our SGD inference procedure and bootstrap to compute the 95\% confidence intervals for each coordinate of the parameter.
%
For results given as a pair $(\alpha, \beta)$, it usually indicates (coverage probability, confidence interval length).

\subsubsection{Univariate models}

In \Cref{fig:exp:sim:univariate}, we compare our SGD inference procedure with $(i)$ Bootstrap and $(ii)$ normal approximation with inverse Fisher information in univariate models.
We observe that our method and Bootstrap have similar statistical properties.
\Cref{fig:exp:sim:univariate-qq} in the appendix shows
Q-Q plots of samples from our SGD inference procedure.

\medskip
{\em Normal distribution mean estimation:} \Cref{fig:exp:sim:1d-normal-nips} compares 500 samples from SGD inference procedure and Bootstrap
versus the distribution $\Norm(0, 1/n)$,
using $n=20$ i.i.d. samples from $\Norm(0, 1)$.
We used mini batch SGD described in Sec. \ref{sec:mean-est-exact}.
For the parameters, we used $\eta=0.8$, $t=5$, $d=10$, $b=20$, and mini batch size of 2.
Our SGD inference procedure gives (0.916 , 0.806),
Bootstrap gives (0.926 , 0.841),
and normal approximation gives (0.922 , 0.851).

\medskip
{\em Exponential distribution parameter estimation:} \Cref{fig:exp:sim:1d-exponential-nips} compares 500 samples from inference procedure and Bootstrap,
using $n=100$ samples from an exponential distribution with PDF
$\lambda e^{-\lambda x}$ where $\lambda=1$.
We used SGD for MLE with mini batch sampled with replacement.
For the parameters, we used $\eta=0.1$, $t=100$, $d=5$, $b=100$, and mini batch size of 5.
Our SGD inference procedure gives (0.922, 0.364),
Bootstrap gives (0.942 , 0.392),
and normal approximation gives (0.922, 0.393).

\medskip
{\em Poisson distribution parameter estimation:} \Cref{fig:exp:sim:1d-poisson-nips} compares 500 samples from inference procedure and Bootstrap,
using $n=100$ samples from a Poisson distribution with PDF
$\lambda^x e^{-\lambda x}$ where $\lambda=1$.
We used SGD for MLE with mini batch sampled with replacement.
For the parameters, we used $\eta=0.1$, $t=100$, $d=5$, $b=100$, and mini batch size of 5.
Our SGD inference procedure gives (0.942 , 0.364),
Bootstrap gives (0.946 , 0.386),
and normal approximation gives (0.960 , 0.393).

\subsubsection{Multivariate models}


In these experiments, we set $d = 100$, used mini-batch size of $4$, and used 200 SGD samples. In all cases, we compared with Bootstrap using 200 replicates. We computed the coverage probabilities using 500 simulations. Also, we denote $1_p = \begin{bmatrix} 1 & 1 & \dots & 1 \end{bmatrix}^\top \in \Real^p$.
Additional simulations comparing covariance matrix computed with different methods are given in Sec. \ref{subsubsec:nip2017:appendix:experiments:mult}.

\textbf{Linear regression:}
{\em Experiment 1:}
Results for the case where $X \sim \Norm(0, I) \in \Real^{10}$,
$Y = {w^*}^T X + \epsilon$,
$w^* = 1_p / \sqrt{p}$, and $\epsilon \sim \Norm(0, \sigma^2=10^2)$ with $n=100$ samples is given in
 \Cref{tab:exp:coverage:linear:1-nips}.
Bootstrap gives (0.941, 4.14),
and confidence intervals computed using the error covariance and normal approximation gives (0.928, 3.87).
%
{\em Experiment 2:}
Results for the case where $X \sim \Norm(0, \Sigma) \in \Real^{10}$, $\Sigma_{ij} = 0.3^{|i-j|}$,
$Y = {w^*}^T X + \epsilon$,
$w^* = 1_p / \sqrt{p}$,
and $\epsilon \sim \Norm(0, \sigma^2=10^2)$ with $n=100$ samples is given in
 \Cref{tab:exp:coverage:linear:2-nips}.
Bootstrap gives (0.938, 4.47), and confidence intervals computed using the error covariance and normal approximation gives (0.925, 4.18).

\textbf{Logistic regression:}
Here we show results for logistic regression trained using vanilla SGD with mini batch sampled with replacement.
Results for modified SGD (Sec. \ref{subsec:logistic-stat-inf}) are given in Sec. \ref{subsubsec:nip2017:appendix:experiments:mult}.
{\em Experiment 1:}
Results for the case where $\Pr[Y=+1] = \Pr[Y=-1]=1/2$,
$X \mid Y \sim \Norm(0.01 Y 1_p / \sqrt{p}, I) \in \Real^{10}$
with $n=1000$ samples is given in
 \Cref{tab:exp:coverage:logistic:1-nips}.
Bootstrap gives (0.932, 0.245), and confidence intervals computed using inverse Fisher matrix as the error covariance and normal approximation gives (0.954, 0.256).
%
{\em Experiment 2:}
Results for the case where $\Pr[Y=+1] = \Pr[Y=-1]=1/2$,
$X \mid Y \sim \Norm(0.01 Y 1_p / \sqrt{p}, \Sigma) \in \Real^{10}$,
$\Sigma_{ij}=0.2^{|i-j|}$ with $n=1000$ samples is given in
 \Cref{tab:exp:coverage:logistic:2-nips}.
Bootstrap gives (0.932, 0.253), and confidence intervals computed using inverse Fisher matrix as the error covariance and normal approximation gives (0.957, 0.264).

\subsection{Real data}
Here, we compare covariance matrices computed using our SGD inference procedure,
bootstrap, and inverse Fisher information matrix
on the LIBSVM Splice data set,
and we observe that they have similar statistical properties.

\subsubsection{Splice data set}

The Splice data set~\footnote{\url{https://www.csie.ntu.edu.tw/~cjlin/libsvmtools/datasets/binary.html}}
 contains 60 distinct features with 1000 data samples.
This is a classification problem between two classes of splice junctions in a DNA sequence.
We use a logistic regression model trained using vanilla SGD.

In Figure \ref{fig:exp:real:libsvm:splice:main},
we  compare  the covariance matrix computed using our SGD inference procedure and bootstrap
$n=1000$ samples.
We used 10000 samples from both bootstrap and our SGD inference procedure with
$t=500$, $d=100$, $\eta=0.2$, and mini batch size of $6$.

\begin{figure}[h!]
\centering
\subfloat[Bootstrap]{
\includegraphics[width=0.48\textwidth]{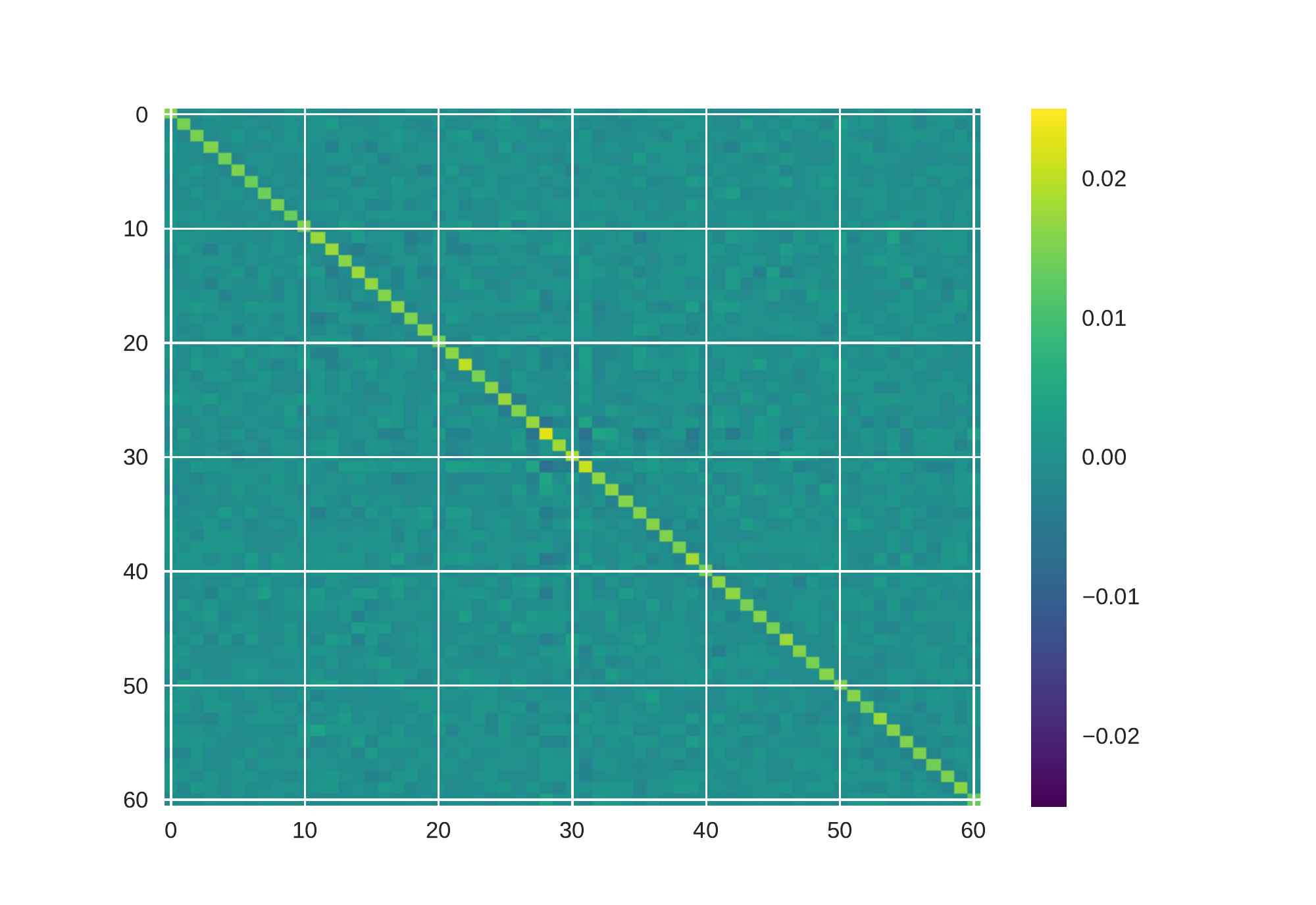}
}
\subfloat[SGD inference covariance]{
\includegraphics[width=0.48\textwidth]{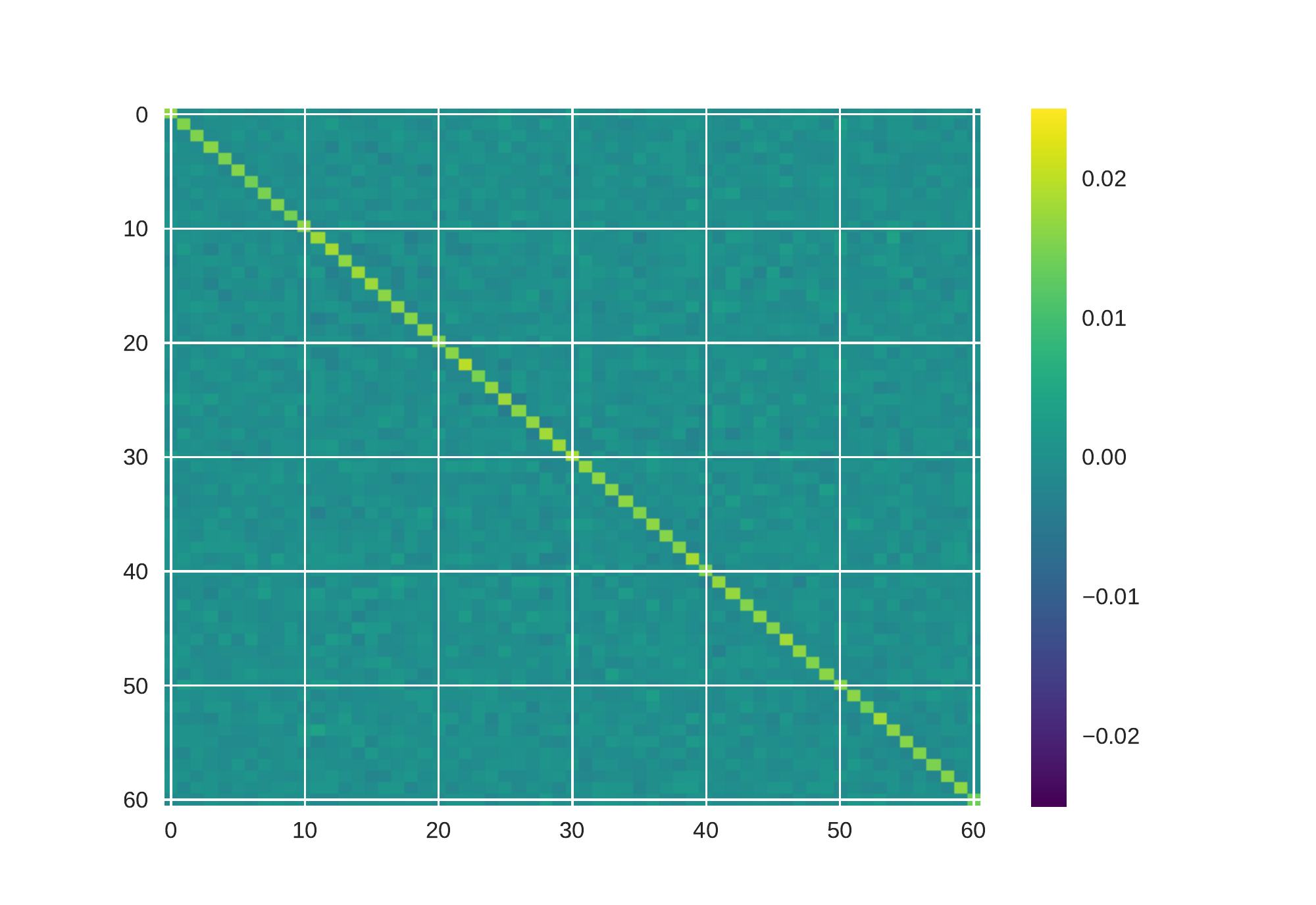}
}
\caption{Splice data set}
\label{fig:exp:real:libsvm:splice:main}
\end{figure}

\begin{figure}[h!]
\centering
\subfloat[
Original ``0'': 
logit -46.3, \newline
CI (-64.2, -27.9)
]{
\includegraphics[width=0.48\textwidth]{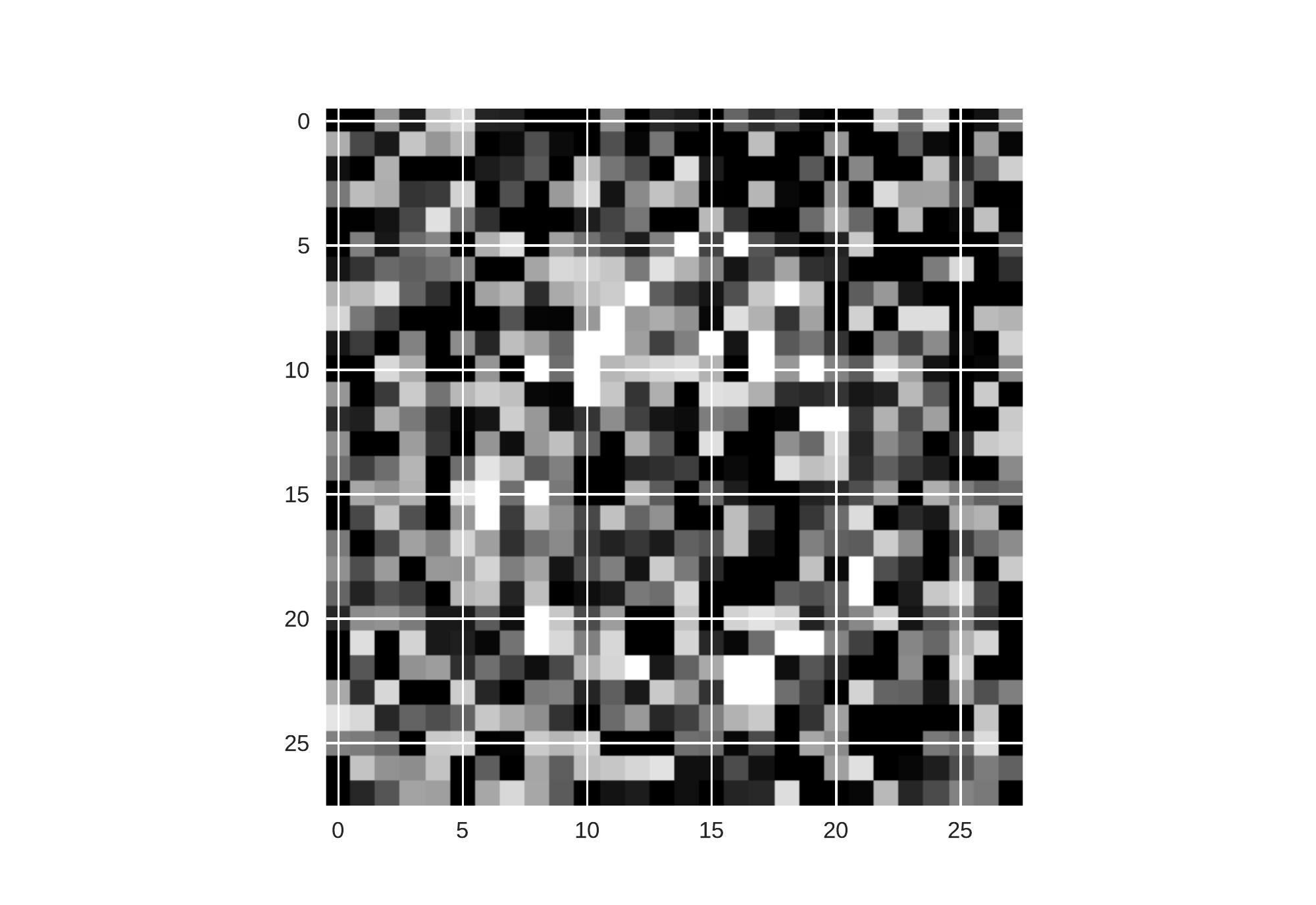}
}
\subfloat[
Adversarial ``0'': 
logit 16.5, \newline
CI (-10.9, 30.5)
]{
\includegraphics[width=0.48\textwidth]{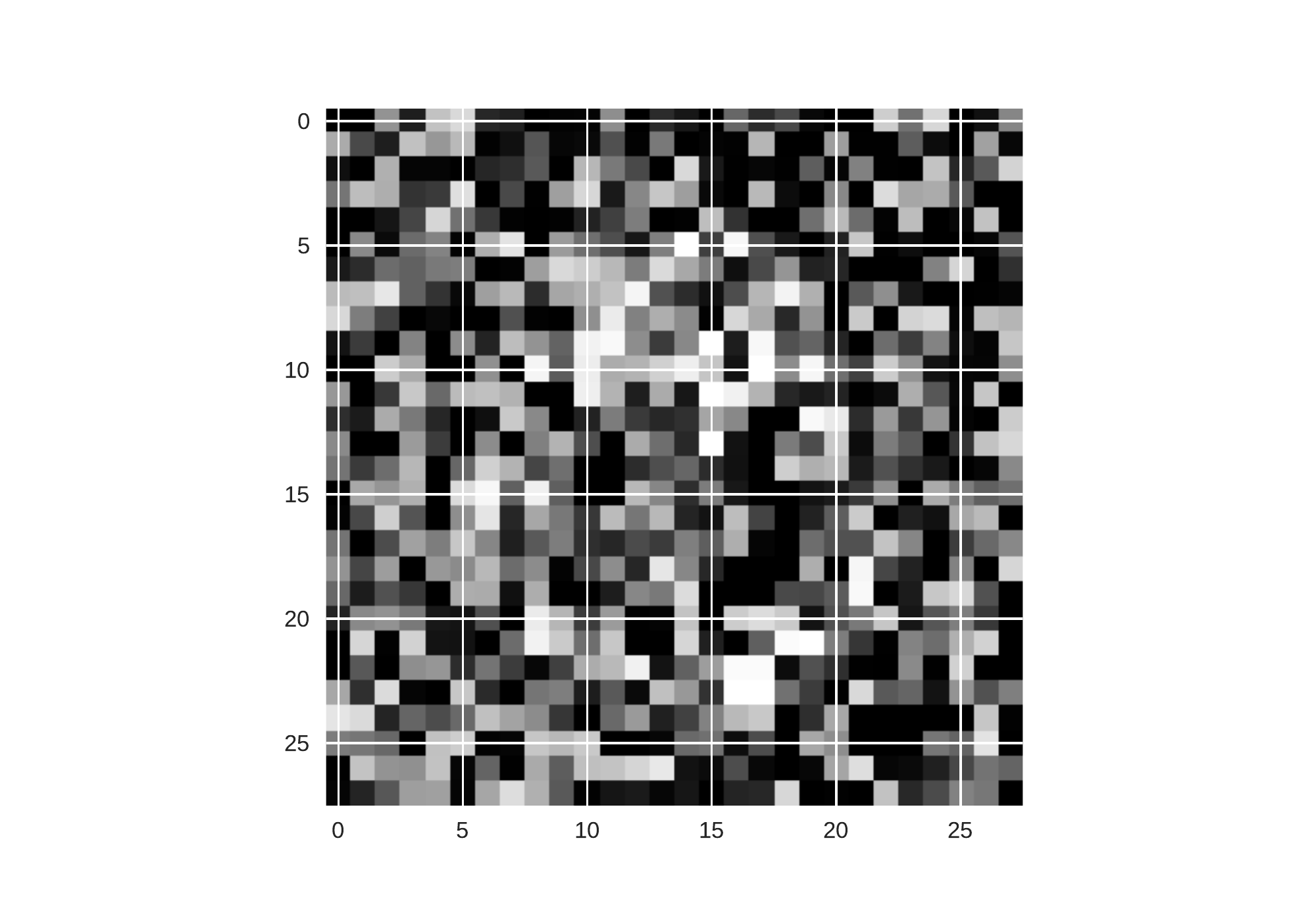}
}
\caption{MNIST}
\label{fig:exp:real:mnist}
\end{figure}

\subsubsection{MNIST}

Here, we train a binary logistic regression classifier to classify 0/1 using a noisy MNIST data set, 
and demonstrate that adversarial examples 
produced by gradient attack \cite{goodfellow2014explaining} 
(perturbing an image in the direction of loss function's gradient with respect to data) 
can be detected using prediction  intervals. 
We flatten each $28 \times 28$ image into a 784 dimensional vector, 
and  train a linear classifier using pixel values as features. 
To add noise to each image, where each original pixel is either 0 or 1, 
we randomly changed 70\% pixels to random numbers uniformly on $[0, 0.9]$. 
Next we train the classifier on the noisy MNIST data set, 
and generate adversarial examples using this noisy MNIST data set. 
\Cref{fig:exp:real:mnist} shows each image's logit value 
($\log \frac{\Pr[1 \mid \text{image}]}{\Pr[0 \mid \text{image}]}$) 
and its 95\% confidence interval (CI) computed using quantiles from our SGD inference procedure.

\subsection{Discussion}

In our experiments, we observed that using a larger step size $\eta$ produces accurate results with significantly accelerated convergence time. This might imply that the $\eta$ term in Theorem \ref{thm:sgd-strongly-convex-lipschitz-stat-inf}'s bound is an artifact of our analysis.  Indeed, although \Cref{thm:sgd-strongly-convex-lipschitz-stat-inf} only applies to SGD with fixed step size,
where $\eta t \to \infty$ and $\eta^2 t \to 0$ imply that the step size should be smaller when the number of consecutive iterates used for the average is larger, our experiments suggest that we can use a (data dependent) constant step size $\eta$ and only require $\eta t \to \infty$.

In the experiments, our SGD inference procedure uses
$
(t+d) \cdot S \cdot p
$
operations to produce a sample, and Newton method uses  $n \cdot (\text{matrix inversion complexity } = \Omega(p^2)) \cdot (\text{number of Newton iterations}~t)$  operations to produce a sample.  The experiments therefore suggest that our SGD inference procedure produces results similar to Bootstrap while using far fewer operations.

\section{Acknowledgments}
This work was partially supported by NSF Grants 1609279, 1704778 and 1764037, and also by the USDoT through the Data-Supported Transportation Operations and Planning (D-STOP) Tier 1 University Transportation Center. 
A.K. is supported by the IBM Goldstine fellowship. 
We thank Xi Chen, Philipp Kr\"ahenb\"uhl, Matthijs Snel, and Tom Spangenberg for insightful discussions. 

{
\fontsize{9.0pt}{10.0pt} \selectfont
\bibliography{ref.bib}
\bibliographystyle{alpha}
}

\newpage

\appendix

\onecolumn



\section{Exact analysis of mean estimation}
\label{sec:mean-est-exact}

In this section, we give an exact analysis of our method in the least squares, mean estimation problem. 
For $n$ i.i.d. samples $X_1, X_2, \dots , X_n $, the mean is estimated by solving the following optimization problem 
\begin{align*}
\hat{\theta} = \argmax_{\theta \in \mathbb{R}^p} \frac{1}{n} \sum_{i=1}^n \tfrac{1}{2} \|X_i - \theta\|_2^2 = \frac{1}{n} \sum_{i=1}^n X_i.
\end{align*}
In the case of mini-batch SGD, we sample $S=O(1)$ indexes uniformly randomly with replacement from $[n]$; denote that index set as $I_t$.
For convenience, we write  $Y_t = \frac{1}{S} \sum_{i \in I_t} X_i$, 
Then, in the $t^{\text{th}}$ mini batch SGD step, the update step is 
\begin{align}
\theta_{t+1} = \theta_t - \eta ( \theta_t -Y_t ) = (1 - \eta) \theta_t + \eta Y_t,  \label{eq::mean-est-sgd-iter}
 \end{align}
which is the same as the exponential moving average. 
And we have 
\begin{align}
\sqrt{t} \htheta_t = - \frac{1}{\eta \sqrt{t}} (\theta_{t+1} - \theta_1) + \frac{1}{\sqrt{t}} \sum_{i=1}^n Y_i . 
\end{align}
Assume that $\|\theta_1 -\htheta\|_2^2 = O(\eta)$, then from Chebyshev's inequality $ - \frac{1}{\eta \sqrt{t}} (\theta_{t+1} - \theta_1) \to 0 $  almost surely when $t \eta \to \infty$. 
By the central limit theorem, $\frac{1}{\sqrt{t}} \sum_{i=1}^n Y_i$ converges weakly to 
$\Norm(\htheta, \frac{1}{S} \hat{\Sigma})$ 
with $\hat{\Sigma} = \frac{1}{n} \sum_{i=1}^n (X_i - \htheta )( X_i - \htheta )^\top $. 
From \eqref{eq::mean-est-sgd-iter}, we have $\| \Cov(\theta_a, \theta_b) \|_2 = O(\eta (1-\eta)^{|a-b|})$ uniformly for all $a, b$, where the constant is data dependent. 
Thus, for our SGD inference procedure, 
we have $\| \Cov(\theta^{(i)}, \theta^{(j)}) \|_2 = O(\eta (1-\eta)^{d + t |i-j|})$. 
Our SGD inference procedure does not generate samples that are independent  conditioned on the data, 
whereas   replicates are independent conditioned on the data in bootstrap, 
but this suggests that our SGD inference procedure can produce 
``almost independent'' samples if we discard sufficient number of SGD iterates in each segment. 

When estimating a mean using our SGD inference procedure where each mini batch is $S$ elements sampled with replacement, 
we set $K_s = S$ in \eqref{eq:stat-inf-formula}. 


\section{Linear Regression}
\label{sec:linear-stat-inf}

In linear regression, the empirical risk function satisfies:
\begin{align*}
f(\theta) = \frac{1}{n} \sum_{i=1}^n \tfrac{1}{2}(\theta^\top  x_i - y_i)^2,
\end{align*} 
where $y_i$ denotes the observations of the linear model and $x_i$ are the regressors.
To find an estimate to $\theta^\star$, one can use SGD with stochastic gradient give by:
\begin{align*}
g_s[\theta_t] = \frac{1}{S} \sum_{i \in I_t} \nabla f_i(\theta_t),
\end{align*}
where $I_t$ are $S$ indices uniformly sampled from $[n]$ with replacement.

Next, we state a special case of Theorem \ref{thm:sgd-strongly-convex-lipschitz-stat-inf}.
Because the Taylor remainder  $\nabla f(\theta) - H(\theta - \htheta) = 0$,
linear regression has a stronger result than general $M$-estimation problems.
\begin{corollary}
\label{cor:sgd-stat-inf-linear-regression}
Assume that $\|\theta_1 - \htheta\|_2^2 = O(\eta)$, we have
\begin{align*}
\left\| t \Exp[ (\btheta_t - \htheta)( \btheta_t - \htheta)^\top ] - H^{-1} G H^{-1} \right\|_2
\lesssim \sqrt{\eta}+ \frac{1}{\sqrt{t \eta} },
\end{align*}
where $H = \frac{1}{n} \sum_{i=1}^n x_i x_i^\top $ and
$G = \frac{1}{S} \frac{1}{n} \sum_{i=1}^n (x_i^\top \htheta - y_i)^2 x_i x_i^\top  $.

We assume that $S = O(1)$  is bounded,
and quantities other than $t$ and $\eta$ are data dependent constants.
\end{corollary}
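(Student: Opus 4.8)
The plan is to obtain Corollary~\ref{cor:sgd-stat-inf-linear-regression} as a specialization of Theorem~\ref{thm:sgd-strongly-convex-lipschitz-stat-inf}, exploiting the fact that the least-squares gradient is \emph{exactly} affine. First I would verify that all hypotheses of the theorem hold for the linear-regression objective. Writing $\nabla f_i(\theta) = (x_i^\top\theta - y_i)x_i = x_i x_i^\top\theta - y_i x_i$ and averaging gives $\nabla f(\theta) = H\theta - \tfrac{1}{n}\sum_i y_i x_i$ with $H = \tfrac{1}{n}\sum_i x_i x_i^\top$; since $\nabla f(\htheta) = 0$ this yields the identity $\nabla f(\theta) = H(\theta - \htheta)$ for all $\theta$. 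Hence ($F_1$) holds with $\alpha = \lambda_L$, ($F_2$) with $L = \lambda_U$, and ($F_4$) follows from the assumed non-degeneracy of the empirical covariance, where $\lambda_L,\lambda_U$ are the extreme eigenvalues of $H$. Crucially, the Taylor remainder $\nabla f(\theta) - H(\theta - \htheta)$ vanishes identically, so ($F_3$) holds with $E = 0$.

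Next I would verify the stochastic-gradient conditions for mini-batch sampling with replacement. Unbiasedness ($G_1$) is immediate. For ($G_2$)--($G_3$), the affine form of each $\nabla f_i$ bounds $\|g_s(\theta)\|_2$ by a random affine function of $\|\theta - \htheta\|_2$, so its second and fourth conditional moments are controlled by polynomials in $\|\theta - \htheta\|_2$ with data-dependent coefficients. For ($G_4$), a direct moment computation for sampling with replacement gives
\[
\Exp\!\left[g_s(\theta)g_s(\theta)^\top \mid \theta\right] = \tfrac{1}{S}\cdot\tfrac{1}{n}\sum_{i=1}^n (x_i^\top\theta - y_i)^2 x_i x_i^\top + \tfrac{S-1}{S}\,\nabla f(\theta)\nabla f(\theta)^\top,
\]
and evaluating at $\htheta$ (where $\nabla f(\htheta)=0$) recovers $G = \tfrac{1}{S}\tfrac{1}{n}\sum_i (x_i^\top\htheta - y_i)^2 x_i x_i^\top$; expanding $(x_i^\top\theta - y_i)^2$ about $\htheta$ then bounds $\|\Exp[g_s g_s^\top\mid\theta] - G\|_2$ by the polynomial in $\|\theta - \htheta\|_2$ required by ($G_4$).

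Having checked the hypotheses, I would invoke Theorem~\ref{thm:sgd-strongly-convex-lipschitz-stat-inf} and track the dependence of its error bound on the constant $E$. The main obstacle --- and the only place the argument goes beyond a black-box citation --- is establishing that the $t\eta^2$ contribution in the general bound is attributable \emph{solely} to the Taylor-remainder term. I would do this by revisiting the recursion $\Delta_{t+1} = (I - \eta H)\Delta_t - \eta r_t - \eta\xi_t$, with $\Delta_t = \theta_t - \htheta$, $r_t = \nabla f(\theta_t) - H\Delta_t$, and $\xi_t = g_s(\theta_t) - \nabla f(\theta_t)$. Summing the recursion and solving for $\tfrac{1}{t}\sum_j\Delta_j = \btheta_t - \htheta$ expresses the average error as $H^{-1}$ times three pieces: a boundary term $-\tfrac{1}{\eta t}(\Delta_{t+1}-\Delta_1)$ contributing $O(1/(t\eta))$ to the scaled covariance, a martingale noise term $-\tfrac{1}{t}\sum_j\xi_j$ whose scaled covariance converges to $H^{-1}GH^{-1}$ up to an $O(\sqrt{\eta})$ error coming from ($G_4$) and the $O(\sqrt{\eta})$ size of $\Exp\|\Delta_j\|_2$, and a bias term $-\tfrac{1}{t}\sum_j r_j$. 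It is this last term that, through $\|r_j\|_2 \le E\|\Delta_j\|_2^2$ and $\Exp\|\Delta_j\|_2^2 = O(\eta)$, produces the $O(t\eta^2 E^2)$ contribution whose square root is the $\sqrt{t\eta^2}$ term. Setting $E = 0$ annihilates this piece exactly, and the remaining two sources collapse the theorem's bound $\sqrt{\eta} + \sqrt{\tfrac{1}{t\eta} + t\eta^2}$ to $\sqrt{\eta} + \tfrac{1}{\sqrt{t\eta}}$, as claimed.
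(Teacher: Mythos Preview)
Your proposal is correct and follows essentially the same route as the paper: verify the hypotheses of Theorem~\ref{thm:sgd-strongly-convex-lipschitz-stat-inf} for the least-squares objective with mini-batch sampling, observe that the gradient is exactly affine so the Taylor remainder vanishes ($E=0$), and then trace through the proof of the theorem to see that the $\sqrt{t\eta^2}$ contribution comes solely from the remainder term (the paper's $\varphi_3$), which therefore drops out. The paper additionally remarks that with $E=0$ and $A_3=A_4=0$ the fourth-moment bound ($G_3$) is never invoked, so Lemma~\ref{lem:tas_01} is unnecessary in this case; you verify ($G_3$) anyway, which is harmless but slightly more than is needed.
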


As with our main theorem, in the appendix we provide explicit data-dependent expressions for the constants in the result.
Because in linear regression  the estimate's covariance is
$ \frac{1}{n}  (\frac{1}{n} \sum_{i=1}^n x_i x_i^\top)^{-1})
(\frac{1}{n} (x_i^\top \htheta - y_i)(x_i^\top \htheta - y_i)^\top)
(\frac{1}{n} \sum_{i=1}^n x_i x_i^\top)^{-1} )$,
we set the scaling factor $K_s=S$ in \eqref{eq:stat-inf-formula} for statistical inference.



\section{Proofs}

\subsection{Proof of Theorem \ref{thm:sgd-strongly-convex-lipschitz-stat-inf}}

We first assume that $\theta_1 = \htheta$. 
For ease of notation, we denote
\begin{align}
\Delta_t = \theta_t - \htheta,
\end{align}
and, without loss of generality, we assume that $\htheta=0$.
The stochastic gradient descent recursion satisfies:
\begin{align*}
\theta_{t+1} &= \theta_t - \eta \cdot g_s(\theta_t) \\
				  &= \theta_t - \eta \cdot \left( g_s(\theta_t) - \nabla f(\theta_t) + \nabla f(\theta_t) \right) \\
				  &= \theta_t - \eta \cdot \nabla f(\theta_t) - \eta \cdot e_t,
\end{align*}
where $e_t =g_s(\theta_t) - \nabla f(\theta_t)$.
Note that $e_1, e_2, \dots $ is a martingale difference sequence.
We use
\begin{align}
g_i = \nabla f_i(\htheta) \quad \quad \text{and,} \quad \quad
H_i = \nabla^2 f_i(\htheta)
\end{align} to denote the gradient component at index $i$, and the Hessian component at index $i$, at optimum $\htheta$, respectively.
Note that $\sum g_i = 0$ and $\tfrac{1}{n} \sum H_i = H$.

For each $f_i$, its Taylor expansion around $\htheta$ is
\begin{align}
f_i(\theta) = f_i(\htheta) + g_i^\top (\theta - \htheta) +
	\frac{1}{2}  (\theta - \htheta)^\top H_i (\theta - \htheta) +
	R_i(\theta, \htheta),
\end{align}
where $R_i(\theta, \htheta)$ is the remainder term.
For convenience, we write
$R = \tfrac{1}{n} \sum R_i$.

For the proof, we require the following lemmata.
The following lemma states that $ \Exp[ \| \Delta_t\|_2^2] = O(\eta) $ as $t \to \infty$ and $\eta \to 0$.

\begin{lemma}{\label{lem:tas_00}}
For data dependent, positive constants $\alpha, A, B$ according to assumptions $(F_1)$ and $(G_2)$ in Theorem 1, and given assumption $(G_1)$, we have
\begin{align}
\Exp\left[ \left\| \Delta_{t}\right\|_2^2 \right] \leq (1 - 2 \alpha \eta + A \eta^2)^{t-1} \|\Delta_1\|_2^2
	+ \frac{B \eta}{2 \alpha - A \eta},
\end{align} under the assumption $\eta < \frac{2\alpha}{A}$.
\end{lemma}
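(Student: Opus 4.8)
The plan is to derive a one-step contraction inequality for $\Exp[\|\Delta_t\|_2^2]$ and then unroll it into the stated geometric bound. Writing the SGD recursion in terms of $\Delta_t = \theta_t - \htheta$ (recall the normalization $\htheta = 0$) as $\Delta_{t+1} = \Delta_t - \eta\, g_s(\theta_t)$, I would first expand the squared norm:
\begin{align*}
\|\Delta_{t+1}\|_2^2 = \|\Delta_t\|_2^2 - 2\eta\, \Delta_t^\top g_s(\theta_t) + \eta^2 \|g_s(\theta_t)\|_2^2.
\end{align*}

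Next I would condition on $\theta_t$ and take expectations term by term. The cross term is handled by combining the unbiasedness assumption $(G_1)$, which gives $\Exp[g_s(\theta_t)\mid\theta_t] = \nabla f(\theta_t)$, with the weak strong convexity $(F_1)$, $\Delta_t^\top \nabla f(\theta_t) \geq \alpha\|\Delta_t\|_2^2$; the quadratic term is controlled directly by the second-moment bound $(G_2)$, $\Exp[\|g_s(\theta_t)\|_2^2\mid\theta_t] \leq A\|\Delta_t\|_2^2 + B$. These three ingredients yield
\begin{align*}
\Exp\big[\|\Delta_{t+1}\|_2^2 \mid \theta_t\big] \leq (1 - 2\alpha\eta + A\eta^2)\,\|\Delta_t\|_2^2 + B\eta^2,
\end{align*}
and taking total expectation produces the scalar recursion $\Exp[\|\Delta_{t+1}\|_2^2] \leq \rho\,\Exp[\|\Delta_t\|_2^2] + B\eta^2$, where I set the contraction factor $\rho := 1 - 2\alpha\eta + A\eta^2$.

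The final step is to unroll this linear recurrence from $t$ down to the base case at index $1$, obtaining $\Exp[\|\Delta_t\|_2^2] \leq \rho^{t-1}\|\Delta_1\|_2^2 + B\eta^2 \sum_{k=0}^{t-2}\rho^k$. The hypothesis $\eta < 2\alpha/A$ is exactly what guarantees $1-\rho = \eta(2\alpha - A\eta) > 0$, hence $\rho < 1$; for sufficiently small step size $\rho \in [0,1)$, so the partial geometric sum is bounded by $\frac{1}{1-\rho} = \frac{1}{\eta(2\alpha - A\eta)}$. Substituting this in collapses the stationary contribution to the claimed constant $\frac{B\eta}{2\alpha - A\eta}$, matching the lemma.

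This is a standard stochastic descent argument, so I do not anticipate a deep obstacle. The only point requiring genuine care is verifying that the step-size condition places $\rho$ strictly inside $[0,1)$, so that both the geometric series is summable and the transient term $\rho^{t-1}\|\Delta_1\|_2^2$ decays to $0$ as $t\to\infty$. I would also flag that this is precisely where the initialization hypothesis $\|\Delta_1\|_2^2 = O(\eta)$ feeds into the broader proof of Theorem \ref{thm:sgd-strongly-convex-lipschitz-stat-inf}: it ensures the transient term contributes at the same $O(\eta)$ order as the stationary term, so that overall $\Exp[\|\Delta_t\|_2^2] = O(\eta)$ as $t\to\infty$ and $\eta\to 0$, as the lemma is meant to certify.
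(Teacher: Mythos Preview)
Your proposal is correct and follows essentially the same route as the paper's proof: expand $\|\Delta_{t+1}\|_2^2$, condition on $\theta_t$, apply $(G_1)$, $(F_1)$, $(G_2)$ to obtain the one-step contraction $\Exp[\|\Delta_{t+1}\|_2^2\mid\theta_t]\leq(1-2\alpha\eta+A\eta^2)\|\Delta_t\|_2^2+B\eta^2$, and then unroll and bound the finite geometric sum using $\eta<2\alpha/A$. Your additional remark connecting the lemma to the $\|\Delta_1\|_2^2=O(\eta)$ hypothesis in Theorem~\ref{thm:sgd-strongly-convex-lipschitz-stat-inf} is apt and matches how the paper uses this result downstream.
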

\begin{proof}
As already stated, we assume without loss of generality that $\htheta = 0$.
This further implies that: $g_s(\theta_t) = g_s(\theta_t - \htheta) = g_s(\Delta_t)$, and
\begin{align*}
\Delta_{t + 1} = \Delta_t - \eta \cdot g_s(\Delta_t).
\end{align*}
Given the above and assuming expectation $\Exp[\cdot]$ w.r.t. the selection of a sample from $\{X_i\}_{i = 1}^n$, we have:
\begin{align}
\Exp\left[ \| \Delta_{t+1} \|_2^2 \mid \Delta_t \right] &= \Exp\left[ \|\Delta_t - \eta g_s(\Delta_t) \|_2^2 \mid \Delta_t \right]  \nonumber \\
																		&= \Exp\left[ \|\Delta_t\|_2^2 \mid \Delta_t \right] + \eta^2 \cdot  \Exp\left[\|g_s(\Delta_t)\|_2^2 \mid \Delta_t \right] - 2 \eta \cdot \Exp \left[ g_s(\Delta_t)^\top \Delta_t \mid \Delta_t \right] \nonumber \\
																		&= \|\Delta_t\|_2^2 + \eta^2 \cdot  \Exp\left[\|g_s(\Delta_t)\|_2^2 \mid \Delta_t \right] - 2 \eta \cdot \nabla f(\Delta_t)^\top \Delta_t \nonumber \\
																		&\stackrel{(i)}{\leq} \|\Delta_t\|_2^2 + \eta^2 \cdot \left(A \cdot \|\Delta_t\|_2^2 + B\right) - 2 \eta \cdot \alpha \|\Delta_t\|_2^2 \nonumber \\
																		&= (1- 2 \alpha \eta + A \eta^2 ) \| \Delta_t \|_2^2 + \eta^2  B.
\end{align}
where $(i)$ is due to assumptions $(F_1)$ and $(G_2)$ of Theorem 1.
Taking expectations for every step $t = 1, \dots $ over the whole history, we obtain the recursion:
\begin{align*}
\Exp\left[ \| \Delta_{t+1} \|_2^2 \right] &\leq (1- 2 \alpha \eta + A \eta^2 )^{t-1} \| \Delta_1 \|_2^2 + \eta^2 B \cdot \sum_{i = 0}^{t-1} (1 - 2\alpha \eta + A \eta^2)^i \\
													 &= (1- 2 \alpha \eta + A \eta^2 )^{t-1} \| \Delta_1 \|_2^2 + \eta^2 B \cdot \tfrac{1 - (1 - 2\alpha \eta + A \eta^2)^t}{2\alpha \eta - A \eta^2} \\
													 &\leq (1- 2 \alpha \eta + A \eta^2 )^{t-1} \| \Delta_1 \|_2^2 + \tfrac{\eta B}{2\alpha - A \eta}.						
\end{align*}
\end{proof}

The following lemma states that $ \Exp\left[ \| \Delta_t\|_2^4 \right] = O(\eta^2) $ as $t \to \infty$ and $\eta \to 0$.

\begin{lemma}{\label{lem:tas_01}}
For data dependent, positive constants $\alpha, A, B, C, D$ according to assumptions $(F_1)$, $(G_1)$, $(G_2)$ in Theorem 1, we have:
\begin{align}
\Exp[ \| \Delta_t\|_2^4] \leq & (1-4\alpha\eta + A ( 6 \eta^2 + 2 \eta^3  ) + B(3 \eta  + \eta^2) + C(  2 \eta^3+ \eta^4) )^{t-1} \|\Delta_1\|_2^4  \nonumber \\
	& \quad \quad \quad ~~~+ \frac{B  (   3 \eta^2 +  \eta^3  ) + D (  2 \eta^2 + \eta^3) }{4\alpha - A (   6 \eta + 2 \eta^2  ) -B(3 + \eta)  - C(  2 \eta^2+ \eta^3)}.
\end{align}
\end{lemma}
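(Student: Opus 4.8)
The plan is to mirror the second-moment recursion of Lemma~\ref{lem:tas_00}, but now tracking the fourth moment. As there, I would set $\htheta = 0$ so that the iteration reads $\Delta_{t+1} = \Delta_t - \eta\, g_s(\Delta_t)$, and work conditionally on $\Delta_t$. The natural starting point is the scalar identity $\|\Delta_{t+1}\|_2^2 = \|\Delta_t\|_2^2 - 2\eta\, g_s(\Delta_t)^\top \Delta_t + \eta^2 \|g_s(\Delta_t)\|_2^2$; squaring it produces six terms, and I would take $\Exp[\,\cdot \mid \Delta_t]$ of each and bound them via the assumptions. The descent term $-4\eta \|\Delta_t\|_2^2\, g_s^\top\Delta_t$ is handled by $(F_1)$ and $(G_1)$, since $\Exp[g_s^\top\Delta_t \mid \Delta_t] = \nabla f(\Delta_t)^\top\Delta_t \ge \alpha\|\Delta_t\|_2^2$, yielding the leading contraction $-4\alpha\eta\|\Delta_t\|_2^4$. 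The two $\eta^2$ terms $2\eta^2\|\Delta_t\|_2^2\|g_s\|_2^2$ and $4\eta^2(g_s^\top\Delta_t)^2$ are controlled by Cauchy--Schwarz, $(g_s^\top\Delta_t)^2 \le \|g_s\|_2^2\|\Delta_t\|_2^2$, together with $(G_2)$, each contributing terms of the form $A\eta^2\|\Delta_t\|_2^4 + B\eta^2\|\Delta_t\|_2^2$; the pure $\eta^4$ term $\eta^4\|g_s\|_2^4$ is handled directly by $(G_3)$, giving $C\eta^4\|\Delta_t\|_2^4 + D\eta^4$.

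The one genuinely delicate term is the odd cross term $-4\eta^3 (g_s^\top\Delta_t)\|g_s\|_2^2$, whose sign is indefinite, so I need a lower bound on $\Exp[(g_s^\top\Delta_t)\|g_s\|_2^2]$. Here I would bound $|g_s^\top\Delta_t|\,\|g_s\|_2^2 \le \|\Delta_t\|_2\|g_s\|_2^3 \le \tfrac{1}{2}\left(\|\Delta_t\|_2^2\|g_s\|_2^2 + \|g_s\|_2^4\right)$ by Cauchy--Schwarz followed by AM--GM, and then apply $(G_2)$ and $(G_3)$; this produces $\eta^3$-order contributions in $\|\Delta_t\|_2^4$, $\|\Delta_t\|_2^2$, and a constant $D\eta^3$, with crucially \emph{no} moment beyond the fourth appearing.

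Collecting everything yields a bound of the shape $\Exp[\|\Delta_{t+1}\|_2^4\mid\Delta_t] \le \rho'\|\Delta_t\|_2^4 + \kappa\|\Delta_t\|_2^2 + c'$, where the middle $\|\Delta_t\|_2^2$ term is an artifact of the $B$ constants. To close the recursion in the fourth moment alone I would apply Young's inequality in the form $\|\Delta_t\|_2^2 \le \tfrac{1}{2\eta}\|\Delta_t\|_2^4 + \tfrac{\eta}{2}$, which folds the middle term into the $\|\Delta_t\|_2^4$ coefficient and the additive constant. This is precisely the step that produces the stated $B(3\eta+\eta^2)$ inside the contraction factor and $B(3\eta^2+\eta^3)$ in the numerator, and that keeps the additive constant at order $\eta^3$ (which is what forces $\Exp[\|\Delta_t\|_2^4]=O(\eta^2)$). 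The outcome is the clean recursion $\Exp[\|\Delta_{t+1}\|_2^4\mid\Delta_t]\le\rho\|\Delta_t\|_2^4 + c$ with $\rho = 1 - 4\alpha\eta + A(6\eta^2+2\eta^3)+B(3\eta+\eta^2)+C(2\eta^3+\eta^4)$ and $c = \eta\big(B(3\eta^2+\eta^3)+D(2\eta^2+\eta^3)\big)$. Taking total expectations and iterating exactly as in Lemma~\ref{lem:tas_00} --- summing the geometric series and bounding $\sum_{i=0}^{t-1}\rho^i \le \tfrac{1}{1-\rho}$, with $1-\rho = \eta\big(4\alpha - A(6\eta+2\eta^2)-B(3+\eta)-C(2\eta^2+\eta^3)\big)$, so that one factor of $\eta$ cancels --- reproduces the claimed bound.

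I expect the main obstacle to be twofold. First, the indefinite cross term must be split by AM--GM in exactly the right way so that only fourth- and second-order quantities survive (with the second-order ones later absorbed). Second, the balance in Young's inequality ($\tfrac{1}{2\eta}$ versus $\tfrac{\eta}{2}$) must be chosen so that the $O(\eta)$ contraction rate is preserved while the additive constant emerges at order $\eta^3$; a different split changes every $B$-dependent coefficient. One also has to verify that $\rho<1$ for sufficiently small $\eta$, i.e.\ that $4\alpha$ dominates $B(3+\eta)$ in the leading order, so that the geometric series converges and the stationary bound is well defined.
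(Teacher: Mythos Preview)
Your proposal is correct and follows essentially the same route as the paper: expand $\|\Delta_{t+1}\|_2^4$, use $(F_1)$--$(G_1)$ on the descent term, Cauchy--Schwarz plus $(G_2)$--$(G_3)$ on the remaining terms, and then fold the leftover $\|\Delta_t\|_2^2$ into the fourth-moment recursion via $2\eta\|\Delta_t\|_2^2 \le \eta^2 + \|\Delta_t\|_2^4$ before iterating. The only cosmetic difference is the cross term: the paper bounds it directly by $-2\,g_s^\top\Delta_t \le \|g_s\|_2^2 + \|\Delta_t\|_2^2$ rather than your Cauchy--Schwarz-then-AM--GM split, but both yield the identical contribution $2\eta^3(\|g_s\|_2^4 + \|g_s\|_2^2\|\Delta_t\|_2^2)$, so the coefficients match exactly.
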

\begin{proof}
Given $\Delta_t$, we have the following sets of (in)equalities:
\begin{align}
& \Exp \left[ \| \Delta_{t+1} \|_2^4 \mid \Delta_t \right] \nonumber \\
= & \Exp \left[ \|\Delta_t - \eta g_s(\Delta_t) \|_2^4 \mid \Delta_t \right]  \nonumber \\
= &  \Exp\left[ (\|\Delta_t\|_2^2 - 2 \eta \cdot g_s(\Delta_t)^\top \Delta_t + \eta^2 \|g_s(\Delta_t)\|_2^2 )^2 \mid \Delta_t \right]  \nonumber \\
= & \Exp \big[
\|\Delta_t\|_2^4 + 4 \eta^2 (g_s(\Delta_t)^\top \Delta_t)^2 + \eta^4 \|g_s(\Delta_t)\|_2^4 - 4 \eta \cdot g_s(\Delta_t)^\top \Delta_t \|\Delta_t\|_2^2 \nonumber
	\\ &\quad \quad \quad \quad + 2 \eta^2 \cdot \|g_s(\Delta_t)\|_2^2 \|\Delta_t\|_2^2 - 4 \eta^3 \cdot g_s(\Delta_t)^\top \Delta_t \|g_s(\Delta_t)\|_2^2
\mid \Delta_t \big] \nonumber \\
\stackrel{(i)}{\leq} & \Exp\big[
\|\Delta_t\|_2^4 + 4 \eta^2 \cdot \|g_s(\Delta_t)\|_2^2 \cdot \|\Delta_t\|_2^2  +  \eta^4 \|g_s(\Delta_t)\|_2^4
	- 4 \eta \cdot g_s(\Delta_t)^\top \Delta_t \|\Delta_t\|_2^2 \nonumber
	\\ &\quad \quad \quad \quad + 2 \eta^2 \cdot \|g_s(\Delta_t)\|_2^2 \cdot  \|\Delta_t\|_2^2  + 2 \eta^3 \cdot (\|g_s(\Delta_t)\|_2^2  + \|\Delta_t\|_2^2 ) \cdot \|g_s(\Delta_t)\|_2^2
\mid \Delta_t \big] \nonumber \\
\stackrel{(ii)}{\leq} & \Exp \left[
\|\Delta_t\|_2^4 + (  2 \eta^3+ \eta^4) \|g_s(\Delta_t)\|_2^4
	+ (   6 \eta^2 + 2 \eta^3  ) \|g_s(\Delta_t)\|_2^2 \|\Delta_t\|_2^2
\mid \Delta_t \right] - 4 \alpha \eta \|\Delta_t\|_2^4   \nonumber \\
\stackrel{(iii)}{\leq}& (1-4\alpha\eta) \|\Delta_t\|_2^4 + (   6 \eta^2 + 2 \eta^3  ) (A \|\Delta_t\|_2^2  + B) \|\Delta_t\|_2^2
	+ (  2 \eta^3+ \eta^4) (C \|\Delta_t\|_2^4  + D) \nonumber \\
=& (1-4\alpha\eta + A (   6 \eta^2 + 2 \eta^3  ) + C(  2 \eta^3+ \eta^4) ) \|\Delta_t\|_2^4  +
	B (   6 \eta^2 + 2 \eta^3  ) \|\Delta_t\|_2^2 + D (  2 \eta^3+ \eta^4)   \nonumber \\
\stackrel{(iv)}{\leq}& (1-4\alpha\eta + A (   6 \eta^2 + 2 \eta^3  ) + C(  2 \eta^3+ \eta^4) ) \cdot  \|\Delta_t\|_2^4  +
	B (   3 \eta +  \eta^2  ) (\eta^2 + \|\Delta_t\|_2^4) + D (  2 \eta^3+ \eta^4)   \nonumber \\
=& ( 1-4\alpha\eta + A (   6 \eta^2 + 2 \eta^3  ) + B (   3 \eta +  \eta^2  ) + C(  2 \eta^3+ \eta^4)  )  \cdot \|\Delta_t\|_2^4
	+ B \eta^2 (   3 \eta +  \eta^2  ) + D (  2 \eta^3+ \eta^4),
\end{align}
where $(i)$ is due to $(g_s(\Delta_t)^\top \Delta_t)^2 \leq \|g_s(\Delta_t)\|_2^2 \cdot \|\Delta_t\|_2^2$
and $-2 g_s(\Delta_t)^\top \Delta_t \leq \|g_s(\Delta_t)\|_2^2  + \|\Delta_t\|_2^2$, $(ii)$ is due to assumptions $(G_1)$ and $(F_1)$ in Theorem 1,
$(iii)$ is due to assumptions $(G_2)$ and $(G_3)$ in Theorem 1,
and $(iv)$ is due to $2 \eta  \|\Delta_t\|_2^2 \leq \eta^2  + \|\Delta_t\|_2^4$.
Similar to the proof of the previous lemma, applying the above rule recursively and w.r.t. the whole history of estimates, we obtain:
\begin{align*}
\Exp \left[ \| \Delta_{t+1} \|_2^4 \right] &\leq (1-4\alpha\eta + A (   6 \eta^2 + 2 \eta^3  ) + B(3 \eta  + \eta^2) + C(  2 \eta^3+ \eta^4) )^{t-1} \|\Delta_1\|_2^4  \nonumber \\
	&\quad + \left(B \eta^2 (   3 \eta +  \eta^2  ) + D (  2 \eta^3 + \eta^4)\right)  \cdot \sum_{i = 0}^{t-1} \left(1-4\alpha\eta + A (   6 \eta^2 + 2 \eta^3  ) + B(3 \eta  + \eta^2) + C(  2 \eta^3+ \eta^4) \right)^i \\
	&\leq (1-4\alpha\eta + A (   6 \eta^2 + 2 \eta^3  ) + B(3 \eta  + \eta^2) + C(  2 \eta^3+ \eta^4) )^{t-1} \|\Delta_1\|_2^4  \nonumber \\
	&\quad + \frac{B \eta^2 (   3 \eta +  \eta^2  ) + D (  2 \eta^3 + \eta^4)}{4\alpha\eta - A (   6 \eta^2 + 2 \eta^3  ) - B(3 \eta  + \eta^2) - C(  2 \eta^3+ \eta^4)},
\end{align*}
which is the target inequality, after simple transformations.
\end{proof}

We know that:
\begin{align*}
\Delta_{t} = \Delta_{t-1} - \eta g_s(\Delta_{t-1})
\end{align*}
Using the Taylor expansion formula around the point $\Delta_{t-1}$ and using the assumption that $\widehat{\theta} = 0$, we have:
\begin{align*}
f(\Delta_{t-1}) = f(\htheta) + \nabla f(\htheta)^\top \Delta_{t-1} + \frac{1}{2} \Delta_{t-1}^\top H \Delta_{t-1} + R(\Delta_{t-1})
\end{align*}
Taking further the gradient w.r.t. $\Delta_{t-1}$ in the above expression, we have:
\begin{align*}
\nabla f(\Delta_{t-1}) = H \Delta_{t-1} + \nabla R(\Delta_{t-1})
\end{align*}
Using the identity $g_s(\Delta_{t-1}) = \nabla f(\Delta_{t-1}) + e_{t-1}$, our SGD recursion can be re-written as:
\begin{align}
\Delta_t& = \left(I - \eta H\right) \Delta_{t-1} - \eta \left(\nabla R(\Delta_{t-1}) + e_{t-1} \right) = \left(I-\eta H\right)^{t-1} \Delta_1 -  \eta \sum_{i =1}^{t-1} \left(I-\eta H\right)^{t-1-i} \left(  e_i + \nabla R(\Delta_i)\right). \label{eq:tas_01}
\end{align}

For $t\geq 2$ and since: $\btheta = \btheta - \htheta = \bDelta_t = \tfrac{1}{t} \sum_{i = 1}^t (\theta_i - \htheta) = \frac{1}{t} \sum_{i = 1}^t \Delta_i$, we get:
\begin{align}
t (\btheta - \htheta) = \sum_{i=1}^t \Delta_i &= \sum_{i = 1}^t  \left(I-\eta H\right)^{i-1} \Delta_1 - \eta \sum_{j=1}^{t} \sum_{i=1}^j (I-\eta H)^{j-1-i}  ( e_i + \nabla R(\Delta_i) ) \nonumber \\
			      &\stackrel{(i)}{=} \left(I - (I - \eta H)^{t}\right) \tfrac{ H^{-1} }{\eta}  \Delta_1 - \eta \sum_{j=1}^{t} \sum_{i=1}^j (I-\eta H)^{j-1-i}  ( e_i + \nabla R(\Delta_i) ). \label{eq:tas_00}
\end{align}
where $(i)$ holds due to the assumption that the eigenvalues of $I - \eta H$ satisfy $|\lambda_i(I -\eta H)| < 1$, and thus, the geometric series of matrices:
$\sum_{k = 0}^{n-1} T^k = (I - T)^{-1}(I - T^n)$,
is utilized above. In our case, $T = (I - \eta H)$.

For the latter term in \eqref{eq:tas_00}, using a variant of Abel's sum formula, we have:
\begin{align}
\eta \sum_{j=1}^{t} \sum_{i=1}^j (I-\eta H)^{j-1-i}  ( e_i + \nabla R(\Delta_i) ) &= \eta \sum_{j=0}^{t-1} \sum_{i=1}^j (I-\eta H)^{j-i}  ( e_i + \nabla R(\Delta_i) ) \\
&=  \eta \sum_{i=1}^{t-1} \left( \sum_{j=0}^{t-i-1}  (I-\eta H)^{j} \right) (  e_i + \nabla R(\Delta_i))   \nonumber \\
&= \sum_{i=1}^{t-1} \left(I - (I-\eta H)^{t-i}\right) H^{-1} (  e_i + \nabla R(\Delta_i))  \nonumber \\
&=  H^{-1} \sum_{i=1}^{t-1} e_i + H^{-1}\sum_{i=1}^{t-1} \nabla R(\Delta_i) - H^{-1} \sum_{i=1}^{t-1} (I-\eta H)^{t-i} (e_i + \nabla R(\Delta_i) )  \nonumber \\
\stackrel{(i)}{=}& H^{-1} \sum_{i=1}^{t-1} e_i + H^{-1}\sum_{i=1}^{t-1} \nabla R(\Delta_i) + \tfrac{H^{-1}}{\eta} (I-\eta H) (\Delta_t - (I-\eta H)^{t-1} \Delta_1 ),
\end{align}
where $(i)$ follows from the fact ${ \sum_{i=1}^{t-1} (I-\eta H)^{t-i} (e_i + \nabla R(\Delta_i) ) =  (I-\eta H) \frac{1}{\eta} (\Delta_t - (I-\eta H)^{t-1} \Delta_1 ) }$, based on the expression \eqref{eq:tas_01}.

The above combined lead to:
\begin{align}
\sqrt{t} \bDelta_t  = \underbrace{\tfrac{1}{\sqrt{t}} (I - (I - \eta H)^{t}) \tfrac{ H^{-1} }{\eta}  \Delta_1}_{\varphi_1} \underbrace{-\tfrac{1}{\sqrt{t}}  H^{-1} \sum_{i=1}^{t-1} e_i}_{\varphi_2} \underbrace{-\tfrac{1}{\sqrt{t}} H^{-1}\sum_{i=1}^{t-1} \nabla R(\Delta_i)}_{\varphi_3} \underbrace{-\tfrac{1}{\sqrt{t}} \tfrac{H^{-1}}{\eta} (I-\eta H) (\Delta_t - (I-\eta H)^{t-1} \Delta_1 )}_{\varphi_4}. \label{eq:rn:00}
\end{align}

For the main result of the theorem, we are interested in the following quantity:
\begin{align*}
\left\| t \Exp[ (\btheta_t - \htheta)( \btheta_t - \htheta)^\top] - H^{-1} G H^{-1} \right\|_2 = \left\| t \Exp[ \bDelta_t \bDelta_t^\top] - H^{-1} G H^{-1} \right\|_2
\end{align*}

Using the $\varphi_i$ notation, we have $\Exp[t\bDelta_t\bDelta_t] = \Exp[(\varphi_1+\varphi_2+\varphi_3+\varphi_4)(\varphi_1+\varphi_2+\varphi_3+\varphi_4)^\top]$.
Thus, we need to bound:
\begin{align}
&\left\| t \Exp[ (\btheta_t - \htheta)( \btheta_t - \htheta)^\top] - H^{-1} G H^{-1} \right\|_2 = \left\|  \Exp[(\varphi_1+\varphi_2+\varphi_3+\varphi_4)(\varphi_1+\varphi_2+\varphi_3+\varphi_4)^\top] - H^{-1} G H^{-1} \right\|_2 \nonumber  \\
&= \left\|\Exp[\varphi_2 \varphi_2^\top] -H^{-1} G H^{-1}
	+ \Exp[\varphi_2(\varphi_1+\varphi_4+\varphi_3)^\top +(\varphi_1+\varphi_4+\varphi_3)\varphi_2^\top
	+(\varphi_1+\varphi_4+\varphi_3)(\varphi_1+\varphi_4+\varphi_3)^\top] \right\|_2 \nonumber \\
&\leq \left\|\Exp\left[\varphi_2 \varphi_2^\top\right] -H^{-1} G H^{-1} \right\|_2 
	+ \left\|\Exp[\varphi_2(\varphi_1+\varphi_4+\varphi_3)^\top]\right\|_2 +\left\|\Exp[(\varphi_1+\varphi_4+\varphi_3)\varphi_2^\top]\right\|_2
	\nonumber \\
	&\quad \quad \quad \quad \quad \quad \quad \quad \quad \quad \quad \quad ~~~~+\left\|\Exp[(\varphi_1+\varphi_4+\varphi_3)(\varphi_1+\varphi_4+\varphi_3)^\top] \right\|_2 \nonumber \\
&\stackrel{(i)}{\lesssim} \|\Exp[\varphi_2 \varphi_2^\top] -H^{-1} G H^{-1} \|_2
	+ \sqrt{\Exp[\|\varphi_2\|_2^2] (\Exp[\|\varphi_1\|_2^2] + \Exp[\|\varphi_4\|_2^2] + \Exp[\|\varphi_3\|_2^2])}
	+ \Exp[\|\varphi_1\|_2^2] + \Exp[\|\varphi_4\|_2^2] + \Exp[\|\varphi_3\|_2^2] \label{eq:tas_02}
\end{align}
where $(i)$ is due to the successive use of the AM-GM rule:
\begin{align}
\| \Exp[ a b^\top ] \|_2 \leq \sqrt{ \Exp[\|a\|_2^2] \Exp[\|b\|_2^2] } \leq \frac{1}{2} \Exp[\|a\|_2^2] + \Exp[\|b\|_2^2].
\end{align}
for two $p$-dimensional random vectors $a$ and $b$.
Indeed, for any fixed unit vector $u$ we have
$\| \Exp[ a b^\top ] u  \|_2 = \| \Exp[a(b^\top u)] \|_2 \leq  \Exp[\|a\|_2 |b^\top u|]  \leq \Exp[\|a\|_2 \|b\|_2] \leq \sqrt{ \Exp[\|a\|_2^2] \Exp[\|b\|_2^2] }$.
We used the fact  $\|\Exp[x]\|_2 \leq \Exp[\|x\|_2]$  because $\|x\|_2$ is convex.
Here also, the $\lesssim$ hides any constants appearing from applying successively the above rule.

Therefore, to proceed bounding the quantity of interest, we need to bound the terms $\Exp[\|\varphi_i\|_2^2]$.
In the statement of the theorem we have $\Delta_1 = 0$---however similar bounds will hold if $\|\Delta_1\|_2^2=O(\eta)$;
thus, for each of the above $\varphi_i$ terms we have the following.

\begin{align}
\varphi_1 := \tfrac{1}{\sqrt{t}} (I - (I - \eta H)^{t}) \frac{ H^{-1} }{\eta}  \Delta_1 =0,  \quad \quad \quad \quad  \text{(due to $\Delta_1 = 0$)}
\end{align}

\begin{align}
\Exp[\|\varphi_4\|_2^2] &:= \Exp \left[ \left\|  -\tfrac{1}{\sqrt{t}} \tfrac{H^{-1}}{\eta} (I-\eta H) (\Delta_t - (I-\eta H)^{t-1} \Delta_1 ) \right\|_2^2  \right] \nonumber \\
&\leq \Exp \left[ \|H^{-1}\|_2^2 \cdot \|I - \eta H\|_2^2 \cdot \tfrac{1}{\eta^2 t} \|  \Delta_t \|_2^2  \right] \stackrel{(i)}{\leq} \frac{1 - \eta \lambda_U}{\lambda_L } \cdot \tfrac{1}{\eta^2 t} \cdot \Exp[ \|\Delta_t\|_2^2] \nonumber \\
&\stackrel{(ii)}{\leq} \frac{1 - \eta \lambda_U}{\lambda_L } \frac{1}{\eta^2 t} \left((1 - 2\alpha \eta + A \eta^2)^{t-1} \|\Delta_1\|_2^2 + \frac{B \eta}{2\alpha - A \eta}\right) \nonumber \\
&= \frac{1 - \eta \lambda_U}{\lambda_L } \frac{B}{t \eta(2\alpha - A \eta)} \nonumber \\
&= O\left(\frac{1}{t \eta}\right) 
\end{align}
where $(i)$ is due to Assumption $(F_4)$, $(ii)$ is due to Lemma \ref{lem:tas_00}, and we used in several places the fact that $\Delta_1 = 0$.

\begin{align}
\Exp[\|\varphi_3\|_2^2] &:= \Exp\left[ \left\| -\tfrac{1}{\sqrt{t}} H^{-1} \sum_{i=1}^{t-1} \nabla R(\Delta_i) \right\|_2^2 \right]  \leq \Exp \left[ \tfrac{1}{t} \cdot \|H^{-1}\|_2^2 \cdot \left\|\sum_{i=1}^{t-1} \nabla R(\Delta_i) \right\|_2^2 \right]  \stackrel{(i)}{\leq} \Exp \left[ \tfrac{1}{\lambda_L} \tfrac{1}{t} \left(\sum_{i=1}^{t-1} \|\nabla R(\Delta_i)\|_2\right)^2 \right] \nonumber \\
&\stackrel{(ii)}{\leq} \Exp \left[ \tfrac{E^2}{\lambda_L \cdot t} \left(\sum_{i=1}^{t-1} \|\Delta_i\|_2^2\right)^2 \right] \stackrel{(iii)}{\leq} \tfrac{E^2}{\lambda_L \cdot t} (t-1) \cdot \Exp \left[ \sum_{i=1}^{t-1} \|\Delta_i\|_2^4 \right] \nonumber \\
&\leq  \tfrac{E^2}{\lambda_L \cdot } (t-1) \sum_{i=1}^{t-1} \left((1-4\alpha\eta + A (   6 \eta^2 + 2 \eta^3  ) + C(  2 \eta^3+ \eta^4) )^{t-1} \|\Delta_1\|_2^4
	 + \frac{B  (   3 \eta^2 +  \eta^3  ) + D (  2 \eta^2 + \eta^3) }{4\alpha - A (   6 \eta + 2 \eta^2  ) - C(  2 \eta^2+ \eta^3)} \right)  \nonumber \\
&\stackrel{(iv)}{=} \tfrac{E^2}{\lambda_L}  \tfrac{(t-1)^2}{t} \frac{B  (   3 \eta^2 +  \eta^3  ) + D (  2 \eta^2 + \eta^3) }{4\alpha - A (   6 \eta + 2 \eta^2  ) - C(  2 \eta^2+ \eta^3)} \stackrel{(v)}{=} O(t \eta^2).
\end{align}
where $(i)$ is due to Assumption $(F_4)$ and due to $|\sum_{i} \chi_i|^2 \leq \sum_{i} |\chi_i|^2$, $(ii)$ is due to Assumption $(F_3)$ on bounded remainder, 
$(iii)$ is due to the inequality $\left(\sum_{i = 1}^n \chi_i^2\right)^2 \leq n \cdot \sum_{i = 1}^n \chi_i^2$, $(iv)$ is due to $\Delta_1 = 0$, $(v)$ is due to $\eta$ being an small constant compared to $\alpha$ and thus $\frac{B  (   3 \eta^2 +  \eta^3  ) + D (  2 \eta^2 + \eta^3) }{4\alpha - A (   6 \eta + 2 \eta^2  ) - C(  2 \eta^2+ \eta^3)} = \frac{O(\eta^2)}{O(1)}$.

\begin{align}
\Exp[\|\varphi_2\|_2^2] &:= \Exp\left[ \left\| - \frac{1}{\sqrt{t}}  H^{-1} \sum_{i=1}^{t-1} e_i \right\|_2^2 \right] \stackrel{(i)}{=} \frac{1}{t} \sum_{i=1}^{t-1} \Exp[ \|H^{-1} e_i\|_2^2 ]  \stackrel{(ii)}{\leq} \tfrac{\lambda_U}{t}  \sum_{i=1}^{t-1} \Exp[ \| e_i\|_2^2 ] \nonumber \\
&= \tfrac{\lambda_U}{t}  \sum_{i=1}^{t-1} \Exp[ \| g_s(\Delta_i) - \nabla f(\Delta_i) \|_2^2 ] \leq \tfrac{2 \lambda_U}{t} \left(\sum_{i=1}^{t-1} \Exp[ \| g_s(\Delta_i) \|_2^2 ] + \sum_{i=1}^{t-1} \Exp[ \| \nabla f(\Delta_i) \|_2^2 ] \right) \nonumber \\
&\stackrel{(iii)}{\leq} \tfrac{2\lambda_U}{t} \left((t-1)B+ (A+L^2) \sum_{i=1}^{t-1} \Exp[\|\Delta_i\|_2^2] \right) \nonumber \\ 
&\stackrel{(iv)}{\leq} \tfrac{2\lambda_U}{t} \left((t-1)B+ (A+L^2) \sum_{i=1}^{t-1} \left((1 - 2\alpha \eta + A \eta^2)^{t-1} \|\Delta_1\|_2^2
	+ \frac{B \eta}{2\alpha - A \eta}\right) \right) \nonumber \\
&= \tfrac{2\lambda_U (t-1)}{t} \left(B +  (A+L^2) \frac{B \eta}{2\alpha - A \eta} \right) =O(1),
\end{align}
where $(i)$ is due to $ {\Exp[(H^{-1} e_i)^\top  H^{-1} e_j ] = 0 }$ for ${ i \neq j }$, $(ii)$ is due to Assumption $(F_4)$, $(iii)$ is due to Assumptions $(F_2)$ and $(G_2)$, $(iv)$ is due to Lemma \ref{lem:tas_00}.

Finaly, for the term $\Exp[\varphi_2 \varphi_2^\top]$, we have
\begin{align}
\Exp[\varphi_2 \varphi_2^\top] = \Exp\left[ \left(- \tfrac{1}{\sqrt{t}}  H^{-1} \sum_{i=1}^{t-1} e_i \right) \left(- \tfrac{1}{\sqrt{t}}  H^{-1} \sum_{i=1}^{t-1} e_i\right)^\top \right] = \tfrac{1}{t} H^{-1}  \left(\sum_{i=1}^{t-1} \Exp[e_i e_i^\top]\right)  H^{-1}.
\end{align}
and thus:
\begin{align*}
\left\|\Exp[\varphi_2 \varphi_2^\top] - H^{-1} G H^{-1}\right\|_2 &= \left\|\tfrac{1}{t} H^{-1}  \left(\sum_{i=1}^{t-1} \Exp[e_i e_i^\top]\right)  H^{-1} - H^{-1} G H^{-1}\right\|_2 \\
&= \left\| \tfrac{1}{t} H^{-1}  \left(\sum_{i=1}^{t-1} \Exp[e_i e_i^\top] - G + G\right)  H^{-1} - H^{-1} G H^{-1} \right\|_2 \\
&= \left\| \tfrac{1}{t} H^{-1}  \left(\sum_{i=1}^{t-1} \Exp[e_i e_i^\top] - G\right)  H^{-1} - \tfrac{t-1}{t} \cdot H^{-1} G H^{-1} \right\|_2 \\
&\leq \tfrac{1}{t} H^{-1}  \left(\sum_{i=1}^{t-1} \left\| \Exp[e_i e_i^\top] - G \right\|_2 \right)  H^{-1} + \tfrac{t-1}{t} \left\|  H^{-1} G H^{-1} \right\|_2 \\
\end{align*}

For each term $\left\| \Exp[e_i e_i^\top] - G \right\|_2, \forall i$, we have
\begin{small}
\begin{align}
\| \Exp[e_i e_i^\top] - G  \|_2 &= \left\| \Exp[g_s(\Delta_i) g_s(\Delta_i)^\top]  - \Exp[\nabla f(\Delta_i)\nabla f(\Delta_i)^\top] -G \right\|_2 \nonumber \\
&= \left\| \Exp[(g_s(\Delta_i) - \nabla f(\Delta_i)) (g_s(\Delta_i) - \nabla f(\Delta_i))^\top] -G \right\|_2 \nonumber \\
&= \left\| \Exp[g_s(\Delta_i)g_s(\Delta_i)^\top] - \Exp[g_s(\Delta_i)\nabla f(\Delta_i)^\top] - \Exp[\nabla f(\Delta_i) g_s(\Delta_i)^\top] + \Exp[\nabla f(\Delta_i) \nabla f(\Delta_i)^\top] -G \right\|_2 \nonumber \\
&\stackrel{(i)}{=} \left\| \Exp[g_s(\Delta_i)g_s(\Delta_i)^\top] - 2\Exp[\nabla f(\Delta_i)\nabla f(\Delta_i)^\top]+ \Exp[\nabla f(\Delta_i) \nabla f(\Delta_i)^\top] -G \right\|_2 \nonumber \\
&\stackrel{(ii)}{\leq} \Exp[ \|\nabla f(\Delta_i)\|_2^2 ] + \Exp\left[  A_1 \| \Delta_i\|_2 + A_2 \| \Delta_i\|_2^2 + A_3 \| \Delta_i\|_2^3 + A_4  \| \Delta_i\|_2^4 \right] \nonumber \\
&\stackrel{(iii)}{\leq} L^2 \Exp\left[\| \Delta_i\|_2^2\right] + A_1 \sqrt{\Exp\left[\| \Delta_i\|_2^2\right]} + A_2 \Exp\left[\| \Delta_i\|_2^2\right] + \frac{A_3}{2} \Exp\left[\| \Delta_i\|_2^2 + \| \Delta_i\|_2^4\right] + A_4 \Exp\left[\| \Delta_i\|_2^4\right] \nonumber \\
&=  A_1 \sqrt{\Exp[\| \Delta_i\|_2^2]} + \left(L^2 + A_2 + \tfrac{A_3}{2}\right) \Exp[\| \Delta_i\|_2^2] + \left(\tfrac{A_3}{2} + A_4\right) \Exp[\| \Delta_i\|_2^4] \nonumber \\
&\stackrel{(iv)}{\leq} A_1 \sqrt{ (1 - 2\alpha \eta + A \eta^2)^{t-1} \|\Delta_1\|_2^2
	+ \frac{B \eta}{2\alpha - A \eta} }
	+\left(L^2 + A_2 + \frac{A_3}{2}\right) \left( (1 - 2\alpha \eta + A \eta^2 )^{t-1} \|\Delta_1\|_2^2
	+ \frac{B \eta}{2\alpha - A \eta} \right)  \nonumber \\
	&\quad + \left(\frac{A_3}{2} + A_4 \right) \left((1-4\alpha\eta + A (   6 \eta^2 + 2 \eta^3  ) + C(  2 \eta^3+ \eta^4) )^{t-1} \|\Delta_1\|_2^4
	 + \frac{B  (   3 \eta^2 +  \eta^3  ) + D (  2 \eta^2 + \eta^3) }{4\alpha - A (   6 \eta + 2 \eta^2  ) - C(  2 \eta^2+ \eta^3)} \right) \nonumber \\
&=  A_1 \sqrt{  \frac{B \eta}{2\alpha - A \eta} }
	+ \left(L^2 + A_2 + \frac{A_3}{2}\right) \frac{B \eta}{2\alpha - A \eta}
	+ \left(\frac{A_3}{2} + A_4 \right)  \frac{B  (   3 \eta^2 +  \eta^3  ) + D (  2 \eta^2 + \eta^3) }{4\alpha - A (   6 \eta + 2 \eta^2  ) - C(  2 \eta^2+ \eta^3)}.
\end{align}
\end{small}
where $(i)$ is due to Assumption $(G_1)$, $(ii)$ is due to Cauchy-Schwartz inequality and Assumption $(G_4)$, $(iii)$ is due to Assumption $(F_2)$, $(iv)$ is due to Lemmas \ref{lem:tas_00}-\ref{lem:tas_01}.

Then, we have:
\begin{small}
\begin{align}
&\left\|\Exp[\varphi_2 \varphi_2^\top] - H^{-1} G H^{-1}\right\|_2 \nonumber \\ 
&\stackrel{(i)}{\leq} \tfrac{t-1}{t} \left\|  H^{-1} G H^{-1} \right\|_2 +  \tfrac{t-1}{\lambda_L^2 \cdot t} \left( A_1 \sqrt{  \frac{B \eta}{2\alpha - A \eta} }
	+ \left(L^2 + A_2 + \tfrac{A_3}{2}\right) \frac{B \eta}{2\alpha - A \eta}
	+ \left(\tfrac{A_3}{2} + A_4\right)  \frac{B  (   3 \eta^2 +  \eta^3  ) + D (  2 \eta^2 + \eta^3) }{4\alpha - A (   6 \eta + 2 \eta^2  ) - C(  2 \eta^2+ \eta^3)} \right) \nonumber \\
&= O(\sqrt{\eta}).
\end{align}
\end{small}
where $(i)$ is due to Assumption, and $(ii)$ is after removing constants and observing that the dominant term in the second part is $O(\sqrt{\eta})$.
Combining all the above in \eqref{eq:tas_02}, we obtain:
\begin{align*}
\left\| t \Exp[ (\btheta_t - \htheta)( \btheta_t - \htheta)^\top] - H^{-1} G H^{-1} \right\|_2 \lesssim \sqrt{\eta} + \sqrt{\frac{1}{t \eta} + t \eta^2}.
\end{align*}

\hfill $\blacksquare$

\subsection{Proof of Corollary \ref{cor:sgd-stat-inf-linear-regression}}

\begin{proof}[Proof of Corollary \ref{cor:sgd-stat-inf-linear-regression}]

Here we use the same notations as the proof of Theorem \ref{thm:sgd-strongly-convex-lipschitz-stat-inf}.
Because linear regression satisfies $\nabla f(\theta) - H(\theta - \htheta) = 0$,
we do not have to consider the Taylor remainder term in our analysis.
And we do not need 4-th order bound for SGD.
Due to the fact that the quadratic function is strongly convex, we have $\Delta^\top \nabla f(\Delta+\htheta) \geq \lambda_L \| \Delta \|_2^2 $.

By sampling with replacement, we have
\begin{align}
\Exp[\|g_s(\theta_t)\|_2^2 \mid \theta_t] &= \|\nabla f(\theta_t)\|_2^2 + \Exp[\|e_t\|_2^2  \mid \theta_t] \nonumber \\
&= \|\nabla f(\theta_t)\|_2^2 + \tfrac{1}{S}\left(\tfrac{1}{n} \sum  \| \nabla f_i(\theta_t)\|_2^2 - \|\nabla f(\theta_t)\|_2^2\right) \nonumber \\
&\leq L^2 (1 - \tfrac{1}{S})\| \Delta_t \|_2^2 + \tfrac{1}{S} \tfrac{1}{n} \sum \|x_i(x_i^\top \theta_t - y_i)\|_2^2 \nonumber \\
&= L^2 (1 - \tfrac{1}{S})\| \Delta_t \|_2^2 + \tfrac{1}{S} \tfrac{1}{n} \sum \|x_ix_i^\top \Delta_t + x_i x_i^\top \htheta- y_i x_i\|_2^2 \nonumber \\
&\leq  L^2 (1 - \tfrac{1}{S})\| \Delta_t \|_2^2 +
	2 \tfrac{1}{S} \tfrac{1}{n} \sum ( \|x_ix_i^\top \Delta_t\|_2^2 + \|x_i x_i^\top \htheta- y_i x_i\|_2^2)
	\nonumber \\
&\leq \left(L^2 (1 - \tfrac{1}{S}) + 2 \tfrac{1}{S} \tfrac{1}{n} \sum \|x_i\|_2^4 \right) \| \Delta_t \|_2^2
	+ 2 \tfrac{1}{S} \tfrac{1}{n} \sum \|x_i x_i^\top \htheta- y_i x_i\|_2^2.
\end{align}

We also have
\begin{align}
\left\| \Exp[g_s(\theta) g_s(\theta)^\top \mid \theta] -  G \right\|_2 &= \left\|  \tfrac{1}{S} \tfrac{1}{n} \sum \nabla f_i(\theta) f_i(\theta)^\top - \nabla f(\theta) \nabla f(\theta)^\top -  G  \right\|_2 \nonumber \\
&\leq  \|\nabla f(\theta)\|_2^2 + \tfrac{1}{S} \left\| \tfrac{1}{n} \sum \nabla f_i(\theta) f_i(\theta)^\top - G \right\|_2 \nonumber  \\
&\leq  \|\nabla f(\theta)\|_2^2 + \tfrac{1}{S} \left\| \tfrac{1}{n} \sum  (g_i + H_i \Delta) (g_i + H_i \Delta)^\top - G \right\|_2 \nonumber \\
&\leq  \|\nabla f(\theta)\|_2^2 + \tfrac{1}{S} \left\| \tfrac{1}{n} \sum H_i \Delta g_i^\top + g_i \Delta^\top H_i + H_i \Delta \Delta^\top H_i \right\|_2 \nonumber \\
&\leq  \|\nabla f(\theta)\|_2^2 + \tfrac{1}{S} \left( \tfrac{2}{n} \|H_i\|_2 \|g_i\|_2\right) \| \Delta \|_2 + \tfrac{1}{S} \left( \tfrac{1}{n} \sum \|H_i\|_2^2\right) \| \Delta \|_2^2 \nonumber \\
&\leq \tfrac{1}{S}\left( \tfrac{2}{n} \|H_i\|_2 \|g_i\|_2\right) \| \Delta \|_2+  \left(L^2 + \tfrac{1}{S} \tfrac{1}{n} \sum \|H_i\|_2^2 \right)\| \Delta \|_2^2,
\end{align}
where $g_i = x_i (x_i^\top \htheta - y_i)$ and $H_i = x_i x_i^\top$.

Following Theorem \ref{thm:sgd-strongly-convex-lipschitz-stat-inf}'s proof, we have
\begin{align}
\left\| t \Exp[ (\btheta_t - \htheta)( \btheta_t - \htheta)^\top] - H^{-1} G H^{-1} \right\|_2  \lesssim \sqrt{\eta}+ \frac{1}{\sqrt{t \eta} }.
\end{align}
\end{proof}

\subsection{Proof of Corollary \ref{cor:sgd-stat-inf-logistic-regression}}
\begin{proof}[Proof of Corollary \ref{cor:sgd-stat-inf-logistic-regression}]

Here we use the same notations as the proof of Theorem \ref{thm:sgd-strongly-convex-lipschitz-stat-inf}.
Because $\nabla^2 f(\theta) = \nabla k(\theta) \nabla k(\theta)^\top + (k(\theta) +c ) \nabla^2 k(\theta)$,
$f(\theta) $ is convex.
The following lemma shows that $\nabla f(\theta) = (k(\theta) +c) \nabla k(\theta) $ is Lipschitz.
\begin{lemma}
\begin{align}
\| \nabla f(\theta) \|_2 \leq L \| \Delta \|_2
\end{align}
for some data dependent constant $L$.
\end{lemma}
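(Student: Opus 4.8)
The plan is to reduce the lemma to two complementary bounds on $\nabla k$, together with a linear growth bound on $k$ itself, and then to combine them with one small pairing trick. First I would compute the per-sample gradient
\[
\nabla k_i(\theta) = \frac{-y_i X_i}{1 + \exp(y_i \theta^\top X_i)},
\]
and note that the scalar factor lies in $(0,1)$ while $|y_i| = 1$; hence $\|\nabla k_i(\theta)\|_2 \leq \|X_i\|_2$ and, averaging, $\|\nabla k(\theta)\|_2 \leq M$ where $M := \frac{1}{n}\sum_{i=1}^n \|X_i\|_2$ is a data-dependent constant. This is the first, \emph{uniform}, bound on $\nabla k$.

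Next I would produce a second bound that is \emph{linear} in $\|\Delta\|_2$. Differentiating once more gives $\nabla^2 k(\theta) = \frac{1}{n}\sum_i \sigma_i(1-\sigma_i) X_i X_i^\top$ with $\sigma_i = (1+\exp(-y_i\theta^\top X_i))^{-1} \in (0,1)$; since $\sigma_i(1-\sigma_i) \leq \tfrac14$, we get $\|\nabla^2 k(\theta)\|_2 \leq \beta := \frac{1}{4n}\sum_i \|X_i\|_2^2$, so $\nabla k$ is $\beta$-Lipschitz. Because $\htheta$ is the minimizer we have $\nabla f(\htheta) = (k(\htheta)+c)\nabla k(\htheta) = 0$, and since $k(\htheta)+c > 0$ this forces $\nabla k(\htheta) = 0$. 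Therefore $\|\nabla k(\theta)\|_2 = \|\nabla k(\theta) - \nabla k(\htheta)\|_2 \leq \beta\|\Delta\|_2$.

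I would then control the prefactor $k(\theta)+c$ by its linear growth: the elementary inequality $\log(1+e^z) \leq \log 2 + |z|$ applied with $z = -y_i\theta^\top X_i$ gives $k_i(\theta) \leq \log 2 + \|X_i\|_2\|\theta\|_2$, and averaging and writing $\theta = \htheta + \Delta$ yields $k(\theta)+c \leq C_0 + M\|\Delta\|_2$ with the data-dependent constant $C_0 := \log 2 + c + M\|\htheta\|_2$.

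Finally I would combine these, and here lies the only subtle point. Writing $\|\nabla f(\theta)\|_2 = (k(\theta)+c)\,\|\nabla k(\theta)\|_2$ and naively inserting $k(\theta)+c \leq C_0 + M\|\Delta\|_2$ together with $\|\nabla k(\theta)\|_2 \leq \beta\|\Delta\|_2$ would leave a quadratic term $M\beta\|\Delta\|_2^2$, which is not $O(\|\Delta\|_2)$ for large $\|\Delta\|_2$. The fix is to pair the two gradient bounds with the two pieces of the prefactor: bound the constant part $C_0$ against the \emph{linear} gradient bound and the linear part $M\|\Delta\|_2$ against the \emph{uniform} gradient bound, i.e.
\[
\|\nabla f(\theta)\|_2 \leq C_0\,\|\nabla k(\theta)\|_2 + M\|\Delta\|_2\,\|\nabla k(\theta)\|_2 \leq C_0\beta\|\Delta\|_2 + M^2\|\Delta\|_2,
\]
which gives the claim with $L = C_0\beta + M^2$. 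The main obstacle is thus exactly this pairing trick; everything else is routine differentiation and the standard sigmoid bounds.
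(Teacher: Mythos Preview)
Your proposal is correct and follows essentially the same route as the paper: both establish a uniform bound $\|\nabla k(\theta)\|_2 \leq \tfrac{1}{n}\sum_i\|X_i\|_2$, a linear bound $\|\nabla k(\theta)\|_2 \leq \beta\|\Delta\|_2$ via the Hessian estimate $\|\nabla^2 k(\theta)\|_2 \leq \tfrac{1}{4n}\sum_i\|X_i\|_2^2$, a linear growth bound on $k(\theta)+c$, and then combine them with exactly the pairing trick you describe. The only cosmetic difference is that the paper uses the inequality $\log(1+e^{a+b}) \leq \log(1+e^b) + |a|$ centered at $\htheta$ (so the constant part is $c + k(\htheta)$), whereas you use $\log(1+e^z) \leq \log 2 + |z|$ centered at the origin (so your constant is $C_0 = \log 2 + c + M\|\htheta\|_2$); both yield valid data-dependent constants.
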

\begin{proof}

First, because
\begin{align}
\nabla k(\theta) = \tfrac{1}{n} \sum  - \frac{-y_i x_i}{1 + \exp(y_i \theta^\top x_i)},
\end{align}
we have
\begin{align}
\| \nabla k(\theta) \|_2 \leq \tfrac{1}{n} \sum \|x_i\|_2.
\end{align}

Also, we have
\begin{align}
\| \nabla^2 k(\theta) \|_2 = \left\| \tfrac{1}{n} \sum \frac{\exp(y_i \theta^\top x_i)}{(1 + \exp(y_i \theta^\top x_i))^2} x_i  x_i^\top \right\|_2 \leq \tfrac{1}{4n} \sum \|x_i\|_2^2,
\end{align}
which implies
\begin{align}
\| \nabla k(\theta)  \|_2 \leq \tfrac{1}{4n} \sum \|x_i\|_2^2 \|\Delta\|_2.
\end{align}

Further:
\begin{align}
k(\theta) &= \tfrac{1}{n} \sum \log( 1 + \exp(-y_i \Delta^\top x_i  -y_i \htheta^\top x_i))  \nonumber \\
&\leq \tfrac{1}{n} \sum \log(1 + \exp(\|x_i\|_2 \| \Delta \|_2 -y_i \htheta^\top x_i)) \nonumber \\
&\stackrel{(i)}{\leq} \tfrac{1}{n} \sum ( \log(1 + \exp( -y_i \htheta^\top x_i)) + \|x_i\|_2 \| \Delta \|_2 )
\end{align}
where step (i) follows from ${ \log(1 + \exp(a + b))  \leq \log(1 + e^b) + |a| }$.
Thus, we have
\begin{align}
\| \nabla f(\theta) \|_2 &= \left\|  (k(\theta) +c) \nabla k(\theta) \right\|_2 \leq k( \theta ) \| \nabla k(\theta) \|_2  + c \|  \nabla k(\theta) \|_2 \nonumber \\
&\leq  \left( c+\tfrac{1}{n} \sum  \log(1 + \exp( -y_i \htheta^\top x_i)) \right) \| \nabla k(\theta) \|_2 + \left(\tfrac{1}{n} \sum \|x_i\|_2\right)^2 \| \Delta \|_2,
\end{align}
and we can conclude that $\| \nabla f(\theta) \|_2 \leq L \| \Delta \|_2$ for some data dependent constant $L$.
\end{proof}

Next, we show that $f(\theta)$ has a bounded Taylor remainder.
\begin{lemma}
\begin{align}
\|\nabla f(\theta) - H(\theta - \htheta)\|_2 \leq E \|\theta - \htheta\|_2^2,
\end{align}
for some data dependent constant $E$.
\end{lemma}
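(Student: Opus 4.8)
The plan is to exploit the product structure $\nabla f(\theta) = (c + k(\theta))\,\nabla k(\theta)$ together with two facts: that $\nabla k(\htheta) = 0$ (so that $\nabla f(\htheta)=0$ and $H = \nabla^2 f(\htheta) = (c+k(\htheta))\,\nabla^2 k(\htheta)$), and that $k$ is globally Lipschitz with a bounded, Lipschitz Hessian. First I would record the optimality relation $\nabla k(\htheta)=0$, since $\htheta$ minimizes $k$; this kills all first-order contributions and is what makes the remainder genuinely quadratic.

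The key step is a careful add-and-subtract. Writing $\Delta = \theta - \htheta$, I would decompose
\begin{align*}
\nabla f(\theta) - H\Delta = \big(k(\theta) - k(\htheta)\big)\,\nabla k(\theta) + \big(c + k(\htheta)\big)\big(\nabla k(\theta) - \nabla^2 k(\htheta)\,\Delta\big).
\end{align*}
The point of pairing the genuine second-order gradient remainder $\nabla k(\theta) - \nabla^2 k(\htheta)\Delta$ with the \emph{fixed} constant $c + k(\htheta)$, rather than with the varying factor $c+k(\theta)$, is that $c+k(\theta)$ grows linearly in $\|\Delta\|_2$; multiplying it against an $O(\|\Delta\|_2^2)$ remainder would yield only a cubic bound, whereas the constant prefactor $c+k(\htheta)$ leaves the quadratic rate intact.

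For the first term I would invoke the previous lemma, which gives both $\|\nabla k(\theta)\|_2 \le \tfrac1n\sum_i\|x_i\|_2 =: L_k$ (hence $k$ is $L_k$-Lipschitz and $|k(\theta)-k(\htheta)|\le L_k\|\Delta\|_2$) and $\|\nabla k(\theta)\|_2 = \|\nabla k(\theta)-\nabla k(\htheta)\|_2 \le M_2\|\Delta\|_2$ with $M_2 = \tfrac{1}{4n}\sum_i\|x_i\|_2^2$; their product is at most $L_k M_2\|\Delta\|_2^2$. For the second term I would bound the gradient remainder by the integral form of Taylor's theorem, $\nabla k(\theta)-\nabla^2 k(\htheta)\Delta = \int_0^1\big(\nabla^2 k(\htheta + s\Delta) - \nabla^2 k(\htheta)\big)\Delta\,\mathrm{d}s$, and use a global Lipschitz constant $M_3$ for $\nabla^2 k$ to get $\|\nabla k(\theta)-\nabla^2 k(\htheta)\Delta\|_2 \le \tfrac12 M_3\|\Delta\|_2^2$. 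Collecting the pieces gives the claim with $E = L_k M_2 + \tfrac12(c+k(\htheta)) M_3$.

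The main obstacle is establishing the global Hessian-Lipschitz constant $M_3$, i.e. a uniform-in-$\theta$ bound on the third derivative of the logistic risk. Differentiating $k_i(\theta)=\log(1+\exp(-y_i\theta^\top x_i))$ three times, the scalar chain-rule factor is $\sigma(u)(1-\sigma(u))(1-2\sigma(u))$ with $u = y_i\theta^\top x_i$ and $\sigma$ the logistic function, which is bounded by a universal constant uniformly in $u$, while the tensor part contributes $(y_i x_i)^{\otimes 3}$ with $|y_i|=1$. Averaging yields $M_3 \le C_0\,\tfrac1n\sum_i\|x_i\|_2^3$ for an absolute constant $C_0$, with all quantities data-dependent constants independent of $\theta$. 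This uniform third-order control, rather than any near-optimum localization, is exactly what upgrades the estimate from a local expansion to the global form demanded by assumption $(F_3)$; the decomposition above is the device that lets that control be applied without the growing prefactor spoiling the rate.
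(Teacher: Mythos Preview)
Your proof is correct, and the decomposition is valid: using $\nabla k(\htheta)=0$ to write $H=(c+k(\htheta))\nabla^2 k(\htheta)$, the add-and-subtract
\[
\nabla f(\theta)-H\Delta=\big(k(\theta)-k(\htheta)\big)\nabla k(\theta)+(c+k(\htheta))\big(\nabla k(\theta)-\nabla^2 k(\htheta)\Delta\big)
\]
indeed yields a global $O(\|\Delta\|_2^2)$ bound, since both factors in the first term are $O(\|\Delta\|_2)$ globally (boundedness of $\nabla k$ gives Lipschitzness of $k$; boundedness of $\nabla^2 k$ together with $\nabla k(\htheta)=0$ gives $\|\nabla k(\theta)\|_2\le M_2\|\Delta\|_2$), and the uniform third-derivative bound for the logistic loss controls the second term globally as you describe.

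This is a genuinely different route from the paper's. The paper argues by splitting into two regimes: for $\|\Delta\|_2=O(1)$ it simply invokes smoothness of $f$ and Taylor's theorem to get the quadratic bound on a compact set, while for $\|\Delta\|_2=\Omega(1)$ it observes that $\|\nabla f(\theta)\|_2=O(\|\Delta\|_2)$ (since $k(\theta)+c$ grows at most linearly and $\nabla k$ is bounded), so $\|\nabla f(\theta)-H\Delta\|_2=O(\|\Delta\|_2)\le O(\|\Delta\|_2^2)$ trivially in that regime. The paper's argument is shorter and avoids computing the third derivative, but is nonconstructive about the constant $E$. Your single global decomposition is more explicit---it produces a concrete data-dependent value $E=L_kM_2+\tfrac12(c+k(\htheta))M_3$---and your observation about pairing the Taylor remainder of $\nabla k$ with the fixed prefactor $c+k(\htheta)$ rather than the growing one $c+k(\theta)$ is exactly what makes the bound sharp globally without regime-splitting.
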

\begin{proof}

Because $\nabla f(\theta) = (k(\theta) + c) \nabla k(\theta)$, we know that $\|\nabla f(\theta) \|_2 = O(\|\Delta\|_2)$ when $\|\Delta\|_2=\Omega(1)$ where the constants are data dependent.
Because $f(\theta)$ is infinitely differentiable, by the Taylor expansion we know that
$\|\nabla f(\theta) - H(\theta - \htheta)\|_2 =O( \|\theta - \htheta\|_2^2) $ when
$\|\Delta\|_2=O(1)$ where the constants are data dependent. 
Combining the above,
we can conclude $\|\nabla f(\theta) - H(\theta - \htheta)\|_2 \leq E \|\theta - \htheta\|_2^2 $ for some data dependent constant $E$.

\end{proof}

In the following lemma, we will show that $\nabla f(\theta)^\top (\theta - \htheta) \geq \alpha \| \theta - \htheta \|_2^2$ for some data dependent constant $\alpha$.
\begin{lemma}
\begin{align}
\nabla f(\theta)^\top (\theta - \htheta) \geq \alpha \| \theta - \htheta \|_2^2,
\end{align}
for some data dependent constant $\alpha$.
\end{lemma}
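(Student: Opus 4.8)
The plan is to exploit the factored form of the gradient, $\nabla f(\theta) = (c + k(\theta))\,\nabla k(\theta)$, together with the fact that $\htheta$ minimizes the logistic loss $k$, so that $\nabla k(\htheta) = 0$. Writing $\Delta := \theta - \htheta$ and using $c + k(\theta) \geq c > 0$ (since each $k_i \geq 0$), it suffices to show that $(c + k(\theta))\,\nabla k(\theta)^\top \Delta \geq \alpha \|\Delta\|_2^2$. I would establish this by splitting into a near regime $\|\Delta\|_2 \leq R$ and a far regime $\|\Delta\|_2 > R$, for a fixed data-dependent radius $R$, producing a positive quadratic lower bound in each and taking $\alpha$ to be the minimum of the two constants.

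In the near regime I would use that the empirical logistic risk is strictly convex and, on the compact ball $\{\|\theta - \htheta\|_2 \leq R\}$, has a Hessian uniformly bounded below, $\nabla^2 k(\theta) \succeq \mu I$ with $\mu := \min_{\|\theta - \htheta\|_2 \leq R} \lambda_{\min}(\nabla^2 k(\theta)) > 0$, whenever the design is non-degenerate. Since $\nabla k(\htheta) = 0$, integrating the Hessian along the segment from $\htheta$ to $\theta$ gives $\nabla k(\theta)^\top \Delta = \int_0^1 \Delta^\top \nabla^2 k(\htheta + s\Delta)\,\Delta\,\mathrm{d}s \geq \mu \|\Delta\|_2^2$, and hence $\nabla f(\theta)^\top \Delta \geq c\mu \|\Delta\|_2^2$.

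The far regime is the crux. Here the Hessian of $k$ decays and $k$ is not strongly convex, so I would instead exploit the linear growth of \emph{both} factors. Fix the direction $u = \Delta/\|\Delta\|_2$ and set $h(s) = k(\htheta + s u)$; then $h$ is convex with $h'(0) = 0$, so $h'$ is nonnegative and nondecreasing, and $\nabla k(\theta)^\top \Delta = \|\Delta\|_2\, h'(\|\Delta\|_2)$ while $k(\theta) = h(\|\Delta\|_2) \geq \int_R^{\|\Delta\|_2} h'(\tau)\,\mathrm{d}\tau$. By strict convexity the directional derivative at radius $R$ is positive for every unit $u$, and compactness of the unit sphere yields $\beta := \min_{\|u\|_2 = 1} h'_u(R) > 0$; this is where non-separability of the data enters, guaranteeing that $k$ increases along every ray. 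For $\|\Delta\|_2 \geq R$, monotonicity gives $h'(\|\Delta\|_2) \geq \beta$, so $\nabla k(\theta)^\top \Delta \geq \beta \|\Delta\|_2$ and $k(\theta) \geq \beta(\|\Delta\|_2 - R)$. Multiplying the two bounds, $\nabla f(\theta)^\top \Delta \geq \beta \|\Delta\|_2 \bigl(c + \beta(\|\Delta\|_2 - R)\bigr)$, and a short computation shows the right-hand side is at least $\alpha_2 \|\Delta\|_2^2$ on $\|\Delta\|_2 \geq R$ with $\alpha_2 := \min(\beta c / R,\, \beta^2) > 0$.

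I expect the main obstacle to be the far-regime argument: because the logistic Hessian vanishes at infinity, the directional derivative $h'(s)$ only saturates to a constant rather than growing, so a quadratic bound cannot come from $\nabla k$ alone. The key observation is that the extra factor $c + k(\theta)$ itself grows linearly in $\|\Delta\|_2$, and verifying this uniformly over all directions requires both the non-degeneracy/non-separability assumption (to exclude directions along which $k$ stays flat) and a compactness argument converting the per-direction positivity of $h'_u(R)$ into a single constant $\beta$. Taking $\alpha = \min(c\mu,\, \alpha_2) > 0$ then completes the proof.
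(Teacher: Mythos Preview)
Your proposal is correct and follows essentially the same approach as the paper: both split into a near regime (where a Hessian lower bound gives $\nabla k(\theta)^\top \Delta \gtrsim \|\Delta\|_2^2$) and a far regime (where convexity forces the directional derivative $s \mapsto \nabla k(\htheta + su)^\top u$ to stay above its value at the boundary radius, so that both factors $c + k(\theta)$ and $\nabla k(\theta)^\top \Delta$ grow linearly in $\|\Delta\|_2$, yielding the quadratic bound from their product). The only difference is cosmetic: the paper expresses the boundary radius and the resulting $\alpha$ explicitly in terms of $\lambda_L$ and the Taylor-remainder constant $E$, whereas you obtain $\mu$ and $\beta$ via compactness; your version is cleaner but less quantitative.
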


\begin{proof}
We know that
\begin{align}
\nabla f(\theta)^\top \Delta = (k (\theta) +c) \nabla k(\theta)^\top \Delta.
\end{align}

First, notice that locally (when $\|\Delta\|_2 = O(\frac{\lambda_L}{E})$) we have
\begin{align}
\nabla k(\theta)^\top \Delta \gtrsim \Delta^\top H \Delta \gtrsim \lambda_L \|\Delta\|_2^2,
\end{align}
because of the optimality condition.
This lower bounds $\nabla f(\theta)^\top (\theta - \htheta)$ when $\|\Delta\|_2 = O(\frac{\lambda_L}{E})$.
Next we will lower bound it when  $\|\Delta\|_2 = \Omega(\frac{\lambda_L}{E})$.

Consider the function for $t \in [0, \infty)$, we have
\begin{align}
g(t) & = \nabla f(\htheta + u t)^\top u t \nonumber \\
&= (k(\htheta + u t) + c) \nabla k(\htheta + u t)^\top u t \nonumber \\
&= k(\htheta + u t) \nabla k(\htheta + u t)^\top u t + c \nabla k(\htheta + u t)^\top u t,
\end{align}
where $u = \frac{\Delta}{\|\Delta\|_2}$.
Because  $k(\theta)$ is convex, $\nabla k(\htheta + u t)^\top u$ is an increasing function in $t$,
thus we have  $\nabla k(\htheta + u t)^\top u = \Omega(\frac{\lambda_L^2	}{E})$ when $t=\Omega(\frac{\lambda_L}{E})$.
And we can deduce $\nabla k(\htheta + u t)^\top u t = \Omega(\frac{\lambda_L^2	}{E} t)$ when  $t=\Omega(\frac{\lambda_L}{E})$.

Similarly, because  $k(\theta)$ is convex, $k(\htheta + u t) $ is an increasing function in $t$.
Its derivative   $\nabla k(\htheta + u t)^\top u = \Omega(\frac{\lambda_L^2	}{E})$ when $t=\Omega(\frac{\lambda_L}{E})$.
So we have $k(\htheta + u t) = \Omega(\frac{\lambda_L^2	}{E}t)$ when $t=\Omega(\frac{\lambda_L}{E})$.

Thus, we have
\begin{align}
k(\htheta + u t) \nabla k(\htheta + u t)^\top u t = \Omega\left(\frac{\lambda_L^4	}{E^2} t^2\right),
\end{align}
when $t=\Omega(\frac{E}{\lambda_L})$.

And we can conclude that $\nabla f(\theta)^\top (\theta - \htheta) \geq \alpha \| \theta - \htheta \|_2^2$  for some data dependent constant $\alpha=\Omega(\min\{\lambda_L , \frac{\lambda_L^4	}{E^2} \})$.
\end{proof}

Next, we will prove properties about $g_s = \Psi_s \Upsilon_s$.
\item \begin{align}
\Exp[\|\Upsilon\|_2^2 \mid \theta] = \frac{1}{S_\Upsilon} \left(  \tfrac{1}{n} \sum \|\nabla k_i(\theta)\|_2^2 -\|\nabla k(\theta)\|_2^2\right) + \|\nabla k(\theta)\|_2^2
\lesssim \tfrac{1}{n} \|x_i\|_2^2
\end{align}

\begin{align}
\Exp[\Psi_s^2] &\stackrel{(i)}{\leq} \tfrac{1}{n} \sum (c+k_i(\theta))^2 \nonumber \\
&= \tfrac{1}{n} \sum \left( c+\log( 1 + \exp(-y_i \htheta^\top x_i - y_i \Delta x_i) ) \right)^2 \nonumber \\
&\stackrel{(ii)}{\lesssim} \tfrac{1}{n} \sum \|x_i\|^2 \|\Delta\|_2^2 + \tfrac{1}{n}\sum(c+\log(1+\exp(-y_i \htheta^\top x_i )))^2,
\end{align}
where $(i)$ follows from ${ \Exp\left[\left(\frac{\sum_{j=1}^S X_j}{S}\right)^2\right] \leq \Exp\left[\frac{\sum_{j=1}^S X_j^2}{S}\right] }$ and $(ii)$ follows from ${ \log(1 + \exp(a + b))  \leq \log(1 + e^b) + |a| }$.

Thus, we have
\begin{align}
\Exp[ \| g_s \|_2^2(\theta)  \mid \theta] = \Exp[\Psi^2 \mid \theta ] \cdot \Exp[\|\Upsilon\|_2^2 \mid \theta] \lesssim A \|\Delta\|_2^2 + B
\end{align}
for some data dependent constants $A$ and $B$.

For the fourth-moment quantities, we have:
\begin{align}
\Exp[\|\Upsilon\|_2^4 \mid \theta] &= \Exp \left[ \left\| \frac{1}{S_\Upsilon} \sum_{i \in I_t^\Upsilon} \nabla \log(1+\exp(-y_i \theta^\top x_i)) \right\|_2^4 \right] \nonumber  \\
&\leq \Exp \left[ \left( \frac{1}{S_\Upsilon} \sum_{i \in I_t^\Upsilon} \| \nabla \log(1+\exp(-y_i \theta^\top x_i)) \|_2 \right)^4 \right] \nonumber  \\
&\leq \Exp\left[ \left( \frac{1}{S_\Upsilon}  \sum_{i \in I_t^\Upsilon} \|x_i\|_2 \right)^4 \right] \leq \tfrac{1}{n} \sum \|x_i\|_2^4.
\end{align}

\begin{align}
\Exp[\Psi_s^4] &\stackrel{(i)}{\leq} \tfrac{1}{n} \sum (c+k_i(\theta))^4 = \tfrac{1}{n} \sum \left( c+\log( 1 + \exp(-y_i \htheta^\top x_i - y_i \Delta x_i) ) \right)^4 \nonumber \\
\stackrel{(ii)}{\lesssim} & \tfrac{1}{n} \sum \|x_i\|^4 \|\Delta\|_2^4 + \tfrac{1}{n}\sum \left(c+\log(1+\exp(-y_i \htheta^\top x_i ))\right)^4,
\end{align}
`where $(i)$ follows from ${ \Exp\left[\left(\frac{\sum_{j=1}^S X_j}{S}\right)^4\right] \leq \Exp\left[\frac{\sum_{j=1}^S X_j^4}{S}\right] }$ and (ii) follows from ${ \log(1 + \exp(a + b))  \leq \log(1 + e^b) + |a| }$.

Combining the above, we get:
\begin{align}
\Exp[ \| g_s \|_2^4(\theta)  \mid \theta] = \Exp\left[\Psi^4 \mid \theta \right] \cdot \Exp\left[\|\Upsilon\|_2^4 \mid \theta\right] \lesssim C \|\Delta\|_2^4 + D,
\end{align}
for some data dependent constants $C$ and $D$.

Finally, we need a bound for the quantity $\| \Exp[\nabla g_s(\theta) \nabla g_s(\theta)^\top] -G  \|_2$. 
We observe:
\begin{align}
\left\| \Exp[\nabla g_s(\theta) \nabla g_s(\theta)^\top] -G  \right\|_2 &\leq \left\|  \tfrac{K_G(\theta)}{n} \sum \nabla k_i(\theta) \nabla k_i(\theta)^\top  - \tfrac{K_G(\htheta)}{n} \sum \nabla k_i(\htheta) \nabla k_i(\htheta)^\top  \right\|_2 \nonumber \\
&\leq  \Big\|  \tfrac{K_G(\theta)}{n} \sum \nabla k_i(\theta) \nabla k_i(\theta)^\top   -  \tfrac{K_G(\theta)}{n} \sum \nabla k_i(\htheta) \nabla k_i(\htheta)^\top  
	  \nonumber \\ 
	  &\quad \quad \quad \quad \quad + \tfrac{K_G(\theta)}{n} \sum \nabla k_i(\htheta) \nabla k_i(\htheta)^\top -  \tfrac{K_G(\htheta)}{n} \sum \nabla k_i(\htheta) \nabla k_i(\htheta)^\top \Big\|_2 \nonumber \\
&\leq \tfrac{K_G(\theta)}{n} \left\|\sum \left( \nabla k_i(\theta) \nabla k_i(\theta)^\top - \nabla k_i(\htheta) \nabla k_i(\htheta)^\top  \right) \right\|_2 \nonumber \\ 
	  &\quad \quad \quad \quad \quad + |K_G(\theta) - K_G(\htheta)| \cdot \left\| \tfrac{1}{n} \sum \nabla k_i(\htheta) \nabla k_i(\htheta)^\top \right\|_2.
 \end{align}

Because
\begin{align}
K_G(\theta) &= O(1 + \|\Delta\|_2 + \|\Delta\|_2^2), \\
 \tfrac{1}{n} \left\|\sum ( \nabla k_i(\theta) \nabla k_i(\theta)^\top - \nabla k_i(\htheta) \nabla k_i(\htheta)^\top  ) \right\|_2  &=  O( \|\Delta\|_2 + \|\Delta\|_2^2),  \\
|K_G(\theta) - K_G(\htheta)|  &= O( \|\Delta\|_2 + \|\Delta\|_2^2),
 \end{align}
we may conclude that
\begin{align}
\| \Exp[g_s(\theta) g_s(\theta)^\top \mid \theta] - G \|_2
	\leq A_1 \|\theta - \htheta\|_2  + A_2 \|\theta - \htheta\|_2^2 + A_3 \|\theta - \htheta\|_2^3
		+ A_4 \|\theta - \htheta\|_2^4,
\end{align}
for some data dependent constants $A_1$, $A_2$, $A_3$, and $A_4$.

Combining above results and using Theorem \ref{thm:sgd-strongly-convex-lipschitz-stat-inf},
we have
\begin{align}
\left\| t \Exp[ (\btheta_t - \htheta)( \btheta_t - \htheta)^\top] - H^{-1} G H^{-1} \right\|_2 \lesssim \sqrt{\eta} + \sqrt{\frac{1}{t \eta} + t \eta^2}.
\end{align}
\end{proof}


\section{Experiments}
\label{sec:appendix:experiments}

Here we present additional experiments on our SGD inference procedure.

\subsection{Synthetic data}

\begin{figure}[h]
\centering
\subfloat[][Normal.]{
\includegraphics[width=.3\textwidth]{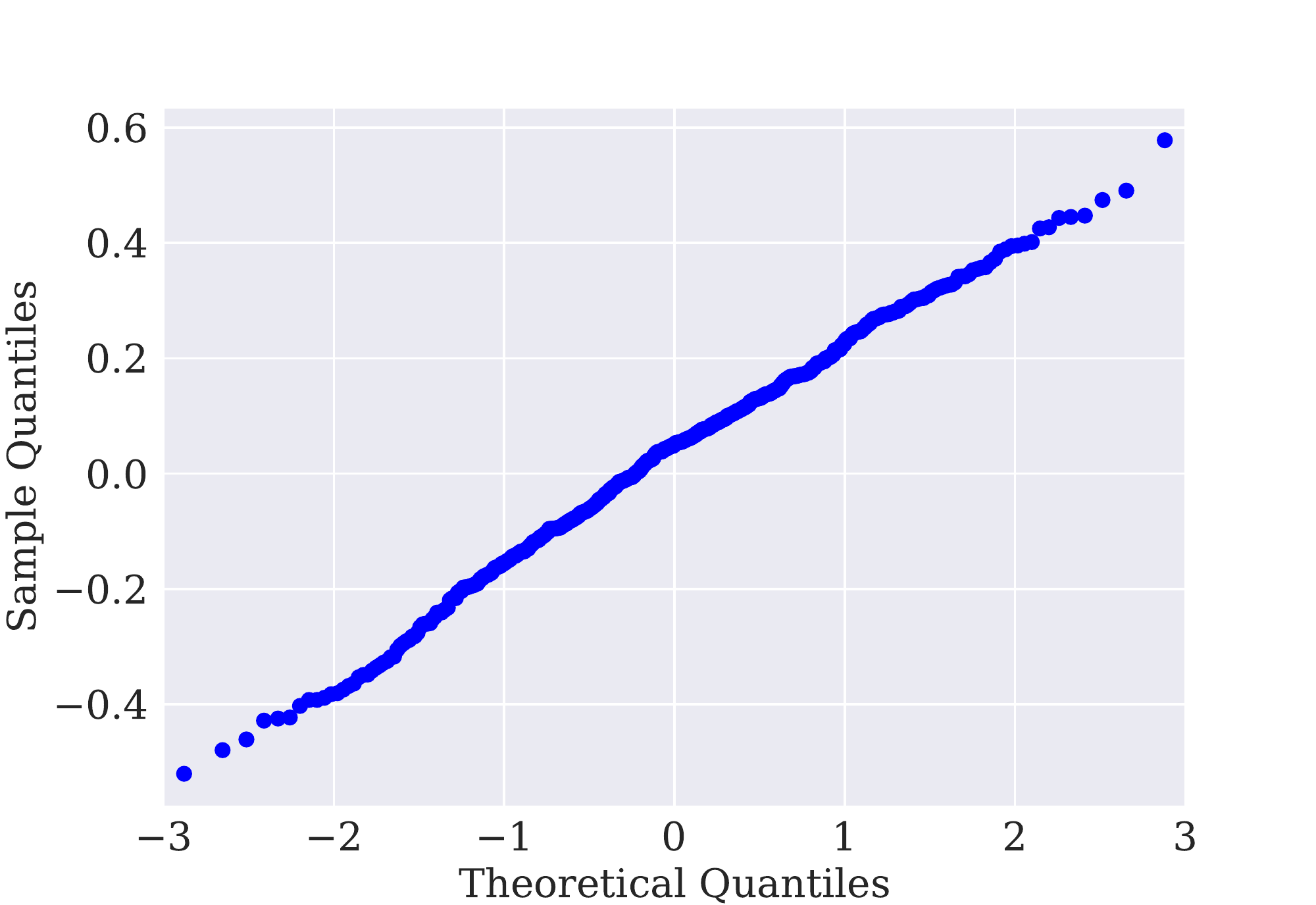}
}
\subfloat[][Exponential.]{
\includegraphics[width=.3\textwidth]{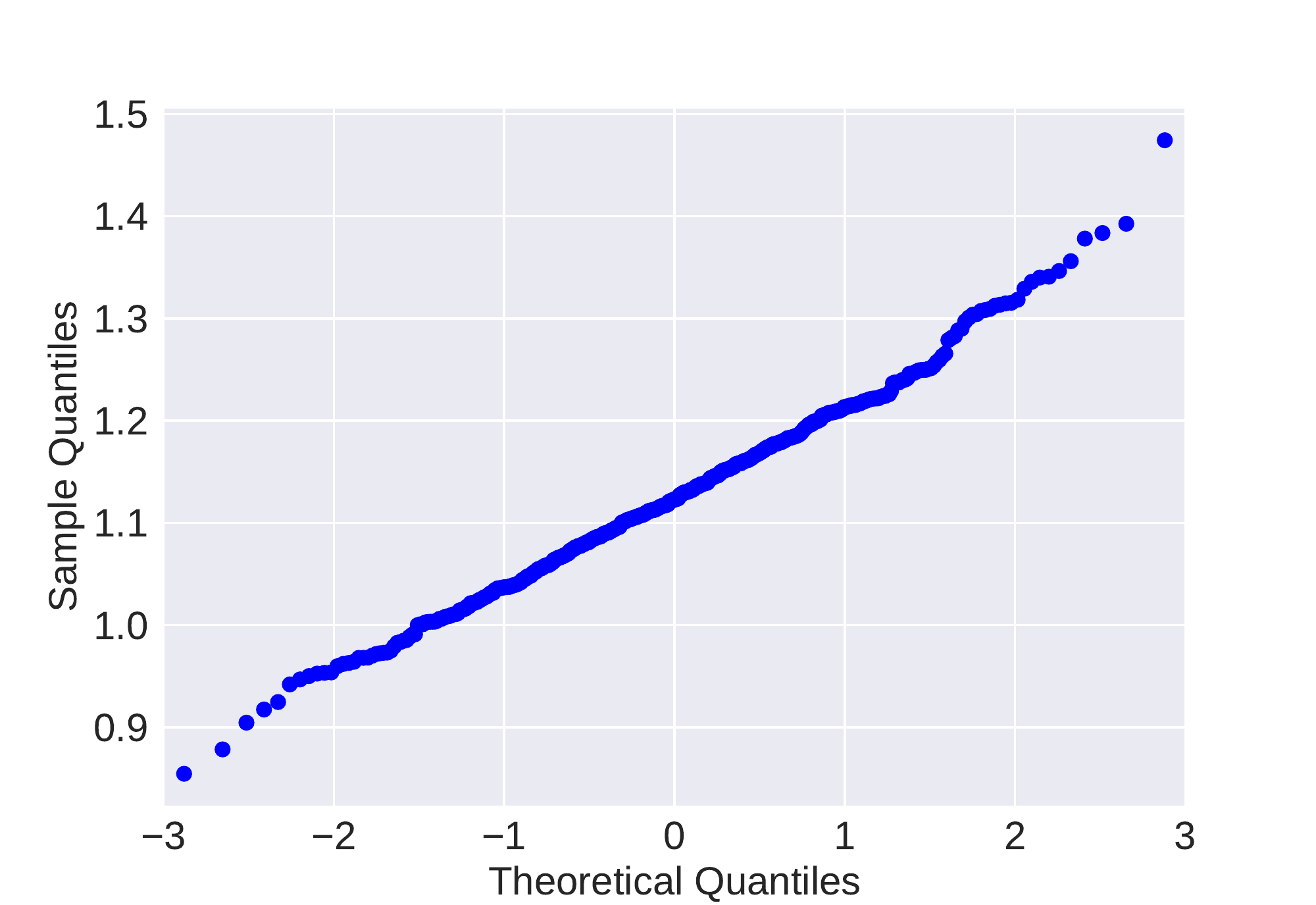}
}
\subfloat[][Poisson.]{
\includegraphics[width=.3\textwidth]{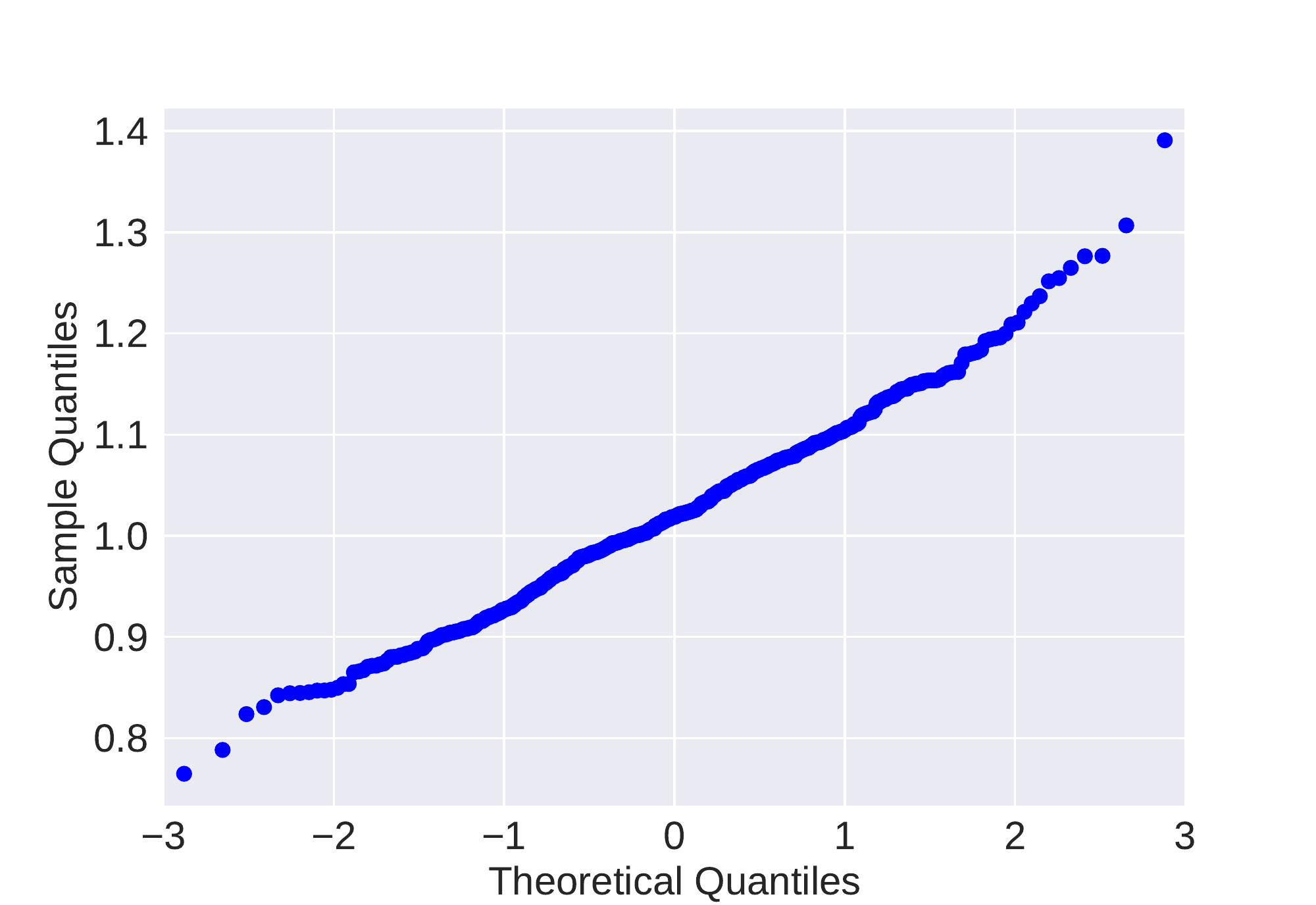}
}
\caption{Estimation in univariate models: Q-Q plots for samples shown in \Cref{fig:exp:sim:univariate}}
\label{fig:exp:sim:univariate-qq}
\end{figure}

\Cref{fig:exp:sim:univariate-qq} shows
Q-Q plots for samples shown in \Cref{fig:exp:sim:univariate}.

\subsubsection{Multivariate models}
\label{subsubsec:nip2017:appendix:experiments:mult}

Here we show Q-Q plots per coordinate for samples from our SGD inference procedure.

Q-Q plots per coordinate for samples in linear regression experiment 1 is shown in
\Cref{fig:nips2017:appendix:linear1:qq-coord}.
Q-Q plots per coordinate for samples in linear regression experiment 2 is shown in
\Cref{fig:nips2017:appendix:linear2:qq-coord}.

\begin{figure}[htb]
\centering
\includegraphics[width=0.19\textwidth]{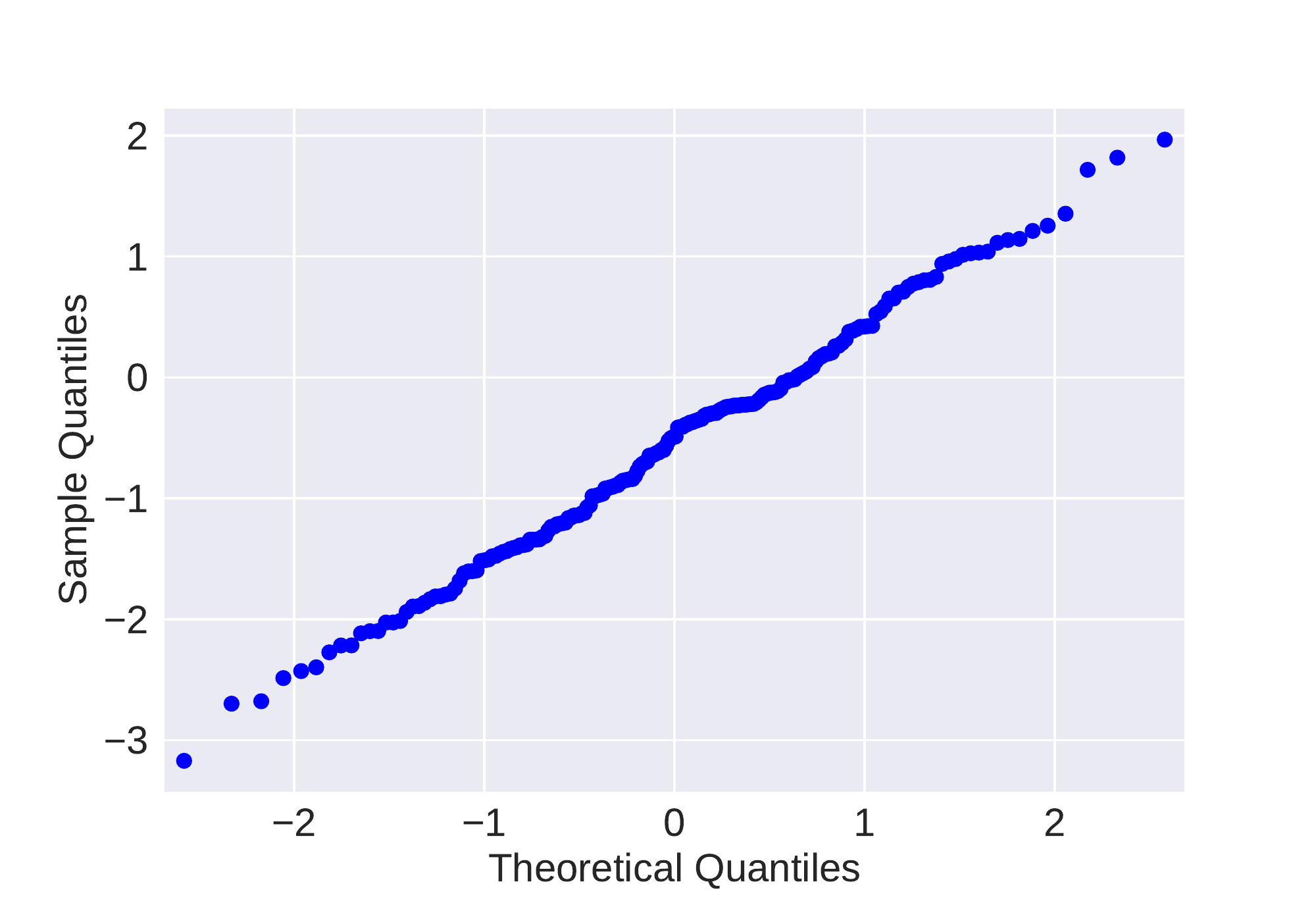}
\includegraphics[width=0.19\textwidth]{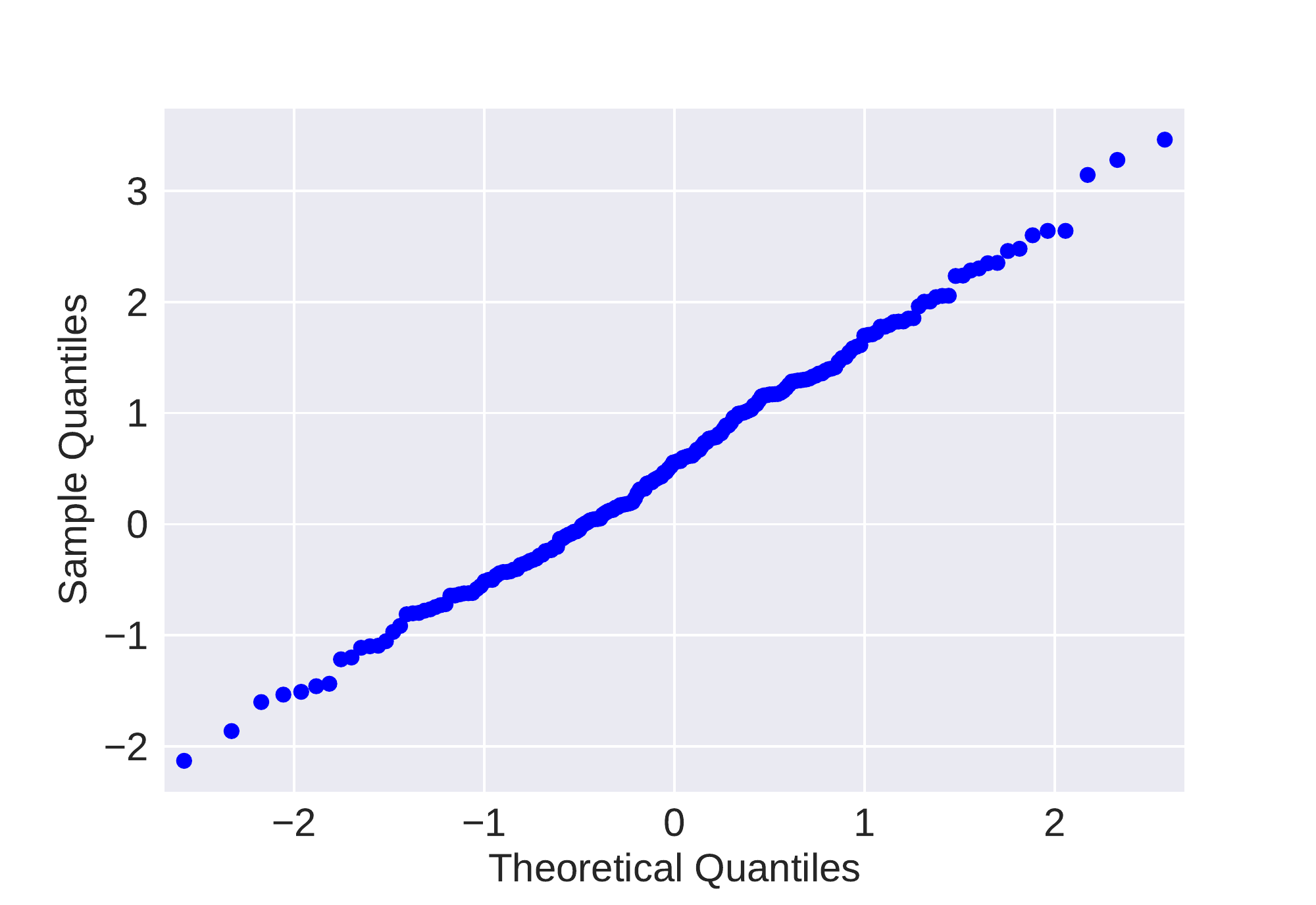}
\includegraphics[width=0.19\textwidth]{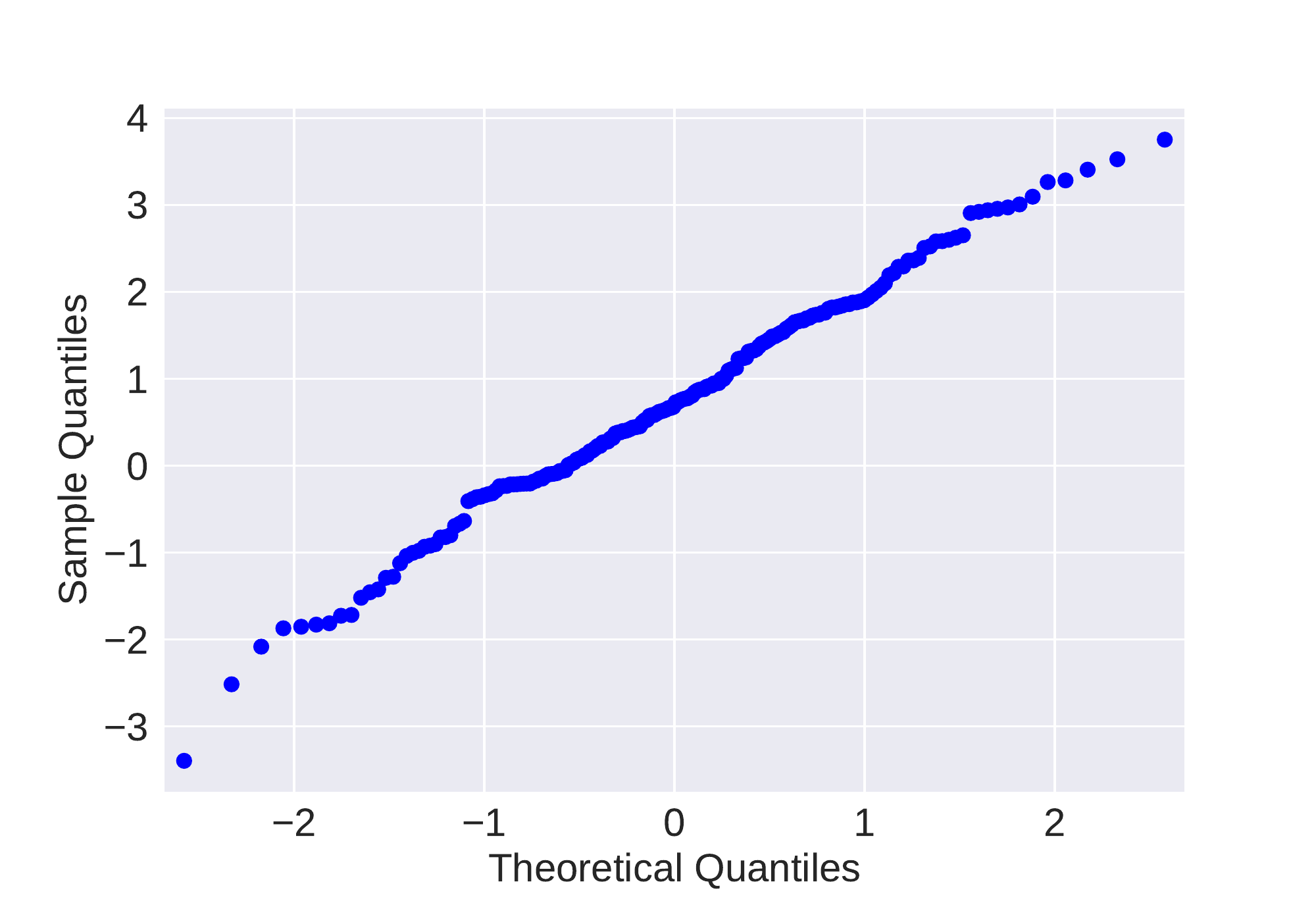}
\includegraphics[width=0.19\textwidth]{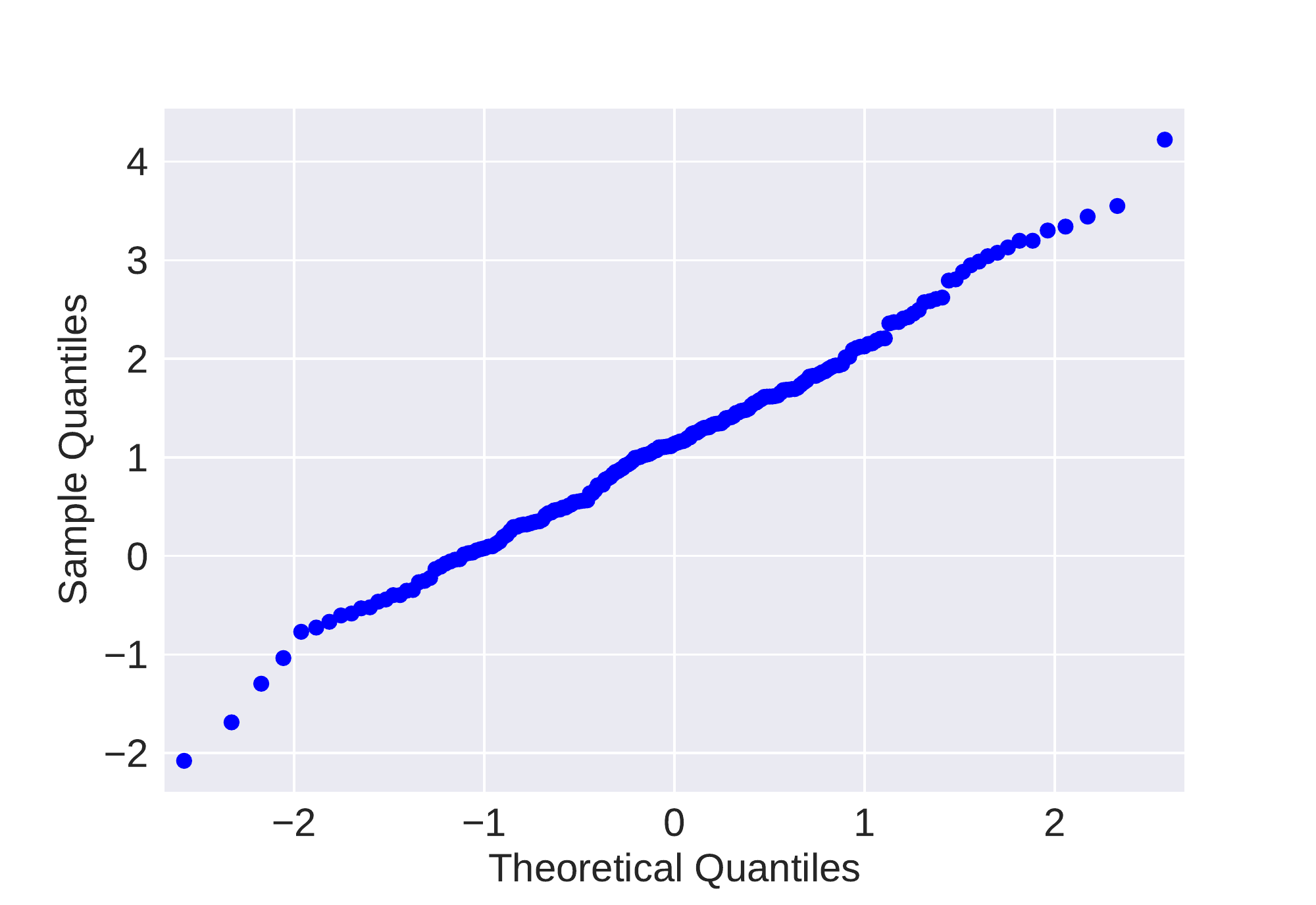}
\includegraphics[width=0.19\textwidth]{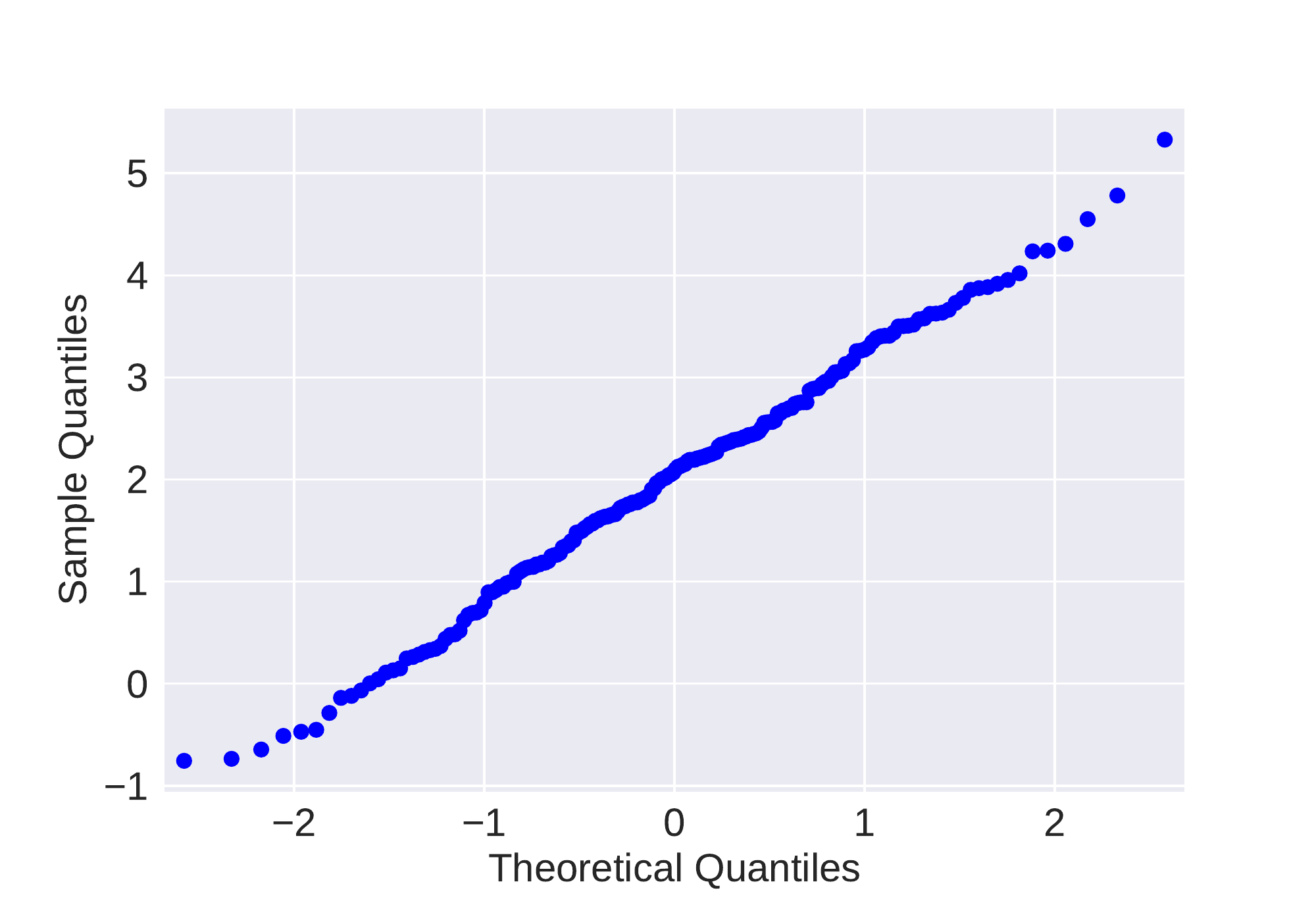}
\\
\includegraphics[width=0.19\textwidth]{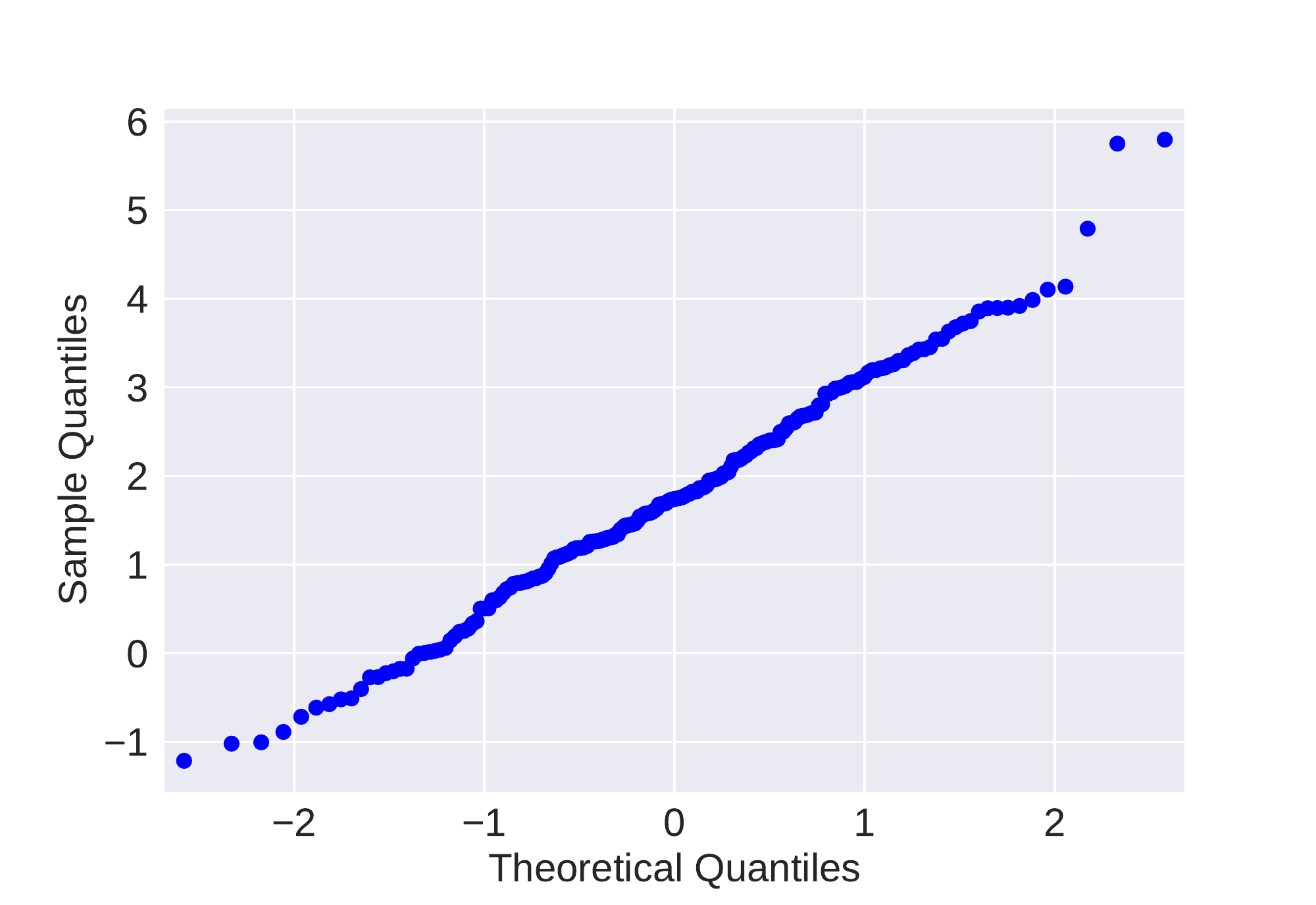}
\includegraphics[width=0.19\textwidth]{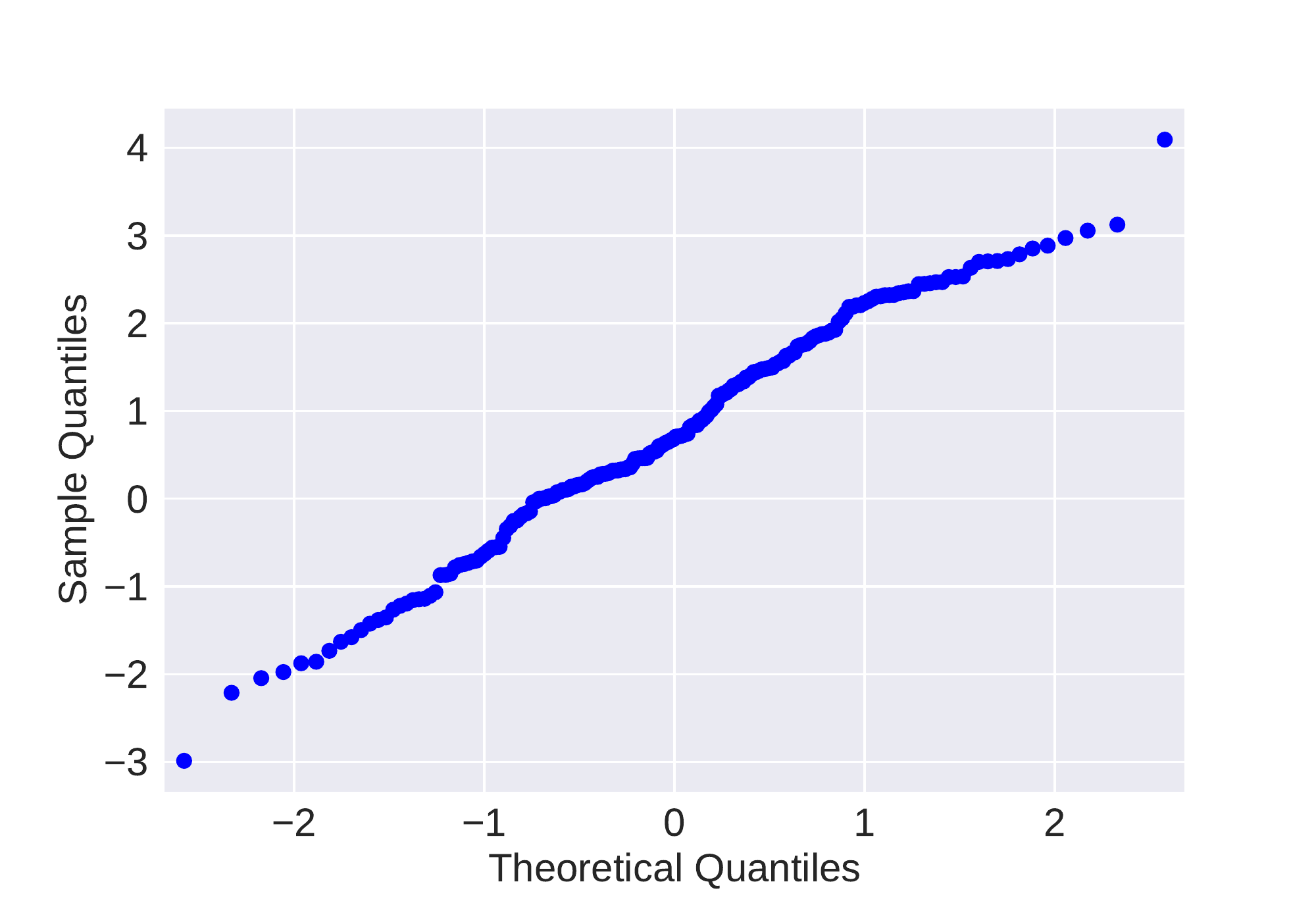}
\includegraphics[width=0.19\textwidth]{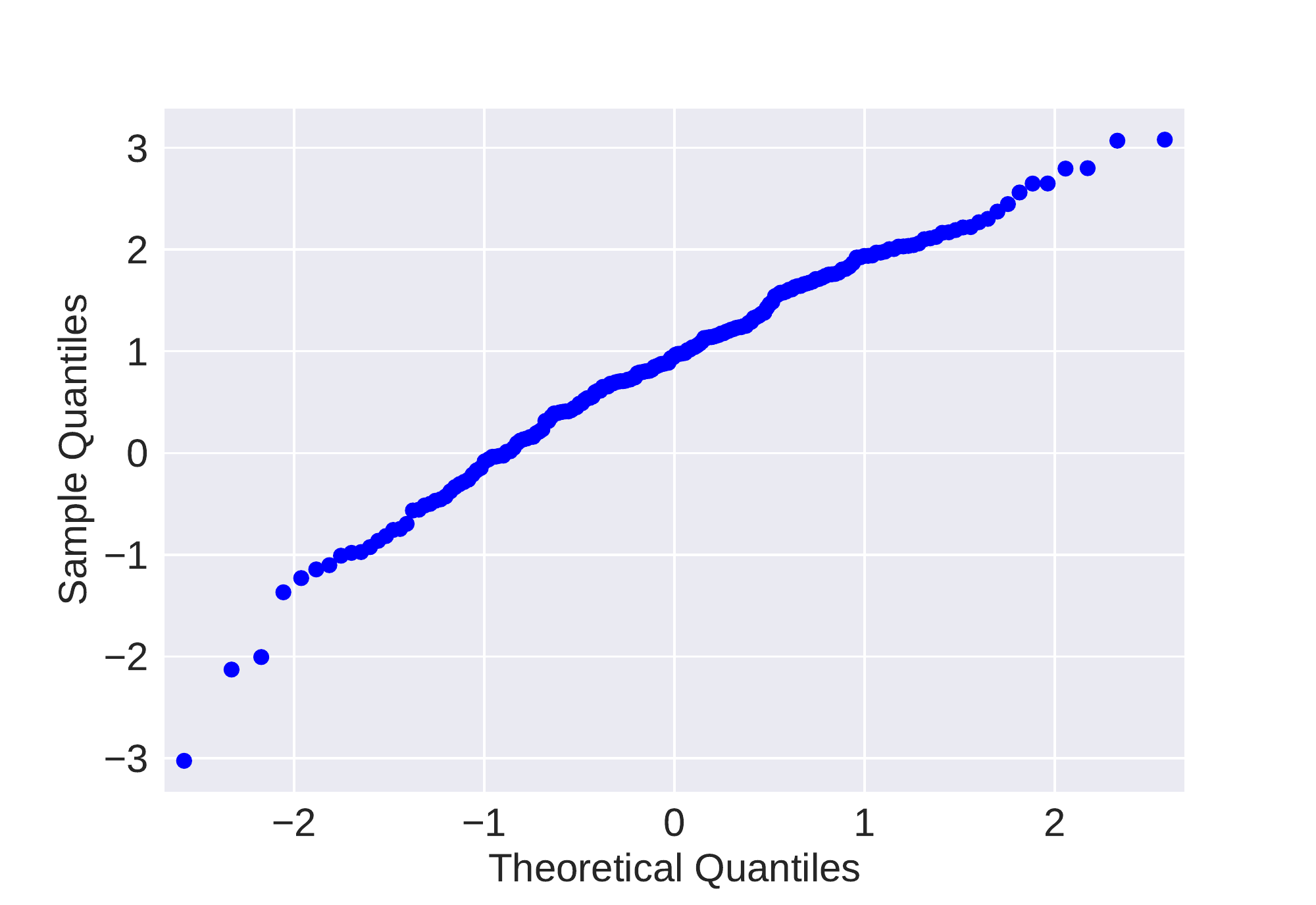}
\includegraphics[width=0.19\textwidth]{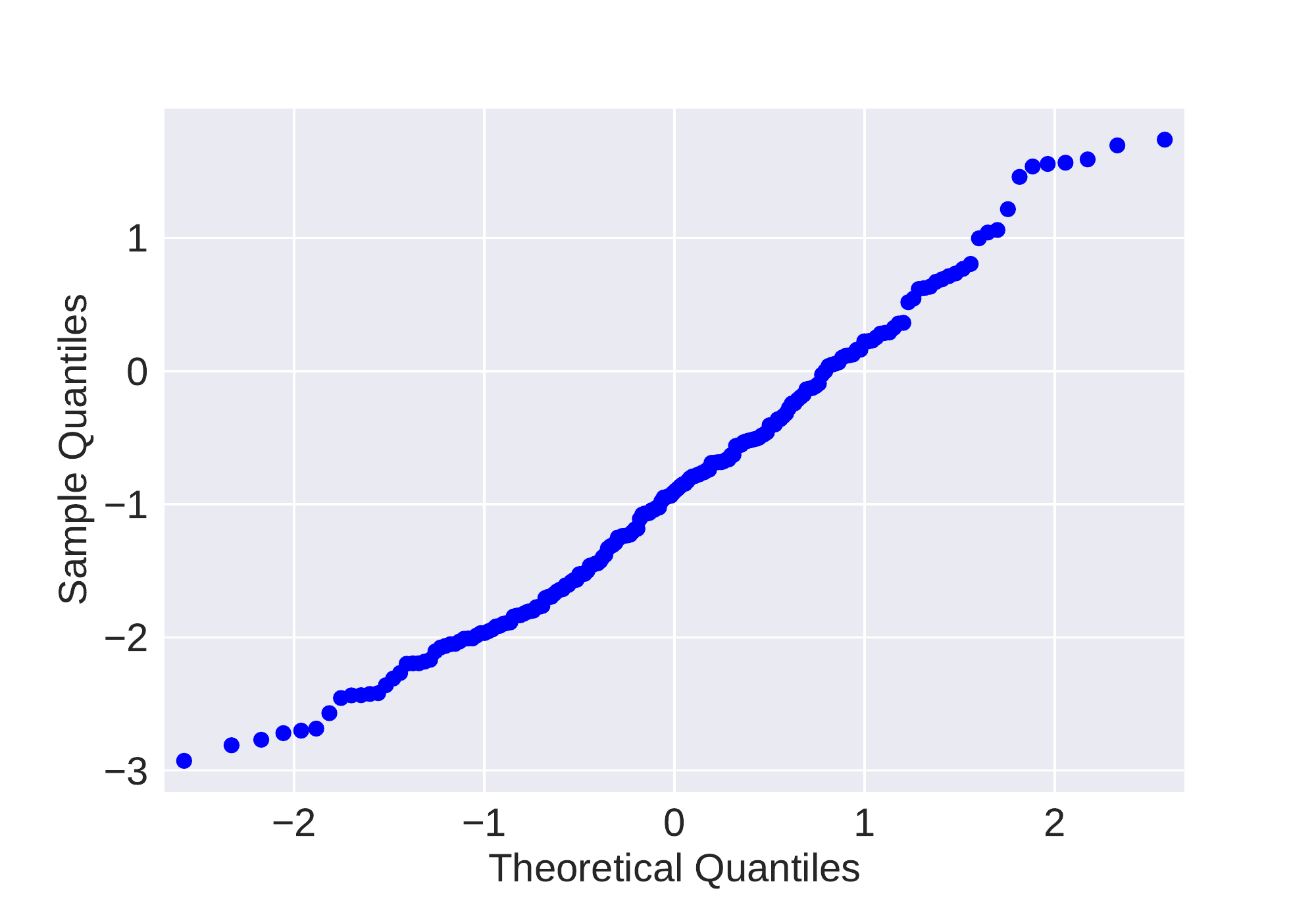}
\includegraphics[width=0.19\textwidth]{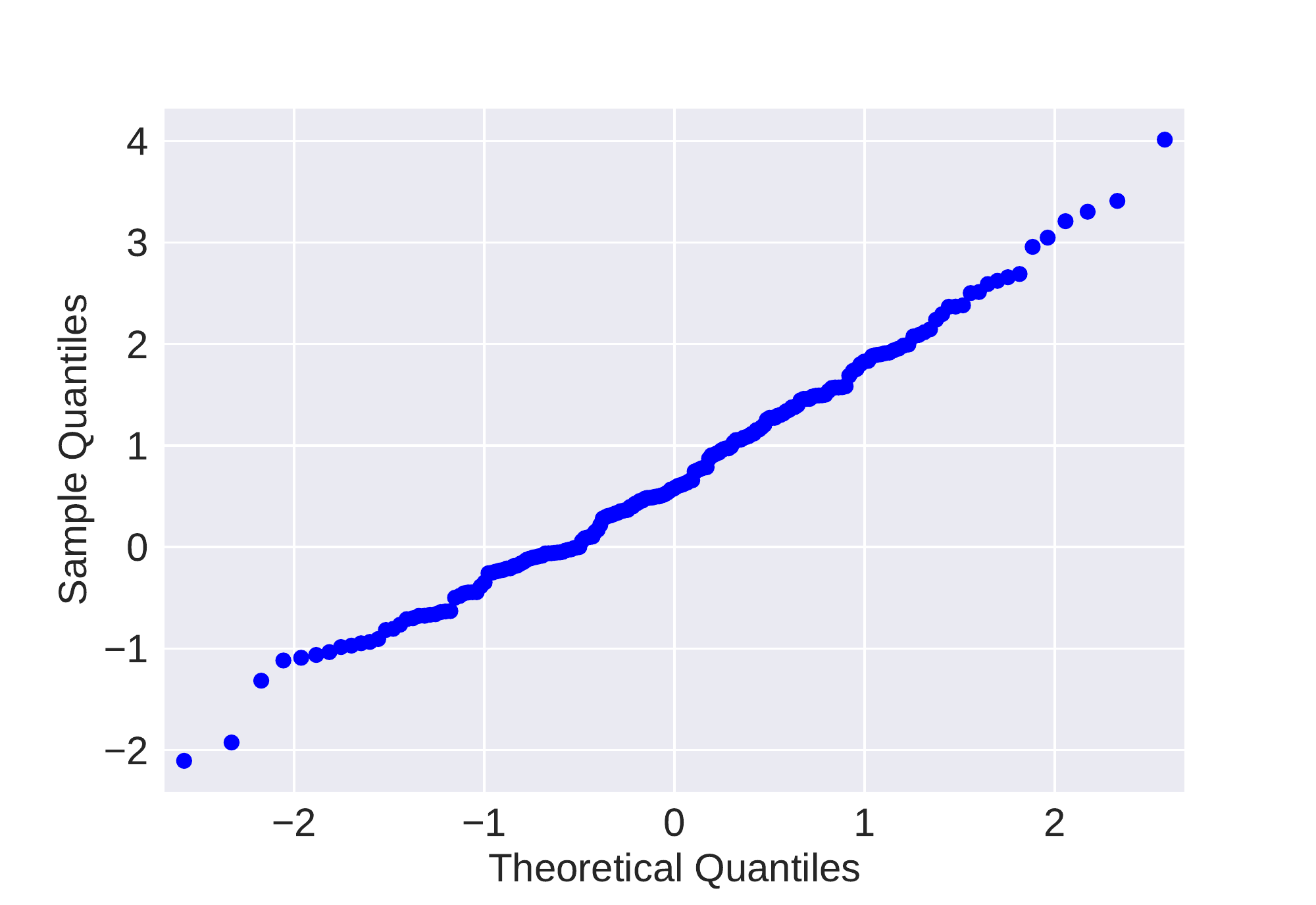}
\caption{Linear regression experiment 1: Q-Q plots per coordinate}
\label{fig:nips2017:appendix:linear1:qq-coord}
\end{figure}

\begin{figure}[htb]
\centering
\includegraphics[width=0.19\textwidth]{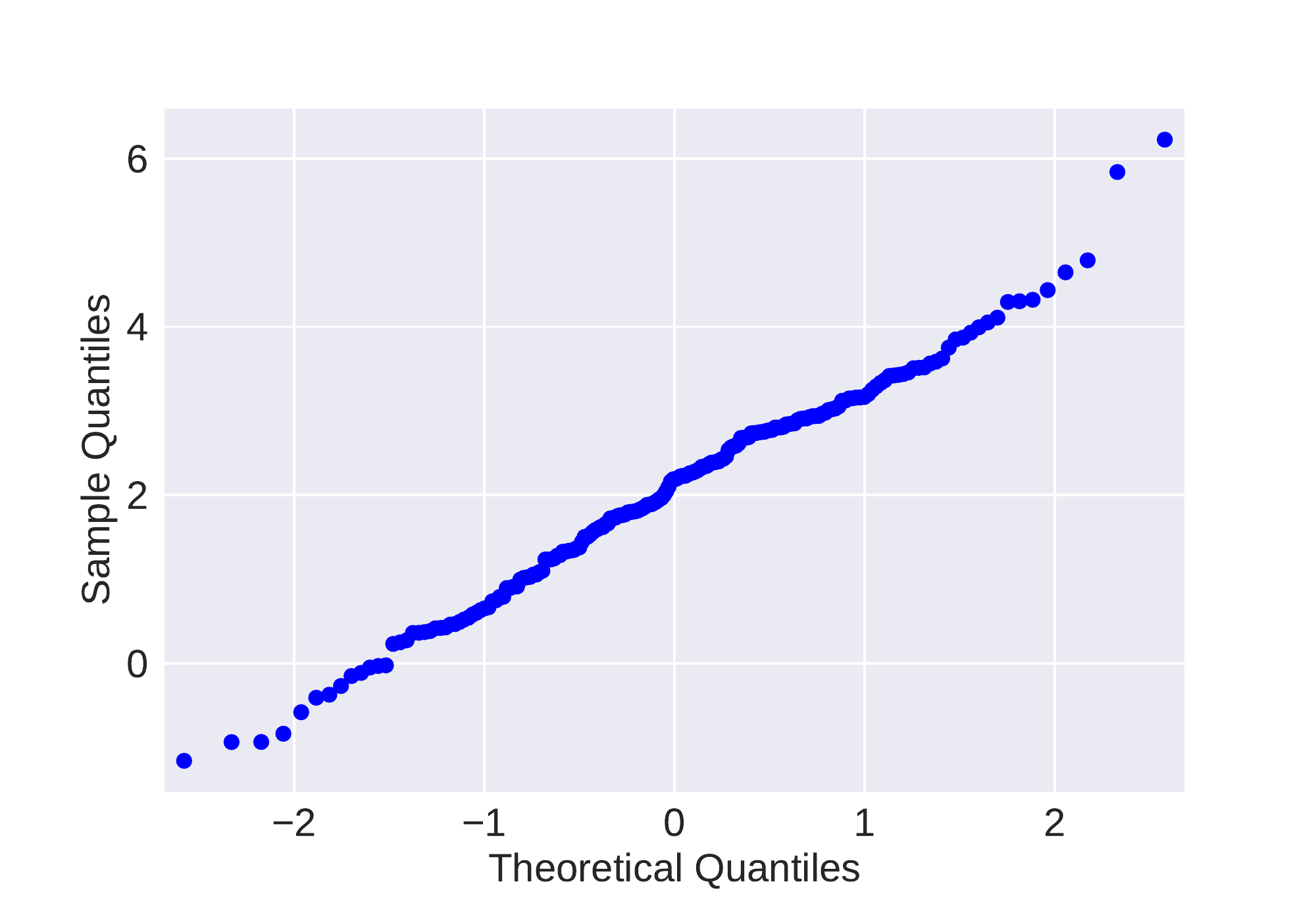}
\includegraphics[width=0.19\textwidth]{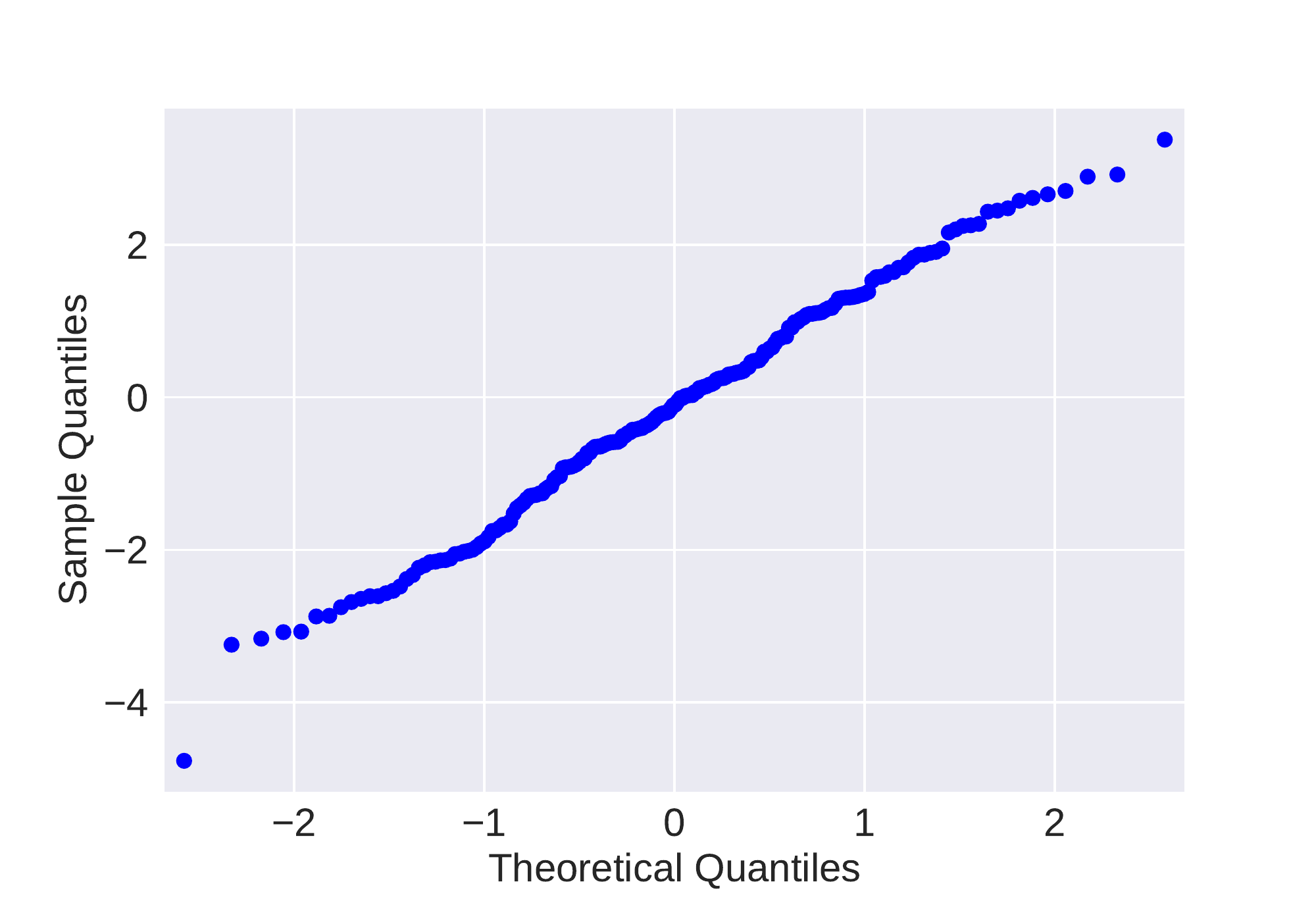}
\includegraphics[width=0.19\textwidth]{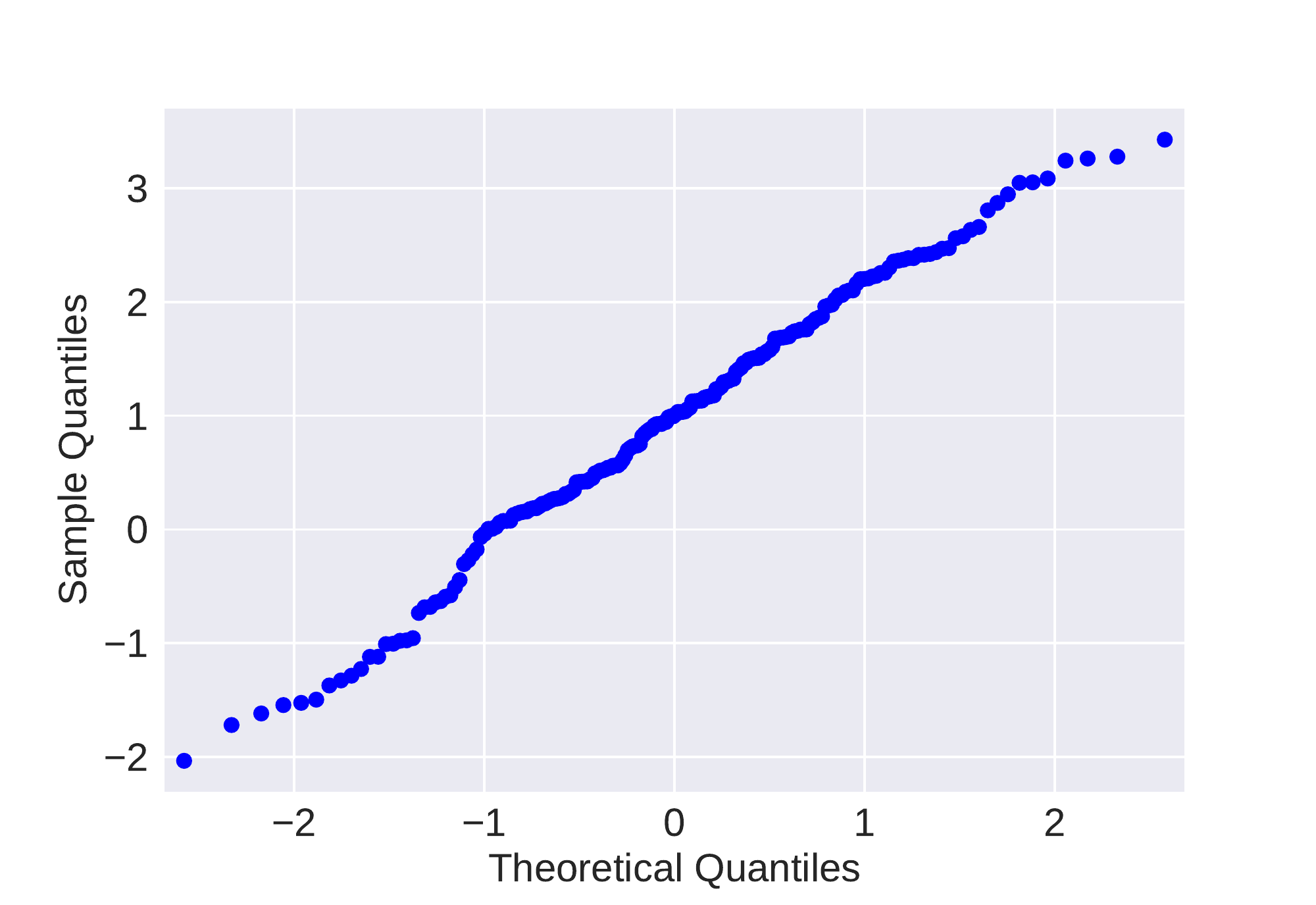}
\includegraphics[width=0.19\textwidth]{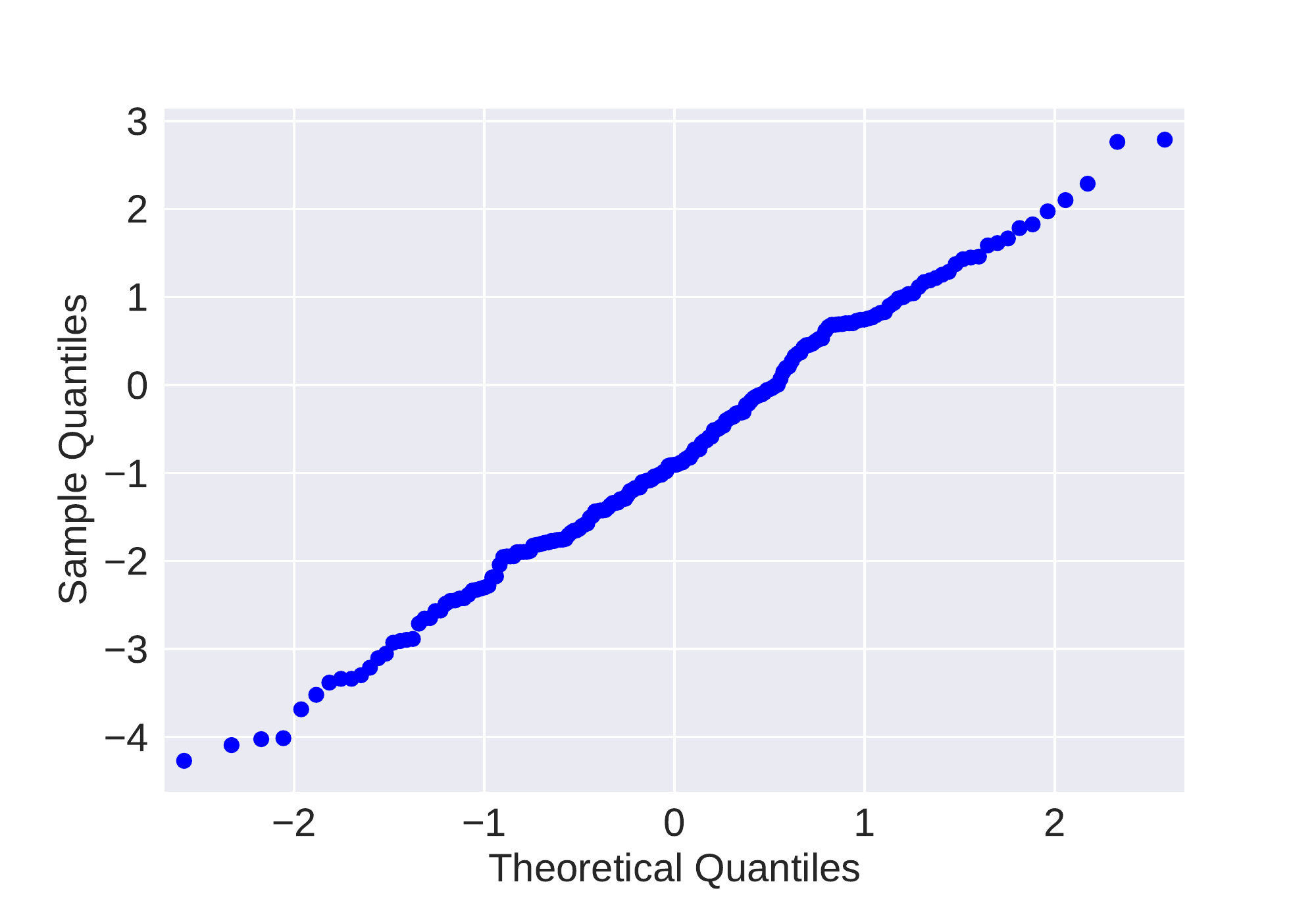}
\includegraphics[width=0.19\textwidth]{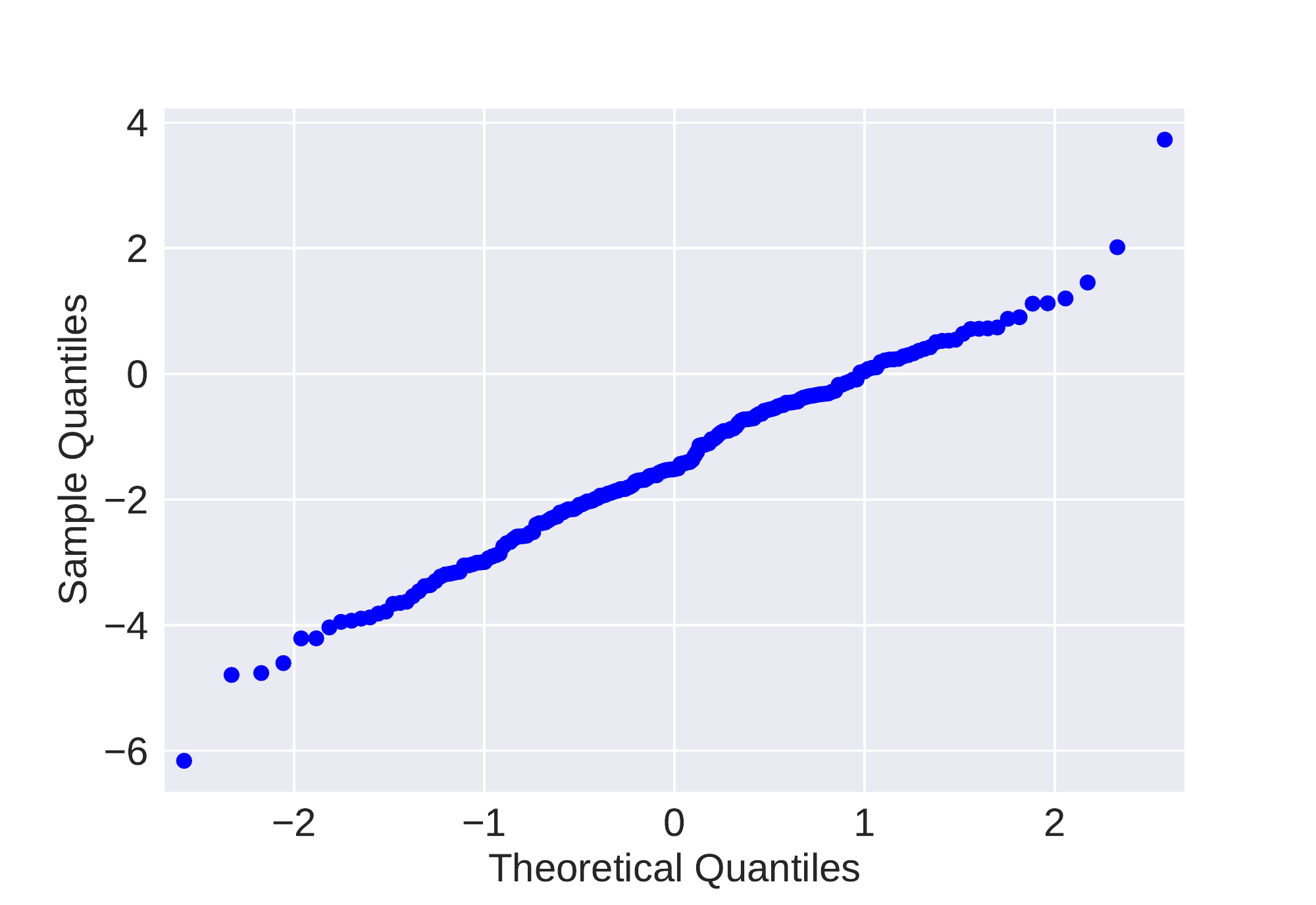}
\\
\includegraphics[width=0.19\textwidth]{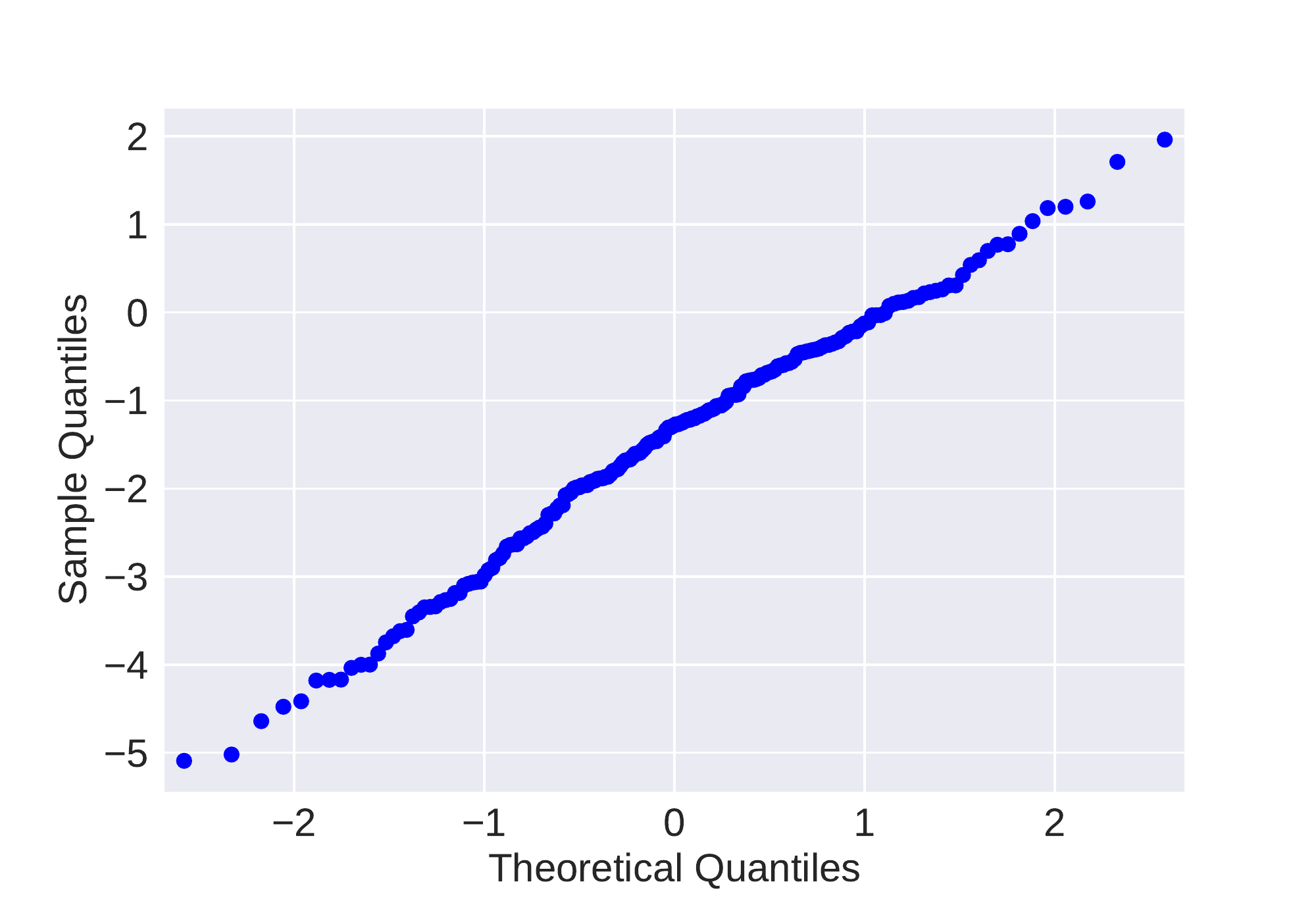}
\includegraphics[width=0.19\textwidth]{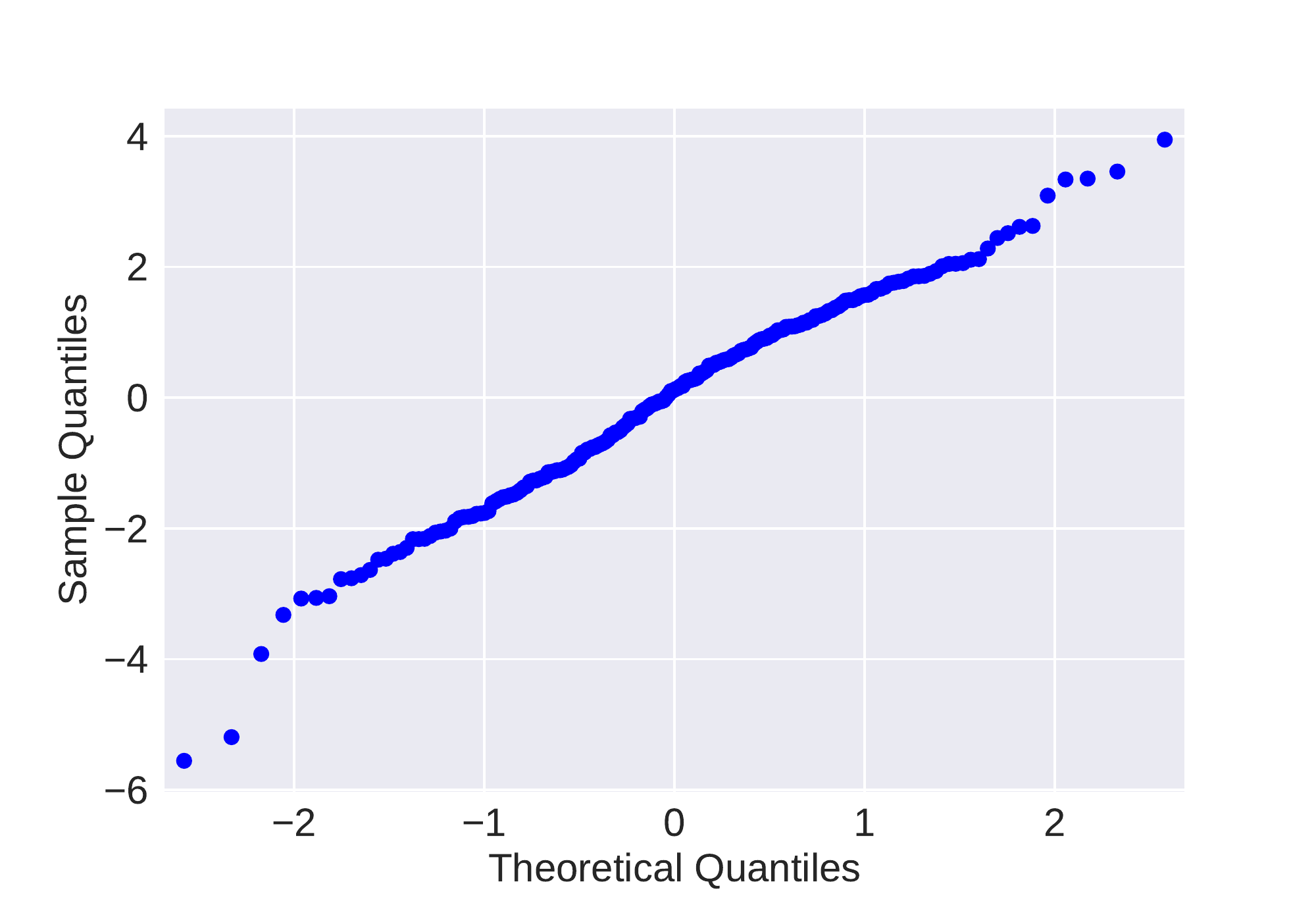}
\includegraphics[width=0.19\textwidth]{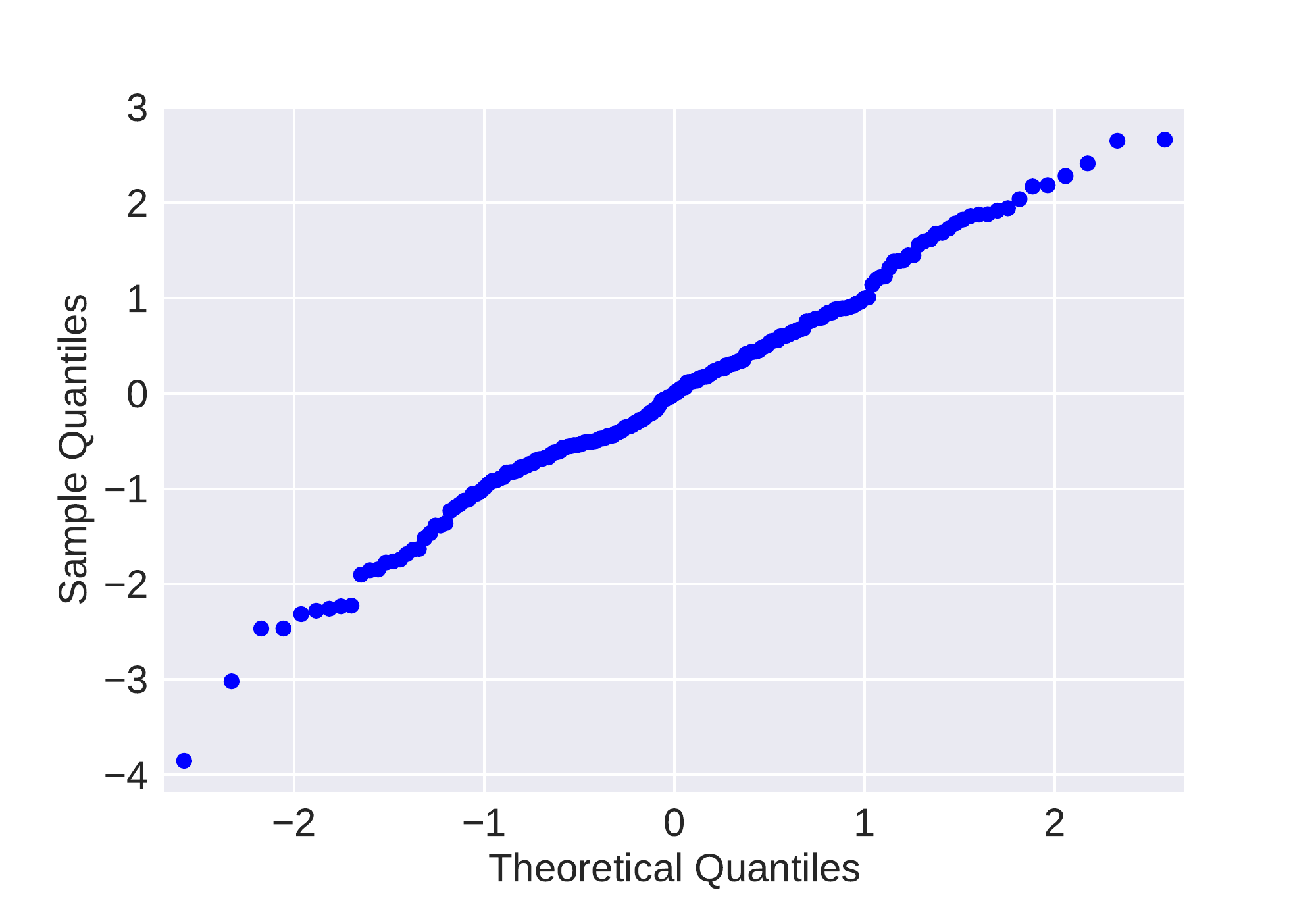}
\includegraphics[width=0.19\textwidth]{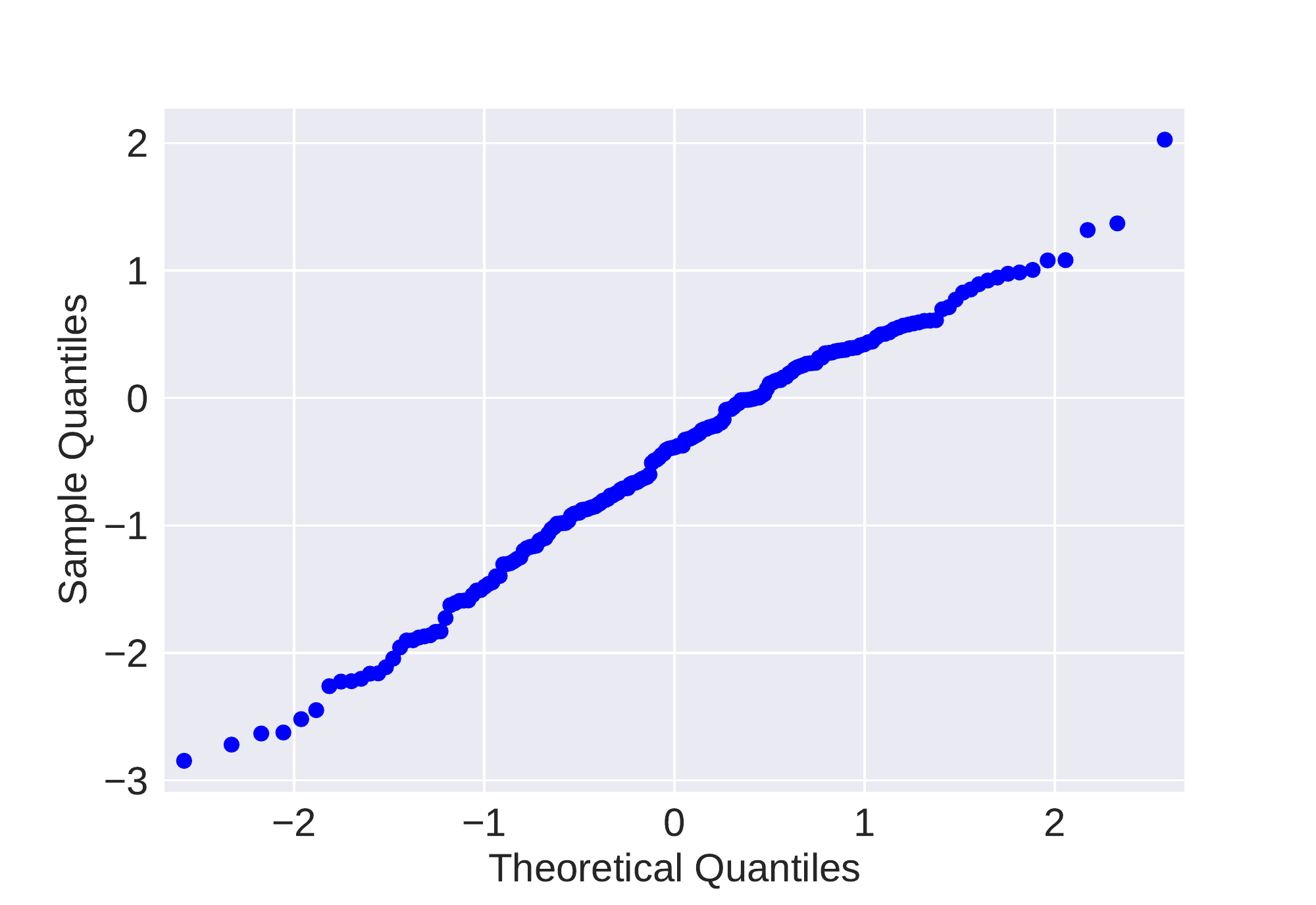}
\includegraphics[width=0.19\textwidth]{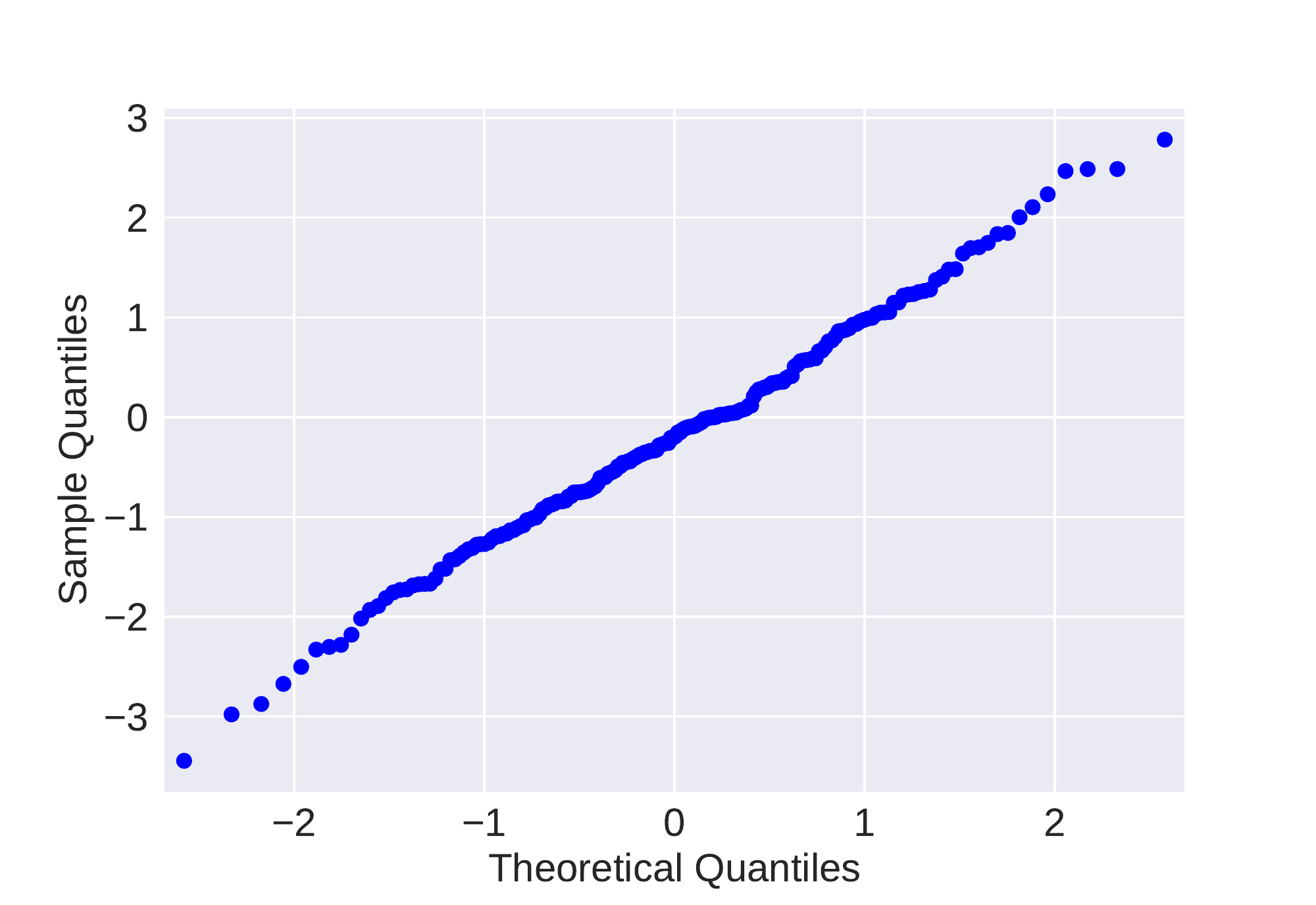}
\caption{Linear regression experiment 2: Q-Q plots per coordinate}
\label{fig:nips2017:appendix:linear2:qq-coord}
\end{figure}

Q-Q plots per coordinate for samples in logistic regression experiment 1 is shown in
\Cref{fig:nips2017:appendix:logistic1:qq-coord}.
Q-Q plots per coordinate for samples in logistic regression experiment 2 is shown in
\Cref{fig:nips2017:appendix:logistic2:qq-coord}.

\begin{figure}[t]
\centering
\includegraphics[width=0.19\textwidth]{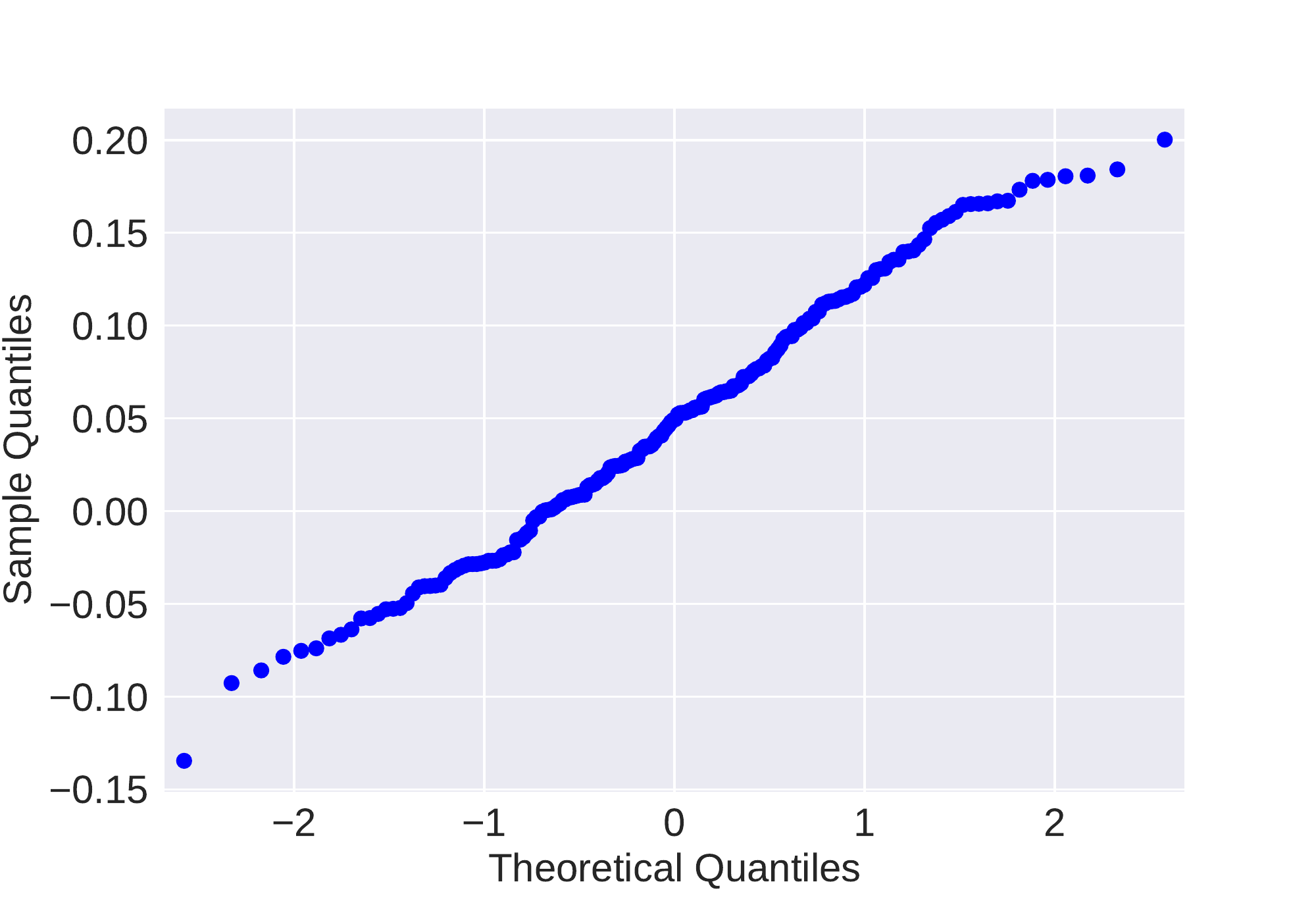}
\includegraphics[width=0.19\textwidth]{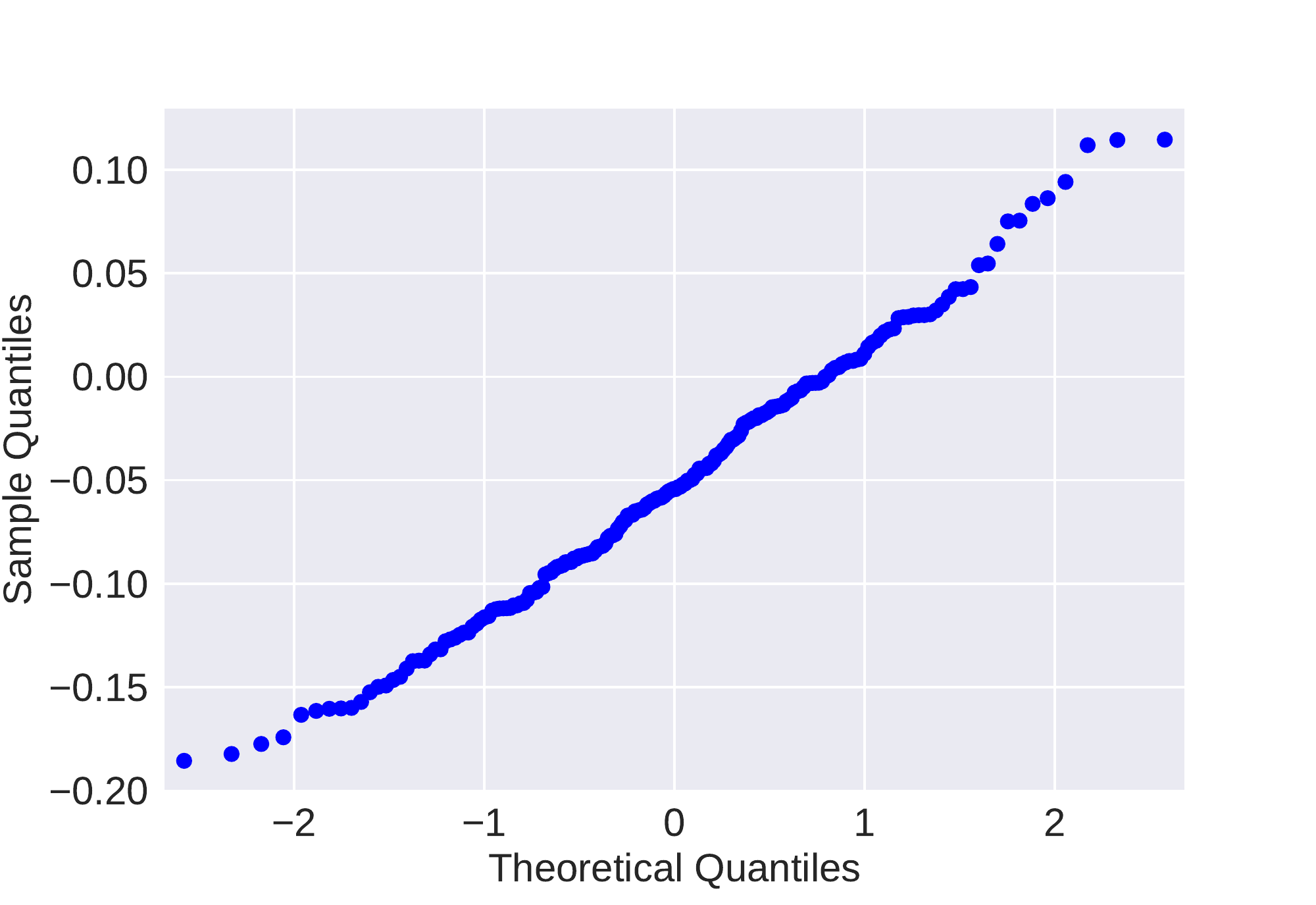}
\includegraphics[width=0.19\textwidth]{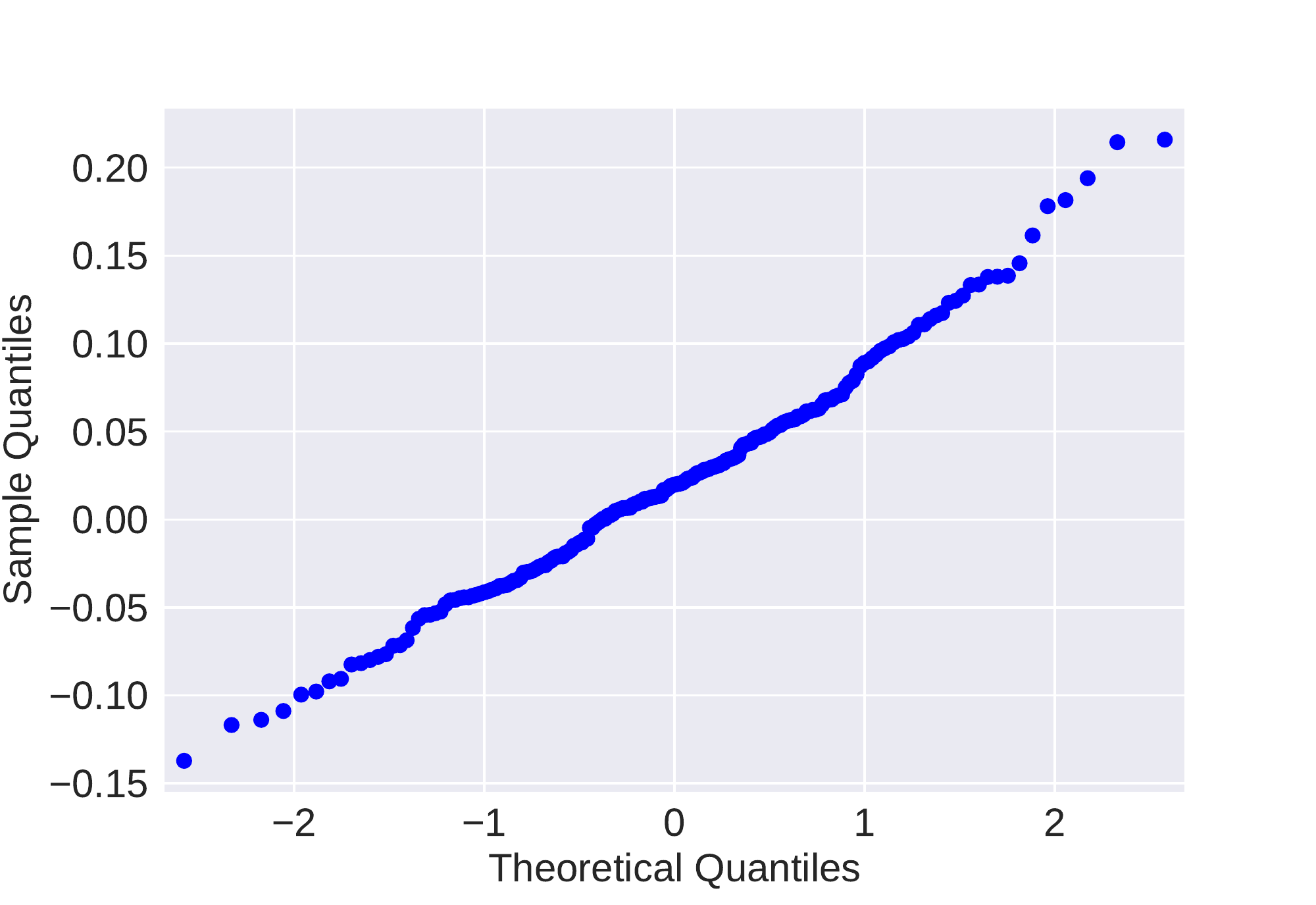}
\includegraphics[width=0.19\textwidth]{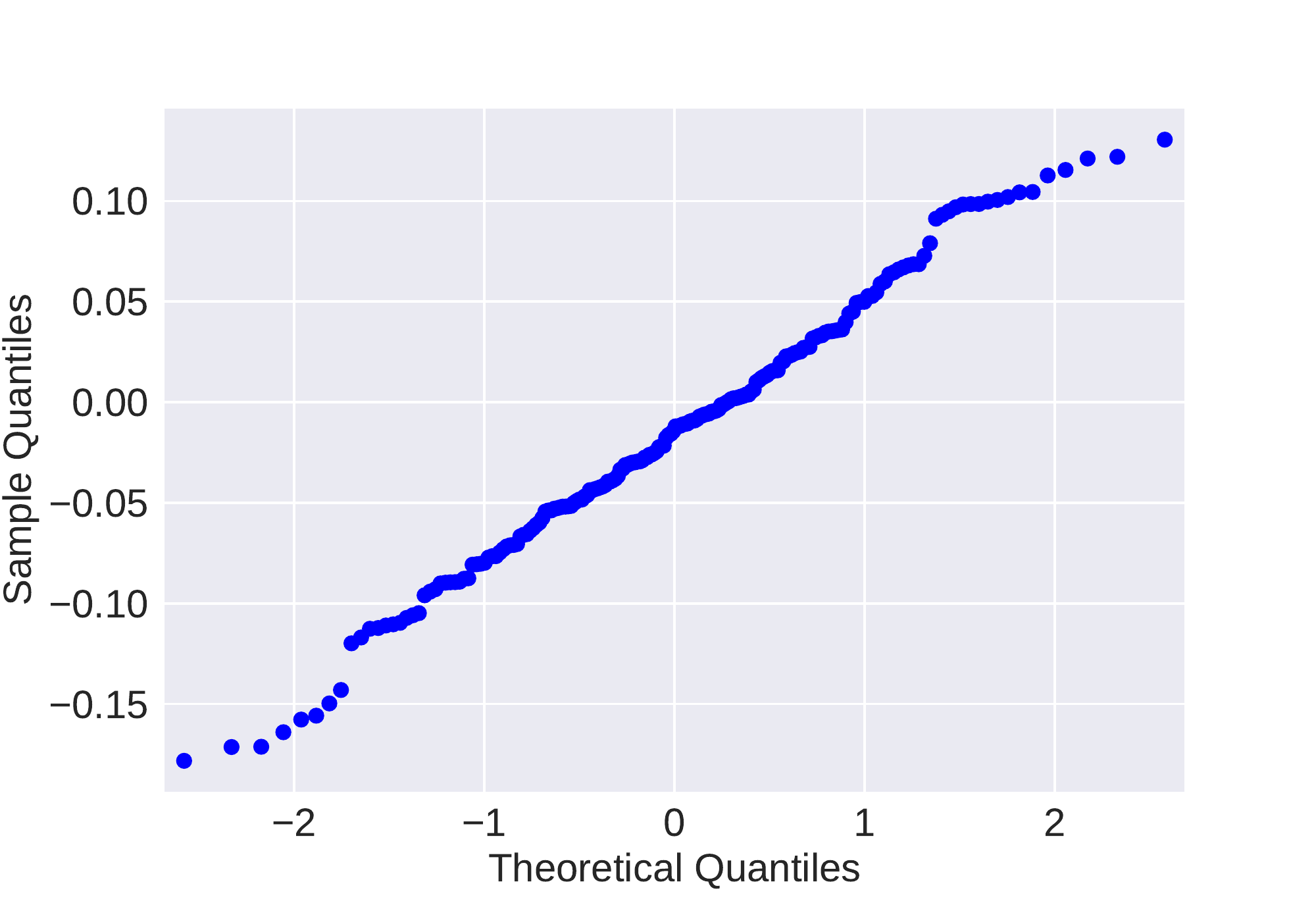}
\includegraphics[width=0.19\textwidth]{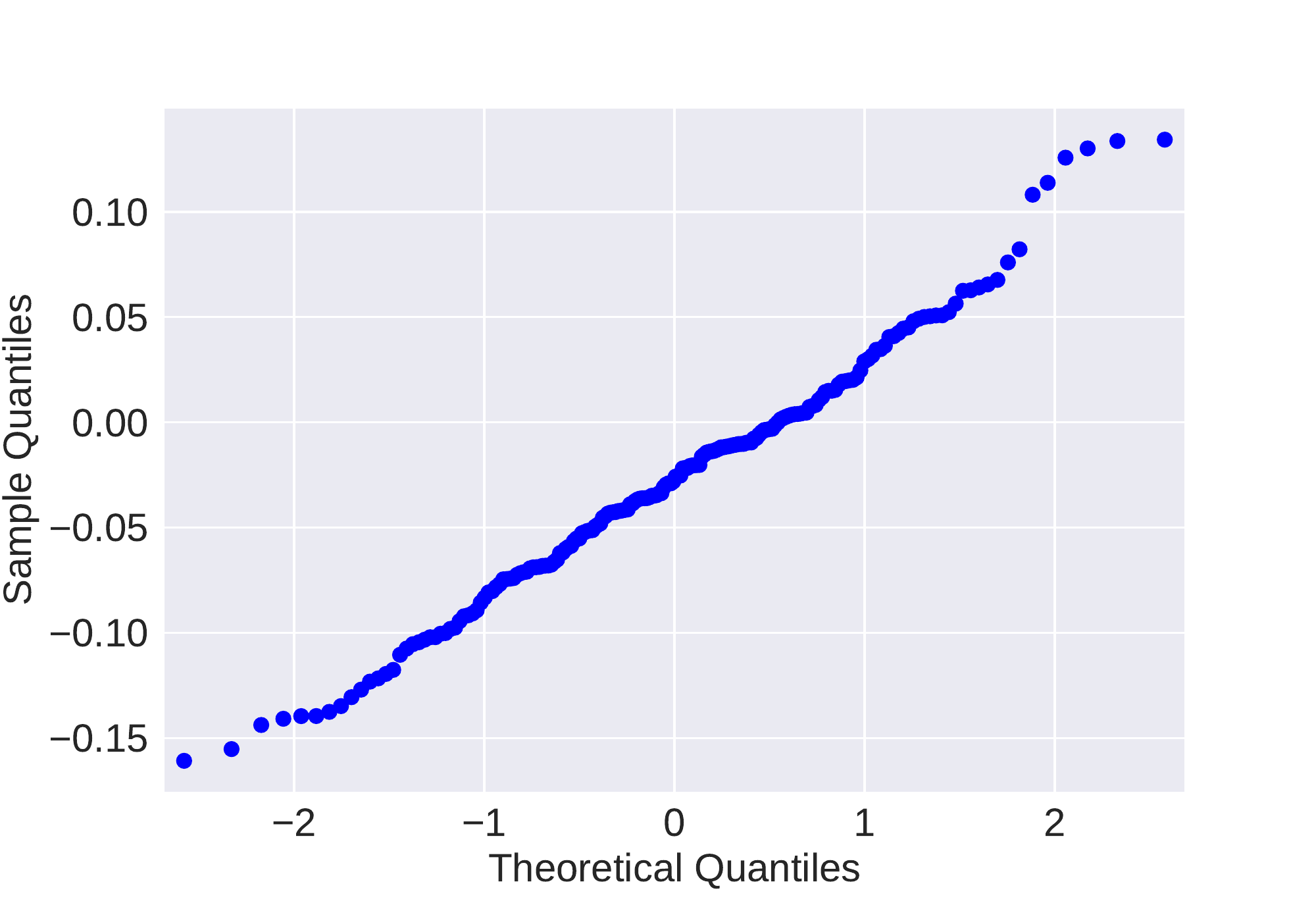}
\\
\includegraphics[width=0.19\textwidth]{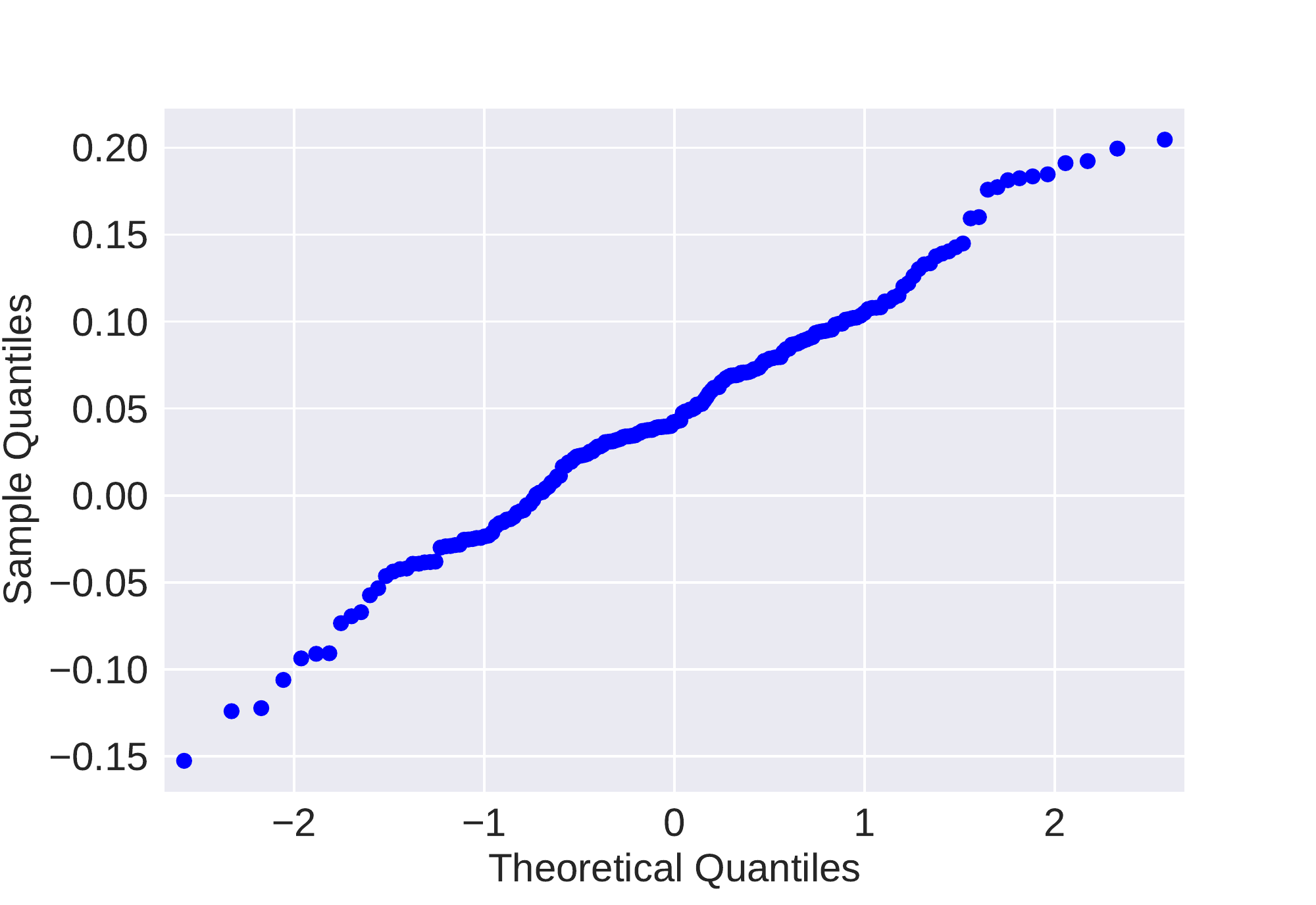}
\includegraphics[width=0.19\textwidth]{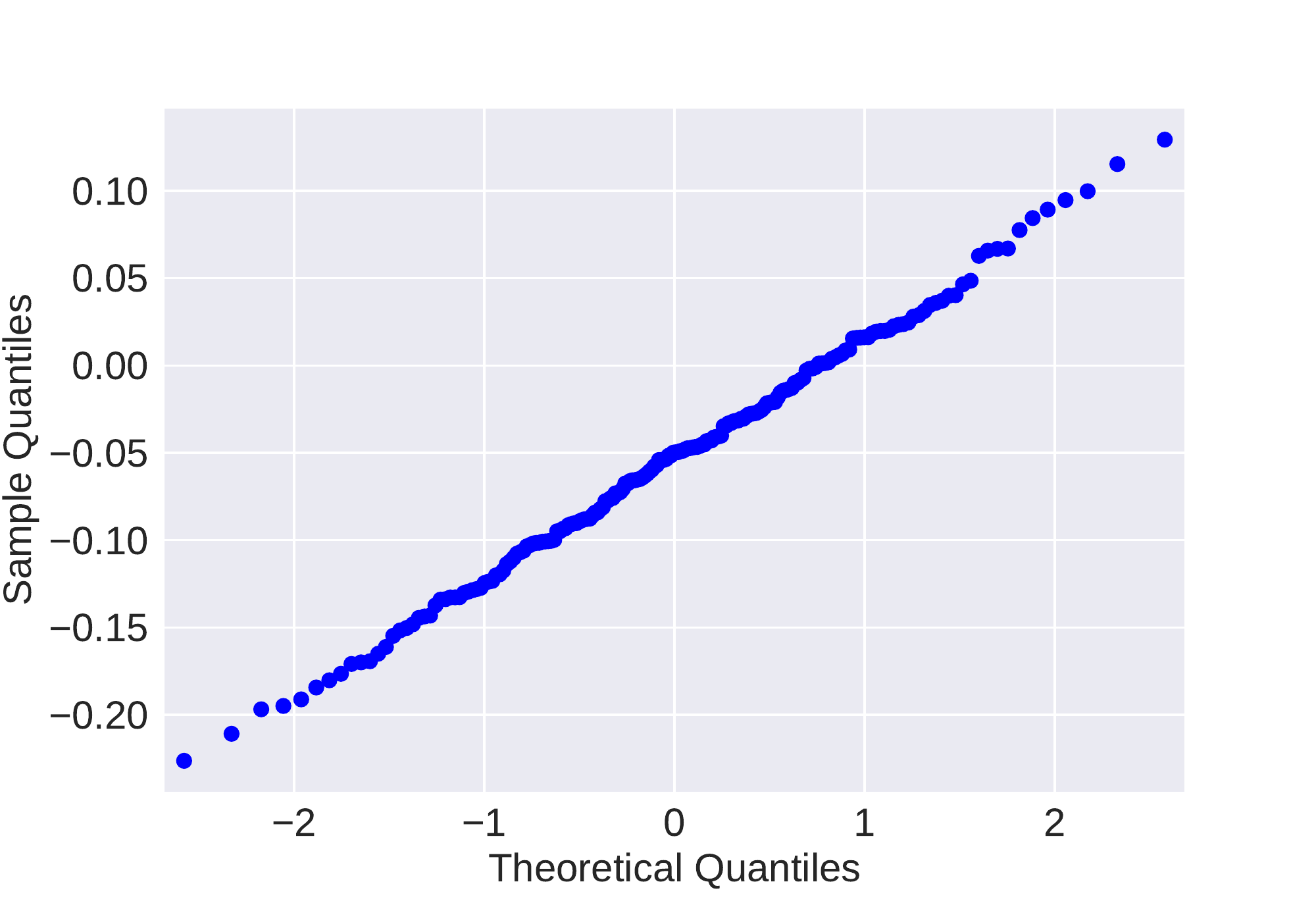}
\includegraphics[width=0.19\textwidth]{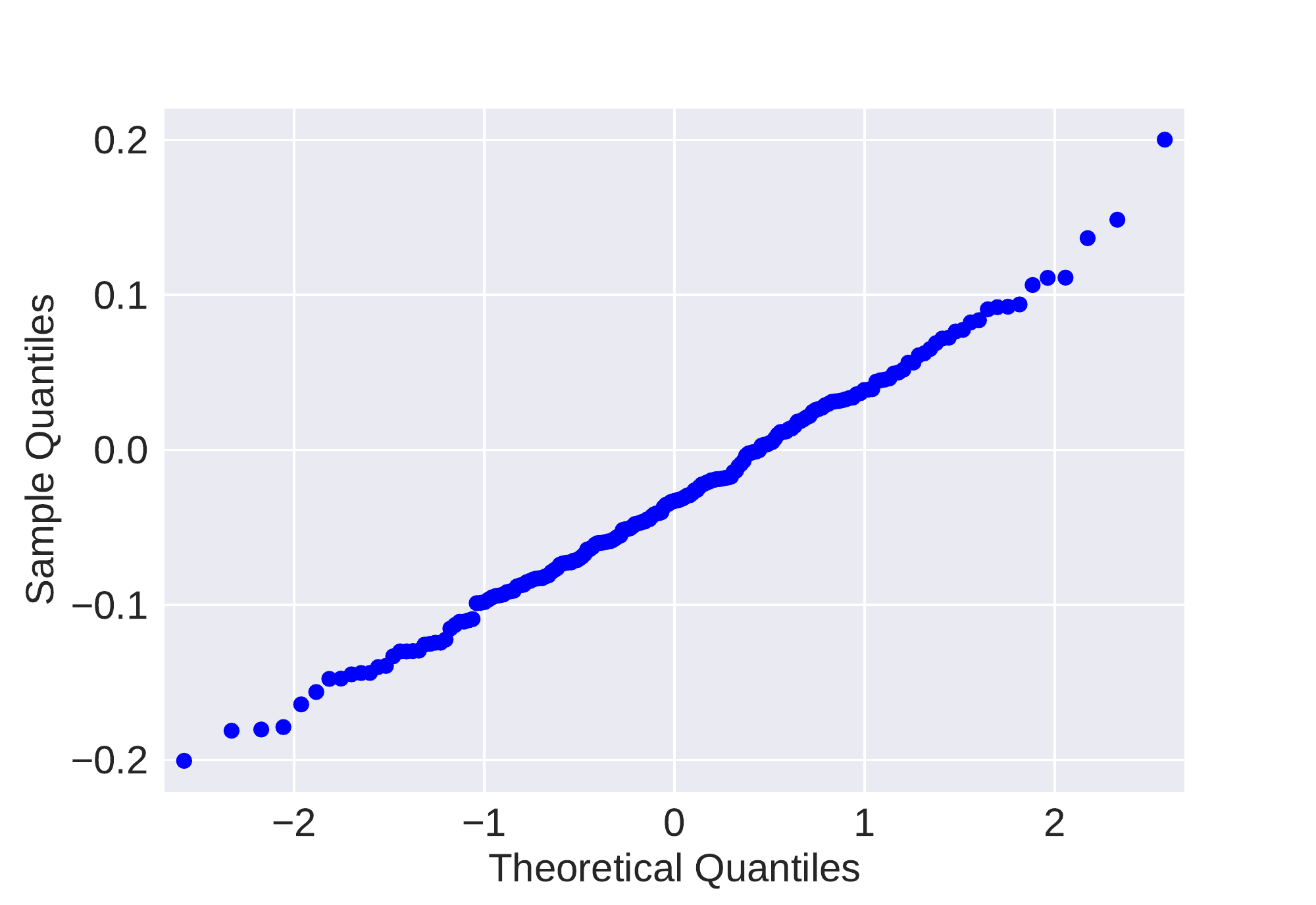}
\includegraphics[width=0.19\textwidth]{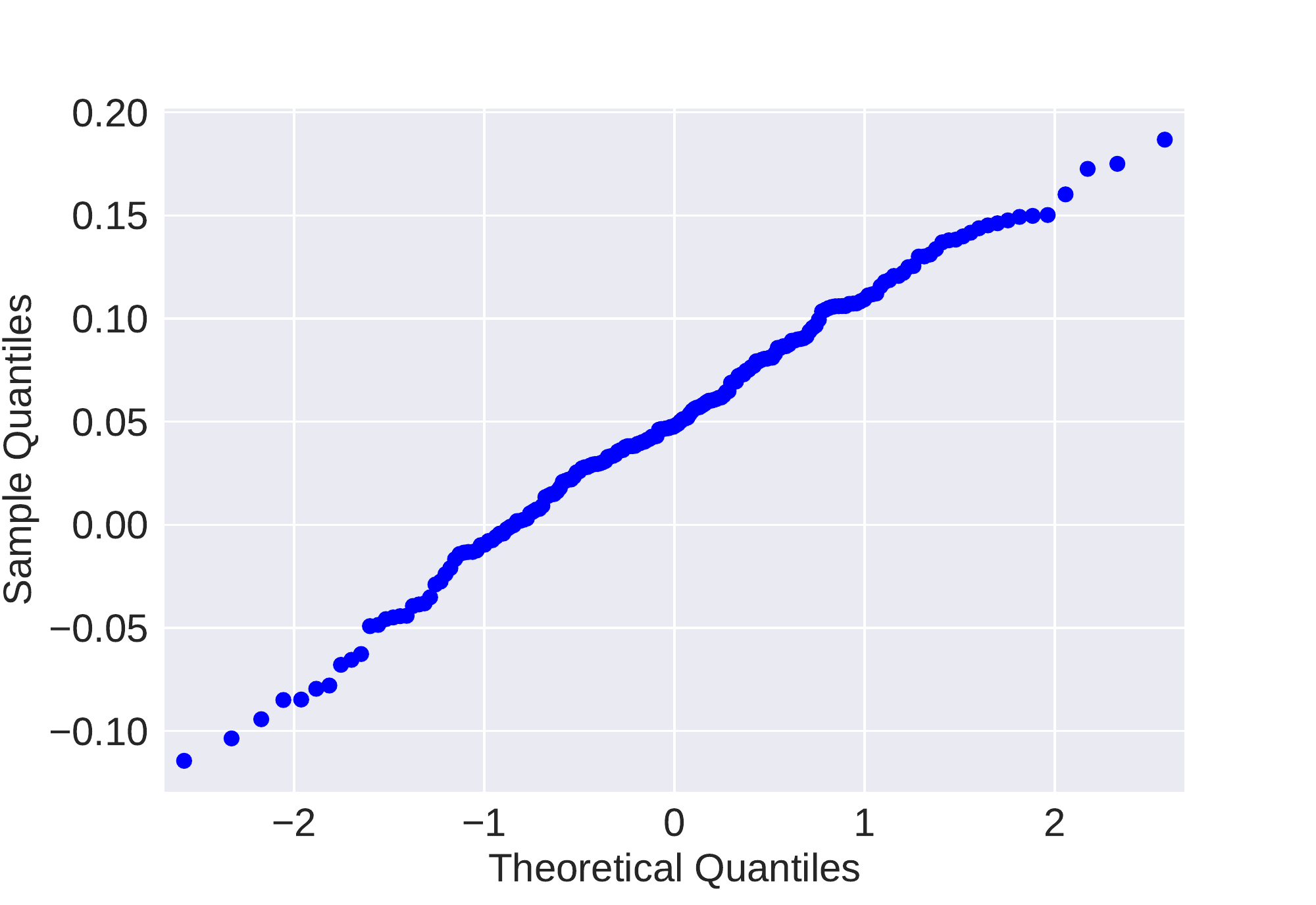}
\includegraphics[width=0.19\textwidth]{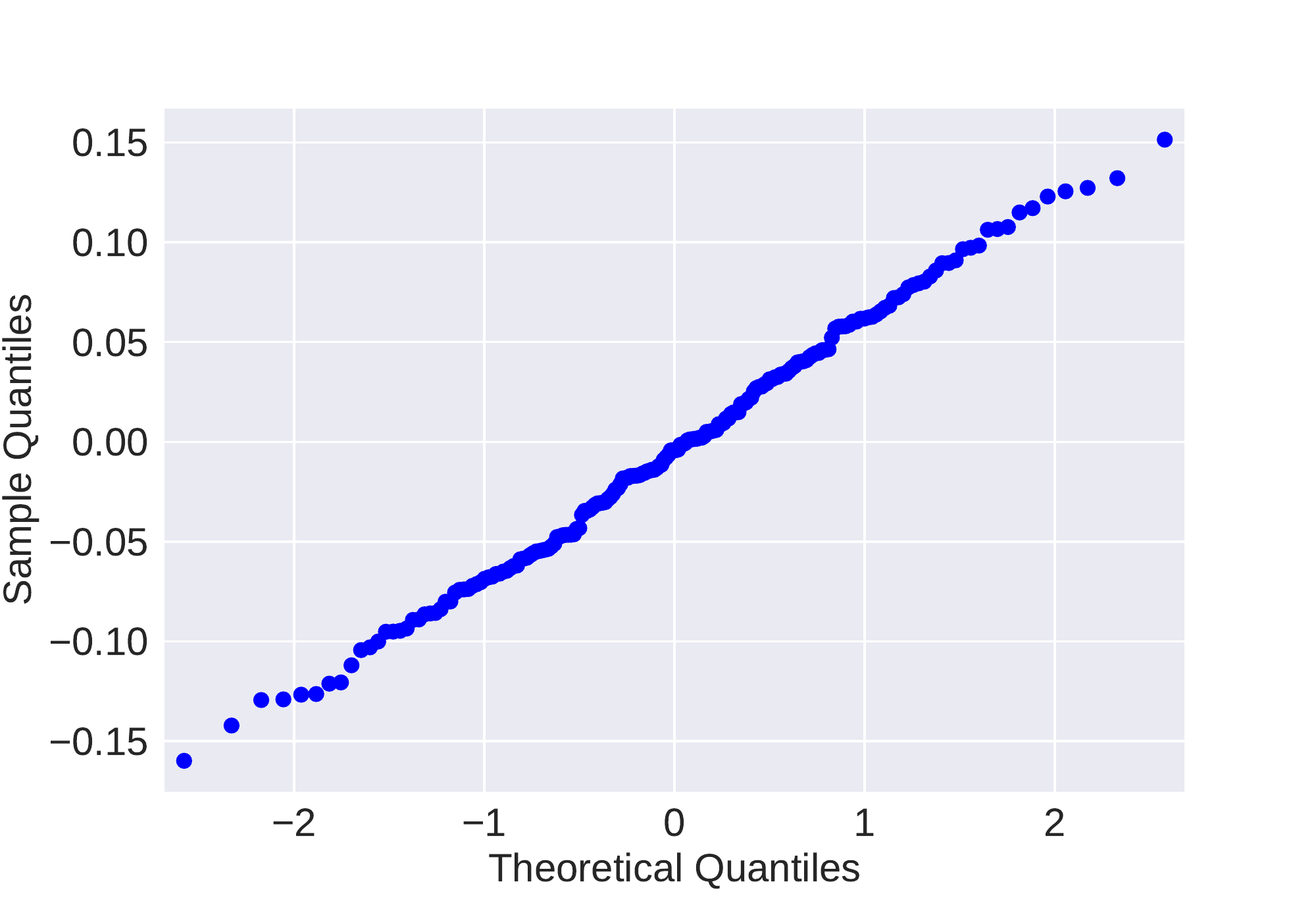}
\caption{Logistic regression experiment 1: Q-Q plots per coordinate}
\label{fig:nips2017:appendix:logistic1:qq-coord}
\end{figure}

\begin{figure}[t]
\centering
\includegraphics[width=0.19\textwidth]{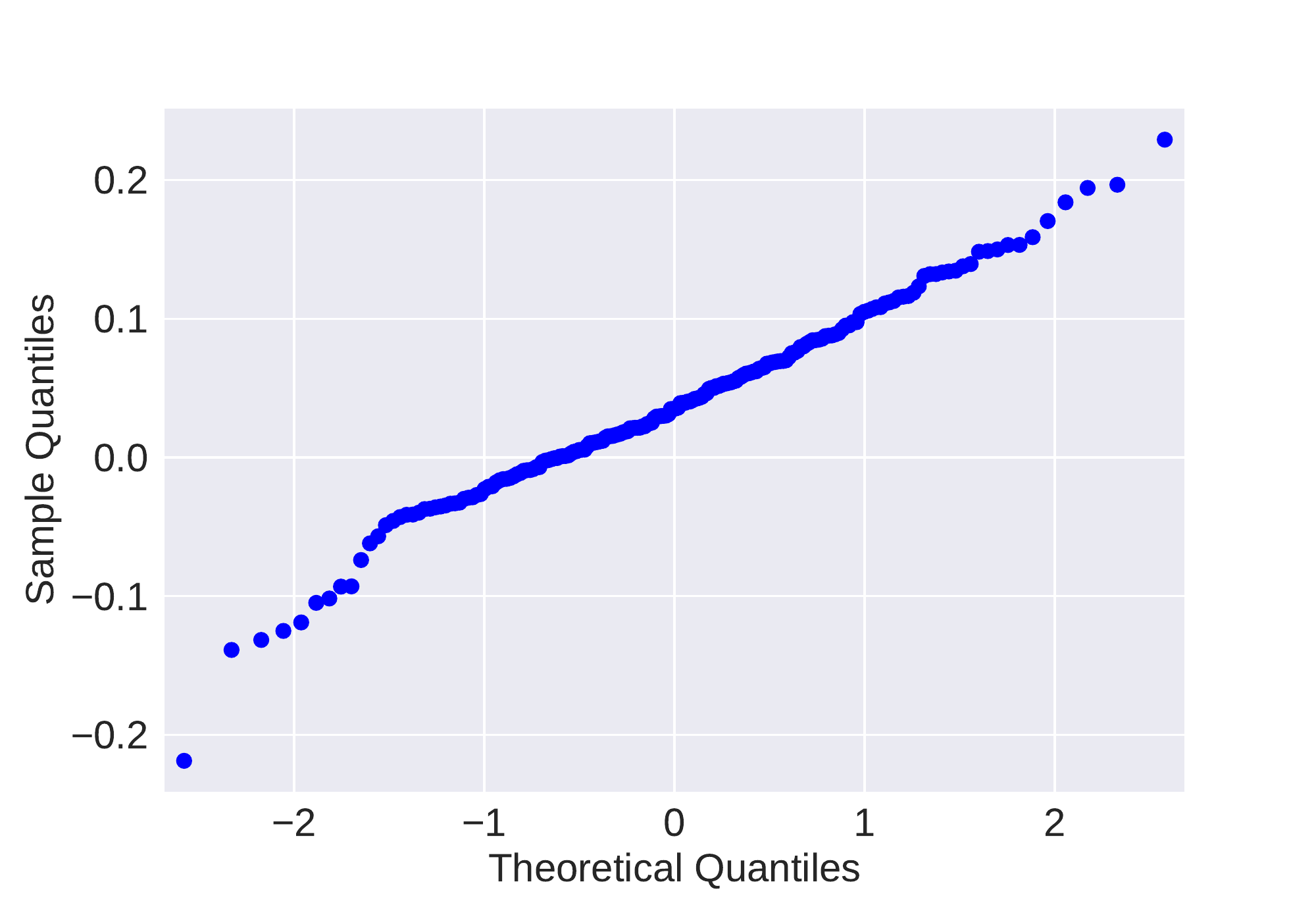}
\includegraphics[width=0.19\textwidth]{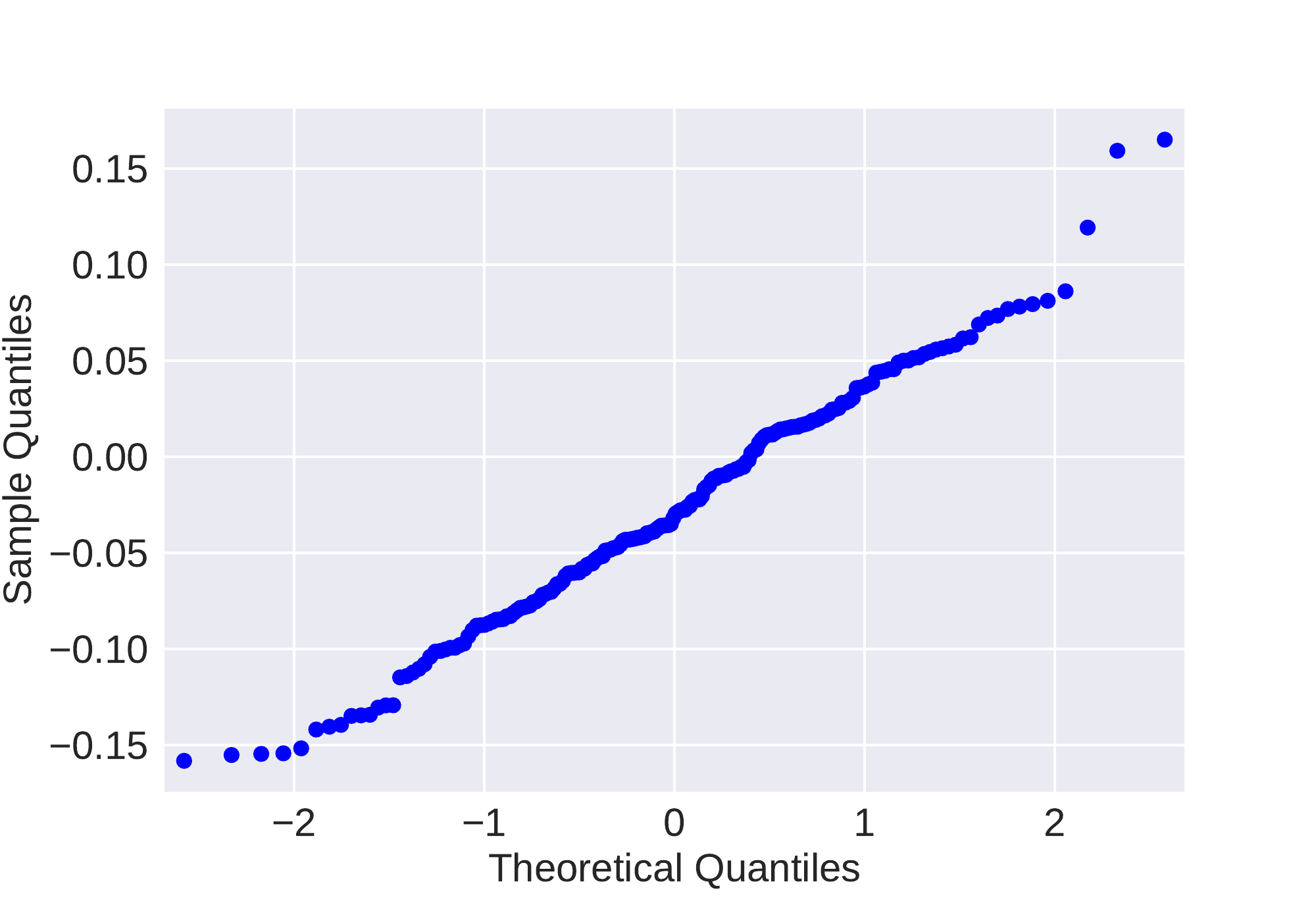}
\includegraphics[width=0.19\textwidth]{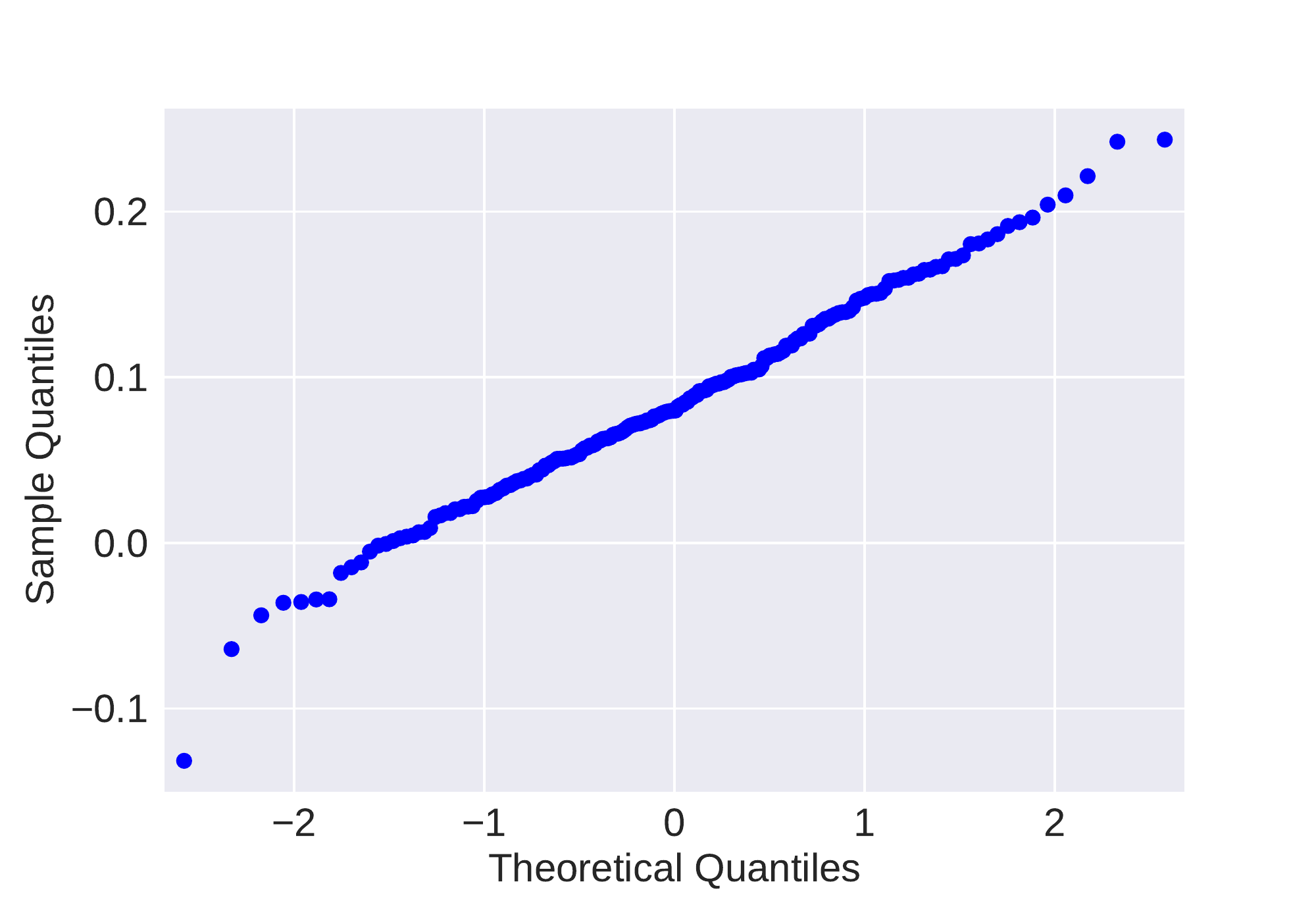}
\includegraphics[width=0.19\textwidth]{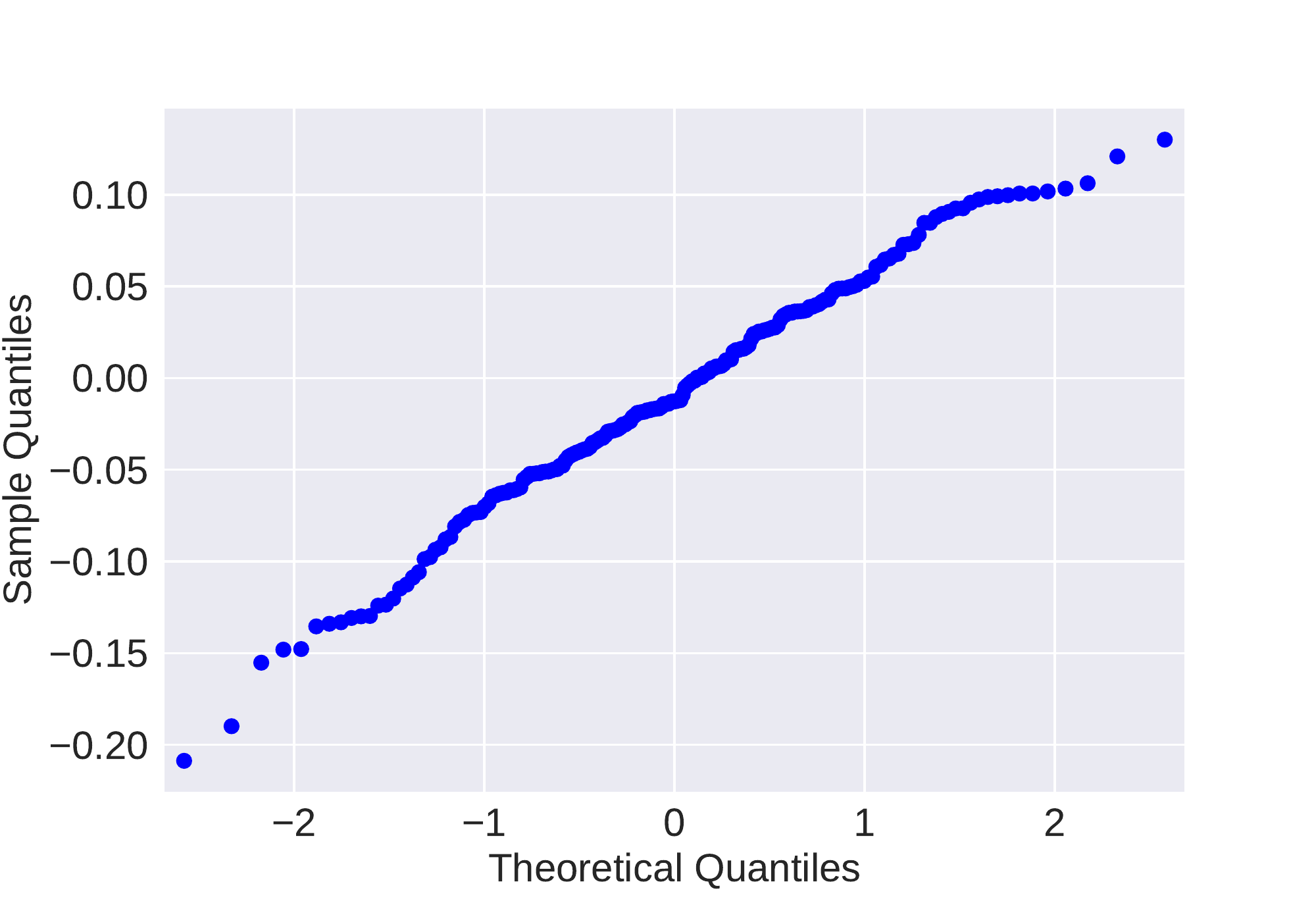}
\includegraphics[width=0.19\textwidth]{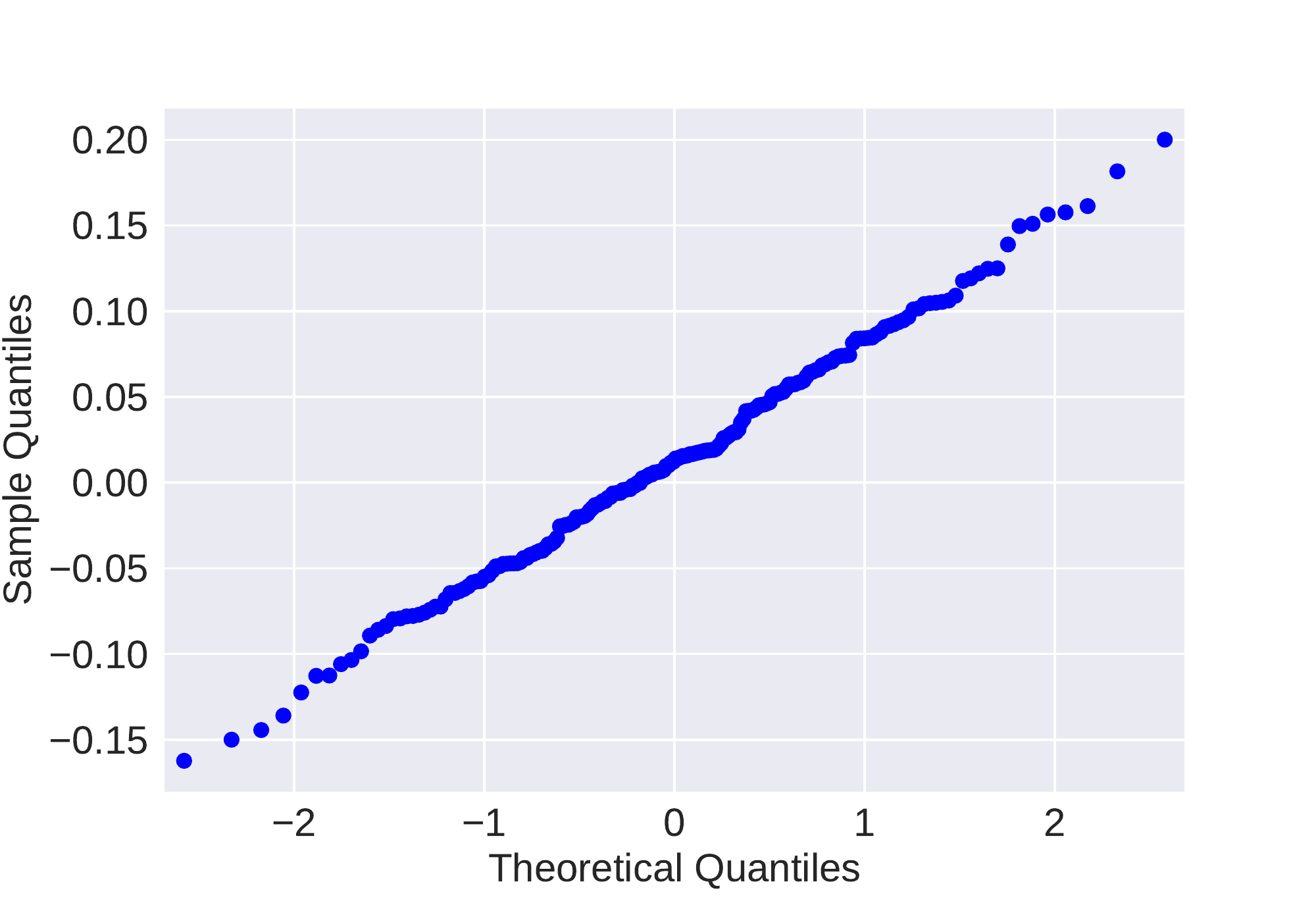}
\\
\includegraphics[width=0.19\textwidth]{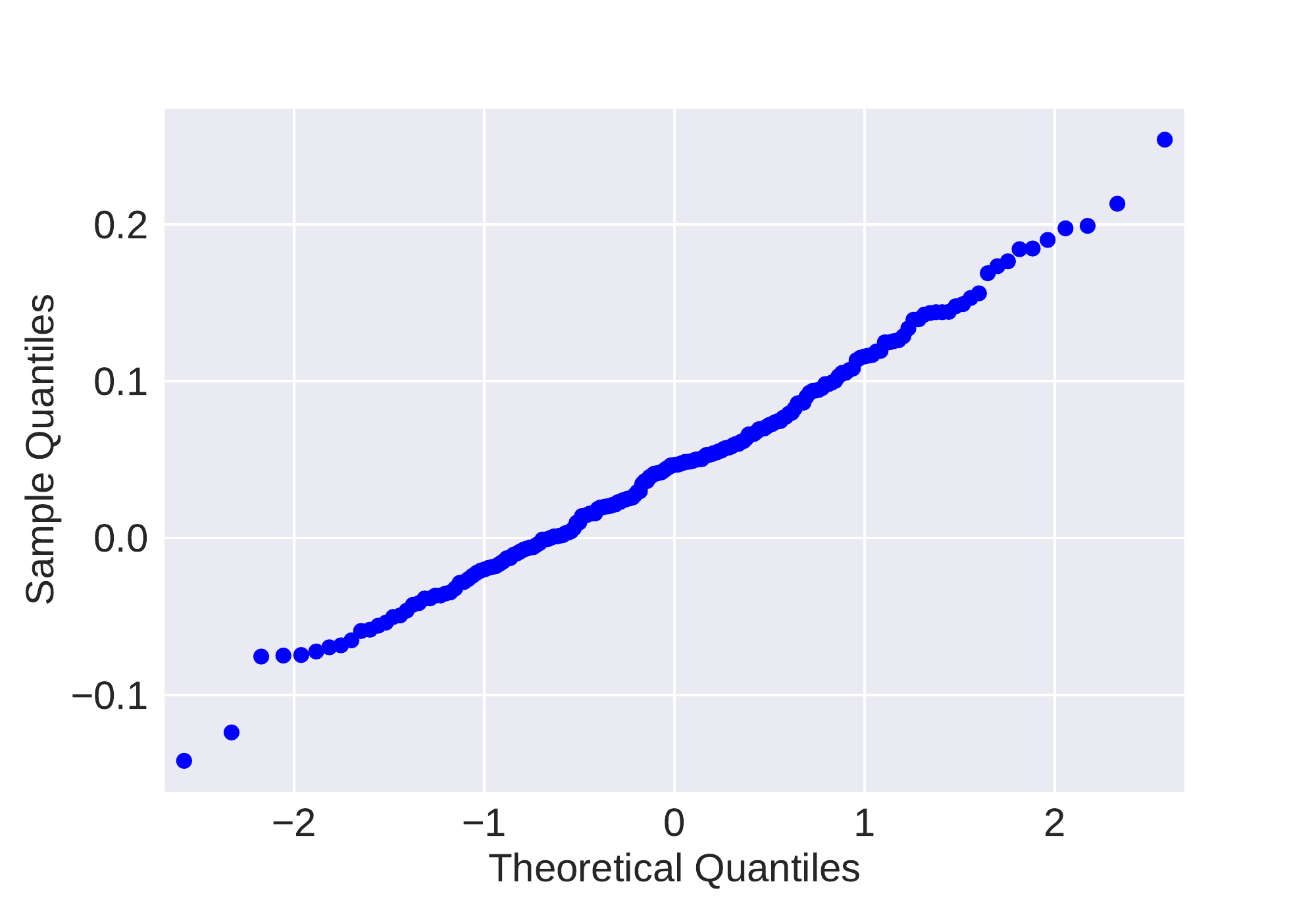}
\includegraphics[width=0.19\textwidth]{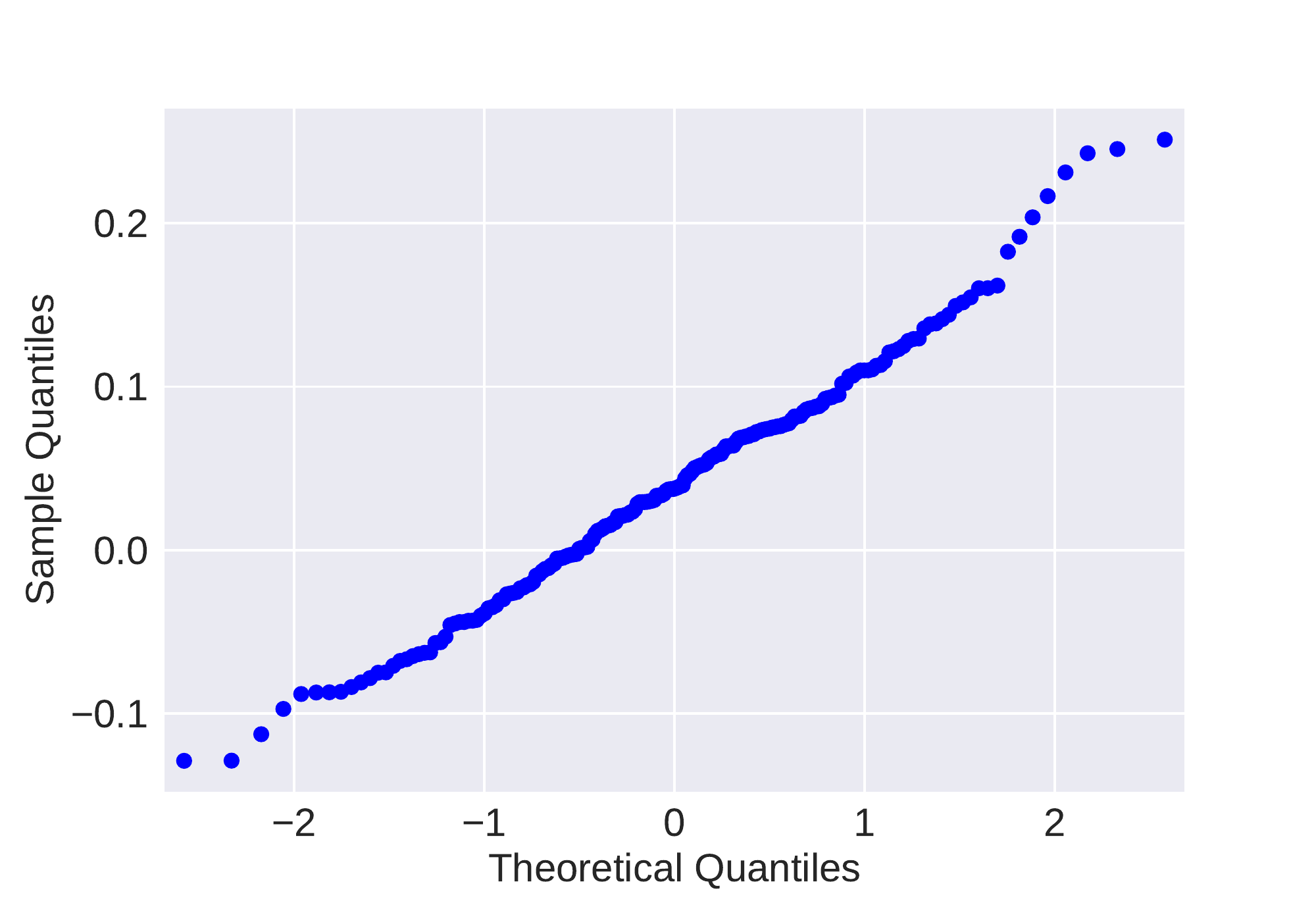}
\includegraphics[width=0.19\textwidth]{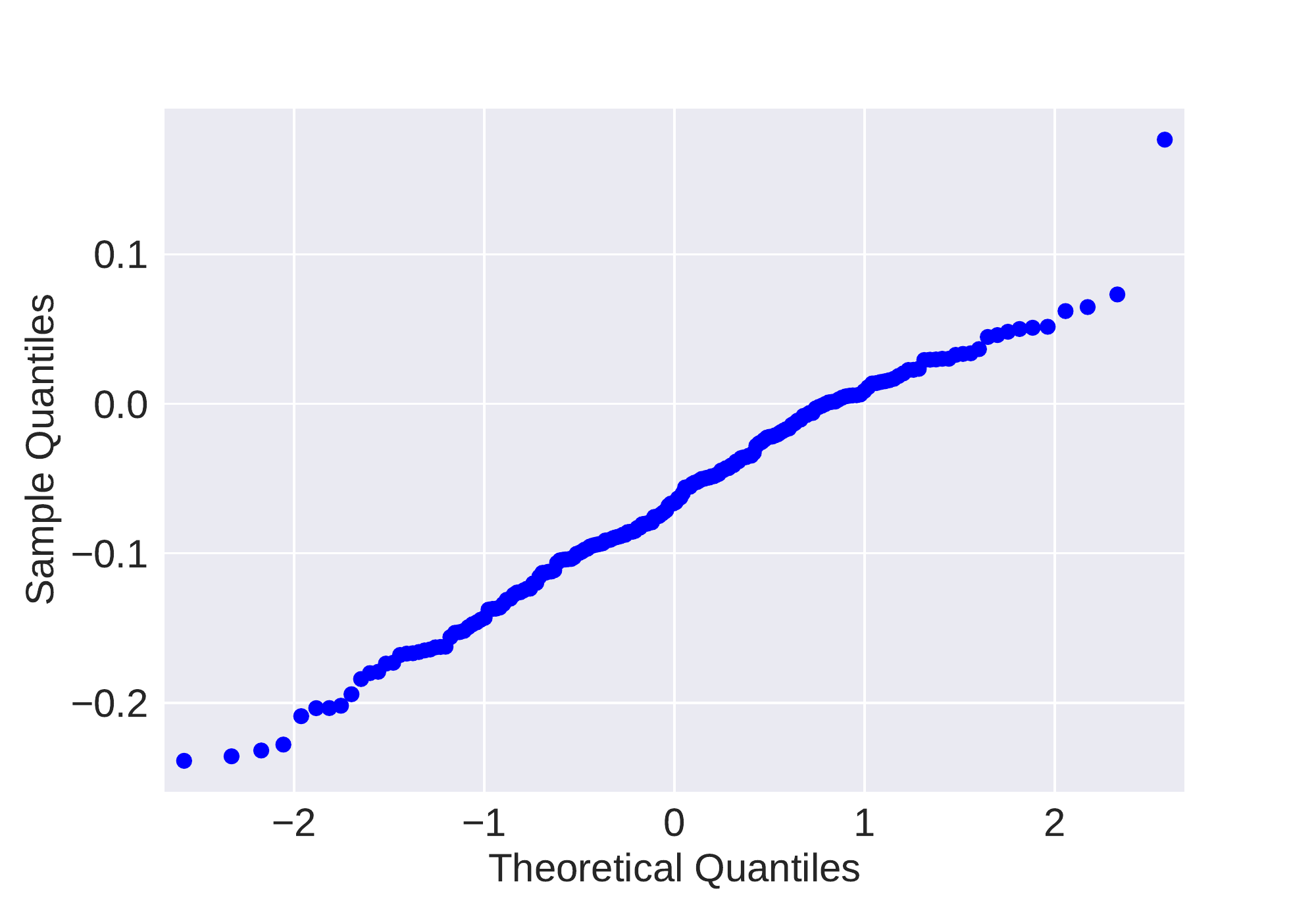}
\includegraphics[width=0.19\textwidth]{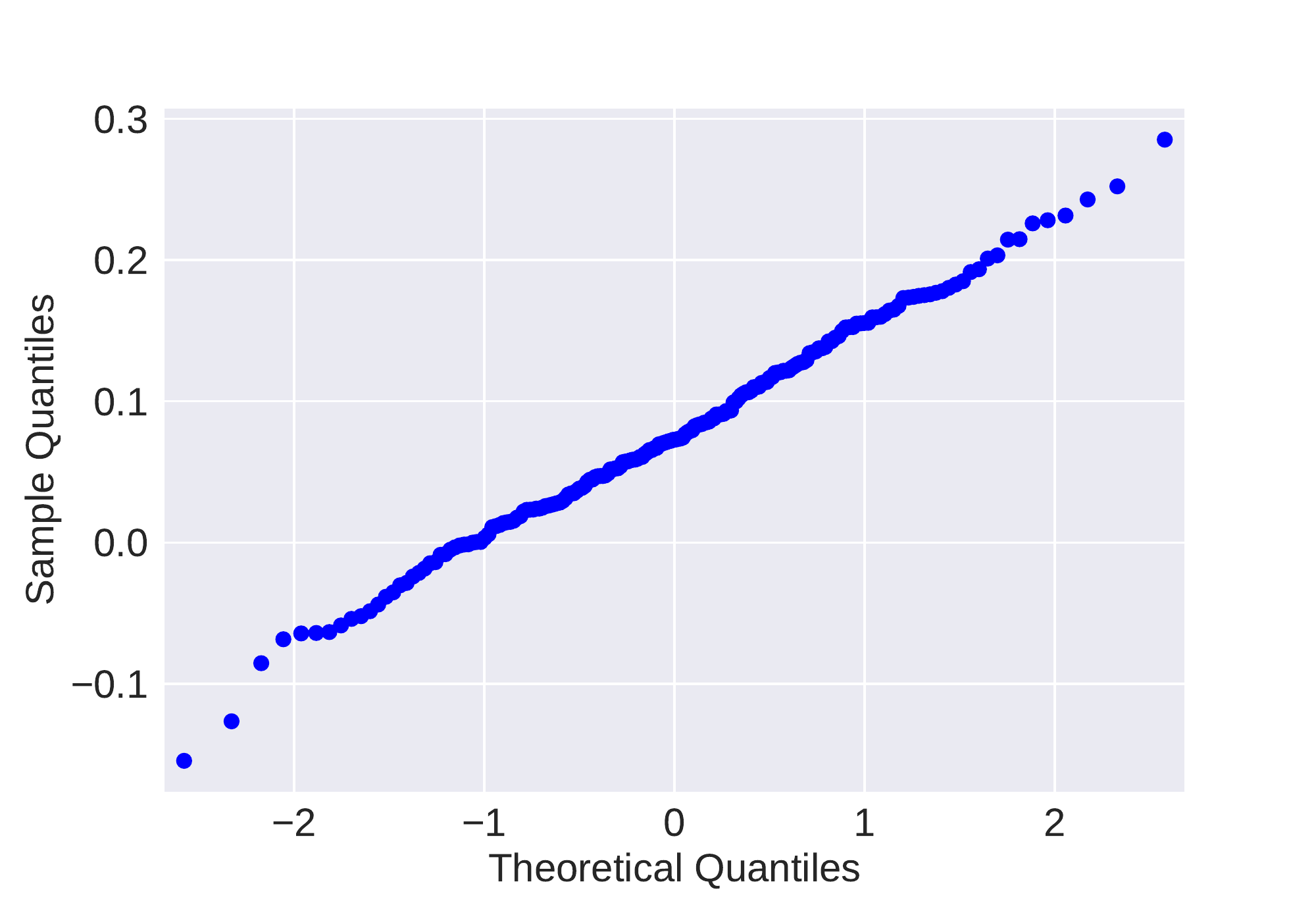}
\includegraphics[width=0.19\textwidth]{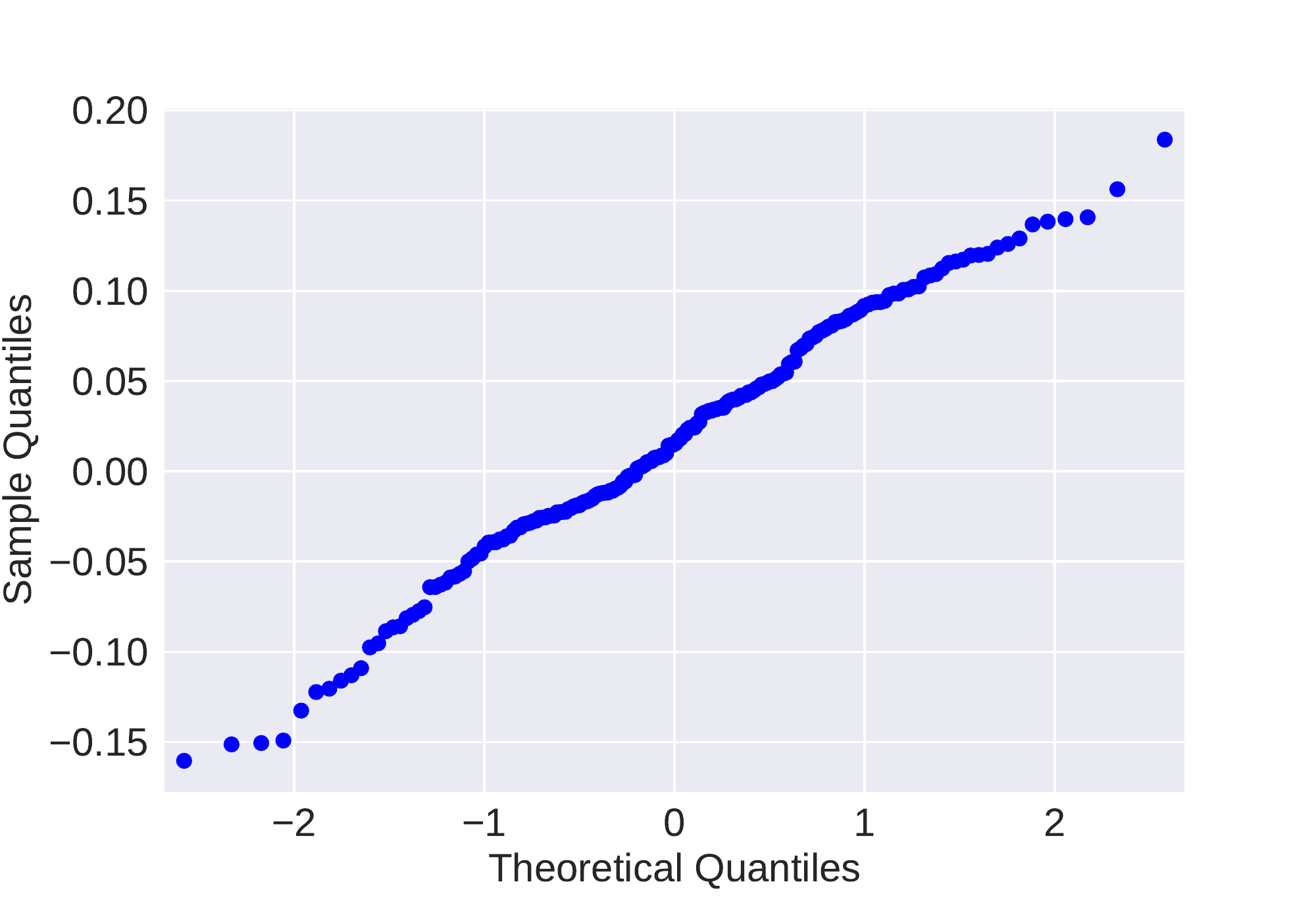}
\caption{Logistic regression experiment 2: Q-Q plots per coordinate}
\label{fig:nips2017:appendix:logistic2:qq-coord}
\end{figure}

\paragraph{Additional experiments}\

{\em 2-Dimensional Linear Regression.}
Consider:
\begin{align*}
y = x_1 + x_2 + \epsilon, \quad \quad \text{where $\begin{bmatrix}
x_1 \\ x_2
\end{bmatrix} \sim \Norm\left(0, \begin{bmatrix}
1 & 0.8 \\
0.8 & 1
\end{bmatrix}\right)$ and $\epsilon \sim \Norm(0, \sigma^2 = 10^2)$.}
\end{align*}

Each sample consists of $Y = y$ and
$X = [x_1, ~~x_2]^\top$.
We use linear regression to estimate $w_1, w_2$ in $y=w_1 x_1 + w_2 x_2$.
In this case, the minimizer of the population least square
risk is $w_1^\star = 1, w_2^\star = 1$.

For 300 i.i.d. samples, we plotted 100 samples from SGD inference in Figure \ref{fig:exp:sim:2d-linear}. We compare our SGD inference procedure against bootstrap in Figure \ref{fig:exp:sim:2d-linear:BS}. Figure \ref{fig:exp:sim:2d-linear:t} and Figure \ref{fig:exp:sim:2d-linear:eta} show samples from our SGD inference procedure with different parameters.

\begin{figure}[!b]
\centering
\subfloat[SGD inference vs. bootstrap\label{fig:exp:sim:2d-linear:BS}]{
\includegraphics[width=0.33\textwidth]{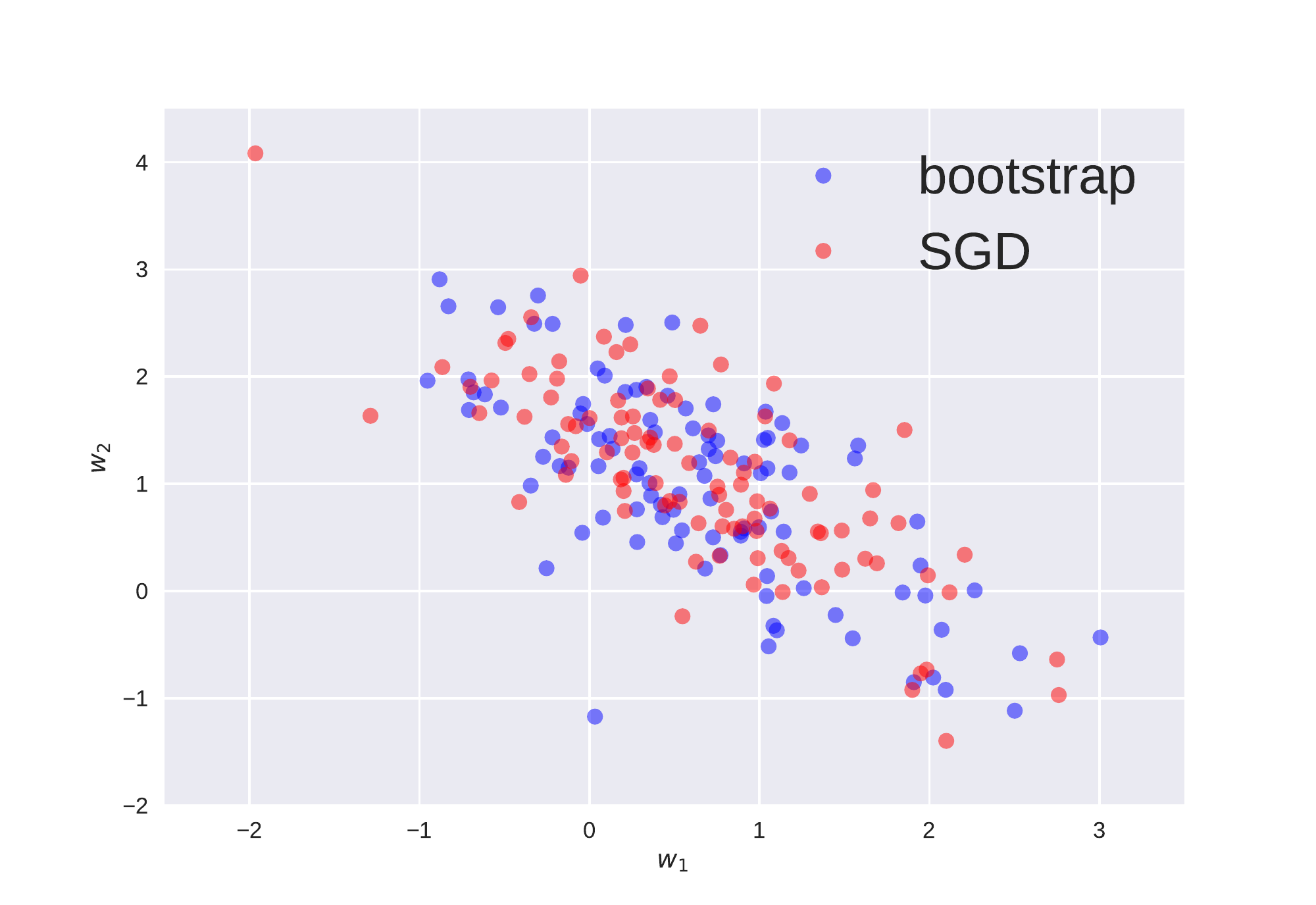}
}
\subfloat[$t=800$\label{fig:exp:sim:2d-linear:t}]{
\includegraphics[width=0.33\textwidth]{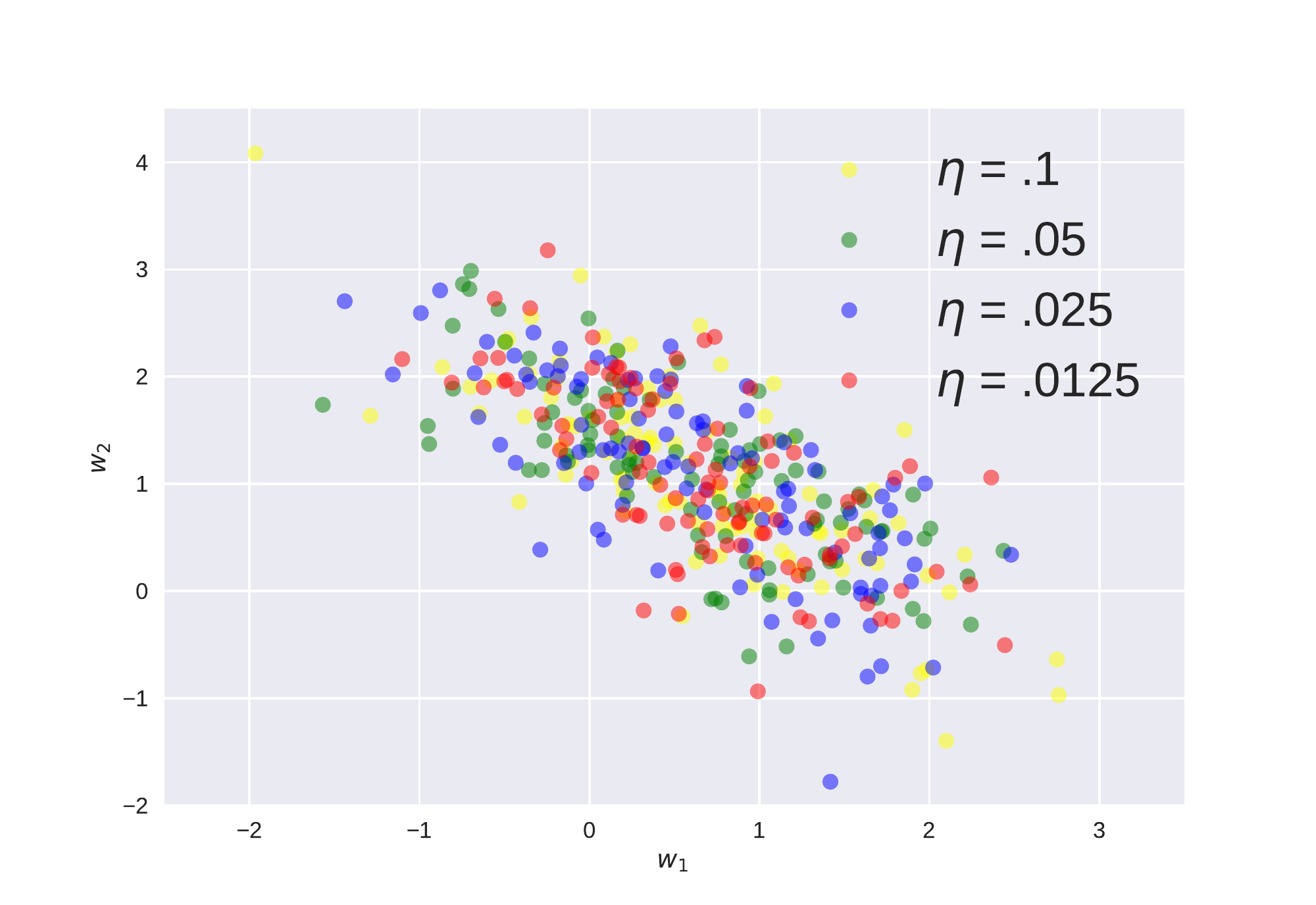}
}
\subfloat[$\eta=0.1$\label{fig:exp:sim:2d-linear:eta}]{
\includegraphics[width=0.33\textwidth]{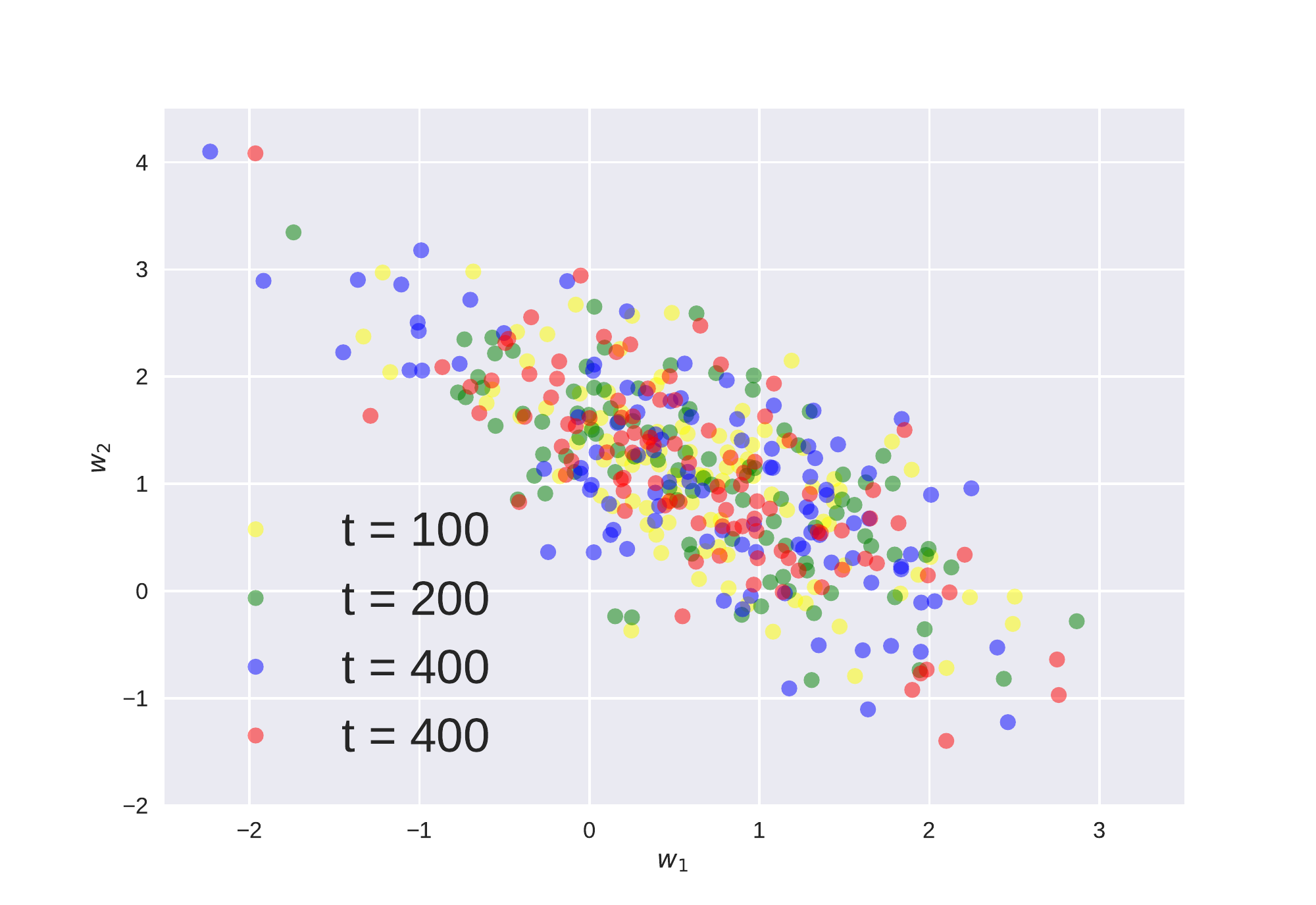}
}
\caption{2-dimensional linear regression}
\label{fig:exp:sim:2d-linear}
\end{figure}

{\em 10-Dimensional Linear Regression.}

Here we consider the following model
\begin{align*}
y = x^\top w^\star + \epsilon ,
\end{align*}
where ${w^\star} = \frac{1}{\sqrt{10}} [1,~~1,~~\cdots,~~1]^\top \in \Real^{10} $, $x \sim \Norm(0, \Sigma)$ with $\Sigma_{ij} = 0.8^{|i-j|}$,
and $\epsilon \sim \Norm(0, \sigma^2 = 20^2)$,
and use $n=1000$ samples.
We  estimate the parameter using
\begin{align*}
\widehat{w} = \argmin_w \frac{1}{n} \sum_{i=1}^n \tfrac{1}{2} (x_i^\top w - y_i)^2 .
\end{align*}

Figure \ref{fig:sim:10-dim-linear-sandwich-diagonal} shows the diagonal terms of of the covariance matrix computed using the sandwich estimator and our SGD inference procedure with different parameters.
100000 samples from our SGD inference procedure are used to reduce the effect of randomness.

\begin{figure}
\centering
\subfloat[$t=500$]{
\includegraphics[width=0.33\textwidth]{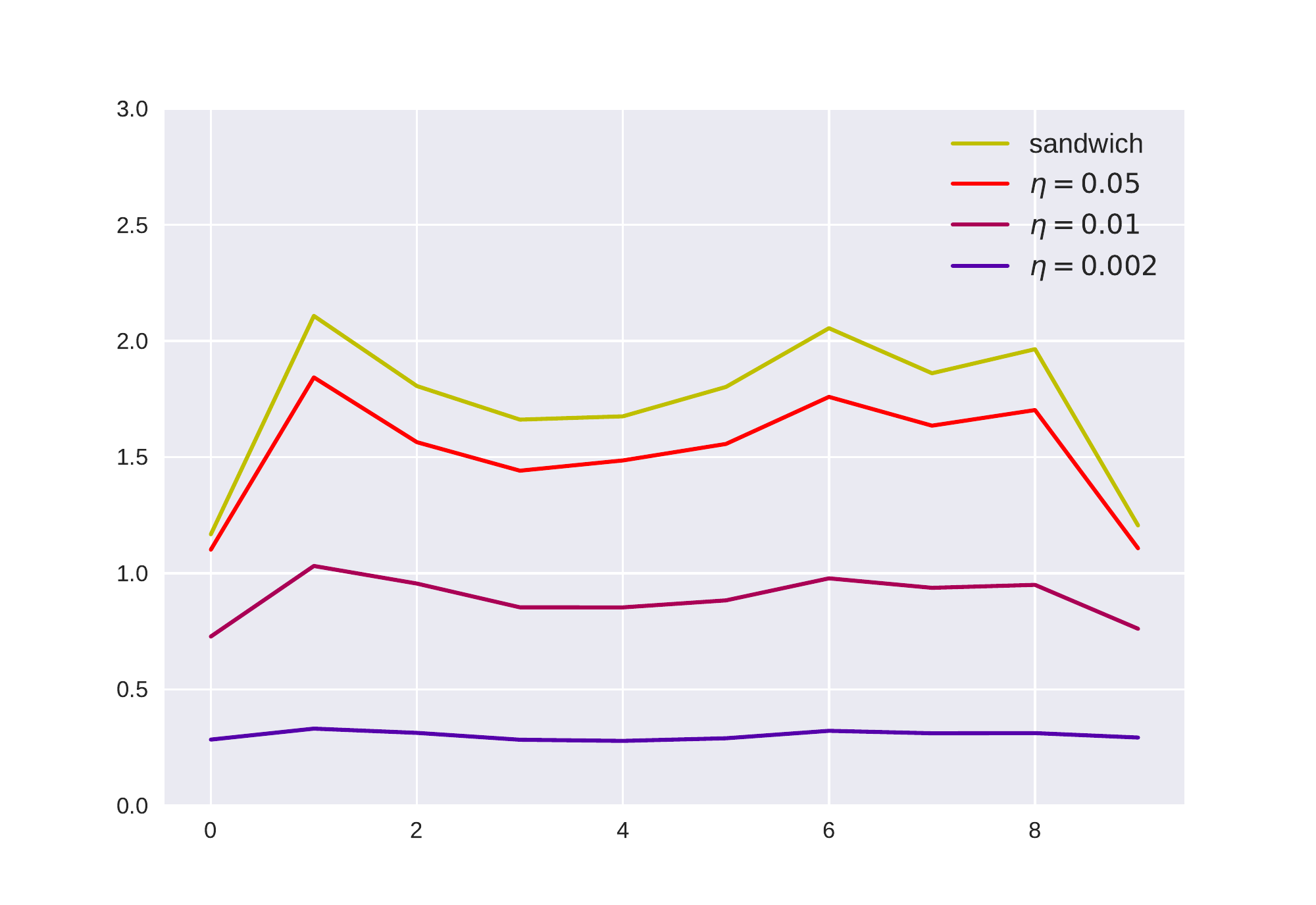}
}
~
\subfloat[$t=2500$]{
\includegraphics[width=0.33\textwidth]{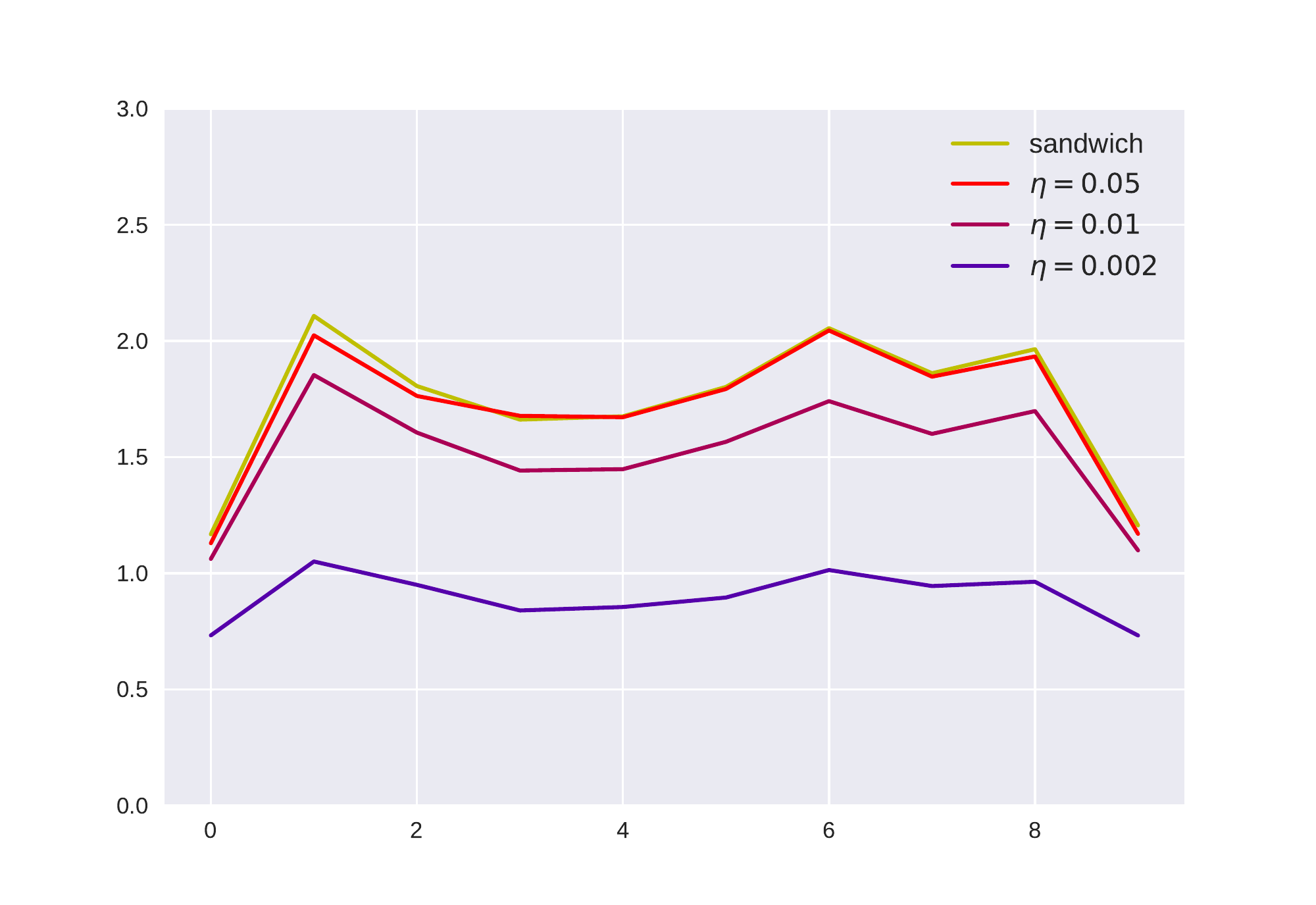}
}
~
\subfloat[$t=12500$]{
\includegraphics[width=0.33\textwidth]{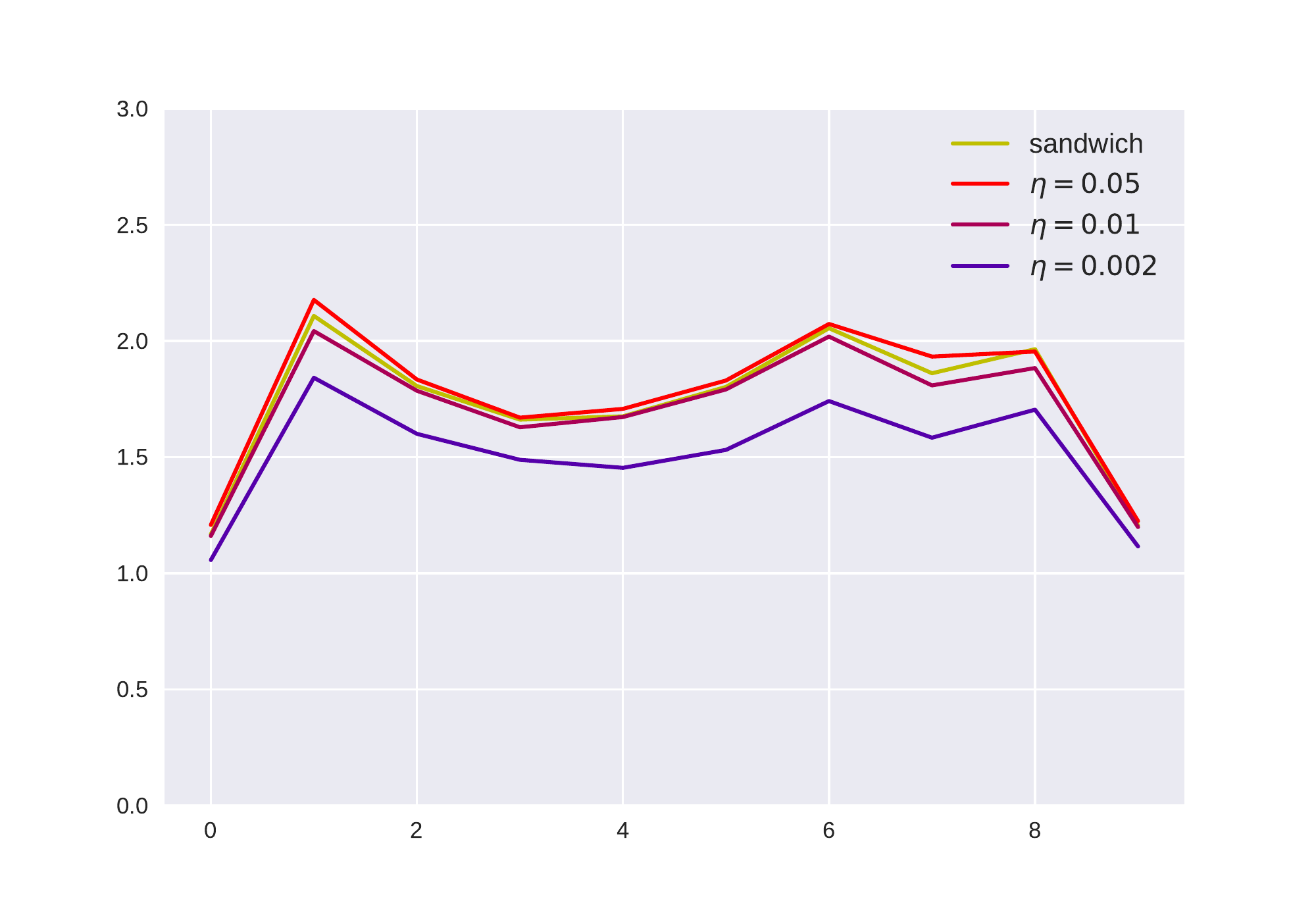}
}

\subfloat[$\eta=0.05$]{
\includegraphics[width=0.33\textwidth]{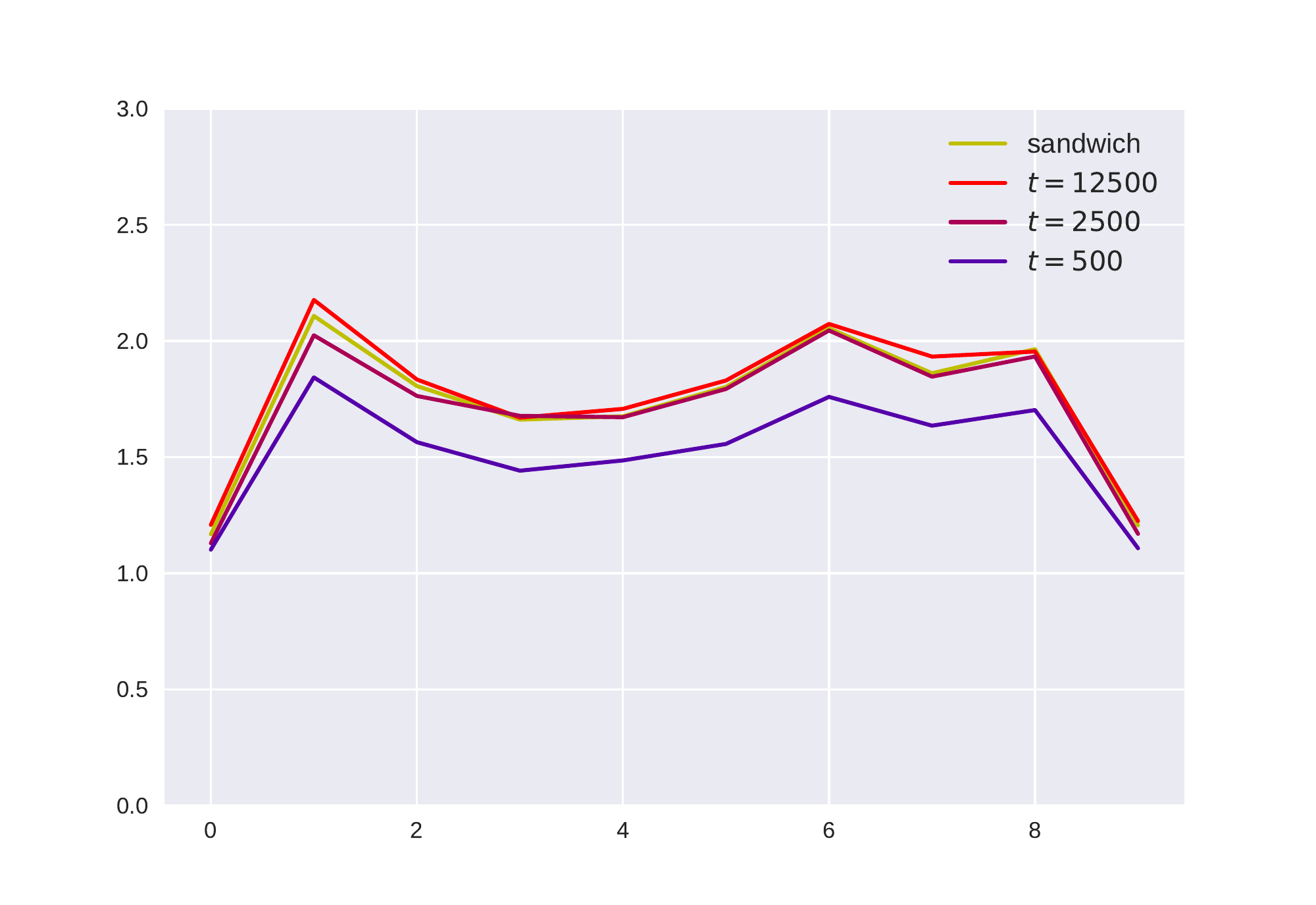}
}
~
\subfloat[$\eta=0.01$]{
\includegraphics[width=0.33\textwidth]{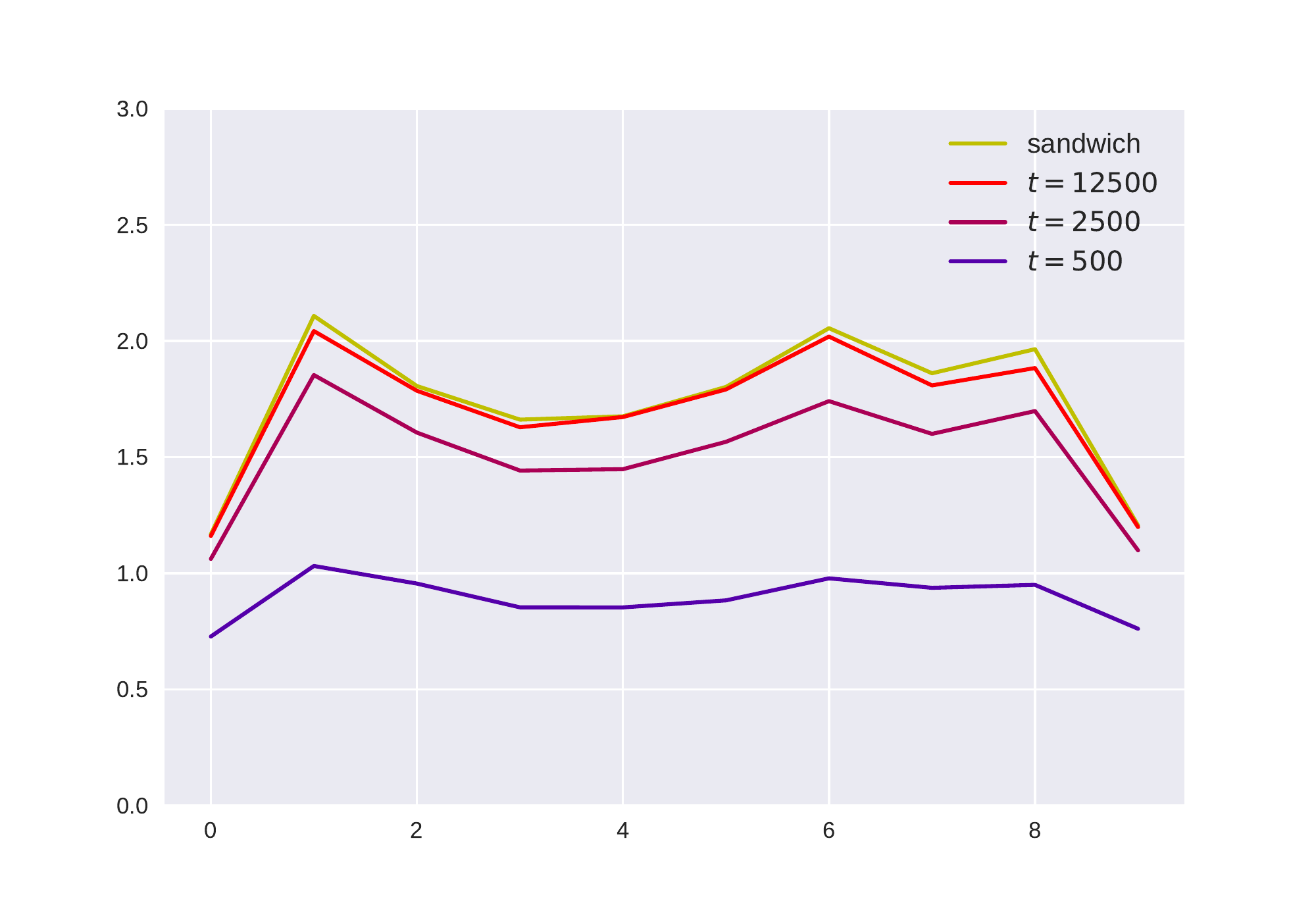}
}
~
\subfloat[$\eta=0.002$]{
\includegraphics[width=0.33\textwidth]{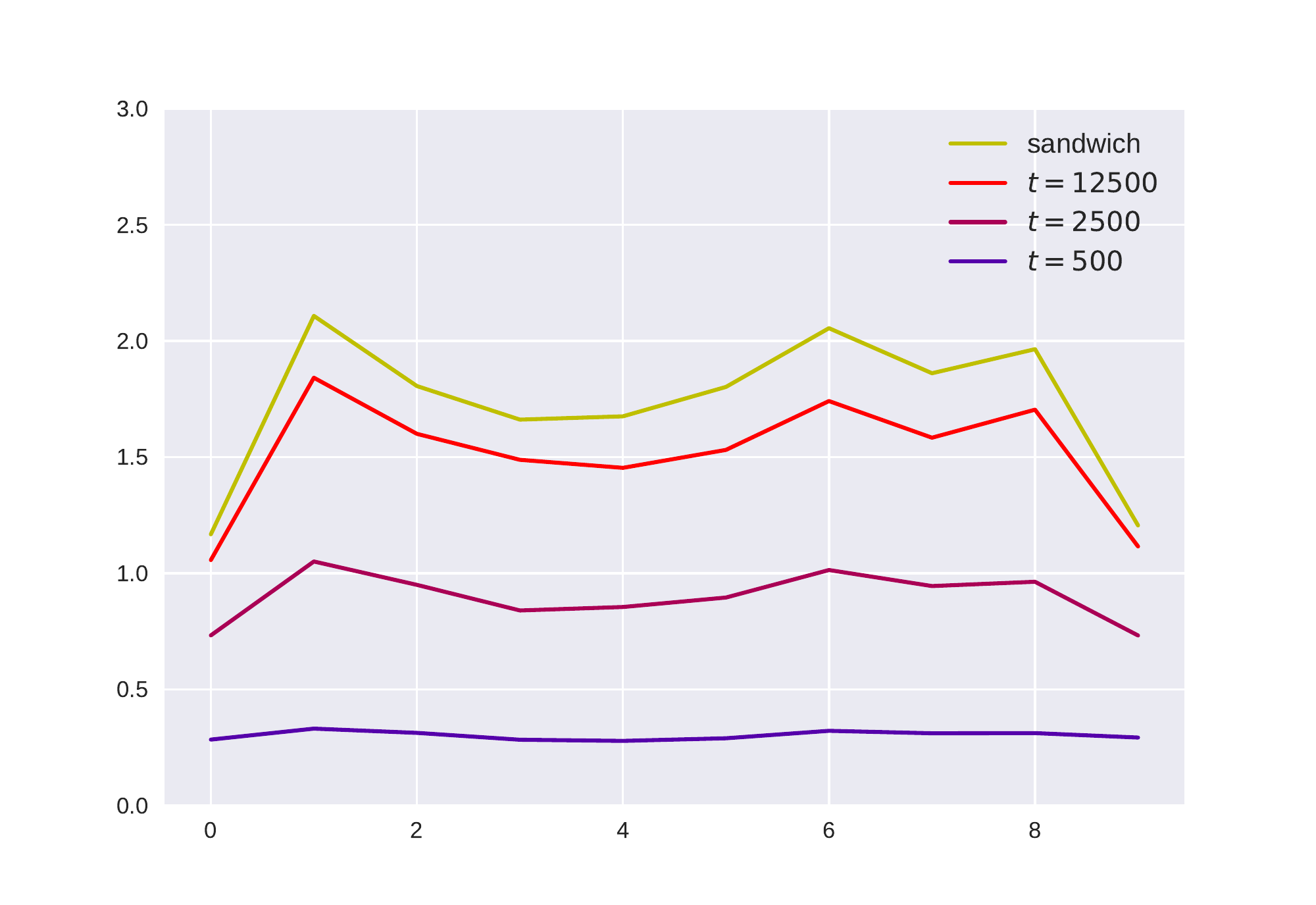}
}
\caption{11-dimensional linear regression: covariance matrix diagonal terms of SGD inference and sandwich estimator}
\label{fig:sim:10-dim-linear-sandwich-diagonal}
\end{figure}

{\em 2-Dimensional Logistic Regression.}

Here we consider the following model
\begin{align}
\Pr[Y =  +1] = \Pr[Y=-1]=\frac{1}{2}, \quad \quad X \mid Y \sim \Norm\left(\mu = 1.1 +0.1 Y,~\sigma^2 =1\right).
\end{align}
We use logistic regression to estimate $w, b$ in the classifier
$\sign(w x + b)$ where the minimizer of the population logistic risk is
$w^\star = 0.2, b^\star = -0.22$.

For 100 i.i.d. samples, we plot 1000 samples from SGD in
Figure \ref{fig:exp:sim:2d-logistic}.
In our simulations, we notice that our modified SGD for logistic regression behaves similar to vanilla logistic regression. T
his suggests that an assumption weaker than
$ (\theta - \htheta)^\top \nabla f(\theta) \geq \alpha \| \theta - \htheta \|_2^2 $ (assumption ($F_1$) in Theorem \ref{thm:sgd-strongly-convex-lipschitz-stat-inf}) is sufficient for SGD analysis.
Figure \ref{fig:logistic:modified:compare-t} and Figure \ref{fig:logistic:vanilla:compare-t} suggest that the $t \eta^2$ term in Corollary \ref{cor:sgd-stat-inf-logistic-regression} is an artifact of our analysis, and can be improved.

\begin{figure}[t]
\centering
\subfloat[Modified SGD with $t=1000$ and $\eta=0.1$]{
\includegraphics[width=0.33\textwidth]{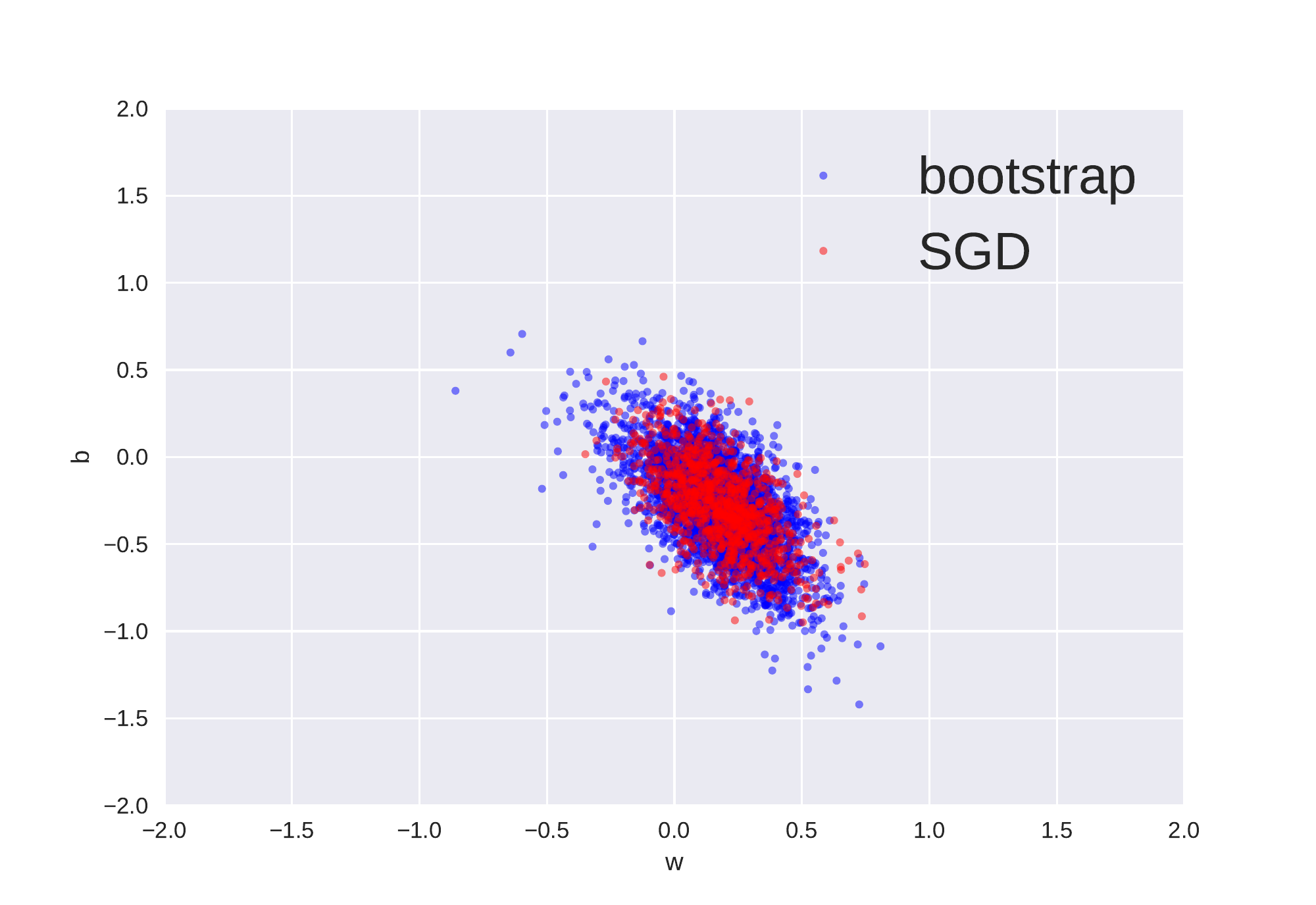}
}
\subfloat[Modified SGD with $\eta=0.1$\label{fig:logistic:modified:compare-t}]{
\includegraphics[width=0.33\textwidth]{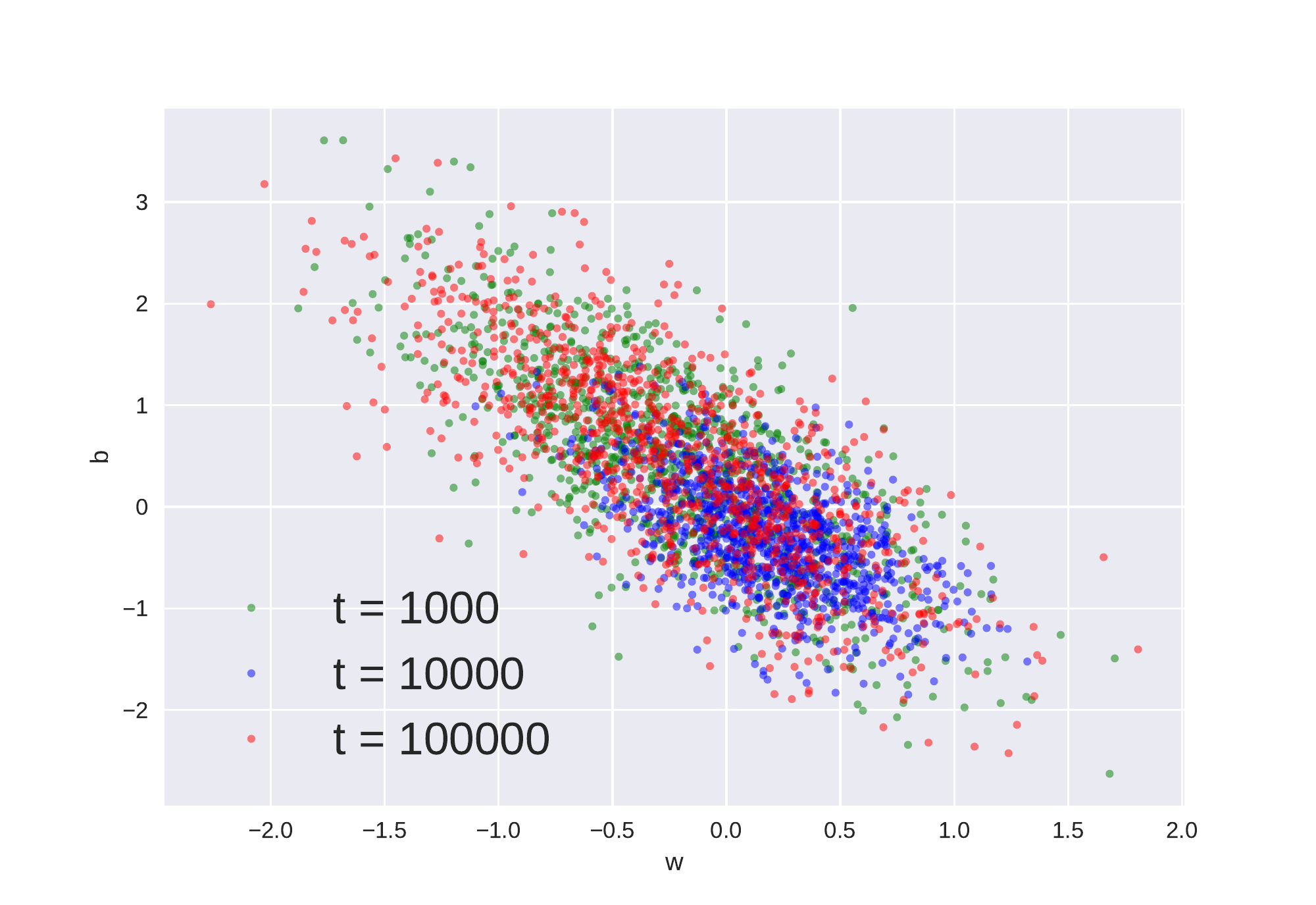}
}
\subfloat[Vanilla SGD with $t=1000$ and $\eta=0.1$]{
\includegraphics[width=0.33\textwidth]{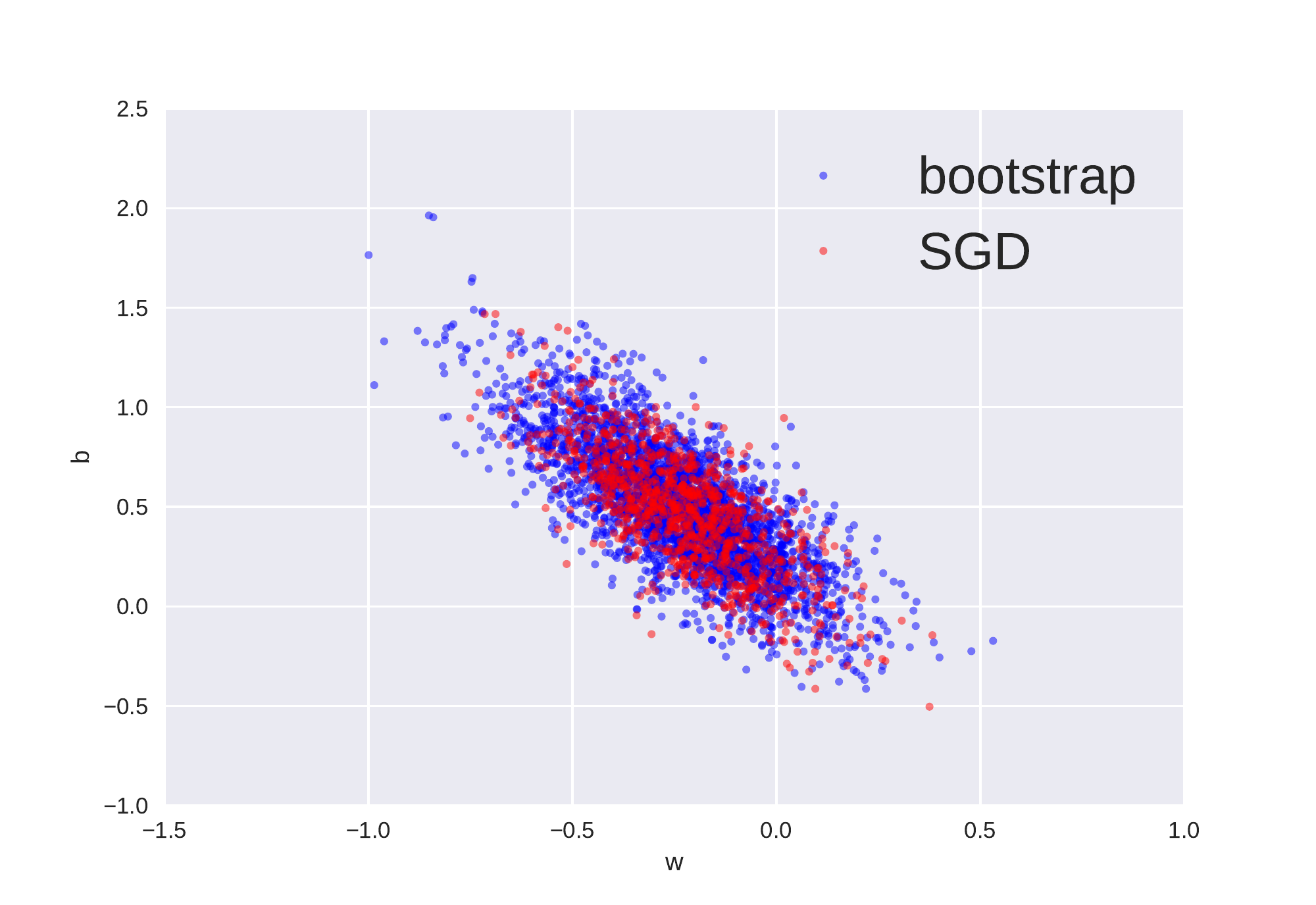}
} \\
\subfloat[Vanilla SGD with $\eta=0.1$\label{fig:logistic:vanilla:compare-t}]{
\includegraphics[width=0.3\textwidth]{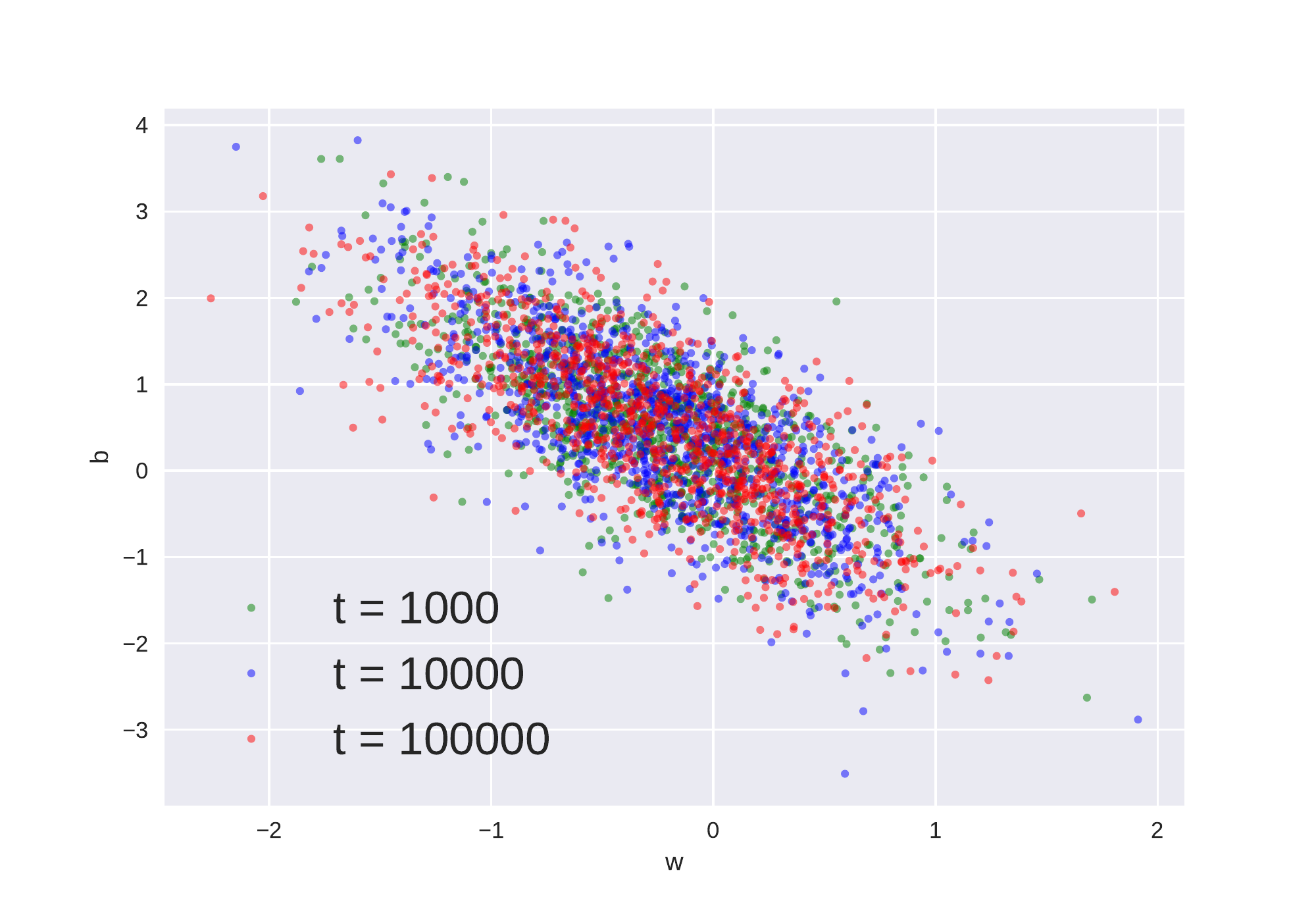}
}
\subfloat[$t=1000$ and $\eta=0.1$]{
\includegraphics[width=0.3\textwidth]{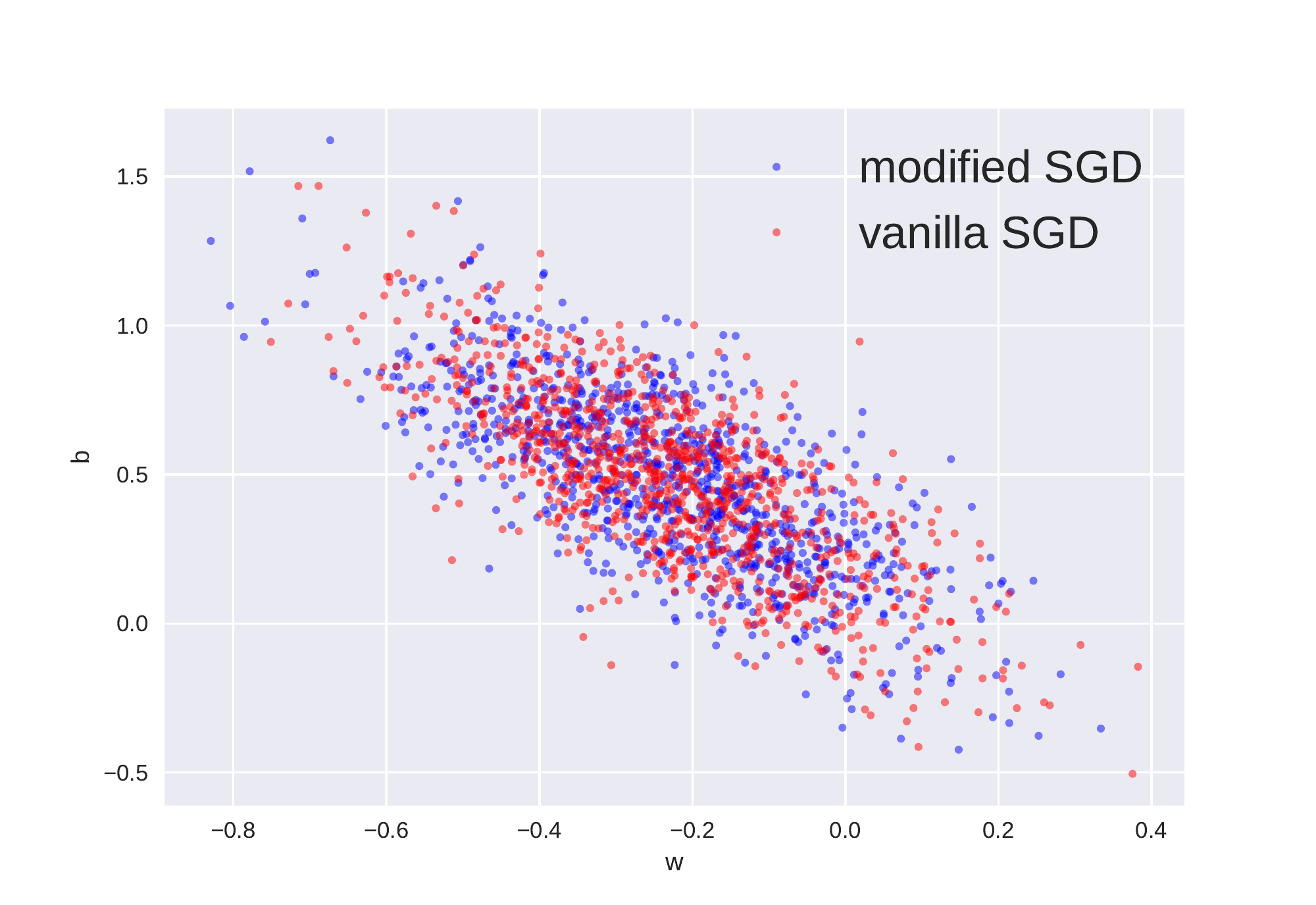}
}
\caption{2-dimensional logistic regression}
\label{fig:exp:sim:2d-logistic}
\end{figure}

{\em 11-Dimensional Logistic Regression.}

Here we consider the following model
\begin{align*}
\Pr[Y=+1] = \Pr[Y=-1]=\frac{1}{2}, \quad \quad X \mid Y \sim \Norm\left(0.01 Y  \mu, \Sigma\right) ,
\end{align*}
where $\Sigma_{ii}=1$ and when $i \neq j$ $\Sigma_{ij} = \rho^{|i-j|}$ for some $\rho \in [0, 1)$,
and $ \mu = \frac{1}{\sqrt{10}} [1,~~1,~~ \cdots, ~~ 1]^\top \in \Real^{10} $.
We estimate a classifier $\sign(w^\top x + b)$ using
\begin{align}
\widehat{w}, \widehat{b} = \argmin_{w,b} \frac{1}{n} \sum_{i=1}^n \log\left(1 + \exp(-Y_i ( w^\top X_i + b ) )\right) .
\end{align}

Figure \ref{fig:exp:sim:logistic:11d-rho-0_0} shows results for $\rho = 0$
with $n=80$ samples.
We use $t = 100$, $d=70$, $\eta=0.8$, and mini batch of size 4 in vanilla SGD.
Bootstrap and our SGD inference procedure each generated 2000 samples.
In bootstrap, we used Newton method to perform optimization over each replicate,
and 6-7 iterations were used.
In figure \ref{fig:exp:sim:logistic:11d-rho-0_6}, we follow the same procedure for $\rho = 0.6$
with $n=80$ samples.
Here, we use $t = 200$, $d=70$, $\eta=0.85$; the rest of the setting is the same.

\begin{figure}
\centering
\subfloat[SGD covariance]{
\includegraphics[width=0.33\textwidth]{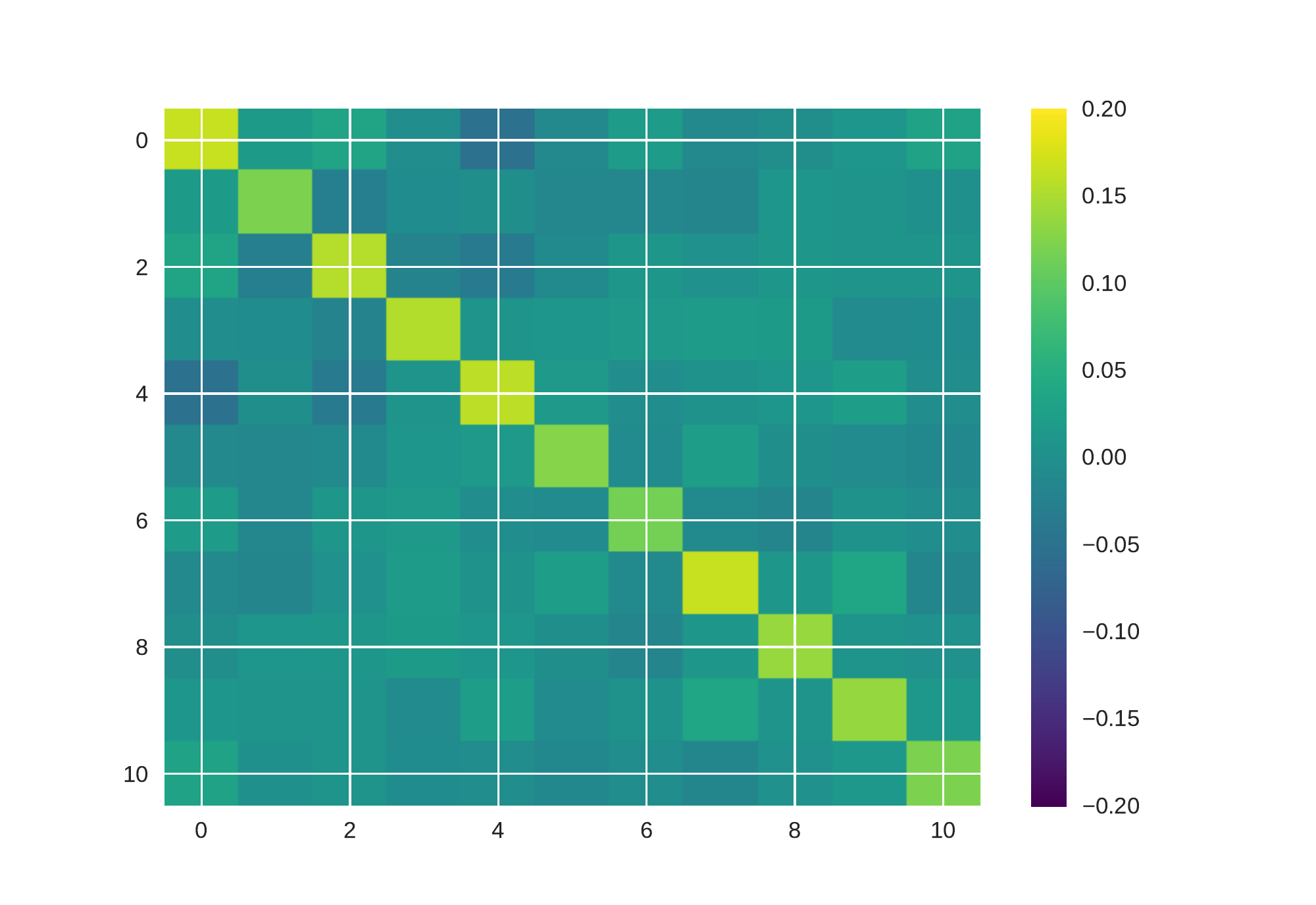}
}
~
\subfloat[Bootstrap estimated covariance]{
\includegraphics[width=0.33\textwidth]{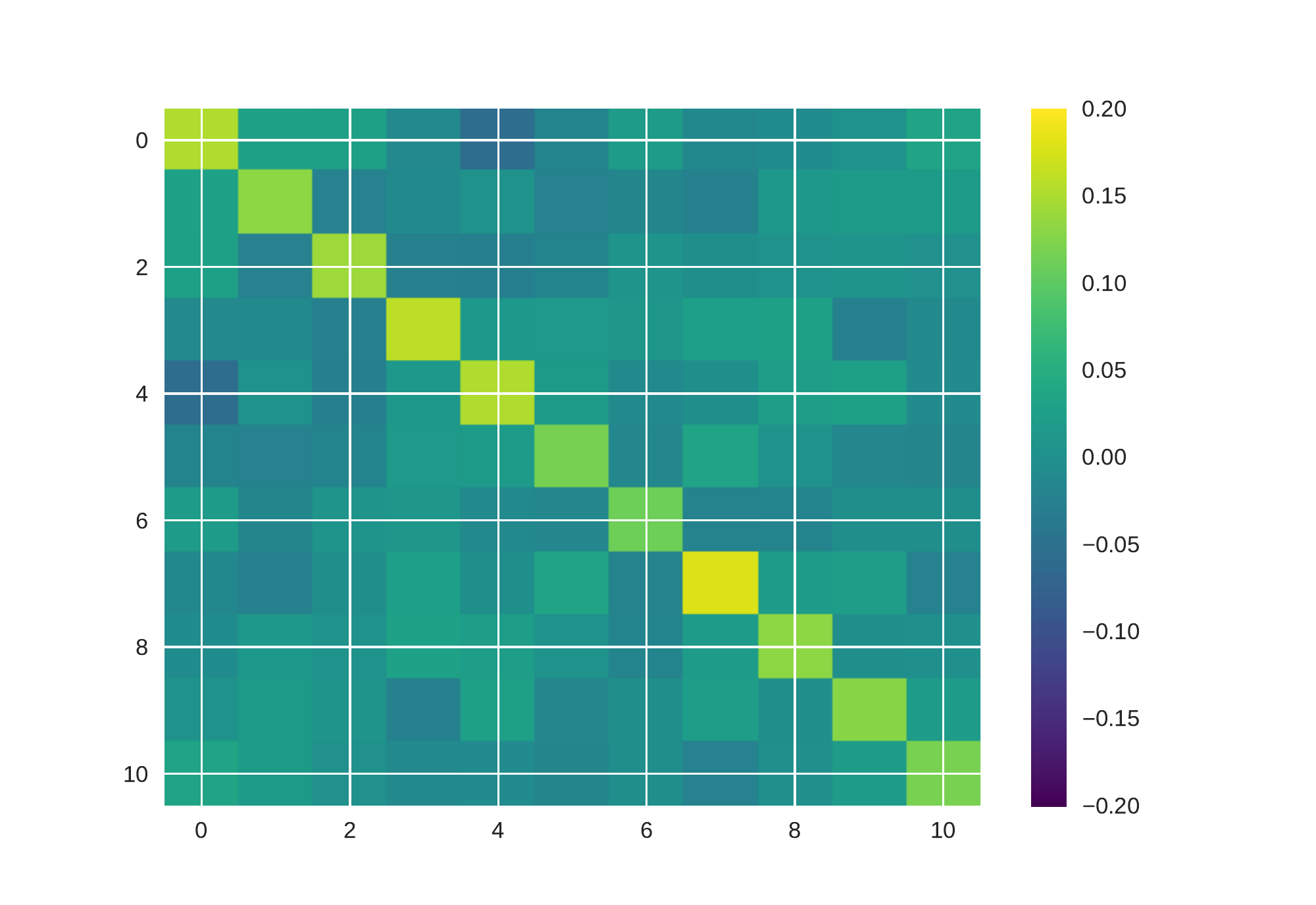}
}
\subfloat[Diagonal terms]{
\includegraphics[width=0.33\textwidth]{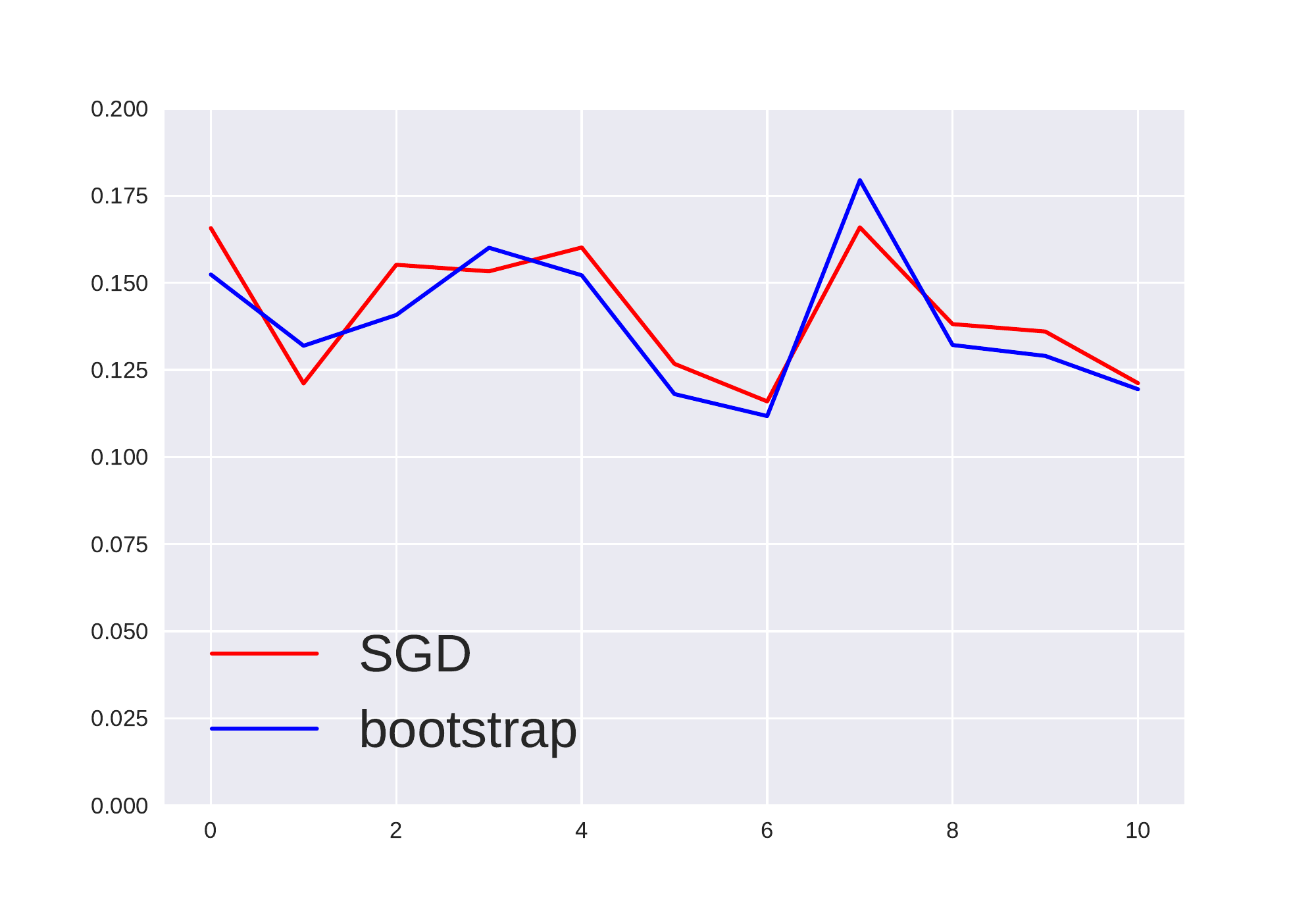}
}
\caption{11-dimensional logistic regression: $\rho =0$}
\label{fig:exp:sim:logistic:11d-rho-0_0}
\end{figure}

\begin{figure}
\centering
\subfloat[SGD covariance]{
\includegraphics[width=0.33\textwidth]{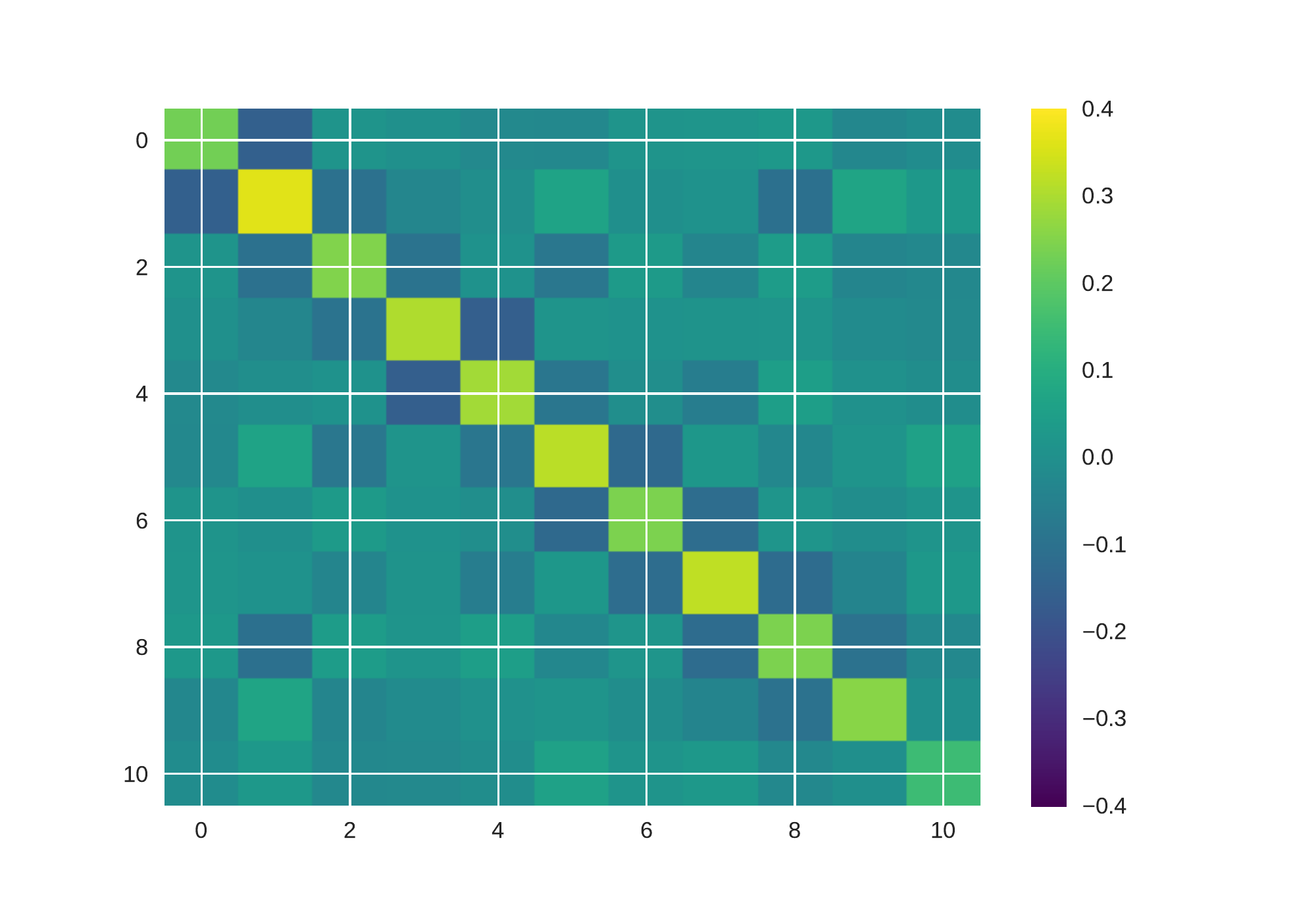}
}
\subfloat[Bootstrap estimated covariance]{
\includegraphics[width=0.33\textwidth]{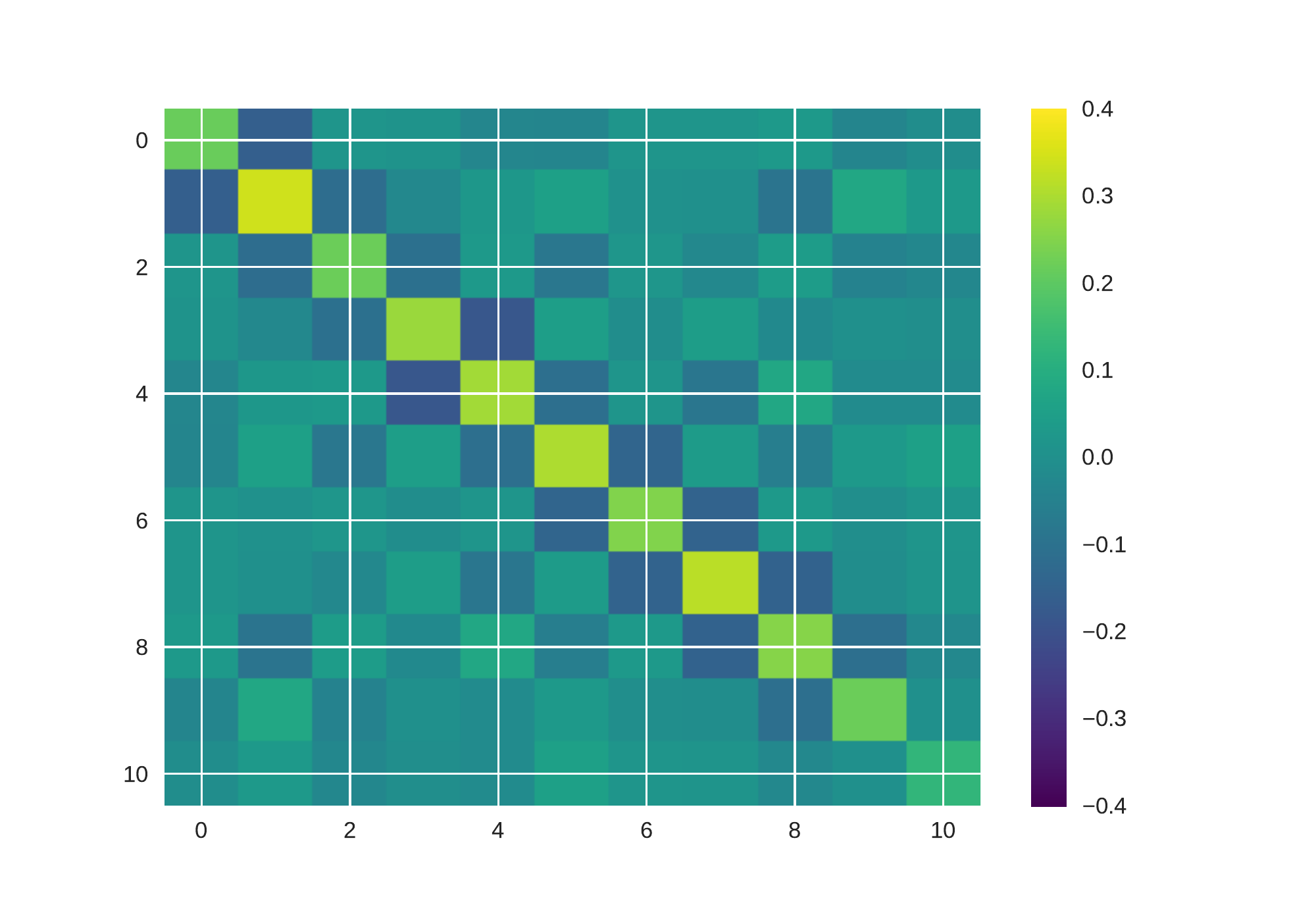}
}
\subfloat[Diagonal terms]{
\includegraphics[width=0.33\textwidth]{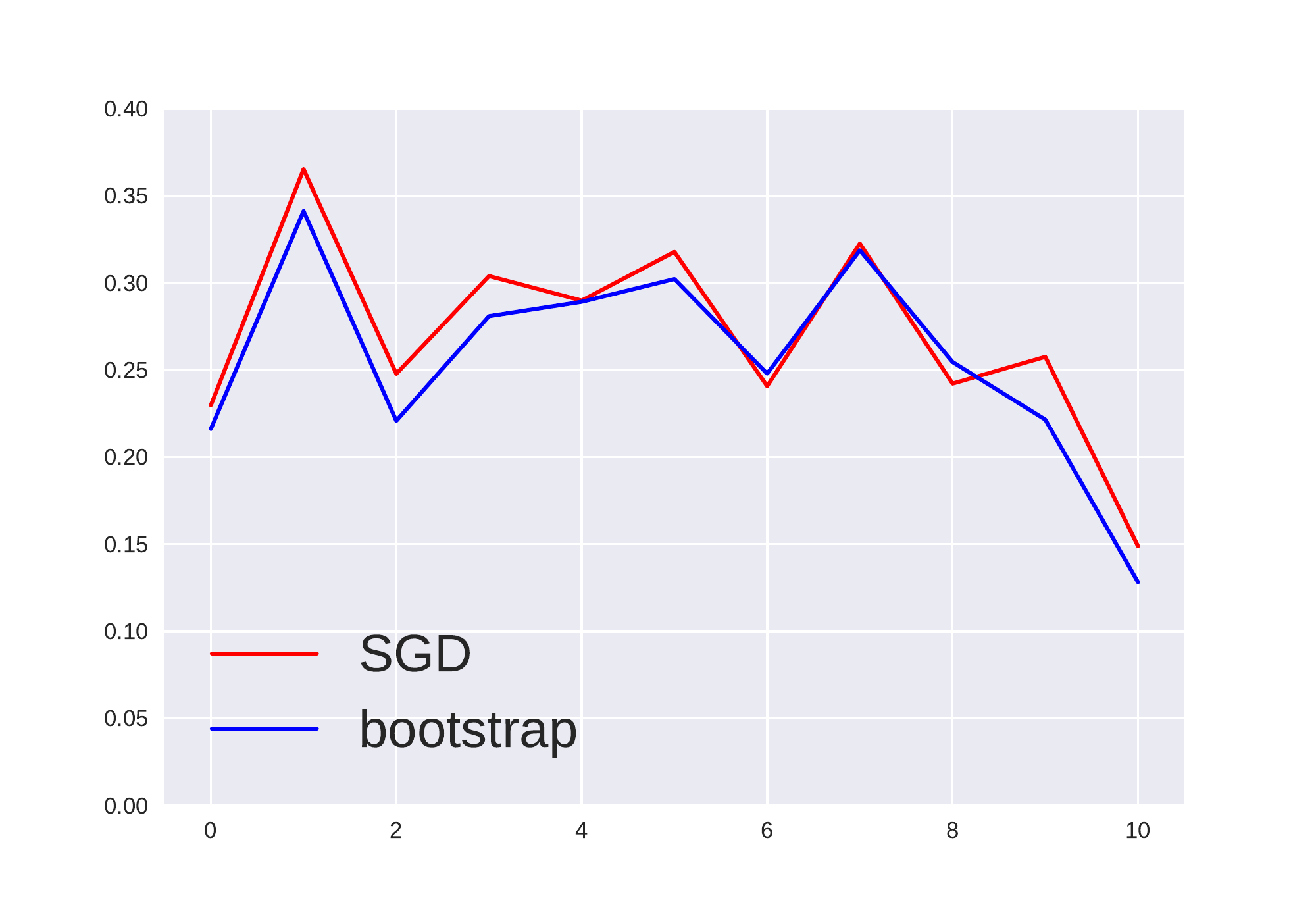}
}
\caption{11-dimensional logistic regression: $\rho =0.6$}
\label{fig:exp:sim:logistic:11d-rho-0_6}
\end{figure}

\subsection{Real data}
\label{subsec:nips2017:appendix:experiemnts:real}

Here, we compare covariance matrix computed using our SGD inference procedure,
bootstrap, and inverse Fisher information matrix
on the Higgs data set \cite{baldi2014searching} and the LIBSVM Splice data set,
and we observe that they have similar statistical properties.

\subsubsection{Higgs data set}

The Higgs data set \footnote{\url{https://archive.ics.uci.edu/ml/datasets/HIGGS}} \cite{baldi2014searching}
contains 28 distinct features with 11,000,000 data samples.
This is a classification problem between two types of physical
processes: one produces Higgs bosons and the other is a background process that does not.
We use a logistic regression model, trained using vanilla SGD,
instead of the modified SGD described in Section \ref{subsec:logistic-stat-inf}.

To understand different settings of sample size, we subsampled the
data set with different sample size levels: $n = 200$ and $n = 50000$.
We investigate the empirical performance of SGD inference on this subsampled data set.
In all experiments below, the batch size of the mini batch SGD is 10.

In the case $n = 200$, the asymptotic normality for the MLE is not a good enough approximation.
Hence, in this small-sample inference, we compare the SGD inference covariance matrix with the one obtained by inverse Fisher information matrix and bootstrap in Figure \ref{fig:exp:real:higgs_Small}.

For our SGD inference procedure, we use $t = 100$ samples to average, and discard $d = 50$ samples. We use $R = 20$ averages from 20 segments (as in Figure \ref{fig:sgd-inf}).
For bootstrap, we use 2000 replicates, which is much larger than the sample size $n=200$.

Figure \ref{fig:exp:real:higgs_Small}
shows that the covariance matrix obtained by SGD inference is comparable
to the estimation given by bootstrap and inverse Fisher information.

\begin{figure}[t]
\centering
\subfloat[Inverse Fisher information]{
\includegraphics[width=0.33\textwidth]{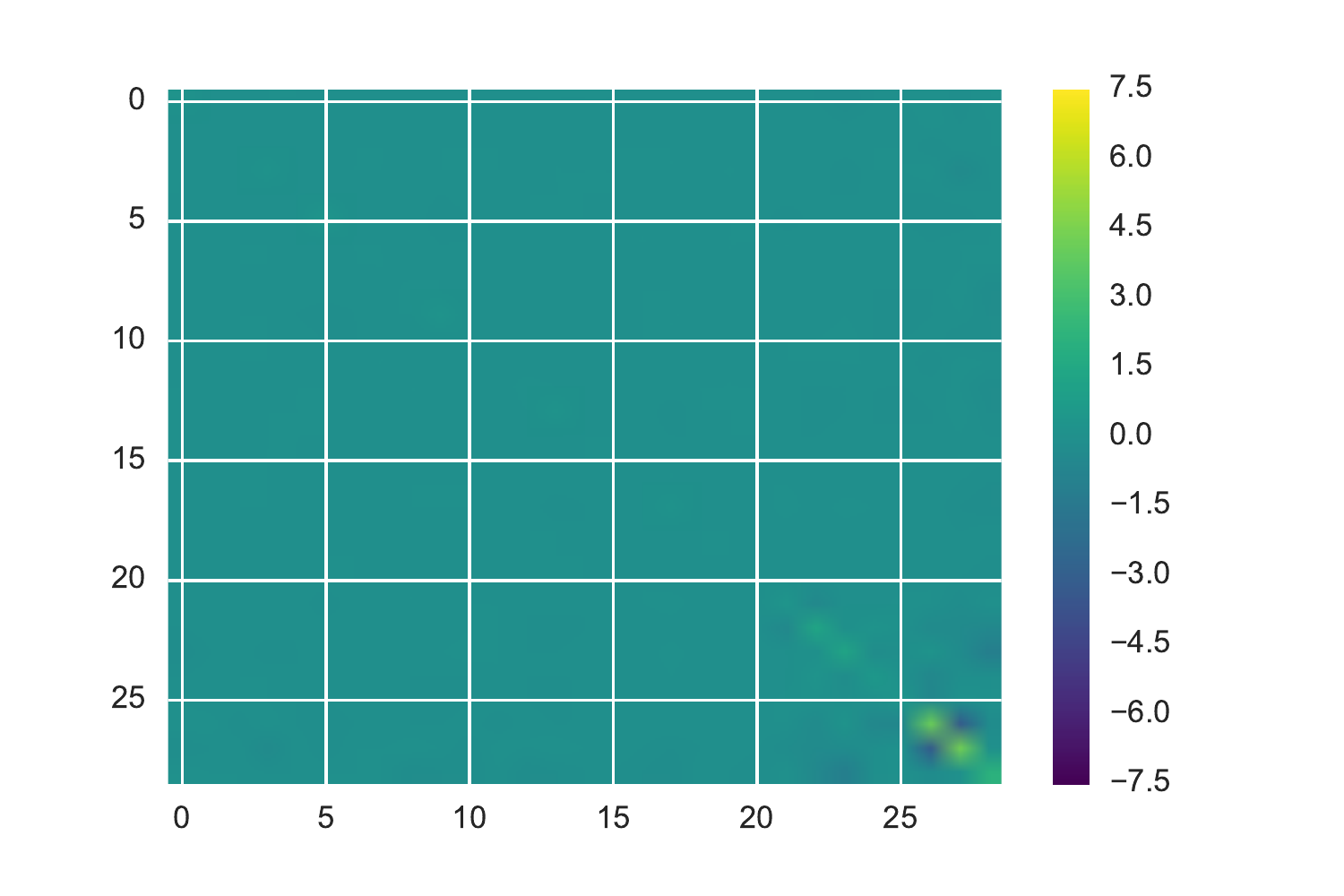}
}
\subfloat[SGD inference covariance]{
\includegraphics[width=0.33\textwidth]{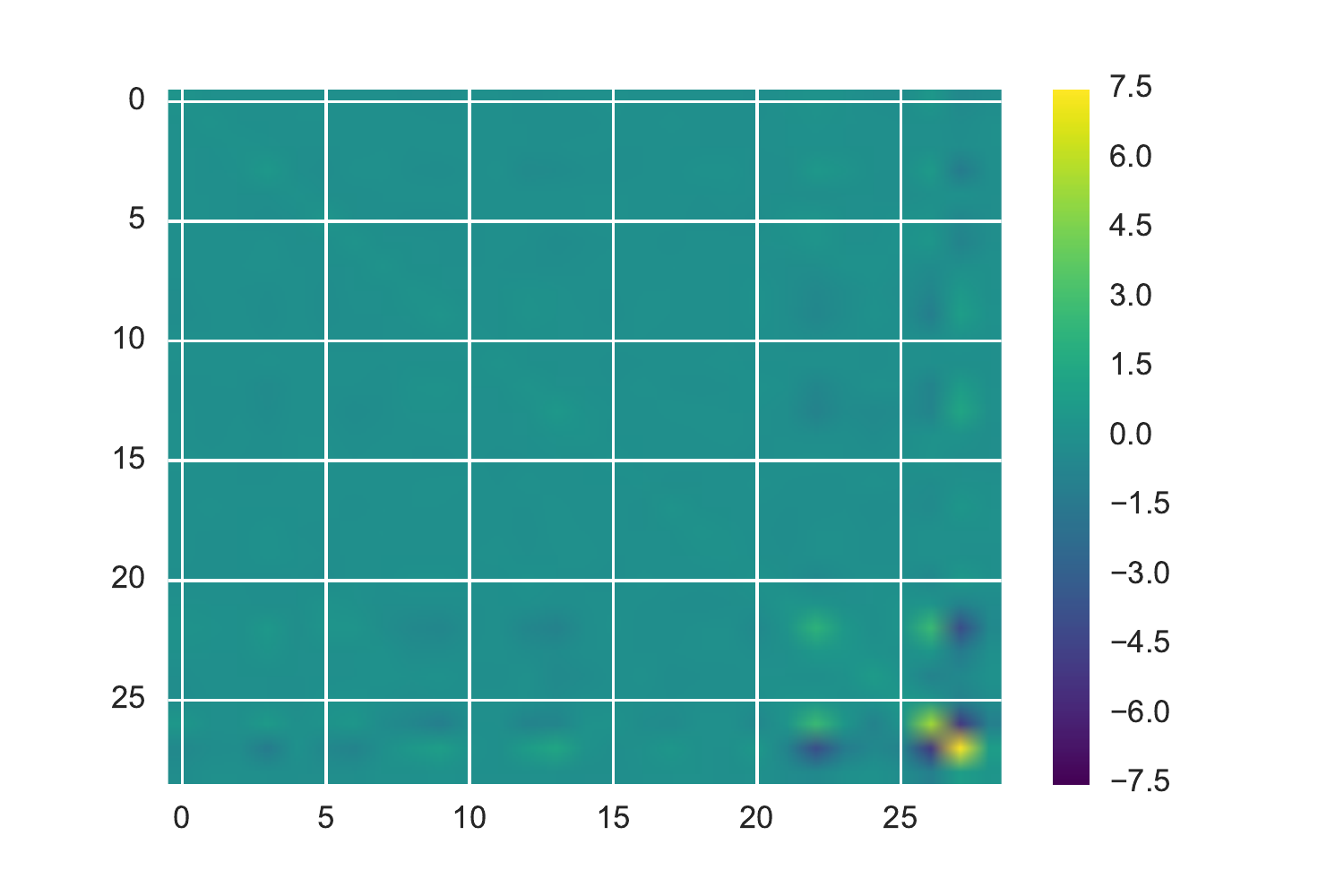}
}
\subfloat[Bootstrap estimated covariance]{
\includegraphics[width=0.33\textwidth]{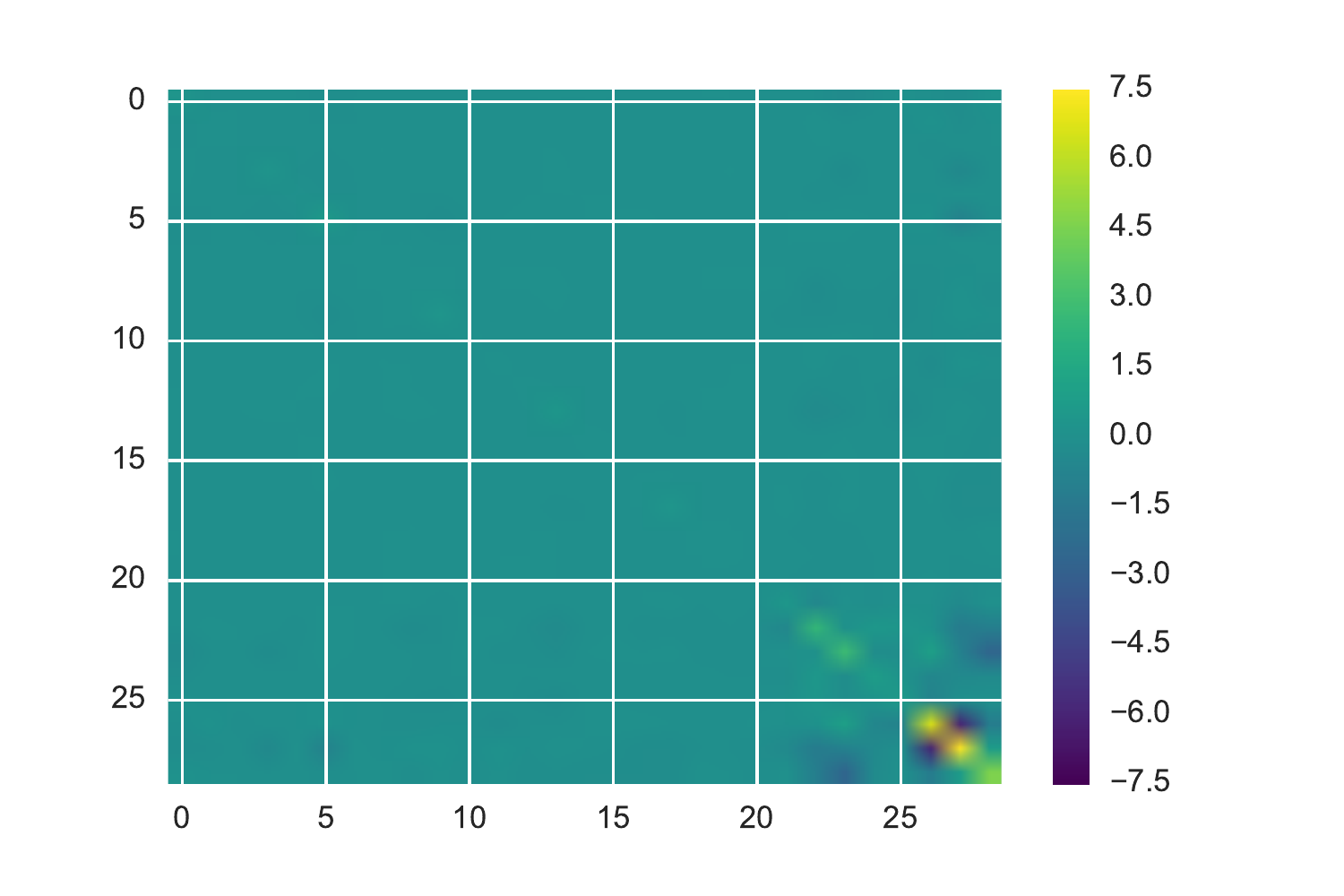}
}
\caption{Higgs data set with $n=200$}
\label{fig:exp:real:higgs_Small}
\end{figure}

In the case $n = 50000$, we use $t = 5000$ samples to average, and discard $d = 500$ samples.
We use $R = 20$ averages from 20 segments (as in Figure \ref{fig:sgd-inf}).
For this large data set, we present the estimated covariance of SGD inference procedure and inverse Fisher information (the asymptotic covariance) in Figure \ref{fig:exp:real:higgs_Large} because bootstrap is computationally prohibitive.
Similar to the small sample result in Figure \ref{fig:exp:real:higgs_Small}, the covariance of our SGD inference procedure is comparable to the inverse Fisher information.

\begin{figure}[t]
\centering
\subfloat[Inverse Fisher information]{
\includegraphics[width=0.33\textwidth]{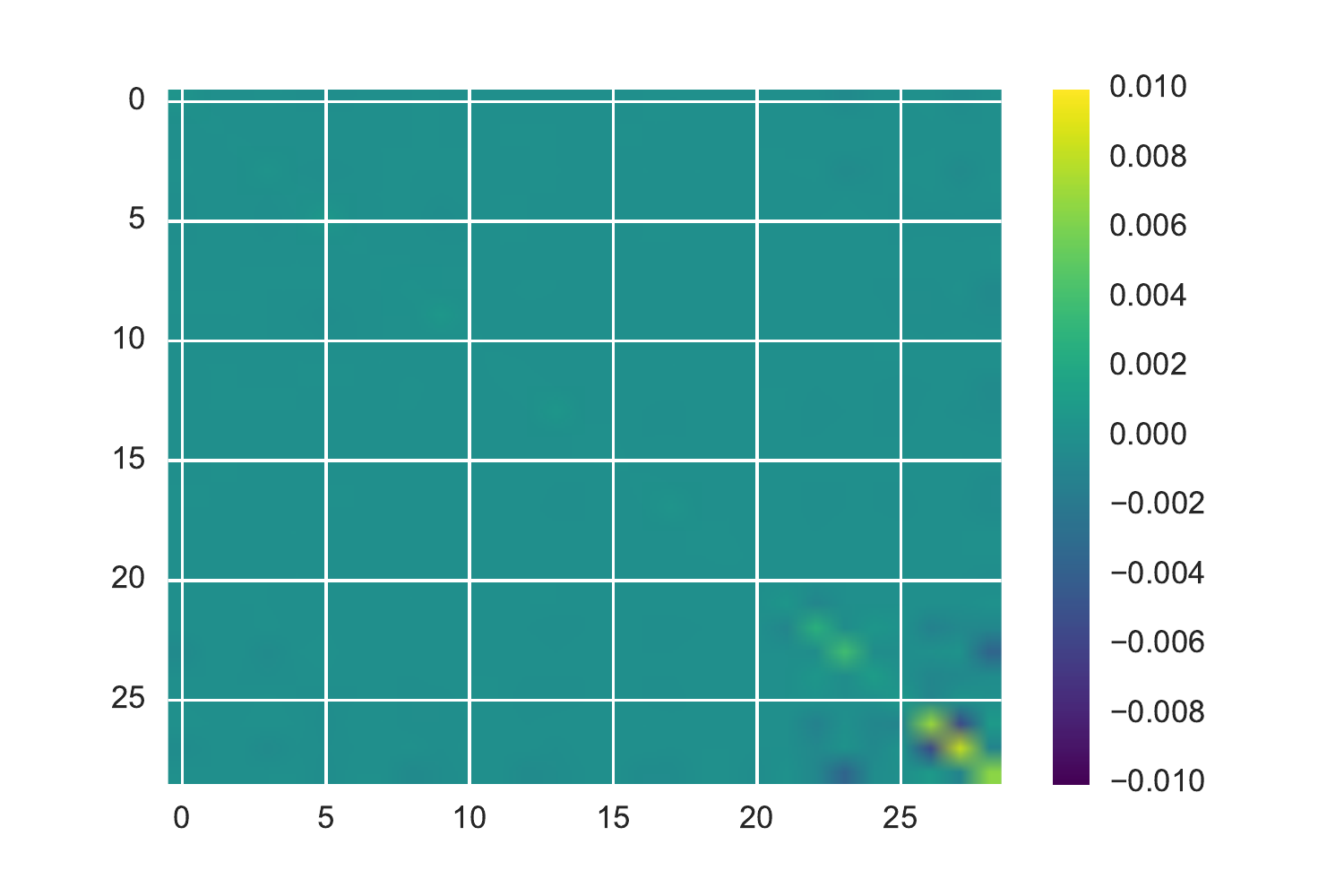}
}
\subfloat[SGD inference covariance]{
\includegraphics[width=0.33\textwidth]{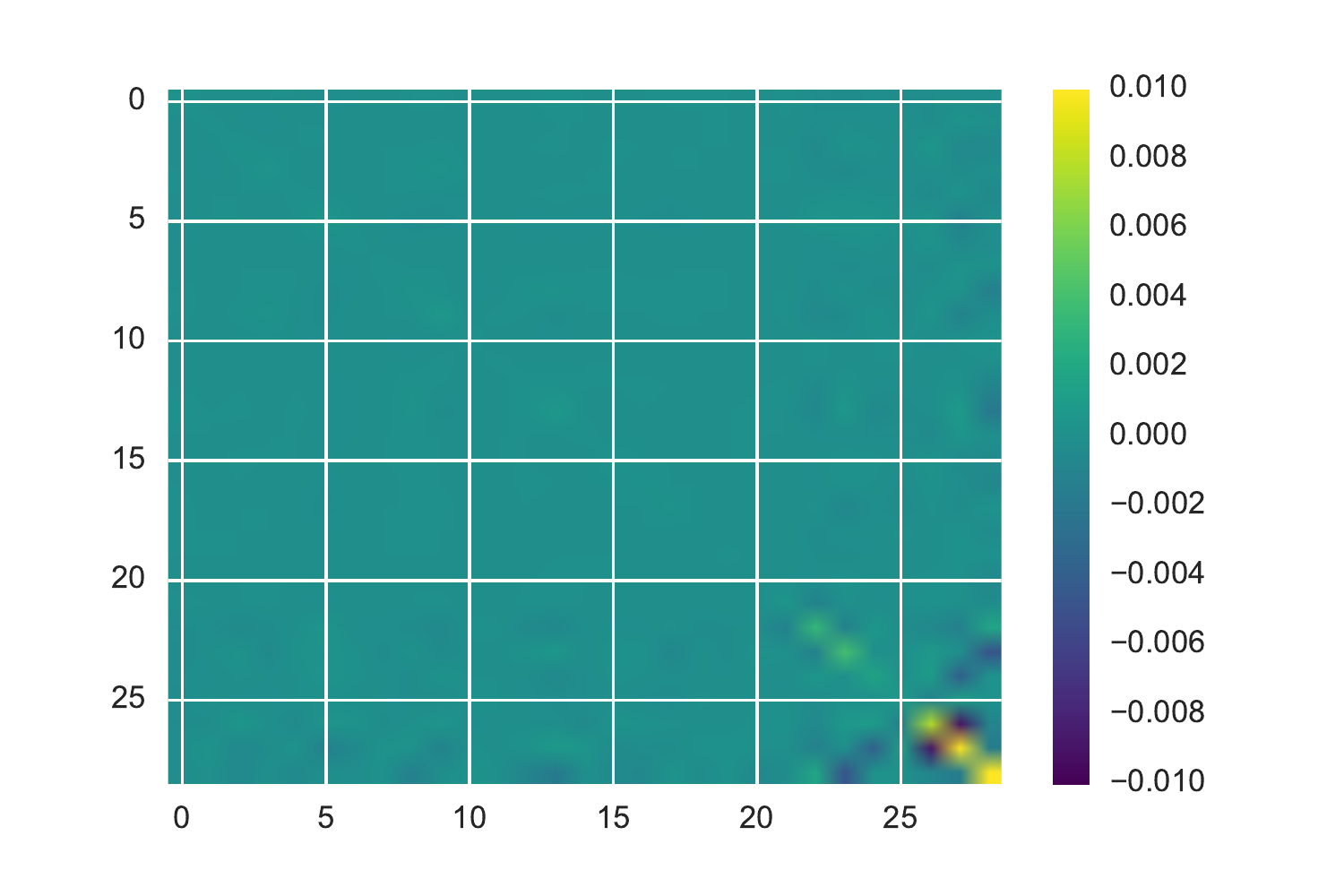}
}
\caption{Higgs data set with $n=50000$}
\label{fig:exp:real:higgs_Large}
\end{figure}

In Figure \ref{fig:exp:real:higgs_90k}, we compare the covariance matrix computed using our SGD inference procedure and inverse Fisher information with
$n=90000$ samples .
We used 25 samples from our SGD inference procedure with
$t=5000$, $d=1000$, $\eta=0.2$, and mini batch size of $10$.

\begin{figure}
\centering
\subfloat[Inverse Fisher information]{
\includegraphics[width=0.33\textwidth]{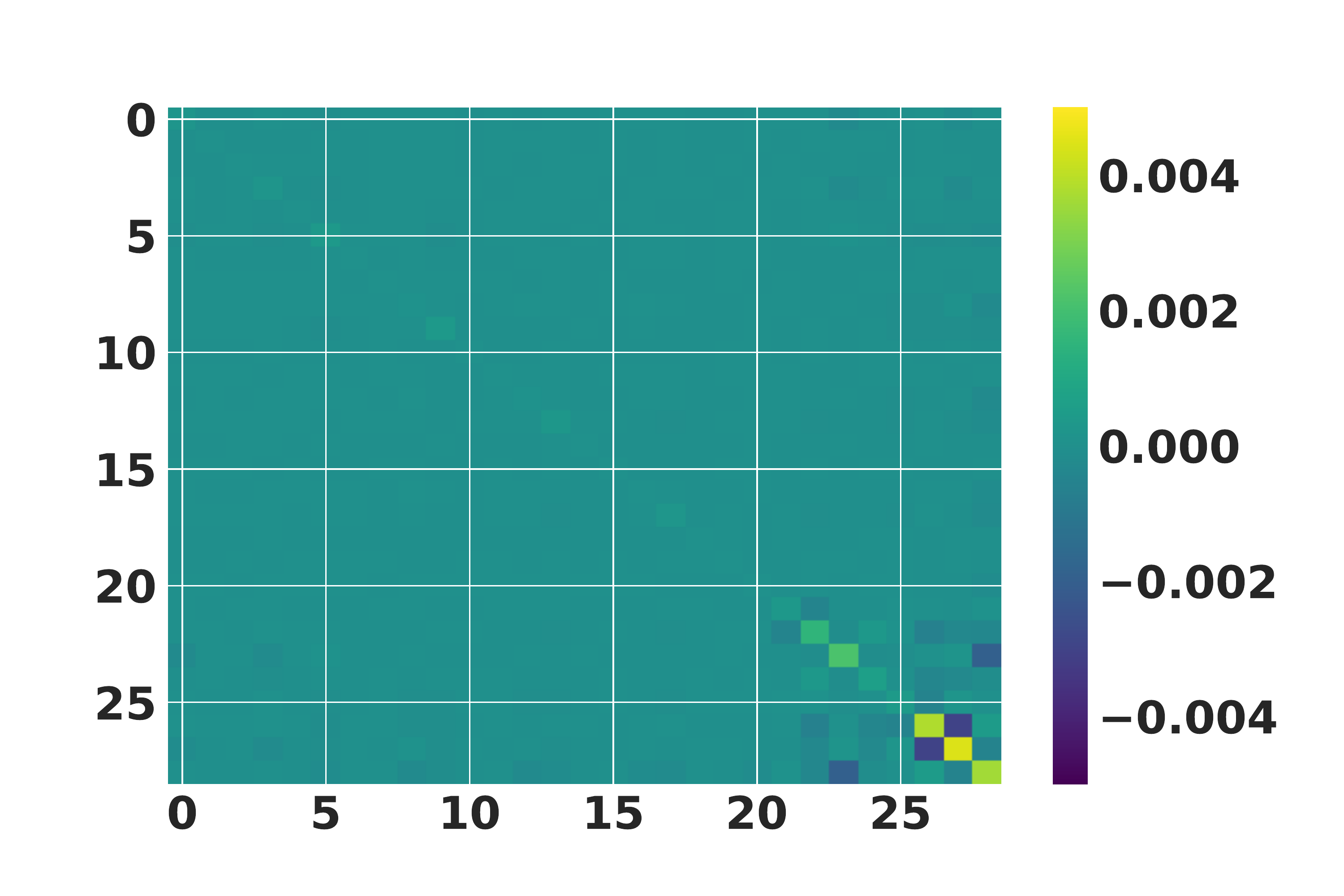}
}
\subfloat[SGD inference covariance]{
\includegraphics[width=0.33\textwidth]{figs/HiggsArxiv_Fisher_PDF.pdf}
}
\subfloat[Diagonal terms]{
\includegraphics[width=0.33\textwidth]{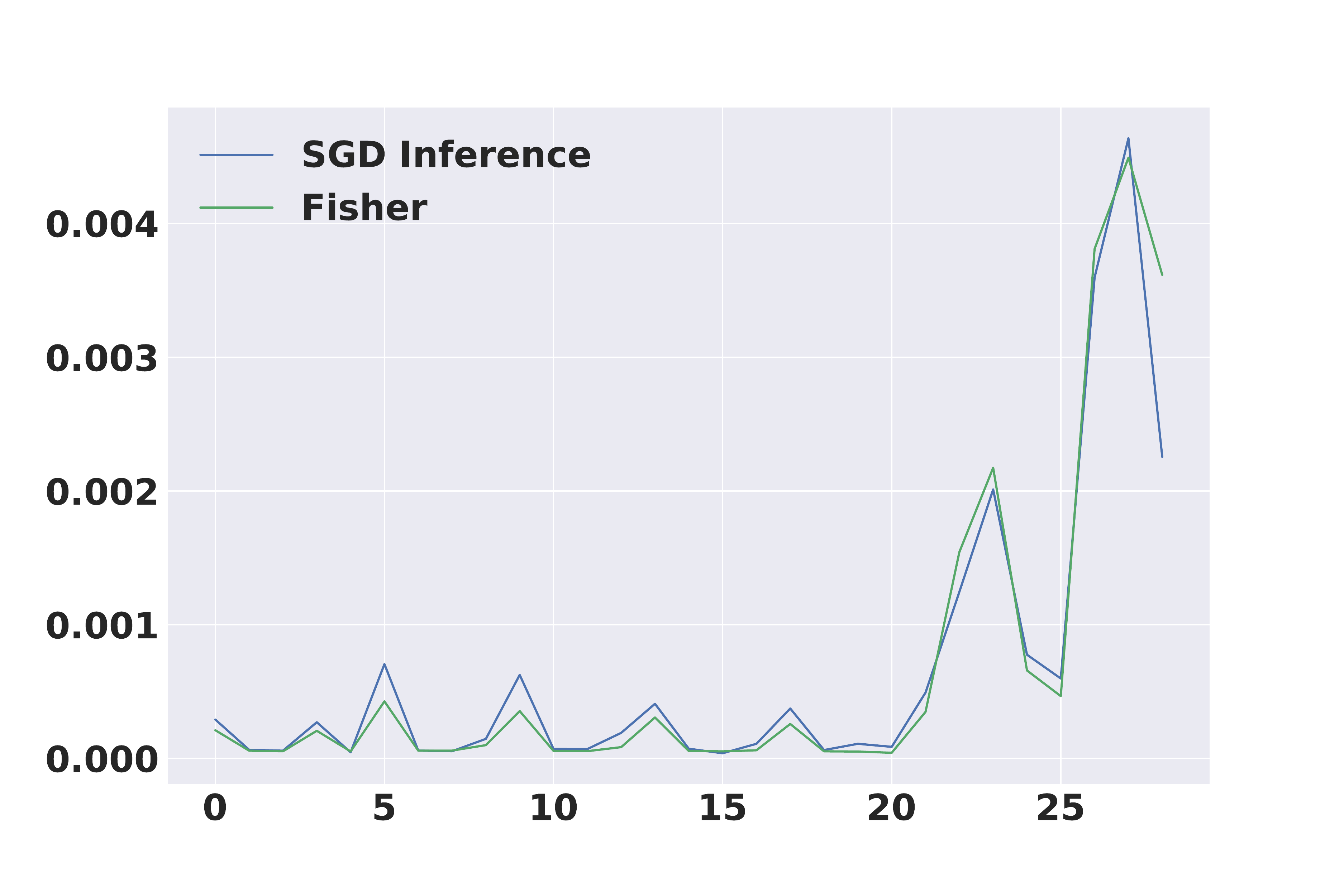}
}
\caption{Higgs data set with $n=90000$}
\label{fig:exp:real:higgs_90k}
\end{figure}

\subsubsection{Splice data set}

The Splice data set~\footnote{\url{https://www.csie.ntu.edu.tw/~cjlin/libsvmtools/datasets/binary.html}}
 contains 60 distinct features with 1000 data samples.
This is a classification problem between two classes of splice junctions in a DNA sequence.
Similar to the Higgs data set, we use a logistic regression model, trained using vanilla SGD.

In Figure \ref{fig:exp:real:libsvm:splice},
we  compare  the covariance matrix computed using our SGD inference procedure and bootstrap
$n=1000$ samples.
We used 10000 samples from both bootstrap and our SGD inference procedure with
$t=500$, $d=100$, $\eta=0.2$, and mini batch size of $6$.

\begin{figure}
\centering
\subfloat[Bootstrap]{
\includegraphics[width=0.33\textwidth]{figs/libsvm-splice---COV_BS.pdf}
}
\subfloat[SGD inference covariance]{
\includegraphics[width=0.33\textwidth]{figs/libsvm-splice---COV_SGD.pdf}
}
\subfloat[Diagonal terms]{
\includegraphics[width=0.33\textwidth]{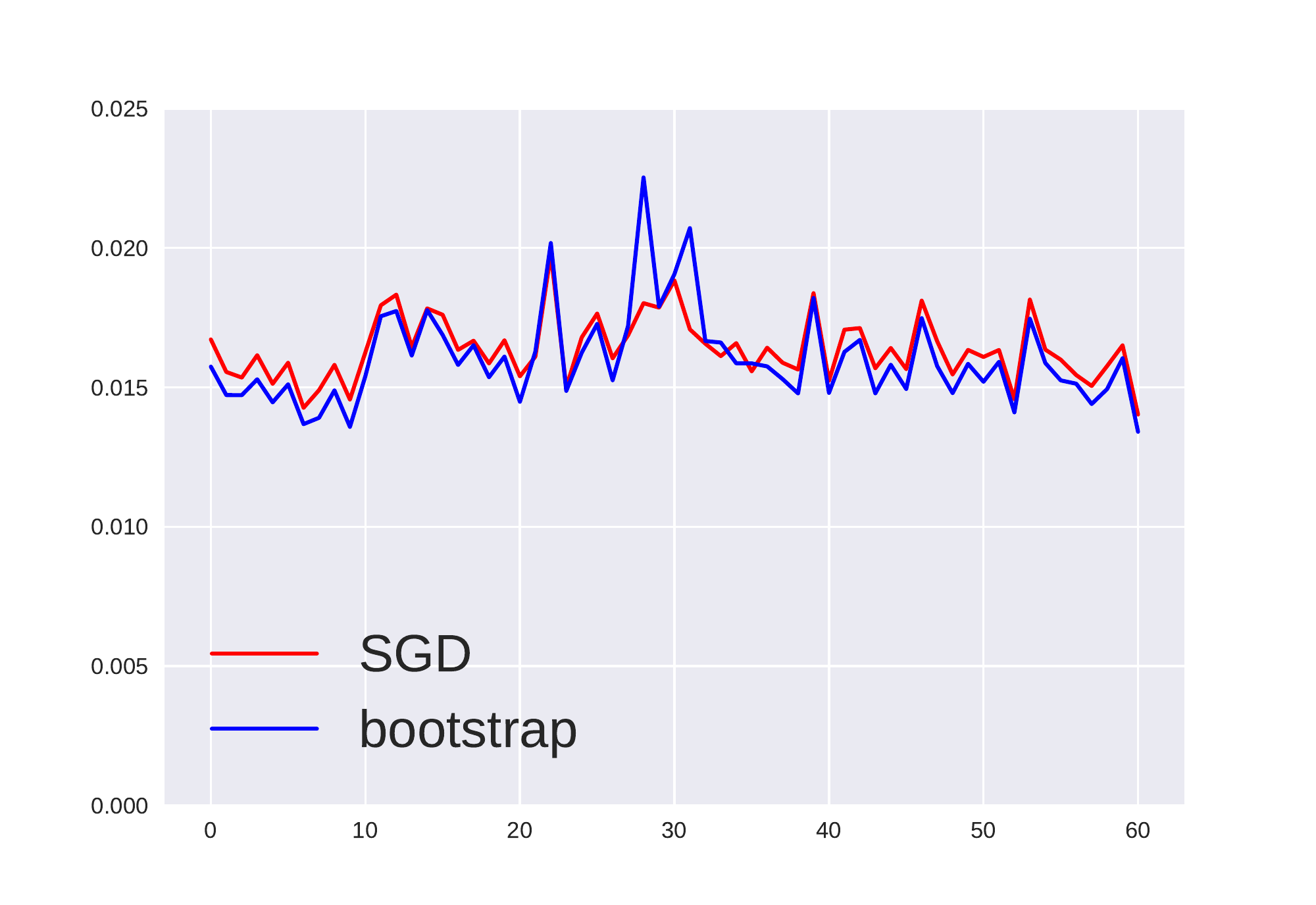}
}
\caption{Splice data set}
\label{fig:exp:real:libsvm:splice}
\end{figure}

\subsubsection{MNIST}

Here, we train a binary logistic regression classifier to classify 0/1 using perturbed MNIST data set, 
and demonstrate that certain adversarial examples 
(e.g. \cite{goodfellow2014explaining}) can be detected using prediction confidence intervals. 
For each image, where each original pixel is either 0 or 1, 
we randomly changed 70\% pixels to random numbers uniformly on $[0, 0.9]$. 
\Cref{fig:exp:real:mnist-appendix} shows each image's logit value 
($\log \frac{\Pr[1 \mid \text{image}]}{\Pr[0 \mid \text{image}]}$) 
and its 95\% confidence interval computed using our SGD inference procedure. 
The adversarial perturbation used here is shown in \Cref{fig:exp:real:mnist-appendix:adv-perturb} (scaled for display).

\begin{figure}[h!]
\centering
\subfloat[
Original ``0'': 
logit -72.6, \newline
CI (-106.4, -30.0)
]{
\includegraphics[width=0.24\textwidth]{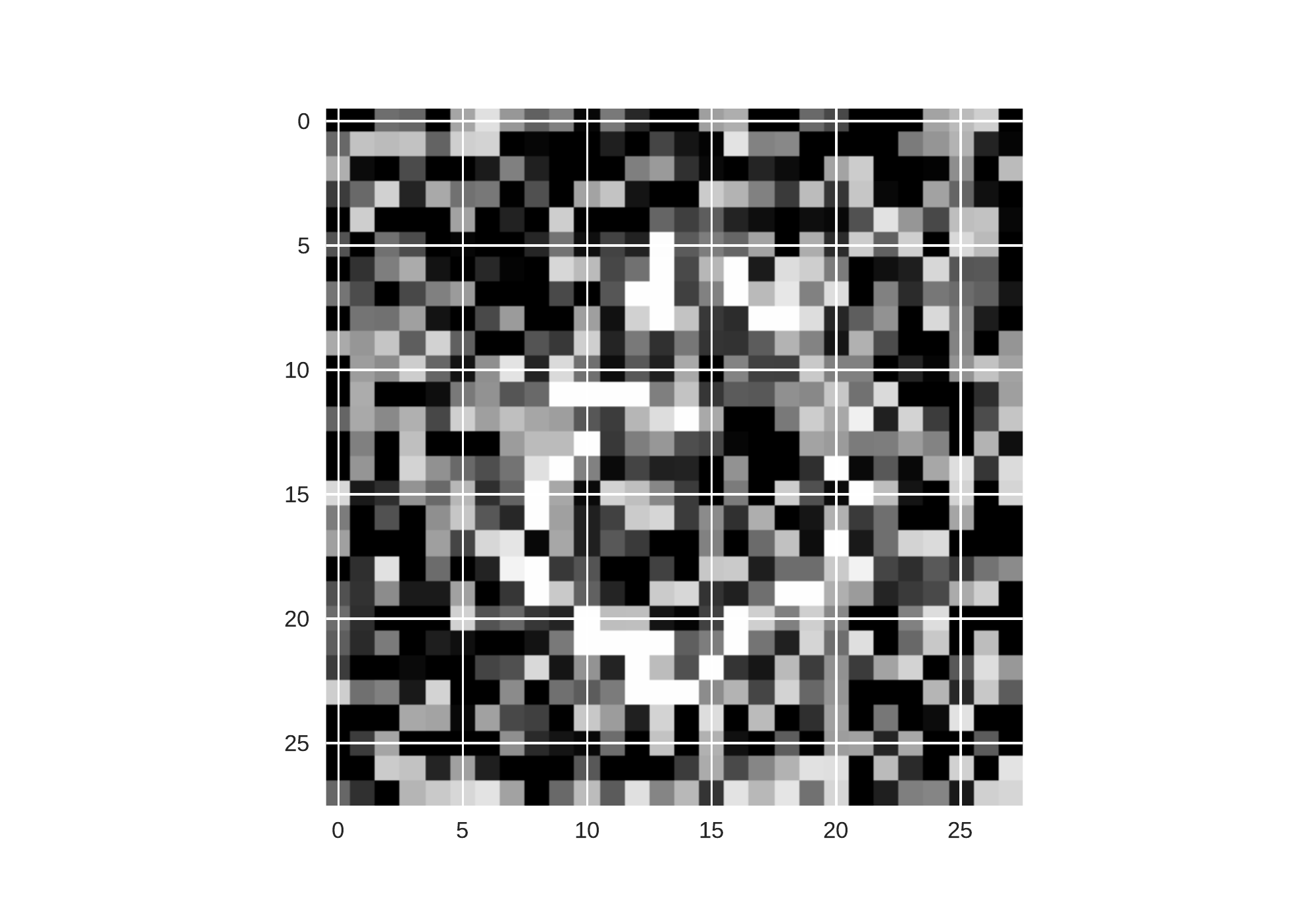}
}
\subfloat[
Adversarial ``0'': 
logit 15.3, \newline
CI (-6.5, 26.2)
]{
\includegraphics[width=0.24\textwidth]{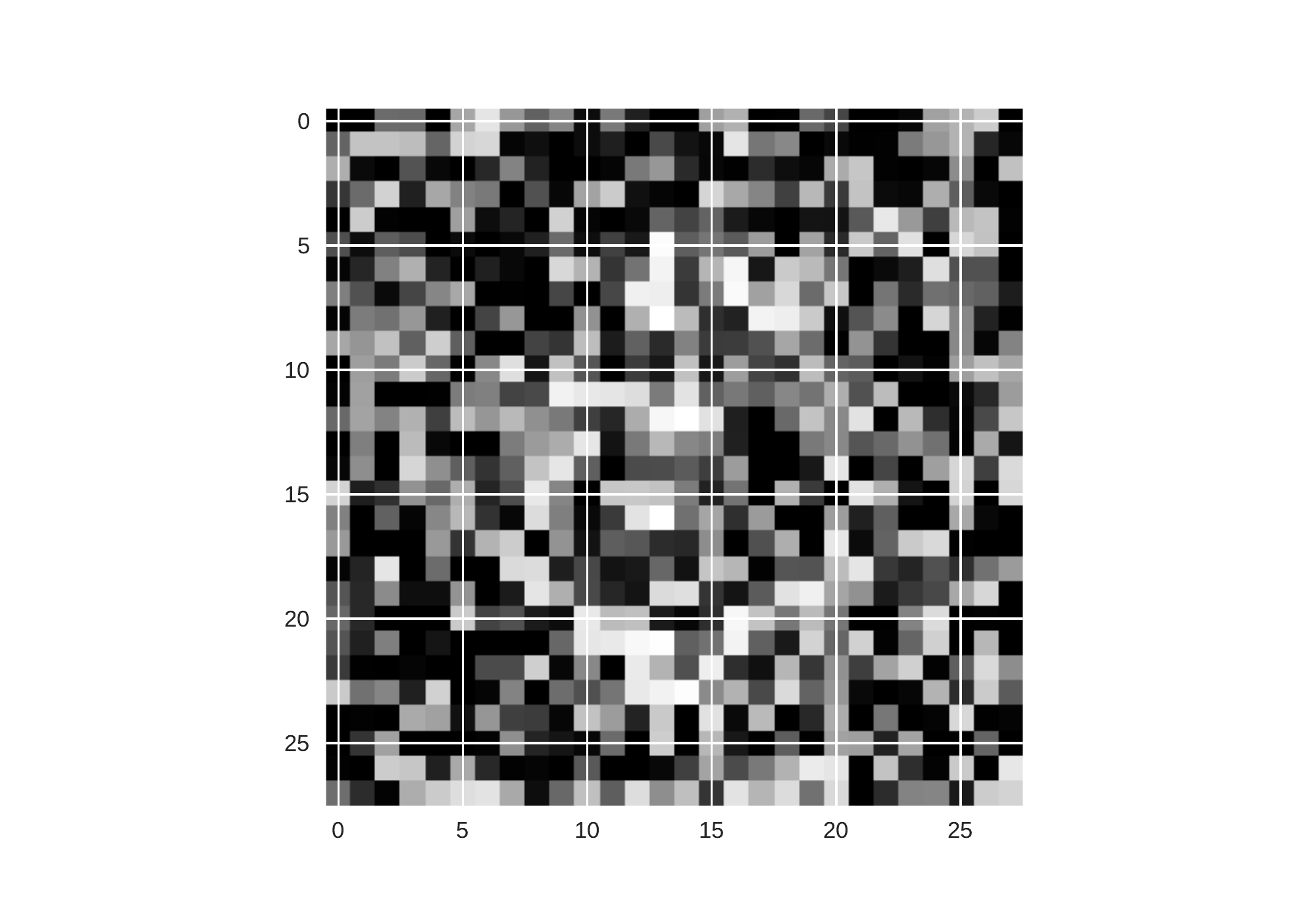}
} \\
\subfloat[
Original ``0'': 
logit -62.1, \newline
CI (-101.6, -5.5)
]{
\includegraphics[width=0.24\textwidth]{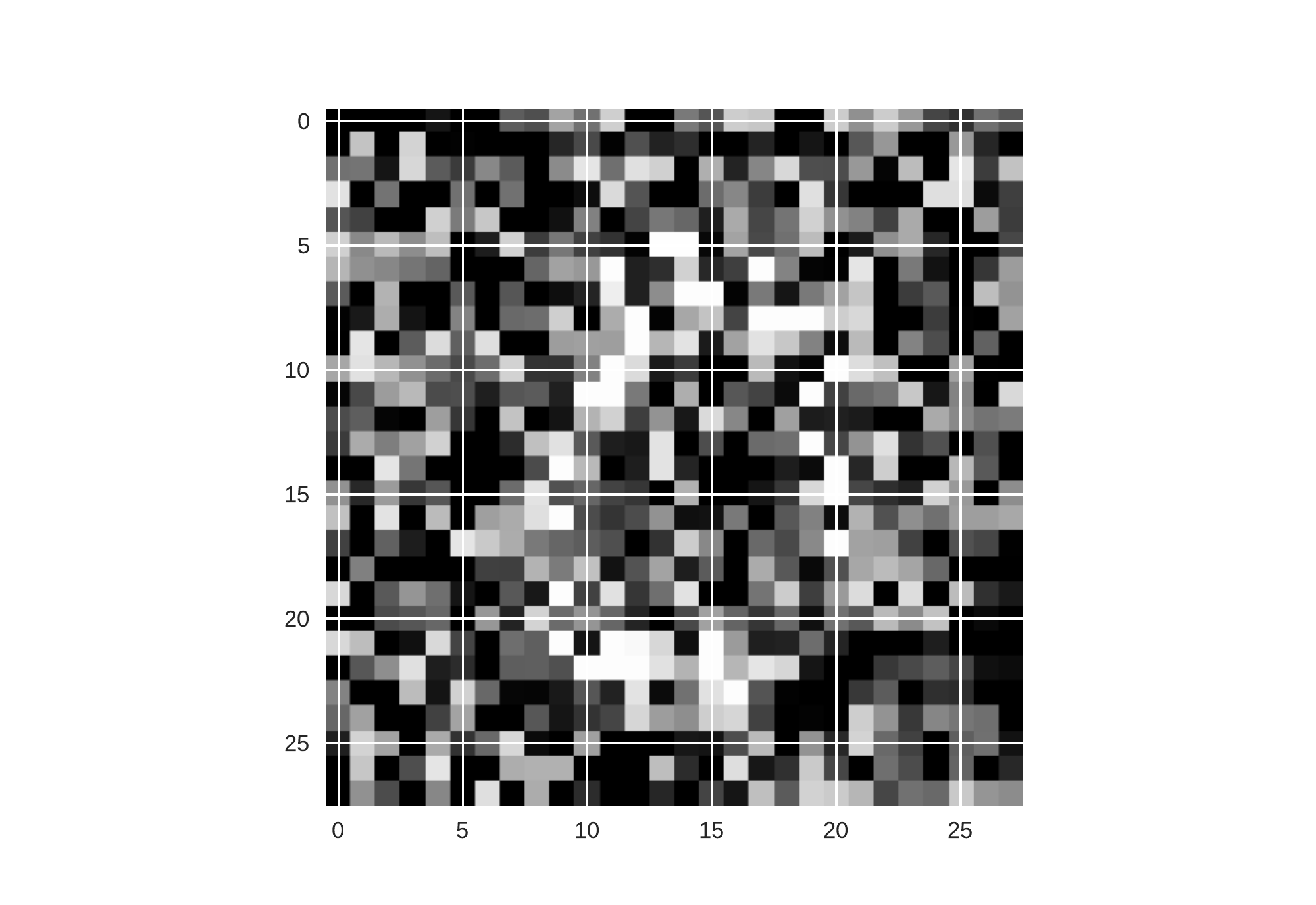}
}
\subfloat[
Adversarial ``0'': 
logit 1.9, \newline
CI (-4.9, 11.6)
]{
\includegraphics[width=0.24\textwidth]{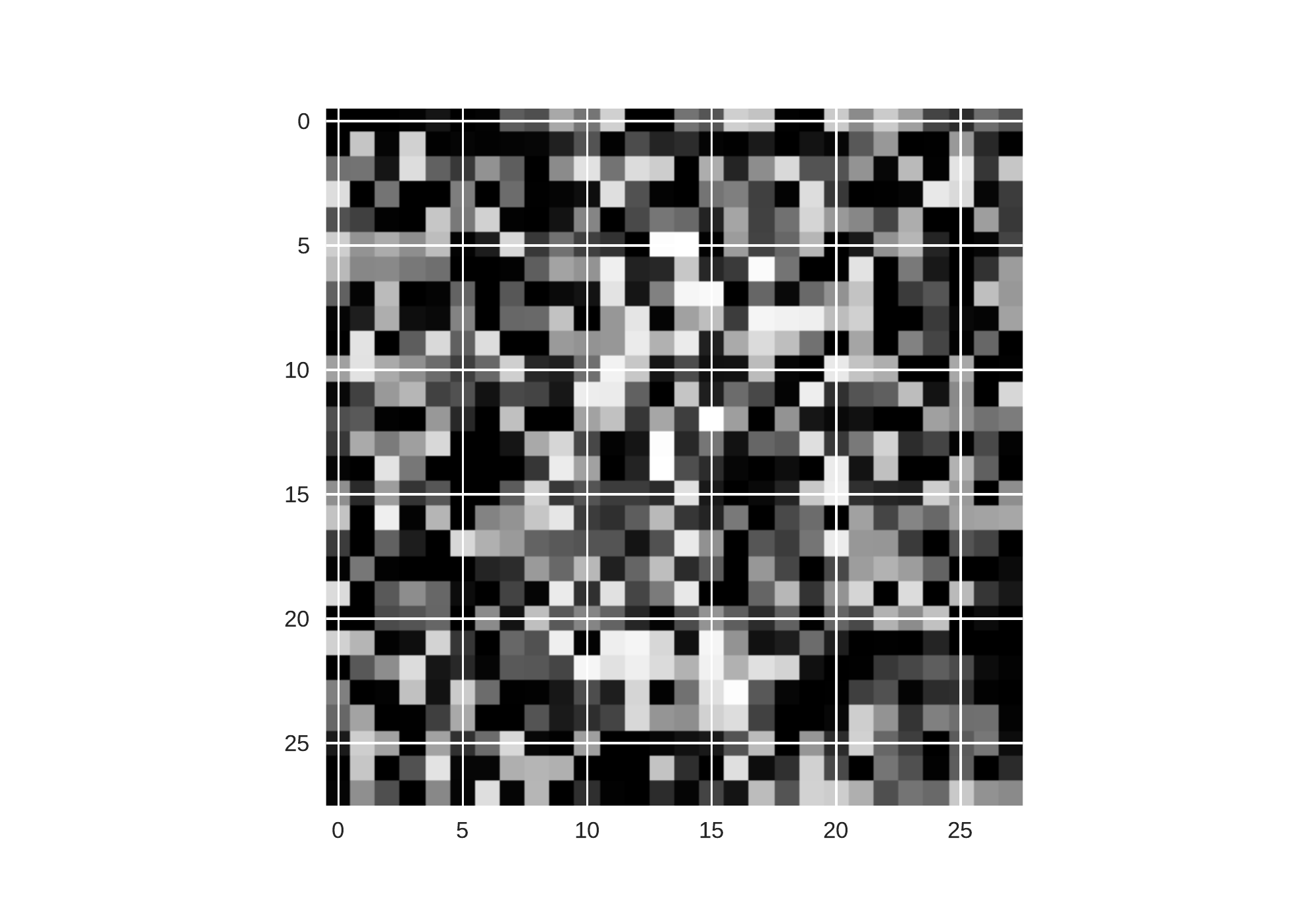}
} \\ 
\subfloat[
Original ``0'': 
logit -42.9, \newline
CI (-75.4, 5.1)
]{
\includegraphics[width=0.24\textwidth]{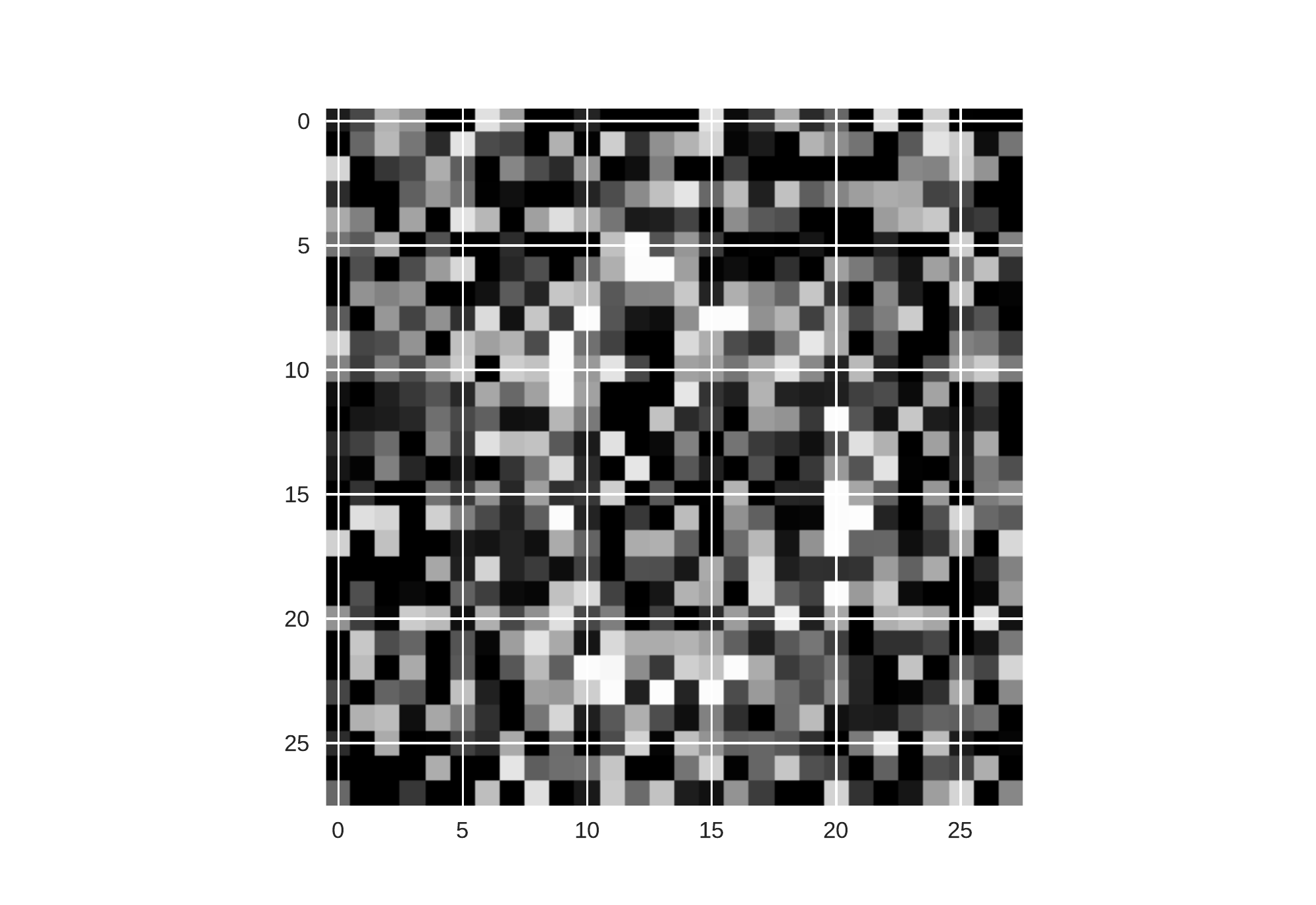}
}
\subfloat[
Adversarial ``0'': 
logit 4.8, \newline
CI (-3.4, 17.9)
]{
\includegraphics[width=0.24\textwidth]{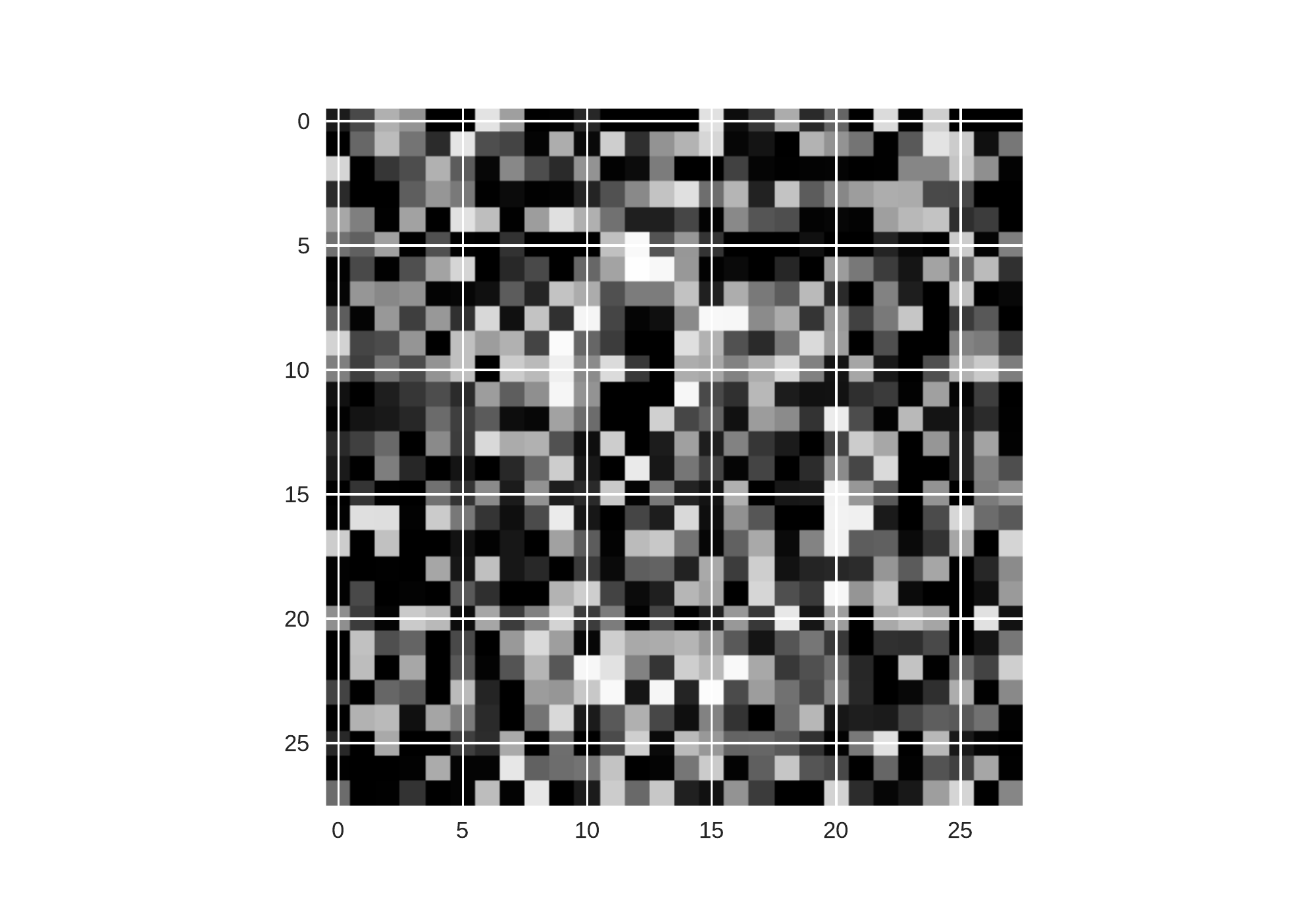}
} \\
\subfloat[
Original ``0'': 
logit -77.0, \newline
CI (-110.7, -32.2)
]{
\includegraphics[width=0.24\textwidth]{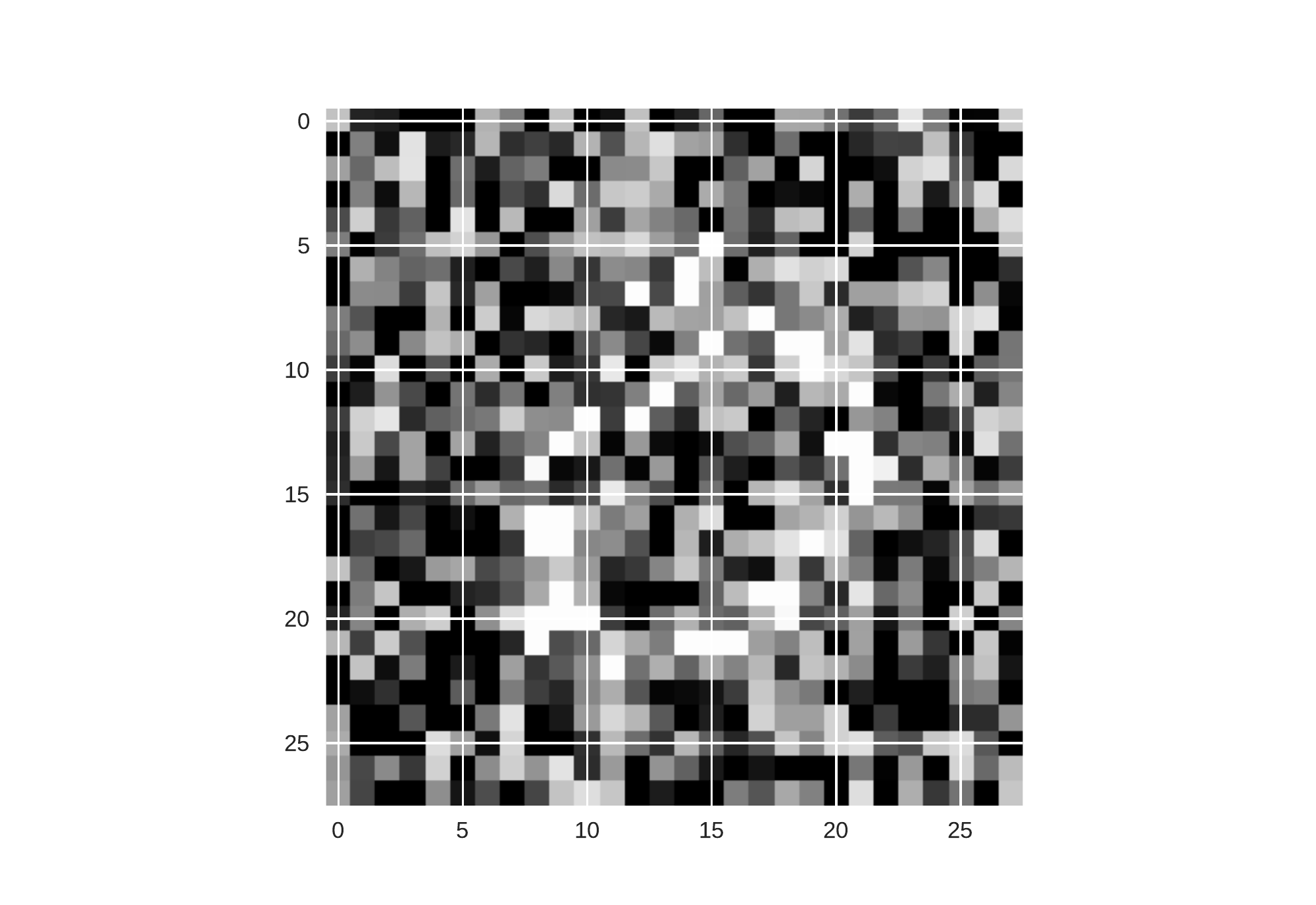}
}
\subfloat[
Adversarial ``0'': 
logit 13.3, \newline
CI (-8.0, 25.7)
]{
\includegraphics[width=0.24\textwidth]{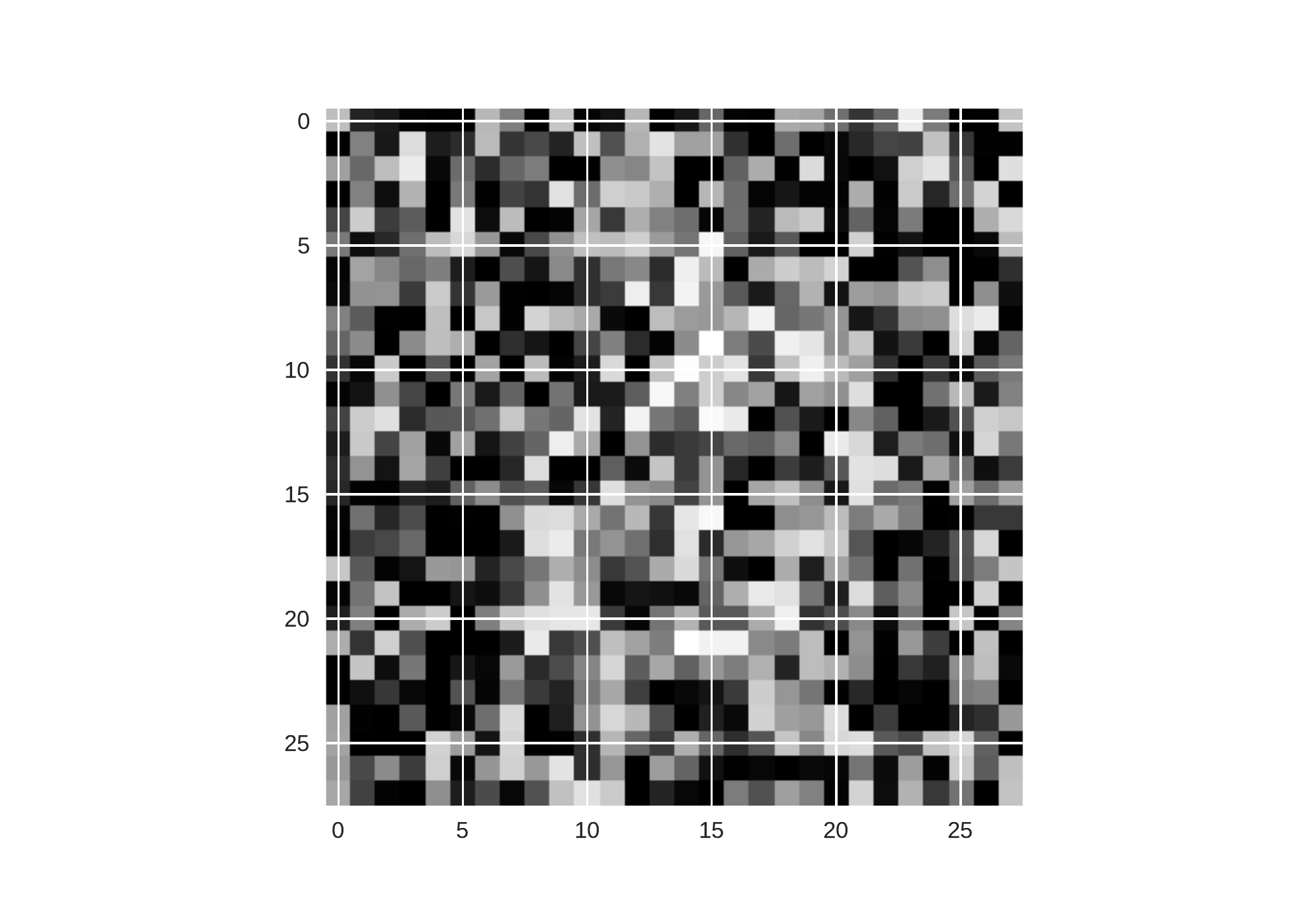}
} 
\caption{MNIST}
\label{fig:exp:real:mnist-appendix}
\end{figure}

\begin{figure}[h!]
\centering
\includegraphics[width=0.24\textwidth]{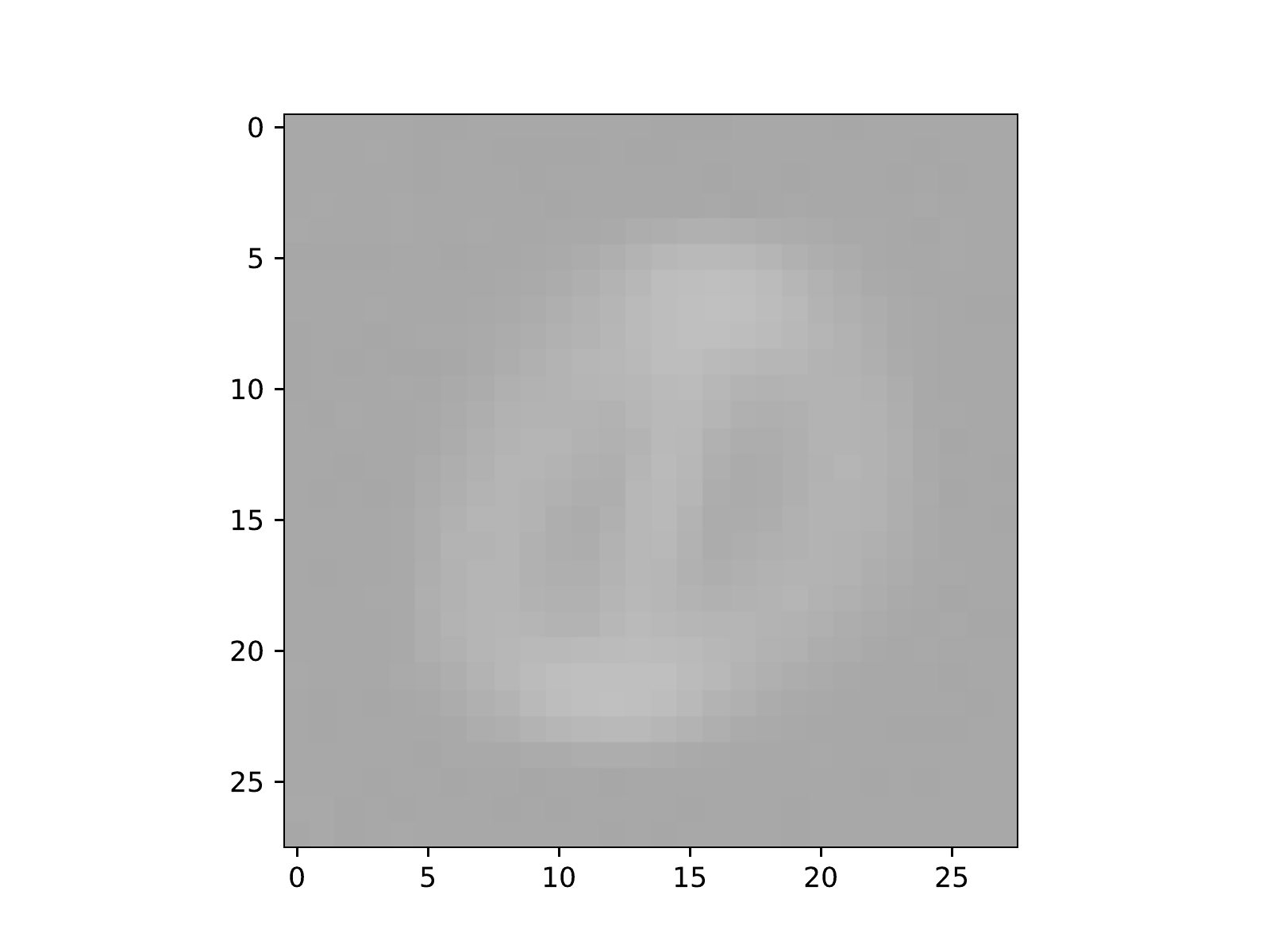}
\caption{MNIST adversarial perturbation (scaled for display)}
\label{fig:exp:real:mnist-appendix:adv-perturb}
\end{figure}


\end{document}